\newif\ifhideproofs
\newif\ifdraft
\numberwithin{equation}{section}
\newtheorem{theorem}{Theorem}[section]
\newtheorem{corollary}[theorem]{Corollary}
\newtheorem{definition}[theorem]{Definition}
\newtheorem{lemma}[theorem]{Lemma}
\newtheorem{proposition}[theorem]{Proposition}
\newtheorem{remark}[theorem]{Remark}
\newcommand{\cF}{\mathcal{F}}
\newcommand{\cK}{\mathcal{K}}
\newcommand{\cN}{\mathcal{N}}
\newcommand{\cX}{\mathcal{X}}
\newcommand{\R}{\mathbb{R}}
\newcommand*{\ep}{\varepsilon}
\newcommand*{\eps}{\varepsilon}
\newcommand*{\del}{\delta}
\newcommand*{\sig}{\sigma}
\newcommand\mmid{\mathbin{\|}}
\providecommand{\norm}[1]{\lVert{#1}\rVert}
\newcommand{\ngd}{\texttt{NoisyGD}}
\newcommand{\nmgd}{\texttt{NoisyCGD}}
\newcommand{\ncgd}{\nmgd}
\newcommand{\nsgd}{\texttt{NoisySGD}}
\newcommand{\cni}{\text{CNI}}
\newcommand{\lmc}{\texttt{LMC}}
\def\Snospace~{\S{}}
\renewcommand{\sectionautorefname}{\S\@gobble}
\renewcommand{\subsectionautorefname}{\S\@gobble}
\renewcommand{\appendixautorefname}{\S\@gobble}
\title{Shifted Interpolation for Differential Privacy}
\author{
	Jinho Bok \\
	UPenn\\
	\href{mailto:jinhobok@upenn.edu}{\texttt{\color{black}{jinhobok@upenn.edu}}}
	\and
	Weijie Su \\
	UPenn \\
	\href{mailto:suw@upenn.edu}{\texttt{\color{black}{suw@upenn.edu}}}
    \and
	Jason M. Altschuler\\
	UPenn\\
    \href{mailto:alts@upenn.edu}{\texttt{\color{black}{alts@upenn.edu}}}
}
\date{June 12, 2024}
\begin{document}

\maketitle

\begin{abstract}
    Noisy gradient descent and its variants are the predominant algorithms for differentially private machine learning. It is a fundamental question to quantify their privacy leakage, yet tight characterizations remain open even in the foundational setting of convex losses. This paper improves over previous analyses by establishing (and refining) the ``privacy amplification by iteration'' phenomenon in the unifying framework of $f$-differential privacy---which tightly captures all aspects of the privacy loss and immediately implies tighter privacy accounting in other notions of differential privacy, e.g., $(\eps,\delta)$-DP and R\'enyi DP. Our key technical insight is the construction of \emph{shifted interpolated processes} that unravel the popular shifted-divergences argument, enabling generalizations beyond divergence-based relaxations of DP. Notably, this leads to the first \emph{exact} privacy analysis in the foundational setting of strongly convex optimization. Our techniques extend to many settings: convex/strongly convex, constrained/unconstrained, full/cyclic/stochastic batches, and all combinations thereof. As an immediate corollary, we recover the $f$-DP characterization of the exponential mechanism for strongly convex optimization in \cite{gll22}, and moreover extend this result to more general settings.
\end{abstract}

\newpage
\setcounter{tocdepth}{2}
\tableofcontents

\section{Introduction}\label{sec:intro}

Private optimization is the primary approach for private machine learning. The goal is to train good models while not leaking sensitive attributes of the training data. Differential privacy (DP) is the gold standard for measuring this information leakage~\cite{dmns06,dwork2014algorithmic}, and noisy gradient descent and its variants are the predominant algorithms for private optimization. It is therefore a central question to quantify the differential privacy of these algorithms---however, tight characterizations remain open, even in the seemingly simple setting of convex optimization. 

In words, DP measures how distinguishable the output of a (randomized) algorithm is when run on two adjacent datasets, i.e., two datasets that differ in only one individual record. There are several ways to measure distinguishability---leading to many relaxations of DP, e.g., \cite{bs16, mir17, drs22}. Different DP notions lead to different privacy analyses, and a long line of work has sought to prove sharp privacy bounds for noisy gradient descent and its variants~\cite{bst14, dpsgd-dl, fmtt18, cys21, ys22,at22,abt24}.

\par A common approach is to use the composition theorem, which pays a price in privacy for every intermediate iterate along the optimization trajectory, leading to possibly suboptimal privacy bounds. Recent work has significantly improved the privacy analysis in the case of convex and strongly convex losses by showing that the privacy leakage of noisy (stochastic) gradient descent does not increase ad infinitum in the number of iterations $t$~\cite{cys21,ys22,at22,abt24}. This is in stark contrast to the composition-based approach, which gives privacy bounds that scale as $\sqrt{t}$.

All these ``convergent'' privacy bounds were proved in the R\'enyi DP framework, which is inherently lossy. To achieve the tightest possible privacy bound, a natural goal is to use the $f$-DP framework~\cite{drs22} for analysis, since it is an information-theoretically lossless definition of DP. This definition measures distinguishability in terms of the Type I vs Type II error tradeoff curve $f$ for the hypothesis testing problem of whether a given user was in the training dataset. The $f$-DP framework is desirable because: (1) $f$-DP exactly characterizes all relevant aspects of the hypothesis testing problem defining DP, and thus (optimal) $f$-DP bounds can be losslessly converted to (optimal) bounds in other notions of privacy such as $(\eps,\delta)$-DP or R\'enyi DP, (2) $f$-DP  is lossless under the composition of multiple private mechanisms, which is the most ubiquitous operation in DP since it enables combining building blocks, and (3) $f$-DP is easily interpretable in terms of the original hypothesis testing definition of DP. 

However, analyzing privacy leakage in the $f$-DP framework is often challenging since quantifying the entire tradeoff between Type I/II error is substantially more difficult than quantifying (less informative) alternative notions of privacy. Consequently, the analysis toolbox for $f$-DP is currently limited. These limitations are pronounced for the fundamental problem of analyzing the privacy loss of noisy gradient descent and its variants. 
To put this into perspective, existing privacy guarantees based on $f$-DP \emph{diverge} as the number of iterations $t$ increases, whereas the aforementioned recent work has used divergence-based DP definitions to show that noisy gradient descent and its variants can remain private even when run indefinitely, for problems that are strongly convex~\cite{cys21,ys22,at22,abt24} or even just convex~\cite{at22,abt24}. 
Convergent privacy bounds complement celebrated results for minimax-optimal privacy-utility tradeoffs~\cite{bst14, bftg19} because they enable longer training---which is useful since typical learning problems are not worst-case and benefit from training longer. 

\par Can convergent privacy bounds be achieved directly\footnote{Convergent RDP bounds can of course be \emph{lossily} converted to convergent $f$-DP bounds, but that defeats the purpose of using the lossless $f$-DP framework.} in the tight framework of $f$-DP? All current arguments are tailored to R\'enyi DP---an analytically convenient but inherently lossy relaxation of DP---and do not appear to extend. Answering this question necessitates developing fundamentally different techniques for $f$-DP, since convergent privacy bounds require only releasing the algorithm's \emph{final} iterate---in sharp contrast to existing $f$-DP techniques such as the composition theorem which can only argue about the accumulated privacy loss of releasing \emph{all} intermediate iterates. Tight $f$-DP analyses typically require closed-form expressions for the random variable in question---in order to argue about the tradeoff of Type I/II error---but this is impossible for the final iterate of (stochastic) gradient descent due to the non-linearity intrinsic to each iteration.

\subsection{Contribution}\label{ssec:cont}

Our primary technical contribution is establishing (and refining) the ``privacy amplification by iteration'' phenomenon in the unifying framework of $f$-DP. This enables directly analyzing the privacy loss of the final iterate of noisy gradient descent (and its variants), leading to the first direct $f$-DP analysis that is convergent as the number of iterations $t \to \infty$.~\autoref{ssec:tech} overviews this new analysis technique.

Notably, this yields the first \emph{exact} privacy analysis in the foundational setting of strongly convex losses. To our knowledge, there is no other setting where exact privacy analyses are known for any $t > 1$, except for the setting of convex quadratic losses which is analytically trivial because all iterates are explicit Gaussians.\footnote{The standard analysis approach based on the composition theorem is nearly tight for small numbers of iterations $t$, but as mentioned above, yields an arbitrarily loose bound (in fact vacuous) for convex losses as $t \to \infty$.} 

\par We emphasize that our techniques are versatile and readily extend to many settings---a well-known challenge for other convergent analyses, even for simpler relaxations of DP like R\'enyi DP~\cite{cys21, ys22,at22,abt24}. In \autoref{sec:bounds}, we illustrate how our analysis extends to convex/strongly convex losses, constrained/unconstrained optimization, full/cyclic/stochastic batches, and all combinations thereof.

\par Since our improved privacy guarantees are for $f$-DP (\autoref{fig:fdp-opt}, left), lossless conversions immediately imply improved guarantees for other notions of privacy like R\'enyi DP and $(\eps,\delta)$-DP (\autoref{fig:fdp-opt}, right). For example, for the strongly convex setting, our exact bound improves over previous results by a factor of $2$ in R\'enyi DP, and thus by even more in $(\eps,\delta)$-DP due to the intrinsic lossiness of R\'enyi DP that we overcome by directly analyzing in $f$-DP. In practice, improving the privacy by a factor of two enables training with half the noise, while satisfying the same privacy budget. 
Although this paper's focus is the theoretical methodology, preliminary numerics in \autoref{sec:example} corroborate that our improved privacy guarantees can be helpful in practice.

\begin{figure}
    \centering
    \includegraphics[width=0.7\textwidth]{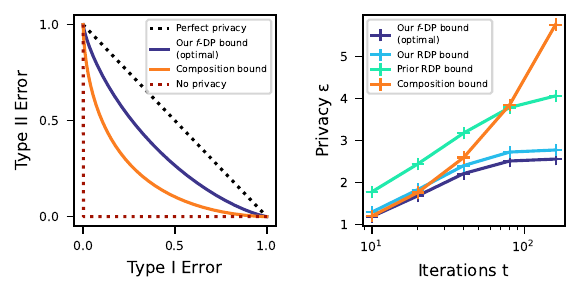}
     \caption{
     Left: improved $f$-DP versus the standard composition analysis. Right: improved $(\eps,\delta)$-DP by losslessly converting from $f$-DP. 
     Our privacy bound is optimal in all parameters, here for $\ngd$ on strongly convex losses; see \autoref{app:num} for the parameter choices and other settings. 
     Our $f$-DP analysis also implies optimal bounds for the R\'enyi DP framework (previously unknown), but $f$-DP is strictly better since it captures all aspects of the privacy leakage, whereas R\'enyi DP is intrinsically lossy.
 } 
    \label{fig:fdp-opt}
\end{figure}

\par Since our privacy bounds are convergent in the number of iterations $t$, we can take the limit $t \to \infty$ to bound the $f$-DP of the stationary distributions of these optimization algorithms. As an immediate corollary, we recover the recent $f$-DP characterization of the exponential mechanism for strongly convex losses in \cite{gll22}, and moreover extend this result to more general settings in \autoref{sec:expmech}.

\subsection{Techniques}\label{ssec:tech}

The core innovation underlying our results is the construction of certain auxiliary processes, \emph{shifted interpolated processes}, which enable directly analyzing the Type I/II error tradeoff between the final iterates of two stochastic processes---even when their laws are complicated and non-explicit. Informally, this argument enables running \emph{coupling arguments}---traditionally possible only for Wasserstein analysis---to analyze tradeoff functions for the first time. In this paper, the two processes are noisy (stochastic, projected) gradient descent run on two adjacent datasets, but the technique is more general and we believe may be of independent interest. See~\autoref{sec:tech} for a detailed overview.

\par Crucially, our argument is \emph{geometrically aware}: it exploits (strong) convexity of losses via (strong) contractivity of gradient descent updates, in order to argue that sensitive gradient queries have (exponentially) decaying privacy leakage, the longer ago they were performed. This is essential for convergent privacy bounds, and is impossible with the standard composition-based analysis---which only exploits the sensitivity of the losses, and is oblivious to any further geometric phenomena like convexity or contractivity. 

\par A key motivation behind the construction of our auxiliary sequence is that it demystifies the popular privacy amplification by iteration analysis \cite{fmtt18}, which has been used in many contexts, and in particular was recently shown to give convergent R\'enyi DP bounds~\cite{at22,abt24}. Those arguments rely on \emph{shifted divergences}, which combine R\'enyi divergence and Wasserstein distance, and it was an open question whether this ad-hoc potential function could be simplified. Our shifted interpolated process answers this: its iterates coincide with the optimal ``shifts'' in the shifted divergence argument, which allows us to disentangle the R\'enyi and Wasserstein components of the shifted divergence argument; details in~\autoref{sec:tech:disentangle}. Crucially, this disentanglement enables generalizations 
beyond divergence-based relaxations of DP, to $f$-DP.\footnote{Na\"ively extending the ``shifted divergence'' argument
to ``shifted tradeoffs'' runs into subtle but fundamental issues since tradeoff functions do not enjoy key properties that divergences do. Details in~\autoref{sec:tech:disentangle}.
}

\subsection{Outline}

\autoref{sec:prelim} recalls relevant preliminaries from differential privacy and convex optimization. \autoref{sec:tech} introduces the technique of shifted interpolation. \autoref{sec:bounds} uses this to establish improved privacy bounds for noisy gradient descent and its variants in a number of settings. \autoref{sec:expmech} describes how, as immediate corollaries of these convergent privacy bounds, taking an appropriate limit recovers and generalizes recent results on the $f$-DP of the exponential mechanism. \autoref{sec:disc} discusses future directions motivated by our results. For brevity, various helper lemmas, proof details, and additional numerical experiments are deferred to the Appendix. Code reproducing our numerics can be found here: \url{https://github.com/jinhobok/shifted_interpolation_dp}.

\section{Preliminaries}\label{sec:prelim}

\subsection{Differential privacy}

DP measures the distinguishability between outputs of a randomized algorithm run on adjacent datasets, i.e., datasets that differ on at most one data point~\cite{dmns06}. The most popular definition is $(\ep, \delta)$-DP.
\begin{definition}[$(\eps,\delta)$-DP]
    A randomized algorithm $\mathcal{A}$ is \emph{$(\ep, \delta)$-DP} if for any adjacent datasets $S, S'$ and any event $E$,
    \[\mathbb{P}(\mathcal{A}(S) \in E) \leq e^\ep \mathbb{P}(\mathcal{A}(S') \in E) + \delta\,.
    \]
\end{definition}

However, the most precise quantification of DP is based on the hypothesis-testing formulation~\cite{wz10, kov17}.  This is formalized as $f$-DP~\cite{drs22}, where $f$ denotes a tradeoff function, i.e., a curve of hypothesis testing errors for a hypothesis test $\phi$.

\begin{definition}[$f$-DP]
For distributions $P,Q$ on the same space, 
the \emph{tradeoff function} $T(P, Q): [0, 1] \to [0, 1]$ is
\[T(P, Q)(\alpha) = \inf\{1 - \mathbb{E}_Q \phi: \mathbb{E}_P \phi \leq \alpha, 0 \leq \phi \leq 1\}\,.
\]
 A randomized algorithm $\mathcal{A}$ is \emph{$f$-DP} if for any adjacent datasets $S$ and $S'$, $T(\mathcal{A}(S), \mathcal{A}(S')) \geq f$.
\end{definition}

Here and henceforth we use the pointwise ordering between tradeoff functions, i.e., we write $f \geq g$ if $f(\alpha) \geq g(\alpha)$ for all $\alpha$ in the domain $[0,1]$. We also use the standard abuse of notation of writing $T(X,Y)$ as shorthand for $T(\mathrm{Law}(X), \mathrm{Law}(Y))$.

The following lemma provides a useful characterization of tradeoff functions \cite[Proposition 1]{drs22}. It follows that the most private tradeoff function is $\text{Id}: [0, 1] \to [0, 1]$, given by $\text{Id}(\alpha) = 1-\alpha$. See \autoref{fig:fdp-intro}.

\begin{lemma}[Characterization of tradeoff functions]\label{prop:fdpchar}
A function $f: [0, 1] \to [0, 1]$ is a tradeoff function iff $f$ is decreasing, convex and $f(\alpha) \leq 1 - \alpha$ for all $\alpha \in [0, 1]$.    
\end{lemma}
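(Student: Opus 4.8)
\medskip
\noindent\emph{Proof plan.} The plan is to prove the two implications separately; beyond the definitions, the only tool needed is the Neyman--Pearson lemma.

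For the ``only if'' direction, suppose $f = T(P,Q)$ and I verify the three properties directly from the definition of $T$. The bound $f(\alpha) \le 1-\alpha$ is witnessed by the constant (randomized) test $\phi \equiv \alpha$, which is feasible at type-I level $\alpha$ and has type-II error $1 - \E_Q\phi = 1-\alpha$. Monotonicity holds because the feasible set $\{\phi : \E_P\phi \le \alpha,\ 0 \le \phi \le 1\}$ is nondecreasing in $\alpha$, so the infimum defining $f(\alpha)$ is nonincreasing. Convexity follows by a mixing argument: given levels $\alpha_0,\alpha_1$ and tests $\phi_0,\phi_1$ that are $\eta$-suboptimal for $f(\alpha_0),f(\alpha_1)$, the mixture $\phi_\lambda := (1-\lambda)\phi_0+\lambda\phi_1$ is feasible at level $(1-\lambda)\alpha_0+\lambda\alpha_1$ and satisfies $1-\E_Q\phi_\lambda = (1-\lambda)(1-\E_Q\phi_0)+\lambda(1-\E_Q\phi_1)$, so letting $\eta\downarrow 0$ yields $f((1-\lambda)\alpha_0+\lambda\alpha_1)\le(1-\lambda)f(\alpha_0)+\lambda f(\alpha_1)$. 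Finally $0\le\phi\le1$ forces $f(\alpha)=1-\sup\E_Q\phi\in[0,1]$.

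For the ``if'' direction, given $f$ convex, decreasing, and with $f(\alpha)\le1-\alpha$ (hence $f(1)=0$), I would realize $f$ as the tradeoff curve of an explicit pair via Neyman--Pearson. Take $P=\mathrm{Unif}[0,1]$ and let $Q$ be the distribution on $[0,1]$ with cumulative distribution function $F_Q=1-f$. The three hypotheses are precisely what legitimizes this: $1-f$ is nondecreasing (as $f$ is decreasing), equals $1$ at $\alpha=1$ (as $f(1)=0$), and is nonnegative with $1-f(0)\ge0$ (as $f(0)\le1$); moreover convexity of $f$ makes the likelihood ratio $\tfrac{dQ}{dP}=-f'$ nonincreasing on $[0,1]$. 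Hence for every level $\alpha$ the Neyman--Pearson optimal rejection region is the interval $[0,\alpha]$, with $P$-mass $\alpha$ and $Q$-mass $F_Q(\alpha)=1-f(\alpha)$, i.e.\ type-II error $f(\alpha)$; by the Neyman--Pearson lemma no test at level $\alpha$ does better, so $T(P,Q)=f$.

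The step I expect to be the main obstacle is the measure-theoretic regularity in the ``if'' direction. A convex $f$ need not be differentiable, so $-f'$ must be read as the right derivative (defined on all of $[0,1)$ by convexity), and at the countably many kinks one uses randomized likelihood-ratio tests, which does not change the tradeoff curve. More delicate is the left endpoint: if $f(0)>\lim_{\alpha\to0^+}f(\alpha)$ then $1-f$ has a jump immediately to the right of $0$, which no probability measure on $[0,1]$ can realize. This case does not actually occur, since a tradeoff function is automatically continuous---one can either fold continuity into the statement (matching the original formulation of \cite{drs22}) or verify it directly (e.g.\ $f(0)=1$ forces $Q\ll P$, and absolute continuity forces continuity of $T(P,Q)$ at $0$); alternatively one perturbs $f$ to continuous $f_n\downarrow f$, applies the construction to each, and lets $n\to\infty$. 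The remaining bookkeeping---that $Q$ is a genuine probability measure, possibly with an atom at $0$ of mass $1-f(0)$ (which has $P$-measure zero and so lies in every optimal region for $\alpha>0$ at no cost), of total mass $1-f(0)+\int_0^1(-f')\,d\alpha = 1-f(1)=1$---is routine, and the Neyman--Pearson region computation then carries over verbatim.
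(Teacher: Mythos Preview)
The paper does not prove this lemma; it is stated with a citation to \cite[Proposition~1]{drs22} and used as a black box. Your argument is essentially the standard one given there: the ``only if'' direction is the routine verification you describe, and for the ``if'' direction you realize $f$ as $T(\mathrm{Unif}[0,1],Q)$ with $F_Q=1-f$, which is exactly the construction in \cite{drs22}. So the approaches coincide.

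You are right to flag the continuity issue: the paper's statement omits the hypothesis ``continuous'' that appears in the original formulation of \cite{drs22}, and without it the ``if'' direction is actually false. For instance $f(0)=1$, $f(\alpha)=\tfrac{1}{2}(1-\alpha)$ for $\alpha>0$ is convex, decreasing, and bounded by $1-\alpha$, yet is not a tradeoff function (since $T(P,Q)(0)=1$ forces $Q\ll P$, which in turn forces continuity at $0$, as you note). Your first remedy---folding continuity into the statement---is the correct one and is all that is needed.

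Your other two proposed remedies do not work, however. The parenthetical ``$f(0)=1$ forces $Q\ll P$\ldots'' is an argument that genuine tradeoff functions are continuous, i.e.\ it belongs to the ``only if'' direction; it cannot rescue the ``if'' direction for a discontinuous $f$, because such an $f$ is simply not a tradeoff function. Likewise the perturbation idea $f_n\downarrow f$ cannot succeed: one can have continuous tradeoff functions $f_n$ converging pointwise to a discontinuous limit (e.g.\ $f_n(\alpha)=\max(1-n\alpha,0)$), so a limiting argument does not produce a realizing pair $(P,Q)$ for $f$. The fix is purely to correct the hypothesis, not to push the proof harder.
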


 See \autoref{subsec:tradeoff} for further details on tradeoff functions. Gaussian tradeoff functions are a particularly useful family, providing a notion of Gaussian DP (GDP) parametrized by a single scalar. These are central to our analysis due to the Gaussian noise in noisy (stochastic) gradient descent.

\begin{definition}[GDP]
For $\mu \geq 0$, the \emph{Gaussian tradeoff function} $G(\mu)$ is defined as $G(\mu) = T(\cN(0, 1), \cN(\mu, 1))$. Its value at $\alpha \in [0, 1]$ is given as $G(\mu)(\alpha) = \Phi(\Phi^{-1}(1-\alpha) - \mu)$, where $\Phi$ denotes the CDF of $\cN(0, 1)$. A randomized algorithm $\mathcal{A}$ is \emph{$\mu$-GDP} if for any adjacent datasets $S$ and $S'$, $T(\mathcal{A}(S), \mathcal{A}(S')) \geq G(\mu)$.
\end{definition}

\begin{figure}
    \centering
    \includegraphics[width=0.5\textwidth]{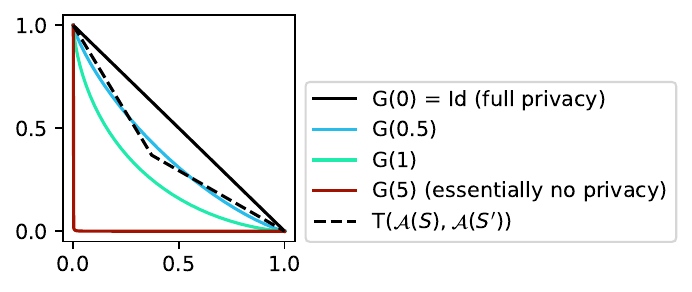}
    \caption{Illustration of $f$-DP and GDP. Gaussian tradeoff functions $G(\mu)$ are less private as $\mu$ increases from $0$ (full privacy) to $\infty$ (no privacy). The closer to $\text{Id}$, the more private. Here $\mathcal{A}$ is $1$-GDP but not $0.5$-GDP because its tradeoff function is pointwise above $G(1)$ but not pointwise above $G(0.5)$.
    }
    \label{fig:fdp-intro}
\end{figure}

We now recall two key properties of tradeoff functions that are central to our analysis.
The first states that post-processing two distributions cannot make them easier to distinguish \cite[Lemma 1]{drs22}.

\begin{lemma}[Post-processing]\label{lem:proc} 
For any probability distributions $P, Q$ and (random) map $\text{Proc}$,
\[T(\text{Proc}(P), \text{Proc}(Q)) \geq T(P, Q)\,.
\]
\end{lemma}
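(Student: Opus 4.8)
The plan is to work directly from the variational definition of the tradeoff function and show that every rejection rule on the post-processed space can be ``pulled back'' to a rejection rule on the original space achieving exactly the same Type I and Type II errors. This embeds the set of feasible tests for the pair $(\text{Proc}(P),\text{Proc}(Q))$ into that for $(P,Q)$, so the infimum defining $T$ can only be larger (i.e.\ more private) after post-processing.

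To carry this out, I would first formalize the random map $\text{Proc}$ as a Markov kernel $K(x,\cdot)$, so that $(\text{Proc}(P))(B) = \int K(x,B)\,dP(x)$ and likewise for $Q$. Fix $\alpha \in [0,1]$ and let $\psi$ be any measurable test, $0 \le \psi \le 1$, on the output space with $\mathbb{E}_{\text{Proc}(P)}\psi \le \alpha$. Define the pulled-back test $\phi(x) := \int \psi(y)\,K(x,dy) = \mathbb{E}[\psi(\text{Proc}(x))]$, which again satisfies $0 \le \phi \le 1$ and is measurable by the measurability properties of Markov kernels. The one computation needed is Fubini--Tonelli (valid since $\psi$ is bounded):
\[\mathbb{E}_P \phi = \int\!\!\int \psi(y)\,K(x,dy)\,dP(x) = \int \psi(y)\, d(\text{Proc}(P))(y) = \mathbb{E}_{\text{Proc}(P)}\psi\,,\]
and identically $\mathbb{E}_Q \phi = \mathbb{E}_{\text{Proc}(Q)}\psi$. (If $\text{Proc}$ is deterministic this is just $\phi = \psi \circ \text{Proc}$ and no Fubini is needed.)

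Hence $\phi$ is feasible for the infimum defining $T(P,Q)(\alpha)$, since $\mathbb{E}_P\phi = \mathbb{E}_{\text{Proc}(P)}\psi \le \alpha$, and it attains objective value $1 - \mathbb{E}_Q\phi = 1 - \mathbb{E}_{\text{Proc}(Q)}\psi$. Therefore $T(P,Q)(\alpha) \le 1 - \mathbb{E}_{\text{Proc}(Q)}\psi$ for every feasible $\psi$; taking the infimum over all such $\psi$ gives $T(P,Q)(\alpha) \le T(\text{Proc}(P),\text{Proc}(Q))(\alpha)$ for all $\alpha$, i.e.\ $T(\text{Proc}(P),\text{Proc}(Q)) \ge T(P,Q)$. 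I expect the only genuine point of care — the ``main obstacle,'' such as it is — to be the measure-theoretic bookkeeping: writing $\text{Proc}$ as a Markov kernel, confirming measurability of the pullback $\phi$, and justifying the interchange of integrals; the conceptual content is immediate once one sees that pulling back tests realizes the post-processed error pairs as a subset of the original ones.
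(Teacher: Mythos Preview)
Your proof is correct and is the standard argument for the post-processing inequality for tradeoff functions. Note, however, that the paper does not actually prove this lemma: it is stated as a preliminary fact and attributed to \cite[Lemma 1]{drs22}, so there is no in-paper proof to compare against. Your Markov-kernel pullback argument is precisely the proof one finds in that reference.
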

The next lemma enables analyzing the composition of multiple private mechanisms~\cite[Definition 5 \& Lemma C.1]{drs22}. 

\begin{definition}[Composition]
The \emph{composition} of two tradeoff functions $f = T(P, Q)$ and $g = T(P', Q')$ is defined as 
$f \otimes g = T(P \times P', Q \times Q')$.
The $n$-fold composition of $f$ with itself is denoted as $f^{\otimes n}$.
\end{definition}

\begin{lemma}[Strong composition]\label{lem:strcomp} Let $K_1, K'_1,K_2, K'_2$ be (random) maps such that for all $y$, 
\[T(K_1(y), K'_1(y)) \geq T(K_2(y), K'_2(y))\,.
\] 
Then 
\[T((P, K_1(P)), (Q, K'_1(Q))) \geq T((P, K_2(P)), (Q, K'_2(Q)))\,.
\]
In particular, if $g = T(K_2(y), K'_2(y))$ does not depend on $y$, then $T((P, K_1(P)), (Q, K'_1(Q))) \geq T(P, Q) \otimes g$.
\end{lemma}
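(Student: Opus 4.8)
The plan is to reduce the claim to the post-processing lemma (Lemma~\ref{lem:proc}) together with a careful tracking of the optimal test. First I would unpack what it means to test between the two joint laws. Let $(y, z)$ denote a sample from $(P, K_1(P))$, meaning $y \sim P$ and $z \sim K_1(y)$; similarly $(y', z') \sim (Q, K_1'(Q))$. The key observation is that a rejection rule $\phi(y,z)$ for the joint hypothesis test can, for each fixed first coordinate $y$, be viewed as a rejection rule $\phi_y(\cdot) := \phi(y,\cdot)$ for the ``inner'' test between $K_1(y)$ and $K_1'(y)$. Thus the joint Type~I error is $\E_{y\sim P}\E_{z\sim K_1(y)}\phi_y(z)$ and one minus the joint Type~II error is $\E_{y\sim Q}\E_{z'\sim K_1'(y)}\phi_y(z')$. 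The trouble is that the \emph{same} slice $\phi_y$ governs both a $P$-average and a $Q$-average of an inner error, so we cannot simply optimize pointwise.

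The standard route around this --- which I would follow --- is to build, from an optimal (or near-optimal) test for the \emph{dominated} inner problem, a coupling/transfer of tests for the dominating inner problem. Concretely, the hypothesis $T(K_1(y),K_1'(y)) \ge T(K_2(y),K_2'(y))$ says that for every $y$ and every level $\alpha$, any test of $(K_1(y),K_1'(y))$ at Type~I level $\le \alpha$ has Type~II error at least that of the best test of $(K_2(y),K_2'(y))$ at the same level. I would invoke the characterization of tradeoff functions (Lemma~\ref{prop:fdpchar}) to express $T(K_2(y),K_2'(y))$, and then --- this is the crucial device --- use the fact that a tradeoff-function inequality between two pairs of distributions is equivalent to the existence of a \emph{randomized post-processing} channel sending $(K_2(y),K_2'(y))$ to a pair stochastically dominated (in the testing order) by $(K_1(y),K_1'(y))$; more precisely, by Blackwell's theorem / the structure in \cite{drs22}, $T(P_1,Q_1)\ge T(P_2,Q_2)$ iff there is a channel $R$ with $R(P_1)=P_2$, $R(Q_1)=Q_2$. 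Applying this $y$-by-$y$ gives a family of channels $R_y$; assembling $(y,z)\mapsto (y, R_y(z))$ yields a single post-processing map on the joint space that carries $(P, K_1(P))$ to $(P, K_2(P))$ and $(Q, K_1'(Q))$ to $(Q, K_2'(Q))$. Post-processing (Lemma~\ref{lem:proc}) then immediately gives
\[
T\big((P, K_1(P)), (Q, K_1'(Q))\big) \;\ge\; T\big((P, K_2(P)), (Q, K_2'(Q))\big),
\]
which is the first display. For the ``in particular'' clause, when $g = T(K_2(y),K_2'(y))$ is independent of $y$, the pair $(P, K_2(P))$ versus $(Q, K_2'(Q))$ is exactly a product-type situation: conditionally on $y$ the inner test contributes a fixed tradeoff $g$ while the outer coordinates contribute $T(P,Q)$, so $T((P, K_2(P)), (Q, K_2'(Q))) = T(P,Q)\otimes g$ by the definition of composition (one should check this reduces to the product distributions $P\times P_0$ vs $Q\times Q_0$ after the channel realizing $g$, or cite \cite[Lemma C.1]{drs22} directly).

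I expect the main obstacle to be the measurability and uniformity of the family of channels $\{R_y\}$: the existence of $R_y$ for each fixed $y$ is classical, but one needs a measurable selection so that $(y,z)\mapsto(y,R_y(z))$ is a bona fide (random) map --- this is where a standard measurable-selection argument (or a direct construction of $R_y$ from cumulative distribution functions, which depends measurably on $y$) is required. A secondary subtlety is that the Neyman--Pearson optimal test may only be \emph{approximately} achievable, so one may prefer to argue via the channel characterization (which is exact) rather than manipulating near-optimal tests and passing to a limit. Modulo these technical points, the argument is a clean two-step reduction: (i) the inner tradeoff inequality $\Rightarrow$ a joint post-processing map, then (ii) Lemma~\ref{lem:proc}.
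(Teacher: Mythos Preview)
The paper does not supply its own proof of this lemma; it simply cites \cite[Definition~5 \& Lemma~C.1]{drs22}. So there is no in-paper argument to compare against, and your Blackwell-plus-post-processing strategy is a reasonable way to reconstruct the result. However, as written your argument contains a genuine direction error that would, if followed literally, prove the reversed inequality.

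You state that ``$T(P_1,Q_1)\ge T(P_2,Q_2)$ iff there is a channel $R$ with $R(P_1)=P_2$, $R(Q_1)=Q_2$.'' This is backwards. By the post-processing lemma, a channel $R$ with $R(P_1)=P_2$ and $R(Q_1)=Q_2$ yields $T(P_2,Q_2)\ge T(P_1,Q_1)$, not the other way. The correct Blackwell statement is: $T(P_1,Q_1)\ge T(P_2,Q_2)$ iff there is a channel $R$ with $R(P_2)=P_1$ and $R(Q_2)=Q_1$ (the less-informative pair is a garbling of the more-informative one). A quick sanity check: if $P_1=Q_1$ then $T(P_1,Q_1)=\mathrm{Id}\ge T(P_2,Q_2)$ always, yet there is generally no channel sending a single point mass to two distinct distributions. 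With your stated direction, the assembled map $(y,z)\mapsto(y,R_y(z))$ would carry $(P,K_1(P))$ to $(P,K_2(P))$, and Lemma~\ref{lem:proc} would then give $T((P,K_2(P)),(Q,K_2'(Q)))\ge T((P,K_1(P)),(Q,K_1'(Q)))$---the wrong inequality. Once you reverse the channel direction, the assembled map goes from the $K_2$-pair to the $K_1$-pair and post-processing delivers exactly the claim.

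A smaller point on the ``in particular'' clause: even when $g=T(K_2(y),K_2'(y))$ is constant in $y$, the distributions $K_2(y),K_2'(y)$ themselves may still depend on $y$, so $(P,K_2(P))$ versus $(Q,K_2'(Q))$ is not literally a product. You need one more application of Blackwell (equality of tradeoffs gives channels in both directions, so in particular a channel $S_y$ from fixed $(P_0,Q_0)$ realizing $g$ to $(K_2(y),K_2'(y))$) to reduce to $P\times P_0$ versus $Q\times Q_0$; your parenthetical hedge acknowledges this but it should be made explicit. Your measurable-selection concern for the family $\{R_y\}$ is well placed and is the main technical wrinkle in carrying this out rigorously.
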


\subsection{Convex optimization}

This paper focuses on convex losses because tight privacy guarantees for noisy gradient descent (and variants) are open even in this seemingly simple setting. 
We make use of the following two basic facts from convex optimization. Below, we say a function is contractive if it is $1$-Lipschitz. Recall that a function $f$ is $M$-smooth if $\nabla f$ is $M$-Lipschitz, 
and is $m$-strongly convex if $x \mapsto f(x) - \frac{m}{2}\lVert x \rVert^2$ is convex.

\begin{lemma}\label{lem:contraction} If $f$ is convex and $M$-smooth, then the gradient descent update
$g(x) = x - \eta \nabla f(x)$
is contractive for each $\eta  \in [0, 2/M]$. If $f$ is additionally $m$-strongly convex and $\eta \in (0, 2/M)$, then $g$ is $c$-Lipschitz where $c = \max\{|1 - \eta m|, |1 - \eta M|\} < 1$.
\end{lemma}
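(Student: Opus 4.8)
The plan is to prove each claim by a direct computation with the mean value inequality for the (single-coordinate) function $\phi(t) = \norm{g(x)-g(y)}$ along the segment from $y$ to $x$, but it is cleaner to reduce everything to a statement about the Jacobian (or, when $f$ is merely smooth, to a monotonicity-type argument). First I would recall that for any twice-differentiable convex $f$ with $0 \preceq \nabla^2 f(z) \preceq M I$ for all $z$, the update map $g(x) = x - \eta \nabla f(x)$ has Jacobian $Dg(z) = I - \eta \nabla^2 f(z)$, whose eigenvalues lie in $[1-\eta M, 1]$. For $\eta \in [0, 2/M]$ this interval is contained in $[-1,1]$, so $\norm{Dg(z)}_{\mathrm{op}} \le 1$, and integrating $g(x)-g(y) = \int_0^1 Dg(y+s(x-y))(x-y)\,ds$ gives $\norm{g(x)-g(y)} \le \norm{x-y}$; this is contractivity. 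If in addition $f$ is $m$-strongly convex, then $m I \preceq \nabla^2 f(z) \preceq M I$, so the eigenvalues of $Dg(z)$ lie in $[1-\eta M, 1-\eta m]$, hence $\norm{Dg(z)}_{\mathrm{op}} \le \max\{\abs{1-\eta m}, \abs{1-\eta M}\} =: c$, and the same integration yields $c$-Lipschitzness. That $c < 1$ for $\eta \in (0, 2/M)$ is the elementary observation that $1-\eta m < 1$ (since $\eta, m > 0$) and $1 - \eta M > -1$ (since $\eta < 2/M$), so both $\abs{1-\eta m}$ and $\abs{1-\eta M}$ are strictly below $1$.

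To make this rigorous without assuming $f$ is $C^2$, I would instead argue via the co-coercivity / firm-nonexpansiveness properties of gradient steps on convex smooth functions. Concretely, for convex $M$-smooth $f$, the Baillon--Haddad theorem gives that $\nabla f$ is $(1/M)$-co-coercive, i.e. $\langle \nabla f(x) - \nabla f(y), x - y \rangle \ge \tfrac1M \norm{\nabla f(x) - \nabla f(y)}^2$. Expanding $\norm{g(x) - g(y)}^2 = \norm{x-y}^2 - 2\eta \langle \nabla f(x)-\nabla f(y), x-y\rangle + \eta^2 \norm{\nabla f(x)-\nabla f(y)}^2$ and applying co-coercivity bounds the cross term, giving $\norm{g(x)-g(y)}^2 \le \norm{x-y}^2 - \eta(2/M - \eta)\norm{\nabla f(x)-\nabla f(y)}^2 \le \norm{x-y}^2$ for $\eta \in [0, 2/M]$. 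For the strongly convex case I would apply the same expansion but to the shifted function $h(x) = f(x) - \tfrac m2 \norm{x}^2$, which is convex and $(M-m)$-smooth, and write $g(x) = (1-\eta m) x - \eta \nabla h(x)$; combining the contraction factor $(1-\eta m)$ on the linear part with the Baillon--Haddad estimate on the $\nabla h$ part, and optimizing the resulting quadratic, recovers the claimed constant $c = \max\{\abs{1-\eta m}, \abs{1-\eta M}\}$. (Alternatively, one can cite this as the standard ``contraction of gradient descent'' fact, e.g. from Nesterov's textbook, since it is entirely classical.)

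The main obstacle, such as it is, is purely bookkeeping: getting the constant in the strongly convex case to come out as exactly $\max\{\abs{1-\eta m}, \abs{1-\eta M}\}$ rather than something loose. The clean way to see that this is the right constant is the $C^2$ argument above (the operator norm of $I - \eta \nabla^2 f$ is genuinely $\max$ of the extreme eigenvalue magnitudes, and this max is attained for quadratics, so no better bound is possible and the bound is tight). To handle general smooth $f$, I would phrase the final step as: $g$ is $c$-Lipschitz iff $\norm{g(x)-g(y)} \le c\norm{x-y}$ for all $x,y$, and since $\nabla^2 f$ can be replaced by an arbitrary measurable selection of subgradient-of-gradient matrices in $[mI, MI]$ via a Lebesgue-point / mollification argument, the $C^2$ bound transfers verbatim. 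I expect the write-up to simply state the $C^2$ computation and remark that the general case follows by standard smoothing, since none of this is where the novelty of the paper lies.
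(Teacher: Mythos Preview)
The paper states this lemma as a standard preliminary fact from convex optimization and gives no proof, so there is nothing to compare against directly. Your argument is correct: the $C^2$ computation via the Jacobian $I - \eta \nabla^2 f$ is the cleanest way to see why the constant is exactly $\max\{|1-\eta m|, |1-\eta M|\}$, and the co-coercivity (Baillon--Haddad) route handles the non-$C^2$ convex case without smoothing.

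One small point: your sketch for the strongly convex constant via the shift $h(x) = f(x) - \tfrac{m}{2}\|x\|^2$ is a bit hand-wavy (``optimizing the resulting quadratic''). If you actually carry it out, you will find it only yields $c = 1-\eta m$ when $\eta \le 2/(M+m)$; for $\eta \in [2/(M+m), 2/M)$ you need the dual shift $\tilde h(x) = \tfrac{M}{2}\|x\|^2 - f(x)$ (convex and $(M-m)$-smooth since $f$ is $M$-smooth and $m$-strongly convex), which gives $c = \eta M - 1$. The two cases together recover the stated $\max$. Alternatively, your mollification remark is perfectly valid and sidesteps this case split entirely. Either way, since the lemma is classical and not where the paper's contribution lies, a one-line citation to a standard reference (e.g., Nesterov) would also be acceptable.
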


\begin{lemma}\label{lem:projection} Let $\cK$ be a closed and convex set in $\R^d$. Then the projection $\Pi_{\cK}(x) = \arg \min_{z \in \cK} \norm{z - x}$ is well-defined and contractive.
\end{lemma}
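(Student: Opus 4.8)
The plan is to prove the two assertions---that $\Pi_\cK$ is well-defined and that it is contractive (i.e., $1$-Lipschitz, in the paper's terminology)---separately, using only elementary convex geometry, since both are classical facts.

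\textbf{Well-definedness.} Assuming, as is implicit, that $\cK \neq \emptyset$, I would first fix an arbitrary $z_0 \in \cK$ and note that any minimizer of $z \mapsto \norm{z - x}$ over $\cK$ must lie in $\cK \cap \bar{B}(x, \norm{z_0 - x})$, which is compact (closed and bounded in $\R^d$). Since $z \mapsto \norm{z - x}$ is continuous, it attains its minimum there, giving existence. For uniqueness, suppose $z_1, z_2 \in \cK$ both attain the minimal value $r := \min_{z \in \cK}\norm{z - x}$. Convexity of $\cK$ gives $\tfrac{1}{2}(z_1 + z_2) \in \cK$, and the parallelogram law yields $\norm{\tfrac{1}{2}(z_1 + z_2) - x}^2 = r^2 - \tfrac{1}{4}\norm{z_1 - z_2}^2$; if $z_1 \neq z_2$ this is strictly below $r^2$, contradicting minimality. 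Hence the minimizer is unique and $\Pi_\cK$ is a well-defined map.

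\textbf{Contractivity.} The key tool is the variational (obtuse-angle) characterization of the projection: for $y = \Pi_\cK(x)$ one has $\langle x - y,\, z - y\rangle \le 0$ for all $z \in \cK$. I would establish this by a first-order perturbation along the segment: for $z \in \cK$ and $t \in (0,1]$, the point $y + t(z - y)$ lies in $\cK$, so $\norm{x - y - t(z - y)}^2 \ge \norm{x - y}^2$; expanding the square, dividing by $t$, and letting $t \to 0^+$ gives the inequality. Then, for arbitrary $x_1, x_2$ with $y_i := \Pi_\cK(x_i)$, I apply this characterization with $z = y_2$ in the inequality for $y_1$, with $z = y_1$ in the inequality for $y_2$, and add the two; after rearranging, this gives $\norm{y_1 - y_2}^2 \le \langle x_1 - x_2,\, y_1 - y_2\rangle$. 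Cauchy--Schwarz then yields $\norm{y_1 - y_2}^2 \le \norm{x_1 - x_2}\,\norm{y_1 - y_2}$, and dividing through (the case $y_1 = y_2$ being trivial) gives $\norm{\Pi_\cK(x_1) - \Pi_\cK(x_2)} \le \norm{x_1 - x_2}$, as desired.

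I do not anticipate a genuine obstacle: the one step requiring care is the variational inequality, which is the crux of the nonexpansiveness argument---everything else is a routine compactness-and-convexity computation. The only modeling point worth flagging explicitly is the standing assumption $\cK \neq \emptyset$, without which the projection is not defined.
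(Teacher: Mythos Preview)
Your proof is correct and is the standard textbook argument. The paper itself does not prove this lemma---it is stated without proof as one of ``two basic facts from convex optimization''---so there is no approach to compare against; your elementary argument via the obtuse-angle variational inequality is exactly the classical one.
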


\subsection{Private optimization algorithms}\label{ssec:prelim:algs}
Throughout, we consider a private optimization setting in which the goal is to minimize an objective function $F(x) = \frac{1}{n}\sum_{i=1}^n f_i(x)$, where the $i$-th loss function $f_i$ is associated with the $i$-th data point in a dataset $S$. An adjacent dataset $S'$ corresponds to loss functions $\{f'_i\}_{i \in [n]}$ where $f_i \equiv f'_i$ except for a single index $i^*$.

Noisy gradient descent and its variants follow the general template of
\begin{equation}\label{eq:alg}
    X_{k+1} \gets \Pi_{\cK} \left[ X_k - \eta \left( \frac{1}{b} \sum_{i \in B_k} \nabla f_i(X_k) + Z_{k+1} \right) \right], \qquad k = 0, 1, \dots, t-1
\end{equation}
where $X_0$ is the initialization (e.g., zero), $\eta$ is the learning rate, $Z_{k+1} \sim \cN(0,\sigma^2 I_d)$ independently, $\sigma$ is the noise rate, $\cK$ is the constraint set, and $t$ is the number of steps. The batch $B_k$ of size $b$ can be chosen in several ways:
\begin{itemize}
    \item Full batches ($\ngd$): $B_k \equiv [n]$.
    \item Cyclic batches ($\ncgd$): Partition $[n]$ into batches of sizes $b$ and cycle through them.   
    \item Stochastic batches ($\nsgd$): Choose batches of size $b$ uniformly at random from $[n]$.
\end{itemize}
The advantage of the latter two variants is that they avoid computing the gradient of the objective, which can be computationally burdensome when $n$ is large.

A standard assumption in private optimization is the following notion of gradient sensitivity:

\begin{definition}[Gradient sensitivity]
 A family of loss functions $\cF$ (defined on $\cX$) has \emph{gradient sensitivity} $L$ if 
 \[\sup_{f,g \in \cF, x \in \cX} \norm{\nabla f(x) - \nabla g(x)} \leq L\,.
 \]
\end{definition}

For example, a family of $L$-Lipschitz loss functions has gradient sensitivity $2L$.  Another example is loss functions of the form $f_i = \ell_i + r$, where $\ell_i$ are convex, $L$-Lipschitz losses, and $r$ is a (non-Lipschitz) strongly convex regularization---the point being that this family of loss functions $\{f_i\}$ has finite gradient sensitivity $2L$ despite each $f_i$ not being Lipschitz.

\section{Shifted interpolation for $f$-DP}\label{sec:tech}

Here we explain the key conceptual ideas enabling our convergent $f$-DP bounds (see~\autoref{ssec:tech} for a high-level discussion). To preserve the flow of ideas we defer proofs to \autoref{app:shifted}. Below, in~\autoref{ssec:tech:prev} we first recall the standard $f$-DP analysis based on the composition theorem and why it yields divergent bounds. Then in~\autoref{ssec:tech:new} we describe our technique of shifted interpolated processes and how this enables convergent $f$-DP bounds. 

To explain the ideas in their simplest form, we consider here the setting of full-batch gradients and unconstrained optimization. Let $\{f_i\}_{i \in [n]}$ and $\{f_i'\}_{i \in [n]}$ be the losses corresponding to two adjacent datasets, where $f_i \equiv f_i'$ except for one index $i^*$. Then $\ngd$ forms the iterates
\begin{align}
    X_{k+1} &= \phi(X_k) + Z_{k+1} \label{eq:tech-X} \\
    X_{k+1}' &= \phi'(X_k') + Z'_{k+1} \label{eq:tech-X'} 
\end{align}
where $X_0 = X_0'$, $\phi(x) := x - \tfrac{\eta}{n}\sum_{i=1}^n \nabla f_i(x)$, $\phi'(x') := x' - \tfrac{\eta}{n} \sum_{i=1}^n \nabla f_i'(x')$, and $Z_{k+1}, Z'_{k+1} \sim \cN(0, \eta^2 \sig^2 I_d)$. 

\subsection{Previous (divergent) $f$-DP bounds, via composition}\label{ssec:tech:prev}

$f$-DP requires bounding $T(X_t, X_t')$. The standard approach, based on the composition theorem, argues as follows:
\begin{align}
    T(X_t, X_t') &\geq T(X_{t-1}, X_{t-1}') \otimes G(c) \nonumber \\
    &\geq T(X_{t-2}, X_{t-2}') \otimes G(c\sqrt{2}) \nonumber \\
    & \cdots \nonumber \\
    & \geq \underbrace{T(X_0,X_0')}_{= \text{Id since } X_0 = X_0'} \otimes \; G(c\sqrt{t})
    \,. \label{eq:tech:prev}
\end{align}
Here, the composition theorem simultaneously ``unrolls'' both processes, at some price $G(c)$ in each iteration. (These prices are collected via a basic GDP identity, \autoref{lem:various}.) This is due to the following simple lemma, which relies on the $f$-DP of the Gaussian mechanism using different updates $\phi,\phi'$~\cite[Theorem 2]{drs22}. 

\begin{lemma}\label{lem:cni} 
Suppose $\norm{\phi(x) - \phi'(x)} \leq s$ for all $x$. Then
\[T(\phi(X) + \cN(0, \sigma^2 I_d), \phi'(X') + \cN(0, \sigma^2 I_d)) \geq T(X, X') \otimes G(\frac{s}{\sigma})\,.
\]
\end{lemma}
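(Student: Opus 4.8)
The plan is to prove \autoref{lem:cni} by reducing it to the composition lemma (\autoref{lem:strcomp}) together with the known $f$-DP guarantee of the Gaussian mechanism with shifted means. First I would write the left-hand side as a joint-to-marginal object: the pair $(X, \phi(X)+Z)$ versus $(X', \phi'(X')+Z')$, where $Z, Z' \sim \cN(0,\sigma^2 I_d)$ are independent of everything. The point is that $\phi(X)+Z$ is obtained from $X$ by first applying the deterministic map $\phi$ and then adding fresh Gaussian noise, so it fits the template of \autoref{lem:strcomp} with $K_1(y) = \phi(y) + \cN(0,\sigma^2 I_d)$ and $K_1'(y) = \phi'(y) + \cN(0,\sigma^2 I_d)$.

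Next I would supply the ``inner'' tradeoff bound required by \autoref{lem:strcomp}: for every fixed $y$, I need $T(K_1(y), K_1'(y)) \geq g$ for some $y$-independent tradeoff function $g$. By the shift hypothesis $\norm{\phi(y) - \phi'(y)} \leq s$, the two conditional laws are $\cN(\phi(y), \sigma^2 I_d)$ and $\cN(\phi'(y), \sigma^2 I_d)$, i.e.\ two Gaussians with the same covariance whose means differ by at most $s$ in Euclidean norm. The tradeoff function between two such Gaussians is, by rotational invariance and a standard reduction to one dimension, exactly $G(\norm{\phi(y)-\phi'(y)}/\sigma)$; and since $G(\mu)$ is decreasing in $\mu$ in the pointwise order on tradeoff functions, this is $\geq G(s/\sigma)$. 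This is precisely \cite[Theorem 2]{drs22} (the $f$-DP of the Gaussian mechanism), so I can cite it rather than rederive it. Thus $g = G(s/\sigma)$ is a valid $y$-independent lower bound.

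Then I would invoke the ``in particular'' clause of \autoref{lem:strcomp} with $P = \mathrm{Law}(X)$, $Q = \mathrm{Law}(X')$: since $g = G(s/\sigma)$ does not depend on $y$, we get
\[
T\big((X, \phi(X)+Z),\ (X', \phi'(X')+Z')\big) \;\geq\; T(X, X') \otimes G(\tfrac{s}{\sigma})\,.
\]
Finally, I would apply post-processing (\autoref{lem:proc}) with the projection map $\mathrm{Proc}(a,b) = b$, which discards the first coordinate: this can only decrease the tradeoff function, so
\[
T\big(\phi(X)+Z,\ \phi'(X')+Z'\big) \;\geq\; T\big((X, \phi(X)+Z),\ (X', \phi'(X')+Z')\big) \;\geq\; T(X,X') \otimes G(\tfrac{s}{\sigma})\,,
\]
which is the claim after renaming $Z, Z'$ as the stated $\cN(0,\sigma^2 I_d)$ noises.

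I do not expect a genuine obstacle here; the lemma is essentially a packaging of strong composition plus post-processing. The one point requiring mild care is that \autoref{lem:strcomp} is stated for $T(K_1(y), K_1'(y)) \geq T(K_2(y), K_2'(y))$ with a reference mechanism $(K_2, K_2')$, so I must exhibit such a reference pair realizing the constant tradeoff function $G(s/\sigma)$ — e.g.\ $K_2 \equiv \cN(0,1)$, $K_2' \equiv \cN(s/\sigma, 1)$, independent of $y$ — and check that the inner inequality holds pointwise in $y$, which is exactly the Gaussian-mechanism bound above. The other subtlety, that independence of the added noise from $X$ (and the joint coupling being the product within each dataset) is what makes the $K_1(P)$ notation in \autoref{lem:strcomp} legitimate, should be noted but is immediate from the setup.
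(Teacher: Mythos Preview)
Your proposal is correct and uses essentially the same ingredients as the paper: post-processing (\autoref{lem:proc}), strong composition (\autoref{lem:strcomp}), and the Gaussian tradeoff bound. The paper does not give a standalone proof of \autoref{lem:cni}---it simply cites \cite[Theorem~2]{drs22}---but its proof of the generalization \autoref{lem:cni-shift} in \autoref{app:shifted}, specialized to $\lambda=1$ and $\widetilde{X}=X$, proceeds by first post-processing via $(x,y)\mapsto \phi'(x)+y$ to obtain the joint pair $\big((X,\,\phi(X)-\phi'(X)+Z),\,(X',Z')\big)$ and then applying strong composition with the inner bound from \autoref{lem:gdpinf}; your argument instead applies strong composition directly with the kernels $K_1(y)=\phi(y)+\cN(0,\sigma^2 I_d)$, $K_1'(y)=\phi'(y)+\cN(0,\sigma^2 I_d)$ and post-processes at the end. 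Both orderings are valid; the paper's decomposition isolates the perturbation $\phi(X)-\phi'(X)$ explicitly, which is what generalizes to the shifted setting with $\lambda\neq 1$, whereas yours is the more direct route for the special case at hand. One minor wording slip: post-processing \emph{increases} (not decreases) the tradeoff function, though your displayed inequality chain is in the correct direction.
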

Bounding $s$ via sensitivity enables the argument~\eqref{eq:tech:prev} and gives the appropriate $c$.
See \cite{drs22} for details. However, while this argument~\eqref{eq:tech:prev} is reasonably tight for small $t$, it is vacuous as $t \to \infty$. Conceptually, this is because this analysis considers releasing all intermediate iterates, hence it bounds $T((X_1, \dots, X_t), (X_1', \dots X_t')) \geq G(c\sqrt{t})$. Concretely, this is because the above analysis requires \emph{completely} unrolling to iteration $0$. Indeed, the identical initialization $X_0 = X_0'$ ensures $T(X_0, X_0') = \text{Id}$, whereas at any other iteration $k > 0$ it is unclear how to directly bound $T(X_k,X_k')$ as $X_k \neq X_k'$. This inevitably leads to final privacy bounds which \emph{diverge} in $t$ since a penalty is incurred in each of the $t$ iterations. 

\subsection{Convergent $f$-DP bounds, via shifted interpolation}\label{ssec:tech:new}

The central idea underlying our analysis is the construction of a certain \emph{auxiliary process} $\{\widetilde{X}_k\}$ that interpolates between the two processes in the sense that $\widetilde{X}_{\tau} = X_{\tau}'$ at some intermediate time $\tau$ and $\widetilde{X}_{t} = X_t$ at the final time. See \autoref{fig:shift-int}. Crucially, this enables running the argument~\eqref{eq:tech:new} where we unroll only from $t$ to $\tau$, rather than all the way to initialization:
\begin{align}
    T(X_t, X_t') &= T(\widetilde{X}_t, X_t') \nonumber \\
    &\geq T(\widetilde{X}_{t-1}, X_{t-1}') \otimes G(a_{t}) \nonumber \\
    &\geq T(\widetilde{X}_{t-2}, X_{t-2}') \otimes 
    G\big(\big(a_{t}^2 + a_{t-1}^2\big)^{1/2} \big) 
    \nonumber \\
    & \dots \nonumber \\
    & \geq \underbrace{T(\widetilde{X}_{\tau},X_{\tau}')}_{= \text{Id since } \tilde{X}_{\tau} = X_{\tau}'} \otimes \; G\big(\big(\sum_{k=\tau+1}^{t} a_k^2\big)^{1/2}\big) 
    \,. \label{eq:tech:new}
\end{align}
Intuitively, this argument replaces the divergent $\sqrt{t}$ dependence of prior $f$-DP bounds with something scaling in $t - \tau$. Here $\tau$ is an analysis parameter that we can optimize based on the following intuitive tradeoff: larger $\tau$ enables unrolling less, whereas smaller $\tau$ gives the auxiliary process $\{\widetilde{X}_k\}$ more time to interpolate between $X_{\tau}'$ and $X_t$ which leads to smaller penalties $a_k$ for unrolling at each iteration.

\begin{figure}
    \centering
    \includegraphics[width=0.5\textwidth]{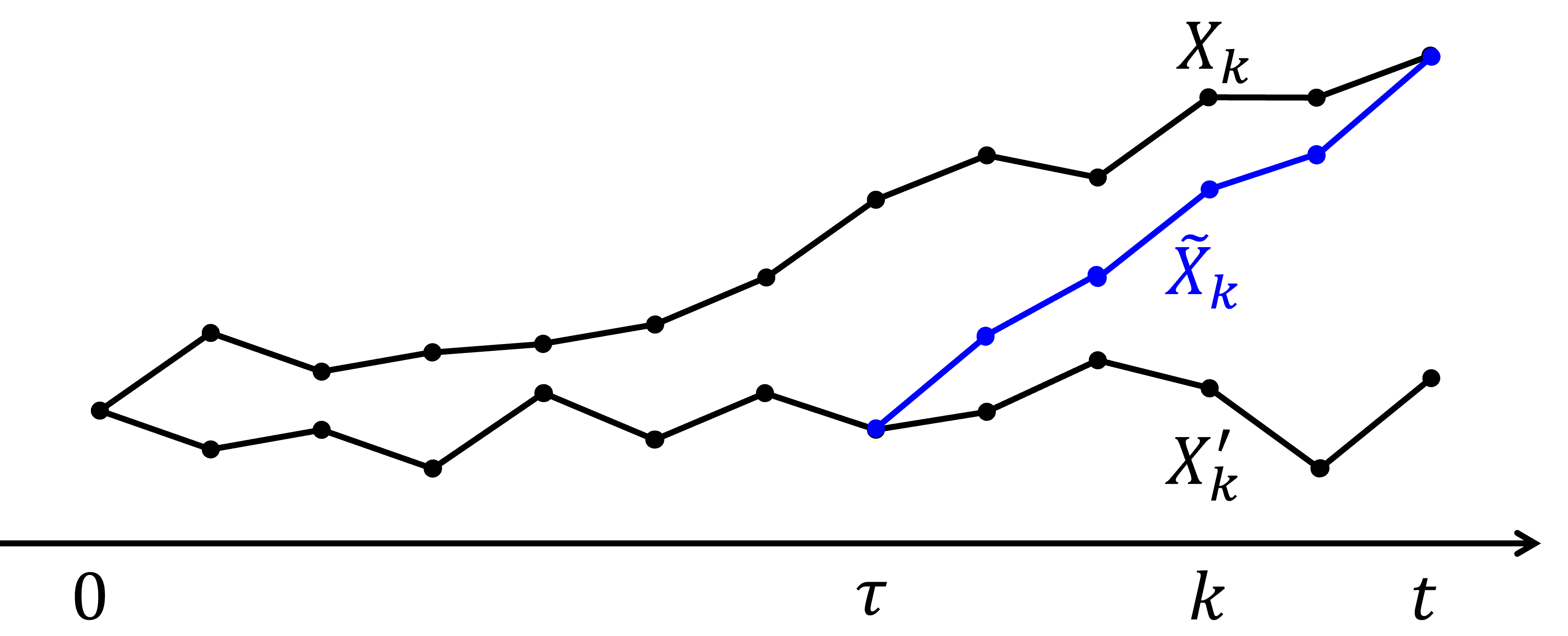}
    \caption{Illustration of the shifted interpolated process $\{\widetilde{X}_k\}$ defined in \eqref{eq:shift-simple}. It starts from one process ($\widetilde{X}_\tau = X'_\tau$) and ends at the other ($\widetilde{X}_t = X_t$). The intermediate time $\tau$ is an analysis parameter that we optimize to get the best final privacy bound.}
    \label{fig:shift-int}
\end{figure}

Formalizing~\eqref{eq:tech:new} leads to two interconnected questions:
\begin{itemize}
    \item \textbf{Q1.} How to construct the auxiliary process $\{\widetilde{X}_k\}$? 
    \item \textbf{Q2.} How to unroll each iteration? I.e., what is the analog of \autoref{lem:cni}?
\end{itemize}

\subsubsection{Shifted interpolated process} For Q1, we initialize $\widetilde{X}_{\tau} = X_{\tau}'$ and define 
\begin{equation}
\begin{aligned}\label{eq:shift-simple}
    \widetilde{X}_{k+1} = \lambda_{k+1}\phi(X_k) + (1 - \lambda_{k+1})\phi'(\widetilde{X}_k) + Z_{k+1}
\end{aligned}
\end{equation}
for $k = \tau, \dots, t-1$. Intuitively, this auxiliary process $\{\widetilde{X}_k\}$ uses a convex combination of the updates performed by the two processes $\{X_k\}$ and $\{X_k'\}$, enabling it to gracefully interpolate from its initialization at one process to its termination at the other. Here $\lambda_k$ controls the speed at which we \emph{shift} from one process to the other. We set $\lambda_{t} = 1$ so that $\widetilde{X}_t = X_t$ achieves the desired interpolation; the other $\{\lambda_{k}\}$ are analysis parameters that we optimize to get the best final bound. An important technical remark is that this auxiliary process uses the same noise increments $\{Z_k\}$ as $\{X_k\}$; this coupling enables bounding the distance between $X_k$ and $\widetilde{X}_k$ by a deterministic value (i.e., in the $\infty$-Wasserstein distance $W_{\infty}$). 

We remark that auxiliary interpolating processes have been used in the context of proving Harnack inequalities (or equivalently, R\'enyi reverse transport inequalities) for diffusions on manifolds~\cite{arnaudon2006harnack, wang2013harnack, wang2014analysis,shiftcomp1,shiftcomp2}. Two key challenges posed by the present setting are that $f$-DP requires tradeoff functions (rather than R\'enyi divergences), and also tracking stochastic processes that undergo \emph{different} dynamics (rather than the same diffusion). This requires constructing and analyzing the auxiliary process~\eqref{eq:shift-simple}.

\subsubsection{Geometrically aware composition} 

For Q2, we develop the following lemma, which generalizes \autoref{lem:cni} by allowing for an auxiliary process $\widetilde{X}$ and a shift parameter $\lambda$ (\autoref{lem:cni} is recovered in the special case $\lambda = 1$ and $\widetilde{X} = X$). A key feature is that unlike \autoref{lem:cni}, this lemma is \emph{geometrically aware} in that it exploits the Lipschitzness of the gradient descent updates $\phi,\phi'$---recall from \autoref{lem:contraction} that $\phi,\phi'$ are (strongly) contractive whenever the losses are (strongly) convex. Intuitively, this contractivity ensures that long-ago gradient queries incur (exponentially) less privacy loss, thus making the total privacy loss convergent; c.f., the discussion in~\autoref{ssec:tech}. 

\begin{lemma}\label{lem:cni-shift} Suppose that $\norm{\phi(x) - \phi'(x)} \leq s$ for all $x$ and that $\phi, \phi'$ are $c$-Lipschitz. Then for any $\lambda \geq 0$ and any random variable $\widetilde{X}$ satisfying $\norm{X - \widetilde{X}} \leq z$, 
\[T(\lambda \phi(X) + (1-\lambda) \phi'(\widetilde{X}) + \cN(0, \sigma^2 I_d), \phi'(X') + \cN(0, \sigma^2 I_d)) \geq T(\widetilde{X}, X') \otimes G(\frac{\lambda(cz + s)}{\sigma})\,.
\]
\end{lemma}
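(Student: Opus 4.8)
The plan is to reduce \autoref{lem:cni-shift} to \autoref{lem:cni} (the non-geometrically-aware composition lemma) by a triangle-inequality argument on the ``effective shift'' between the two Gaussian-perturbed quantities, combined with post-processing. Write $U := \lambda \phi(X) + (1-\lambda)\phi'(\widetilde{X})$ and note that both $U + \cN(0,\sigma^2 I_d)$ and $\phi'(X') + \cN(0,\sigma^2 I_d)$ are Gaussian mechanisms applied to (correlated) inputs. The key observation is that $U$ can be viewed as an update applied to $\widetilde{X}$: indeed, define the map $\psi(x) := \lambda \phi(x') + (1-\lambda)\phi'(x)$ where $x'$ is whatever $X$ happens to be — but this is not a function of $\widetilde X$ alone, so instead I would proceed via the joint coupling. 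Concretely, I would first apply \autoref{lem:cni} (or rather \autoref{lem:strcomp}) to the pair $(\widetilde{X}, X')$ with the kernels $K_1(y) = \psi(y) + \cN(0,\sigma^2 I_d)$ and $K_1'(y') = \phi'(y') + \cN(0,\sigma^2 I_d)$, where $\psi(\widetilde X) := \lambda\phi(X) + (1-\lambda)\phi'(\widetilde X)$, treating $X$ as auxiliary randomness correlated with $\widetilde X$. The point is to bound $\|\psi(\widetilde X) - \phi'(\widetilde X)\|$ uniformly:
\[
\|\psi(\widetilde X) - \phi'(\widetilde X)\| = \lambda \|\phi(X) - \phi'(\widetilde X)\| \leq \lambda\big(\|\phi(X) - \phi(\widetilde X)\| + \|\phi(\widetilde X) - \phi'(\widetilde X)\|\big) \leq \lambda(c\|X - \widetilde X\| + s) \leq \lambda(cz + s)\,,
\]
using $c$-Lipschitzness of $\phi$, the uniform bound $\|\phi(x) - \phi'(x)\| \leq s$, and the hypothesis $\|X - \widetilde X\| \leq z$.

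The main subtlety — and what I expect to be the central obstacle — is that $\psi(\widetilde X)$ is \emph{not} a function of $\widetilde X$ alone; it also depends on $X$, which is correlated with both $\widetilde X$ and (through the shared noise) the downstream process. So one cannot literally apply \autoref{lem:cni} with input $\widetilde X$. The fix is to carry the pair $(\widetilde X, X)$ jointly: note that $(\widetilde X, X)$ on one side and $(X', X')$ — no, more carefully, the right object is to observe that both outputs are obtained by post-processing an augmented pair. On the ``tilde'' side, the output $\lambda\phi(X) + (1-\lambda)\phi'(\widetilde X) + Z$ is a deterministic function of the triple $(\widetilde X, X, Z)$; on the ``prime'' side, $\phi'(X') + Z'$ is a function of $(X', Z')$. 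I would set this up so that the relevant comparison is $T\big((\widetilde X, X), (X', X')\big)$ versus $T(\widetilde X, X')$, but since $\widetilde X$ determines nothing about $X$ deterministically, the cleanest route is the following: apply \autoref{lem:strcomp} with $P = \mathrm{Law}(\widetilde X, X)$ is wrong since we need a single reference distribution. Instead, I would invoke the conditional-coupling form: condition on the value of $\widetilde X = x$ (and the induced conditional law of $X$), apply the Gaussian-mechanism tradeoff bound pointwise with shift at most $\lambda(cz+s)$ — this is where the uniform bound above is used, crucially it is uniform over all realizations — and then combine via \autoref{lem:strcomp}, whose hypothesis is exactly a pointwise (in $y$) tradeoff comparison between kernels. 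This yields $T(\text{tilde output}, \text{prime output}) \geq T(\widetilde X, X') \otimes G(\lambda(cz+s)/\sigma)$.

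Let me restate the step order cleanly. \textbf{Step 1:} Establish the deterministic bound $\|\lambda\phi(X) + (1-\lambda)\phi'(\widetilde X) - \phi'(\widetilde X)\| = \lambda\|\phi(X) - \phi'(\widetilde X)\| \leq \lambda(cz + s)$ via the triangle inequality, $c$-Lipschitzness of $\phi$, and the hypotheses $\|\phi(x)-\phi'(x)\|\leq s$, $\|X-\widetilde X\|\leq z$. \textbf{Step 2:} View the left output as $\phi'(\widetilde X) + V + Z$ where $V := \lambda(\phi(X) - \phi'(\widetilde X))$ is a perturbation of deterministic norm $\leq \lambda(cz+s)$, correlated with $\widetilde X$; view the right output as $\phi'(X') + Z'$. \textbf{Step 3:} Apply the Gaussian-mechanism $f$-DP bound (the same fact underlying \autoref{lem:cni}, from \cite[Theorem 2]{drs22}) conditionally: for each fixed realization of $\widetilde X$ and the correlated $V$, the map adding $\cN(0,\sigma^2 I_d)$ to a point shifted by $\|V\| \leq \lambda(cz+s)$ has tradeoff function $\geq G(\lambda(cz+s)/\sigma)$ against $\cN(0,\sigma^2 I_d)$ added to $\phi'(X')$ — handled via the monotonicity $G(\mu') \geq G(\mu)$ for $\mu' \leq \mu$ so that the worst-case shift $\lambda(cz+s)$ dominates. \textbf{Step 4:} Assemble via \autoref{lem:strcomp} (strong composition), with the ``outer'' pair being $(\widetilde X, X')$ and the ``inner'' kernels being the two Gaussian-perturbation maps, concluding $T(\text{LHS}) \geq T(\widetilde X, X') \otimes G(\lambda(cz+s)/\sigma)$. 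The only real work is Step 3 — making precise that the correlation between $V$ and $\widetilde X$ is harmless because the shift bound $\lambda(cz+s)$ is \emph{uniform} (deterministic), so the per-realization tradeoff comparison required by \autoref{lem:strcomp} holds with a $y$-independent $g = G(\lambda(cz+s)/\sigma)$; once $\lambda = 1$, $\widetilde X = X$, this collapses exactly to \autoref{lem:cni}, which is a useful sanity check.
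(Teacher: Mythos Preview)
Your approach is essentially the paper's: write the left output as $\phi'(\widetilde X) + V + Z$ with $V = \lambda(\phi(X) - \phi'(\widetilde X))$, bound $\norm{V} \leq \lambda(cz+s)$ by the triangle inequality, apply post-processing via $(x,y) \mapsto \phi'(x) + y$, and finish by strong composition with outer pair $(\widetilde X, X')$. The one point to tighten is your Step~3: after conditioning on $\widetilde X = y$ alone, the shift $V$ is still \emph{random} (its law comes from the conditional distribution of $X$ given $\widetilde X = y$), so what you actually need is $T(W + Z, Z) \geq G(\lambda(cz+s)/\sigma)$ for a random $W$ with $\norm{W}\leq \lambda(cz+s)$ independent of $Z$---this is exactly the paper's \autoref{lem:gdpinf}, itself proved by one more pass of strong composition. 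Your phrasing ``for each fixed realization of $\widetilde X$ and the correlated $V$'' risks conditioning on $V$ as well, which would replace the desired outer factor $T(\widetilde X, X')$ by the weaker $T((\widetilde X, V),(X',0))$; conditioning only on $\widetilde X$ and invoking \autoref{lem:gdpinf} for the residual randomness in $V$ is the clean fix.
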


\subsubsection{Convergent $f$-DP bounds}

Combining our answers to Q1 (shifted interpolated process) and Q2 (geometrically aware composition) enables formalizing the argument~\eqref{eq:tech:new}. The remaining proof details are straightforward and deferred to \autoref{app:noisygd}. For clarity, we state this result as a ``meta-theorem'' where the shifts $\lambda_k$ and intermediate time $\tau$ are parameters; our final bounds are obtained by optimizing them, see \autoref{sec:bounds}.

\begin{theorem}\label{thm:shift-main} 
    Consider the stochastic processes $\{X_k\}, \{X_k'\}, \{\widetilde{X}_k\}$ defined in~\eqref{eq:tech-X},~\eqref{eq:tech-X'},~\eqref{eq:shift-simple}, with $\lambda_t = 1$. Suppose that $\norm{\phi(x) - \phi'(x)} \leq s$ for all $x$ and that $\phi, \phi'$ are $c$-Lipschitz. For any sequence $\{z_k\}$ such that $\norm{X_k - \widetilde{X}_k} \leq z_k$,
\[
T(X_t, X'_t) \geq G\left(\frac{1}{\sigma}\sqrt{\sum_{k=\tau+1}^{t} a_k^2} \right)
\]
where $a_k = \lambda_k(cz_{k-1} + s)$.
\end{theorem}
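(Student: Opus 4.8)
The plan is to iterate \autoref{lem:cni-shift} for $k$ running from $t$ down to $\tau+1$, peeling off one Gaussian tradeoff penalty at each step, and then to collect these penalties into a single Gaussian tradeoff function via the standard GDP composition identity $G(\mu_1)\otimes G(\mu_2) = G(\sqrt{\mu_1^2+\mu_2^2})$ (referenced in the excerpt as part of \autoref{lem:various}). First I would set up the induction: since $\lambda_t=1$, the definition~\eqref{eq:shift-simple} of the auxiliary process gives $\widetilde{X}_t = \phi(X_{t-1}) + Z_t = X_t$, so $T(X_t,X_t') = T(\widetilde{X}_t,X_t')$. This is the key observation that lets us start unrolling from the auxiliary process rather than from $X_t$ directly.

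Next, for each $k \in \{t, t-1, \dots, \tau+1\}$, I would apply \autoref{lem:cni-shift} with the random variable $X$ there instantiated as $X_{k-1}$, the auxiliary variable $\widetilde{X}$ there as $\widetilde{X}_{k-1}$, the primed variable $X'$ there as $X_{k-1}'$, the shift $\lambda = \lambda_k$, and the distance bound $z = z_{k-1}$ (valid by the hypothesis $\norm{X_{k-1}-\widetilde{X}_{k-1}} \leq z_{k-1}$). The hypotheses $\norm{\phi(x)-\phi'(x)}\le s$ and $c$-Lipschitzness of $\phi,\phi'$ are exactly the standing assumptions of the theorem. By~\eqref{eq:shift-simple}, the left argument of the tradeoff function produced by the lemma, $\lambda_k\phi(X_{k-1}) + (1-\lambda_k)\phi'(\widetilde{X}_{k-1}) + Z_k$, is precisely $\widetilde{X}_k$, and the right argument $\phi'(X_{k-1}') + Z_k'$ is precisely $X_k'$ (using that $\{\widetilde{X}_k\}$ shares the noise increments $\{Z_k\}$ with $\{X_k\}$, though for the distributional claim only the marginal law matters). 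Hence the lemma yields
\[
T(\widetilde{X}_k, X_k') \ge T(\widetilde{X}_{k-1}, X_{k-1}') \otimes G\!\left(\tfrac{a_k}{\sigma}\right), \qquad a_k = \lambda_k(cz_{k-1}+s).
\]
Chaining these inequalities from $k=t$ down to $k=\tau+1$, using associativity and monotonicity of $\otimes$ with respect to the pointwise order on tradeoff functions (\autoref{lem:strcomp} / the composition framework of \cite{drs22}), gives
\[
T(X_t,X_t') \ge T(\widetilde{X}_\tau, X_\tau') \otimes G\!\left(\tfrac{a_t}{\sigma}\right) \otimes \cdots \otimes G\!\left(\tfrac{a_{\tau+1}}{\sigma}\right).
\]

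Finally I would simplify the right-hand side. Since $\widetilde{X}_\tau = X_\tau'$ by the initialization of the auxiliary process, $T(\widetilde{X}_\tau, X_\tau') = T(X_\tau', X_\tau') = \mathrm{Id}$, which is the identity for $\otimes$ (composing with the most private tradeoff function changes nothing). Collapsing the remaining composition of Gaussian tradeoff functions via $G(\mu_1)\otimes\cdots\otimes G(\mu_j) = G\big(\sqrt{\mu_1^2+\cdots+\mu_j^2}\big)$ gives $T(X_t,X_t') \ge G\big(\tfrac{1}{\sigma}\sqrt{\sum_{k=\tau+1}^t a_k^2}\big)$, as claimed. I do not expect a genuine obstacle here: the content is entirely in the two lemmas already established (\autoref{lem:cni-shift} and the GDP composition identity) and the correct bookkeeping of which variable plays which role at each step. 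The one point requiring mild care is verifying that the outputs of \autoref{lem:cni-shift} at step $k$ match the inputs at step $k-1$ — i.e., that the auxiliary process was defined precisely so that its one-step update is the map appearing in the lemma — and that the monotone, associative composition rule applies termwise; both are immediate from~\eqref{eq:shift-simple} and \autoref{lem:strcomp}.
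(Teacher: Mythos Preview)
Your proposal is correct and follows essentially the same argument as the paper: iterate \autoref{lem:cni-shift} from $k=t$ down to $\tau+1$, use $\widetilde{X}_t=X_t$ (from $\lambda_t=1$) and $\widetilde{X}_\tau=X_\tau'$ to anchor the endpoints, and collapse the resulting composition of Gaussian tradeoffs via the identity in \autoref{lem:various}. The only trivial slip is citing \autoref{lem:strcomp} for monotonicity of $\otimes$; that property is \autoref{lem:various}(a).
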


We emphasize that although this technique-overview section focused on the simple case of full-batch gradients and strongly convex losses for clarity, these techniques readily extend to more general settings. Briefly, for constrained optimization, projections are handled by using the post-processing inequality for tradeoff functions; for (non-strongly) convex optimization, the optimal shifts $a_k$ will be of similar size rather than geometrically increasing (and $\tau$ will be strictly positive rather than zero); for cyclic batches, the update functions $\phi_k,\phi_k'$ and corresponding sensitivity $s_k$ are time-varying; and for stochastic batches, the analog of \autoref{lem:cni-shift} incorporates the celebrated privacy amplification by subsampling phenomenon. These different settings lead to different values for the sensitivity $s$ and contractivity $c$, which in turn lead to different choices of the parameters $\lambda_k$ and $\tau$; however, we emphasize that the analysis approach is the same, and the main difference between the settings is just the elementary optimization problem over these analysis parameters, which we find the (optimal) solutions. Details in~\autoref{sec:bounds}.

\section{Improved privacy for noisy optimization algorithms}\label{sec:bounds}

Here we apply the shifted interpolation technique developed in \autoref{sec:tech} to establish improved privacy bounds for noisy gradient descent and its variants. 
We showcase the versatility of our techniques by investigating gradient descent with full-batch gradients in~\autoref{sec:noisygd}, cyclic batches in~\autoref{sec:noisycgd}, and stochastic batches in~\autoref{sec:noisysgd}. In all cases, we show convergent $f$-DP bounds for unconstrained strongly convex and constrained convex settings; the constrained strongly convex setting is similar and omitted for brevity (and the unconstrained convex setting has divergent privacy). The proofs are similar for all these different settings, based on the approach in~\autoref{sec:tech}; for brevity the proofs are deferred to \autoref{app:bounds}. See also~\autoref{app:num} for numerical illustrations of the improvements of our bounds.
\par Below, recall from~\autoref{sec:prelim} that we denote the learning rate by $\eta$, the noise rate by $\sigma$, the number of data points by $n$, the batch size by $b$, the constraint set by $\cK$, and its diameter by $D$. Throughout we denote by $c = \max\{|1 - \eta m|, |1 - \eta M|\}$ the Lipschitz constant for a step of gradient descent on $m$-strongly convex and $M$-smooth losses (c.f., \autoref{lem:contraction}).

\subsection{Noisy gradient descent}\label{sec:noisygd}

Here we consider full-batch gradient descent. For comparison, we first recall the standard $f$-DP bound implied by the composition theorem~\cite{drs22}.

\begin{theorem}\label{thm:gd} Consider loss functions with gradient sensitivity $L$. Then $\ngd$ is $\mu$-GDP where 
\[\mu = \frac{L}{n \sigma}\sqrt{t}\,.
\]

\end{theorem}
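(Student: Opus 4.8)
The plan is to prove \autoref{thm:gd} as a direct application of the composition argument sketched in~\eqref{eq:tech:prev}, specialized to the $\ngd$ update~\eqref{eq:alg} with full batches $B_k \equiv [n]$ and no constraint set. First I would set up notation consistent with \autoref{sec:tech}: writing $\phi(x) = x - \tfrac{\eta}{n}\sum_i \nabla f_i(x)$ and $\phi'(x') = x' - \tfrac{\eta}{n}\sum_i \nabla f_i'(x')$, so that $X_{k+1} = \phi(X_k) + Z_{k+1}$ and $X_{k+1}' = \phi'(X_k') + Z_{k+1}'$ with $Z_{k+1}, Z_{k+1}' \sim \cN(0, \eta^2\sigma^2 I_d)$ and $X_0 = X_0'$. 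The single differing index $i^*$ means $\phi - \phi' = \tfrac{\eta}{n}(\nabla f_{i^*}' - \nabla f_{i^*})$, so the gradient-sensitivity assumption gives the uniform bound $\norm{\phi(x) - \phi'(x)} \le \tfrac{\eta L}{n} =: s$ for all $x$.

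The core of the proof is then an induction using \autoref{lem:cni}: since the Gaussian increments have standard deviation $\eta\sigma$ and the per-step shift is at most $s = \eta L / n$, each unrolling step costs $G(s/(\eta\sigma)) = G(L/(n\sigma))$. Concretely, I would show by downward induction on $k$ that $T(X_t, X_t') \ge T(X_k, X_k') \otimes G\!\big(\tfrac{L}{n\sigma}\sqrt{t-k}\big)$: the base case $k = t$ is trivial, and the inductive step applies \autoref{lem:cni} to get one more factor of $G(L/(n\sigma))$ and then uses the GDP composition identity $G(\mu_1) \otimes G(\mu_2) = G(\sqrt{\mu_1^2 + \mu_2^2})$ (the ``basic GDP identity'' referenced as \autoref{lem:various}) to combine $\sqrt{t-k-1}$ with the new term into $\sqrt{t-k}$. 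Taking $k = 0$ and using $T(X_0, X_0') = \mathrm{Id} = G(0)$ since $X_0 = X_0'$ yields $T(X_t, X_t') \ge G\!\big(\tfrac{L}{n\sigma}\sqrt{t}\big)$, which is exactly the claimed $\mu$-GDP bound with $\mu = \tfrac{L}{n\sigma}\sqrt{t}$.

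There is no serious obstacle here — this theorem is stated for comparison purposes as the baseline that the rest of the paper improves upon, and it is essentially a restatement of the standard composition-theorem analysis from~\cite{drs22}. The only points requiring a little care are (i) correctly tracking the noise scaling $\eta\sigma$ versus $\sigma$ so that the $\eta$ cancels in $s/(\eta\sigma)$, and (ii) invoking the GDP composition identity in the right form; both are routine. One could alternatively phrase the whole argument as $t$-fold composition of the Gaussian mechanism of sensitivity $L/n$ and noise $\sigma$ (each step being $\tfrac{L}{n\sigma}$-GDP), followed by a single application of $G(\mu)^{\otimes t} = G(\mu\sqrt{t})$ and post-processing to discard intermediate iterates — this is the most economical presentation and is what I would write.
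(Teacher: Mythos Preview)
Your proposal is correct and matches the paper's approach: the paper does not give a standalone proof of \autoref{thm:gd} but simply recalls it as the standard composition bound from~\cite{drs22}, and the argument it sketches in~\eqref{eq:tech:prev} via \autoref{lem:cni} and the GDP identity in \autoref{lem:various} is exactly what you wrote out. One minor point: the general $\ngd$ template~\eqref{eq:alg} includes a projection $\Pi_{\cK}$, so to cover the constrained case you would also invoke the post-processing inequality (\autoref{lem:proc}) after each step, which leaves the bound unchanged.
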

This (divergent) bound is tight without further assumptions on the losses. Below we show convergent $f$-DP bounds for $\ngd$ in the setting of strongly convex losses, and the setting of constrained convex losses. 
\begin{theorem}\label{thm:gd-sc} Consider $m$-strongly convex, $M$-smooth loss functions with gradient sensitivity $L$. Then for any $\eta \in (0, 2/M)$, $\ngd$ is $\mu$-GDP where
\[\mu = \sqrt{\frac{1-c^t}{1+c^t}\frac{1+c}{1-c}}\frac{L}{n \sigma}\,.
\]
For $\eta \in (0, 2/(M+m)]$, this bound is optimal.
\end{theorem}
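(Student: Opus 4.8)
The plan is to instantiate the meta-theorem \autoref{thm:shift-main} with intermediate time $\tau=0$ and then optimize over the shift parameters. First I would pin down the two geometric constants. Since each loss is $m$-strongly convex and $M$-smooth and $\eta\in(0,2/M)$, \autoref{lem:contraction} gives that the full-batch updates $\phi,\phi'$ in \eqref{eq:tech-X}--\eqref{eq:tech-X'} are $c$-Lipschitz with $c=\max\{|1-\eta m|,|1-\eta M|\}<1$; and since $f_i\equiv f_i'$ except at $i^*$, the gradient-sensitivity hypothesis gives $\norm{\phi(x)-\phi'(x)}=\tfrac{\eta}{n}\norm{\nabla f_{i^*}(x)-\nabla f_{i^*}'(x)}\le\tfrac{\eta L}{n}=:s$. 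I would take $\tau=0$: the common initialization $X_0=X_0'$ then makes the base term $T(\widetilde X_0,X_0')$ equal to $\mathrm{Id}$, so the whole bound comes from the $t$ unrolling steps, whose penalties will turn out to be summable thanks to the contraction $c<1$.

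Next I would supply a valid deterministic bound on the coupling distance $\norm{X_k-\widetilde X_k}$, which \autoref{thm:shift-main} requires. Subtracting \eqref{eq:shift-simple} from \eqref{eq:tech-X} and using that both processes share the same noise $Z_{k+1}$ yields $X_{k+1}-\widetilde X_{k+1}=(1-\lambda_{k+1})\big(\phi(X_k)-\phi'(\widetilde X_k)\big)$; the triangle inequality together with $c$-Lipschitzness of $\phi$ and the sensitivity bound then shows that $z_{k+1}:=(1-\lambda_{k+1})(cz_k+s)$ with $z_0=0$ is admissible. Plugging this into \autoref{thm:shift-main} (whose noise increments here have standard deviation $\eta\sigma$), $\ngd$ is $\mu$-GDP with $\mu=\tfrac{1}{\eta\sigma}\big(\sum_{k=1}^t a_k^2\big)^{1/2}$ and $a_k=\lambda_k(cz_{k-1}+s)$, for every choice of $\lambda_1,\dots,\lambda_{t-1}\in[0,1]$ (with $\lambda_t=1$). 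It remains to minimize $\sum_{k=1}^t a_k^2$ over these parameters.

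This is an elementary convex quadratic program, and I expect it to be the main (though routine) step. I would substitute $w_k:=cz_{k-1}+s$, so that $w_1=s$, $a_k=\lambda_k w_k$, $z_k=w_k-a_k$, and the recursion $w_{k+1}=cz_k+s$ combined with $\lambda_t=1$ forces $w_{t+1}=s$; the objective becomes $\sum_k\big(w_k-\tfrac{w_{k+1}-s}{c}\big)^2$, which is convex in $w_2,\dots,w_t$. Its stationarity conditions give a linear second-order recurrence with characteristic roots $c$ and $c^{-1}$; solving with the boundary data $w_1=w_{t+1}=s$ yields a closed form, from which $a_k=\tfrac{s(1+c)}{1+c^t}\,c^{\,t-k}$. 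Summing the resulting geometric series gives $\sum_{k=1}^t a_k^2=s^2\,\tfrac{1-c^t}{1+c^t}\,\tfrac{1+c}{1-c}$, and substituting $s=\eta L/n$ produces exactly the stated $\mu$. I would also check feasibility of the optimal shifts, i.e. $\lambda_k\in[0,1]$, which reduces to $w_k\ge s$ for all $k$ and follows directly from the closed form.

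Finally, for optimality when $\eta\le 2/(M+m)$: in this regime $c=1-\eta m$ (since $(1-\eta m)^2\ge(1-\eta M)^2\iff\eta(M+m)\le2$), and I would match the upper bound with a one-dimensional hard instance. Take $f_i(x)=\tfrac{m}{2}x^2$ for $i\ne i^*$, $f_{i^*}(x)=\tfrac{m}{2}x^2$, and $f_{i^*}'(x)=\tfrac{m}{2}x^2+Lx$; this family is $m$-strongly convex, $M$-smooth, and has gradient sensitivity exactly $L$. The updates $\phi,\phi'$ are then affine with slope $c=1-\eta m$ and differ by the constant $\tfrac{\eta L}{n}$, so $X_t$ and $X_t'$ (started at $X_0=0$) are explicit Gaussians on $\R$ with common variance $\eta^2\sigma^2\tfrac{1-c^{2t}}{1-c^2}$ and means differing by $\tfrac{\eta L}{n}\tfrac{1-c^t}{1-c}$. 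Hence $T(X_t,X_t')=G(\mu)$ with $\mu$ the ratio of the mean gap to the standard deviation, which simplifies to precisely the asserted value; consequently $\ngd$ on these losses is not $\mu'$-GDP for any $\mu'<\mu$. The GDP upper bound itself holds for all $\eta\in(0,2/M)$.
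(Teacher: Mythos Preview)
Your proposal is correct and follows essentially the same route as the paper: instantiate the shifted-interpolation meta-theorem with $\tau=0$, read off $c$ and $s=\eta L/n$, optimize the shifts, and certify optimality via the quadratic instance $\nabla f_i=mx$, $\nabla f_{i^*}'=mx\pm L$. The only difference is cosmetic: for the minimization of $\sum a_k^2$ the paper observes $z_{k+1}+a_{k+1}=cz_k+s$, telescopes to the linear constraint $\sum_{k=1}^t c^{t-k}a_k=\tfrac{1-c^t}{1-c}s$, and applies Cauchy--Schwarz in one line, whereas you solve the quadratic program via its stationarity recurrence; both yield $a_k=\tfrac{s(1+c)}{1+c^t}c^{t-k}$ and the same optimal value.
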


\begin{theorem}\label{thm:gd-proj} Consider convex, $M$-smooth loss functions with gradient sensitivity $L$ and constraint set $\cK$ of diameter $D$. Then for any $\eta \in [0, 2/M]$ and $t \geq \frac{Dn}{\eta L}$, $\ngd$ is $\mu$-GDP where
\[\mu = \frac{1}{\sigma}\sqrt{\frac{3LD}{\eta n} + \frac{L^2}{n^2}\left\lceil \frac{Dn}{\eta L} \right\rceil}\,.
\]
\end{theorem}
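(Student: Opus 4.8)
The plan is to instantiate the shifted-interpolation meta-theorem (\autoref{thm:shift-main}) in the constrained convex setting, with the projection absorbed into the argument via post-processing. Write $\phi(x) = x - \tfrac{\eta}{n}\sum_i \nabla f_i(x)$ and $\phi'$ analogously. By \autoref{lem:contraction}, $\phi,\phi'$ are contractive, i.e. $c = 1$, for $\eta\in[0,2/M]$ in the merely convex case, and by \autoref{lem:projection} the map $\Pi_\cK$ is contractive. The per-step sensitivity is $s := \sup_x\norm{\phi(x)-\phi'(x)} = \tfrac{\eta}{n}\sup_x\norm{\nabla f_{i^*}(x) - \nabla f'_{i^*}(x)} \le \tfrac{\eta L}{n}$ by the gradient sensitivity assumption, and the effective noise scale is $\eta\sigma$ (pulling the $\eta$ out of \eqref{eq:alg}). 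The one genuinely new ingredient compared to the strongly convex case (\autoref{thm:gd-sc}) is the choice of interpolation time $\tau$: since $c = 1$ there is no geometric decay to exploit, so instead of initializing the auxiliary process at $\widetilde X_0 = X_0' = X_0$ (which would make $z_0 = 0$), I initialize at $\widetilde X_\tau = X_\tau'$ for some $\tau \ge 0$ to be chosen, and use the crude but sufficient bound $z_\tau = \norm{X_\tau - X_\tau'} \le D$, valid because $X_\tau$ and $X_\tau'$ are outputs of projections onto $\cK$ (for $\tau=0$ they coincide).

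Peeling the projection off at each step via \autoref{lem:proc} and then applying \autoref{lem:cni-shift} to $\phi,\phi'$ (with $c=1$), the argument~\eqref{eq:tech:new} yields $T(X_t,X_t') \ge G\big(\tfrac{1}{\eta\sigma}\sqrt{\sum_{k=\tau+1}^t a_k^2}\big)$ where $a_k = \lambda_k(z_{k-1}+s)$ and $z_k$ is any valid bound on $\norm{X_k-\widetilde X_k}$ under the shared-noise coupling. Contractivity of $\Pi_\cK$ and of $\phi'$ gives the distance recursion $z_{k+1} \le (1-\lambda_{k+1})(z_k + s)$, hence $a_k \le (z_{k-1}+s) - z_k$, and telescoping with $\lambda_t = 1$ (so $z_t = 0$) gives $\sum_{k=\tau+1}^t a_k \le z_\tau + (t-\tau)s \le D + (t-\tau)s$. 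To minimize $\sum a_k^2$ for a fixed value of $\sum a_k$ one equalizes the summands: I take $a_k \equiv \tfrac{D}{t-\tau} + s$, corresponding to the linear schedule $z_k = D\big(1 - \tfrac{k-\tau}{t-\tau}\big)$ and $\lambda_k = a_k/(z_{k-1}+s)$, and I would verify this schedule is feasible, i.e. $z_k \ge 0$ and $\lambda_k \in [0,1]$ for $\tau \le k \le t$ with $\lambda_t = 1$ (which holds).

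It remains to optimize over $\tau$. Writing $r := t-\tau \in \{1,\dots,t\}$ and $s = \eta L/n$,
\[
\mu^2 \;=\; \frac{1}{\eta^2\sigma^2}\, r\Big(\frac{D}{r} + \frac{\eta L}{n}\Big)^2
\;=\; \frac{1}{\sigma^2}\Big(\frac{D^2}{\eta^2 r} + \frac{2DL}{\eta n} + \frac{rL^2}{n^2}\Big).
\]
The real minimizer of the right-hand side is $r^\star = Dn/(\eta L)$, which is at most $t$ precisely by the hypothesis $t \ge Dn/(\eta L)$, so $\tau = t - r^\star \ge 0$ is admissible; taking the integer choice $r = \lceil Dn/(\eta L)\rceil \le t$, bounding $\tfrac{D^2}{\eta^2 r} \le \tfrac{DL}{\eta n}$ (since $r \ge Dn/(\eta L)$) and keeping $\tfrac{rL^2}{n^2} = \tfrac{L^2}{n^2}\lceil Dn/(\eta L)\rceil$ exactly gives $\mu^2 \le \tfrac{1}{\sigma^2}\big(\tfrac{3LD}{\eta n} + \tfrac{L^2}{n^2}\lceil Dn/(\eta L)\rceil\big)$, which is the claimed $\mu$; $\mu$-GDP follows since $G(\mu') \ge G(\mu)$ whenever $\mu' \le \mu$.

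\textbf{Main obstacle.} I expect the difficulty to lie not in any single estimate but in (i) setting up the projected version of the shifted-interpolation recursion carefully — ensuring the $W_\infty$-coupling between $\{X_k\}$ and $\{\widetilde X_k\}$ survives the projection so that \autoref{lem:cni-shift} applies verbatim at each step — and (ii) checking feasibility of the equalizing shift schedule (the constraints $\lambda_k \in [0,1]$ and $z_k \ge 0$ together with the boundary data $z_\tau \le D$, $z_t = 0$, $\lambda_t = 1$). The remaining optimization over $r$ is elementary, modulo the integrality caveat handled by the ceiling.
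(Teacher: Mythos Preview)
Your proposal is correct and follows essentially the same route as the paper: instantiate the shifted-interpolation meta-theorem with $c=1$, start the auxiliary process at $\widetilde X_\tau = X'_\tau$ using the diameter bound $z_\tau \le D$, handle projections by post-processing (contractivity of $\Pi_\cK$), solve the resulting elementary optimization by equalizing the $a_k$'s via Cauchy--Schwarz on the telescoped constraint $\sum a_k = D + (t-\tau)s$, and then pick $t-\tau = \lceil Dn/(\eta L)\rceil$. The feasibility check you flag as the main obstacle is exactly what the paper verifies in its auxiliary lemma, and your bounding $D^2/(\eta^2 r) \le DL/(\eta n)$ is precisely how the paper arrives at the constant $3$.
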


\autoref{thm:gd-sc} is exactly tight in all parameters, and improves over the composition-based analysis (\autoref{thm:gd}) for all $t > 1$. See~\autoref{fig:fdp-opt}.~\autoref{thm:gd-proj} is tight up to a constant factor (see \autoref{thm:proj-lb}), and for $t > \frac{4Dn}{\eta L}$ it dominates \autoref{thm:gd} since its convergent nature outweighs the slightly suboptimal constant.

\subsection{Noisy cyclic gradient descent}\label{sec:noisycgd}

We now turn to cyclic batches. 
For simplicity, suppose that the number of batches per epoch $l = n/b$ and the number of epochs $E = t/l$ are integers. We state our results with respect to $E$ rather than $t$. For comparison, we first state the standard (divergent) $f$-DP bound implied by the composition theorem~\cite{drs22}. 
\begin{theorem}\label{thm:mgd} Consider loss functions with gradient sensitivity $L$. Then $\nmgd$ is $\mu$-GDP where 
\[\mu = \frac{L}{b \sigma}\sqrt{E}\,.
\]

\end{theorem}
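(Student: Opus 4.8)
\emph{Proof plan.} The plan is to run the standard composition argument of \autoref{ssec:tech:prev}, adapted to cyclic batches; the only new ingredient is a bookkeeping observation about how often the differing data point is queried. Writing each $\nmgd$ step as in~\eqref{eq:alg}, set $\phi_k(x) = x - \tfrac{\eta}{b}\sum_{i \in B_k}\nabla f_i(x)$ and $\phi_k'(x) = x - \tfrac{\eta}{b}\sum_{i \in B_k}\nabla f_i'(x)$, so that $X_{k+1} = \Pi_{\cK}\big(\phi_k(X_k) + W_{k+1}\big)$ and $X_{k+1}' = \Pi_{\cK}\big(\phi_k'(X_k') + W_{k+1}'\big)$, where $W_{k+1} = -\eta Z_{k+1}$ and $W_{k+1}' = -\eta Z_{k+1}'$ are distributed as $\cN(0, \eta^2\sigma^2 I_d)$. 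I would bound $T(X_t, X_t')$ by unrolling from $k = t$ down to $k = 0$, peeling off one iteration at a time.

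For the peeling step: first apply post-processing (\autoref{lem:proc}) to discard the projection $\Pi_{\cK}$, then apply \autoref{lem:cni} (with \autoref{lem:strcomp} handling the conditioning on the previous iterate, exactly as in~\eqref{eq:tech:prev}) to get $T(X_{k+1}, X_{k+1}') \geq T(X_k, X_k') \otimes G\!\big(\tfrac{s_k}{\eta\sigma}\big)$, where $s_k := \sup_x \norm{\phi_k(x) - \phi_k'(x)}$. The key point is that, since the batches cycle through a fixed partition of $[n]$, in each of the $E$ epochs there is \emph{exactly one} step whose batch $B_k$ contains the differing index $i^*$; for that step $\norm{\phi_k(x) - \phi_k'(x)} = \tfrac{\eta}{b}\norm{\nabla f_{i^*}(x) - \nabla f_{i^*}'(x)} \leq \tfrac{\eta L}{b}$ by the gradient-sensitivity assumption, so the factor is at most $G\!\big(\tfrac{L}{b\sigma}\big)$; for all other steps $B_k \not\ni i^*$ forces $\phi_k \equiv \phi_k'$, so $s_k = 0$ and $G(0) = \text{Id}$ contributes nothing.

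Chaining these $t$ inequalities down to the common initialization $X_0 = X_0'$, where $T(X_0, X_0') = \text{Id}$, leaves exactly $E$ nontrivial factors, so
\[
T(X_t, X_t') \;\geq\; \text{Id} \otimes G\!\left(\tfrac{L}{b\sigma}\right)^{\otimes E} \;=\; G\!\left(\tfrac{L}{b\sigma}\right)^{\otimes E}\,.
\]
The Gaussian composition identity $G(\mu_1) \otimes G(\mu_2) = G\big(\sqrt{\mu_1^2 + \mu_2^2}\big)$ (\autoref{lem:various}) then collapses the right-hand side to $G\big(\tfrac{L}{b\sigma}\sqrt{E}\big)$, which is the claimed $\mu$-GDP bound.

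I do not anticipate a genuine obstacle: this is the baseline composition bound, included only for comparison with the convergent results of \autoref{sec:bounds}, and its divergence as $E \to \infty$ is exactly what the shifted-interpolation analysis is designed to overcome. The only points needing care are routine---checking that each per-step tradeoff function $G(s_k/(\eta\sigma))$ does not depend on the random previous iterate (so \autoref{lem:strcomp} applies, since the shift bound $s_k$ is uniform in $x$), and correctly counting one sensitive query per epoch rather than per step.
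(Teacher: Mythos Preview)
Your proposal is correct and is precisely the standard composition argument the paper has in mind: the paper does not give its own proof of this theorem, merely stating it as ``the standard (divergent) $f$-DP bound implied by the composition theorem~\cite{drs22}'' for comparison purposes. The bookkeeping you describe---that in each epoch exactly one batch contains $i^*$, contributing $G(L/(b\sigma))$, while the remaining $l-1$ steps per epoch have $\phi_k \equiv \phi_k'$ and contribute $\text{Id}$---is exactly the observation the paper later uses in its proof of \autoref{thm:mgd-sc} (see the line ``$\phi_{k+1} = \phi'_{k+1}$ when $r \neq j^*$; otherwise $\norm{\phi_{k+1}(x) - \phi'_{k+1}(x)} \leq \tfrac{\eta L}{b}$'' in \autoref{app:noisycgd}).
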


Below we show convergent $f$-DP bounds for $\ncgd$ in the setting of strongly convex losses, and the setting of constrained convex losses.

\begin{theorem}\label{thm:mgd-sc} Consider $m$-strongly convex, $M$-smooth loss functions with gradient sensitivity $L$. Then for any $\eta \in (0, 2/M)$, $\nmgd$ is $\mu$-GDP where
\begin{align*}
    \mu &= \frac{L}{b\sigma}\, \sqrt{1 + c^{2l - 2}\frac{1-c^2}{(1-c^l)^2} \frac{1-c^{l(E-1)}}{1 + c^{l(E-1)}}}\,.
\end{align*}
\end{theorem}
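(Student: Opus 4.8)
The plan is to instantiate the shifted-interpolation framework of \autoref{sec:tech}---concretely, to iterate \autoref{lem:cni-shift} along the cyclic dynamics (the time-varying analogue of \autoref{thm:shift-main}) and then to solve the resulting elementary optimization over the shift parameters.

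First I would write down the time-varying version of the processes \eqref{eq:tech-X}--\eqref{eq:shift-simple}, with $\phi_k(x)=x-\tfrac{\eta}{b}\sum_{i\in B_k}\nabla f_i(x)$ and $\phi_k'$ defined analogously from the adjacent dataset. The point specific to cyclic batches is that, since the batches partition $[n]$ and are cycled through, the distinguished index $i^*$ lies in exactly one batch per epoch; hence $\phi_k\equiv\phi_k'$ at every step except the $E$ ``bad steps'' (one per epoch), at which $\norm{\phi_k(x)-\phi_k'(x)}\le\tfrac{\eta L}{b}=:\hat s$. Moreover each $\phi_k$ is $c$-Lipschitz by \autoref{lem:contraction}, because every batch-averaged loss $\tfrac1b\sum_{i\in B_k}f_i$ is again $m$-strongly convex and $M$-smooth. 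I would take the worst-case placement of $i^*$ (in the last batch of each epoch, which is worst because the leakage of that query is least contracted), so that the bad steps produce $X_l,X_{2l},\dots,X_{El}=X_t$ and the final update is bad. Iterating \autoref{lem:cni-shift} with $\tau=0$ (so $z_0=\norm{X_0-X_0'}=0$), noise scale $\eta\sigma$, sensitivities $s_k=\hat s$ when $k+1\in\{l,2l,\dots,El\}$ and $0$ otherwise, and $\lambda_t=1$, together with the composition rule for Gaussian tradeoff functions, then yields $T(X_t,X_t')\ge G\big(\tfrac1{\eta\sigma}\sqrt{\sum_{k=1}^{t}a_k^2}\big)$ with $a_k=\lambda_k(cz_{k-1}+s_{k-1})$ and $z_k=(1-\lambda_k)(cz_{k-1}+s_{k-1})$.

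It then remains to pick the free shifts $\lambda_1,\dots,\lambda_{t-1}$ to make $\sum_k a_k^2$ small. Writing $u_k:=cz_{k-1}+s_{k-1}$ this is a ``reservoir'' problem: $a_k=\lambda_k u_k\in[0,u_k]$, $u_{k+1}=c(u_k-a_k)+s_k$, a unit $\hat s$ of budget is injected once per epoch and decays geometrically at rate $c$, the terminal constraint $\lambda_t=1$ forces all leftover budget to be spent at step $t$, and a one-line telescoping identity (using $u_1=0$, since the first bad step sits at the end of epoch $1$, and $u_{t+1}=0$) shows $\sum_{k=1}^{t}c^{t-k}a_k=\hat s\,\tfrac{1-c^t}{1-c^l}$ automatically. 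The choice I would make is to clear the reservoir just before the last injection---equivalently $\lambda_{t-1}=1$, which forces $a_t=\hat s$ and is the source of the leading ``$1$''---and over the active window $k=l,\dots,t-1$ to use the Cauchy--Schwarz-optimal profile $a_k\propto c^{t-k}$, with the constant fixed by the telescoping identity. Evaluating the resulting geometric sums (and using $t-l=l(E-1)$) turns $\sum_k a_k^2$ into exactly $\hat s^2\big(1+c^{2l-2}\tfrac{1-c^2}{(1-c^l)^2}\tfrac{1-c^{l(E-1)}}{1+c^{l(E-1)}}\big)$; dividing by $(\eta\sigma)^2$ and taking square roots gives the claimed $\mu$, since $\hat s/(\eta\sigma)=L/(b\sigma)$.

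The step needing the most care is checking that this profile is \emph{admissible}, i.e.\ that the induced $\lambda_k$ all lie in $[0,1]$. The naive unconstrained Cauchy--Schwarz minimizer of $\sum a_k^2$ under the single linear constraint is in general \emph{not} feasible for the reservoir upper bounds $a_k\le u_k$---one already sees this for $l=E=2$, where it violates $a_{t-1}\le u_{t-1}$---so the facts that $\lambda_{t-1}=1$ is the right constraint to activate and that the remaining $a_k\le u_k$ then hold with slack genuinely enter the argument; I would verify $0\le a_k\le u_k$ for $l\le k\le t-1$ (with equality at $k=t-1$, which is what makes $a_t=\hat s$ consistent) by a short induction on $k$ that tracks $u_k$ through each epoch's tail. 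The remaining cyclic settings (e.g.\ constrained convex) then follow the same template, with only the bookkeeping for $\hat s$, $c$, $\tau$ and the optimal window changing, exactly as indicated in \autoref{sec:tech}.
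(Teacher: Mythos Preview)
Your approach is essentially the paper's: it too runs the shifted-interpolation meta-theorem with $\tau=0$ and time-varying sensitivities $s_k$, separates off the final bad step (the paper does this via \autoref{lem:cni} before invoking \autoref{thm:shift} up to $t^*=t-1$, which is exactly your choice $z_{t-1}=0$ forcing $a_t=\hat s$), takes the Cauchy--Schwarz profile $a_k\propto c^{-k}$ on the remaining window, and then checks feasibility of the induced $\lambda_k\in[0,1]$ by an explicit computation (\autoref{lem:mgd-sc-opt}). The one point to tighten: the theorem asserts the bound for \emph{all} adjacent datasets, hence all batch positions $j^*\in[l]$ of the differing index, and your ``least contracted'' remark is not yet a proof of worst case. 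The paper handles general $j^*$ by first post-processing the trailing good steps $t+j^*-l+1,\dots,t$ (where $\phi_k\equiv\phi_k'$), then running the identical argument on $[0,\,t+j^*-l]$; the resulting geometric sums are $j^*$-independent, so your $j^*=l$ computation in fact covers every case once this reduction is stated.
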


\begin{theorem}\label{thm:mgd-proj} Consider convex, $M$-smooth loss functions with gradient sensitivity $L$ and constraint set $\cK$ of diameter $D$. Then for any $\eta \in [0, 2/M]$ and $E \geq \frac{Db}{\eta L}$, $\nmgd$ is $\mu$-GDP where
\begin{align*}
    \mu &= \frac{1}{\sigma}\sqrt{\left(\frac{L}{b}\right)^2 + \frac{3LD}{\eta b l} + \frac{L^2}{b^2l}\left\lceil \frac{Db}{\eta L}\right\rceil}\,.
\end{align*}
\end{theorem}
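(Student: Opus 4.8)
The plan is to mirror the structure of the strongly convex case (\autoref{thm:mgd-sc}) but track the analysis in the \emph{projected, convex} regime. First I would set up the coupled processes: $\{X_k\}$ and $\{X_k'\}$ are the $\nmgd$ iterates on the two adjacent datasets, and $\{\widetilde X_k\}$ is the shifted interpolated process of the form in~\eqref{eq:shift-simple}, except that here the update functions $\phi_k,\phi_k'$ are time-varying (they depend on the current batch $B_k$) and each step ends with a projection $\Pi_{\cK}$. Because $\cK$ has diameter $D$, the convex losses are only $1$-Lipschitz for the gradient-step maps (\autoref{lem:contraction} with $m=0$, so $c = |1-\eta M| \le 1$), but crucially the one index $i^*$ that differs lies in exactly one batch per epoch, so a nonzero sensitivity $s_k = \eta L/b$ is incurred only once every $l$ steps. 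I would choose the intermediate time $\tau$ to be a positive multiple of $l$ (unlike the strongly convex case where $\tau = 0$), reflecting the meta-theorem intuition that in the non-strongly-convex setting the optimal $\tau$ is strictly positive.

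The core of the argument is to run the unrolling~\eqref{eq:tech:new} from $t$ down to $\tau$ using the geometrically aware composition lemma (\autoref{lem:cni-shift}), with projections absorbed for free by post-processing (\autoref{lem:proc}), exactly as flagged in the remark after \autoref{thm:shift-main}. This requires two ingredients. (i) A bound $z_k$ on $\norm{X_k - \widetilde X_k}$: since the two processes share noise increments and the maps are contractive, the only growth comes at the $i^*$-batch steps, so $z_k$ grows by $\eta L/b$ at those $\approx (k-\tau)/l$ steps and stays bounded by the diameter-type term; I would take a linear-in-epoch bound $z_k \lesssim \eta L (k-\tau)/(bl) + (\text{something }\le D)$, then cap it at $D$ using the constraint set. (ii) The per-step penalties $a_k = \lambda_k(c z_{k-1} + s_k)$: with $c \le 1$ and the shift schedule $\lambda_k$ chosen so that $\widetilde X_t = X_t$ (i.e. $\lambda_t = 1$ and the earlier $\lambda_k$ tuned), only the $i^*$-batch steps contribute the $s_k = \eta L/b$ term while every step contributes the $z$-term. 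Plugging the $z_k$ bound into $\sum_{k=\tau+1}^t a_k^2$ and optimizing over $\tau$ (which, via $t \ge Dbl/(\eta L)\cdot$const, i.e. $E \ge Db/(\eta L)$, will be chosen so that $z_\tau$ first reaches $\approx D$) yields the three terms in the stated $\mu$: the $(L/b)^2$ term from the final epoch's single sensitive query, the $3LD/(\eta b l)$ term from the accumulated $cz_{k-1}$ penalties over the unrolled window, and the $\tfrac{L^2}{b^2 l}\lceil Db/(\eta L)\rceil$ term from the number $\approx \tau/l$ of sensitive queries within that window. The condition $E \ge Db/(\eta L)$ is exactly what makes this choice of $\tau \ge 0$ feasible, paralleling the $t \ge Dn/(\eta L)$ hypothesis of \autoref{thm:gd-proj} (which is the $l=1$ special case).

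I would organize the write-up as: (1) define $\{\widetilde X_k\}$ with the time-varying $\phi_k$ and projections, fixing $\tau$ to be the largest multiple of $l$ with $\tau \le t - l$ and $\eta L \tau/(bl) \le D$ or so; (2) prove the $W_\infty$ bound $\norm{X_k - \widetilde X_k} \le z_k$ by induction, using contractivity off the sensitive batch and the triangle inequality plus $\norm{\phi_k - \phi_k'} \le \eta L/b$ on it, and using $\diam(\cK) = D$ to cap; (3) apply \autoref{lem:cni-shift} iteratively with post-processing for the projection, collecting the GDP prices via the Gaussian composition identity (\autoref{lem:various}) to get $T(X_t,X_t') \ge G(\tfrac1\sigma\sqrt{\sum a_k^2})$; (4) bound $\sum_{k=\tau+1}^t a_k^2$ by the claimed expression.

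The main obstacle I expect is step (4) together with the choice of the shift schedule $\{\lambda_k\}$: in the strongly convex case the $\lambda_k$ are tuned to make the $a_k$ geometric, but here with $c \le 1$ and non-geometric growth of $z_k$ one wants the $a_k$ roughly \emph{balanced}, and one must verify that the resulting $\sum a_k^2$ telescopes/collapses cleanly into just the three stated terms rather than something messier — in particular getting the constant $3$ in $3LD/(\eta b l)$ and the ceiling $\lceil Db/(\eta L)\rceil$ exactly right requires careful bookkeeping of which of the $t-\tau$ unrolled steps carry the $s_k$ term. A secondary subtlety is confirming that the hypothesis $E \ge Db/(\eta L)$ (rather than a bound on $t$ directly) is the correct feasibility condition for the optimized $\tau$, and that it reduces to \autoref{thm:gd-proj}'s condition when $b = n$, $l = 1$.
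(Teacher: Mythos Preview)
Your high-level plan matches the paper's: instantiate \autoref{thm:shift} with time-varying $\phi_k,\phi_k'$, sensitivity $s_{k+1}$ equal to $\eta L/b$ at the one step per epoch with $i^*\in B_{k+1}$ and zero otherwise, contraction $c=1$, and an intermediate time $\tau>0$; peel off the last sensitive step (this is the $(L/b)^2$ term) by post-processing back to it and applying \autoref{lem:cni} once; then take constant $a_k$ and optimize the window. But you have the evolution of $z_k$, and with it the role of $\tau$, inverted---and this is not cosmetic. By construction $\widetilde X_\tau=X'_\tau$ and $\widetilde X_t=X_t$, so $z_\tau=\lVert X_\tau-X'_\tau\rVert\le D$ while $z_t=0$: the diameter bound is used to \emph{initialize} $z$ at $D$, and the shifts then drive $z_k$ \emph{down} to zero via $z_{k+1}=z_k+s_{k+1}-a_{k+1}$. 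Your description has $z_k$ growing from $\tau$ and being ``capped at $D$ using the constraint set'', which cannot terminate at $z_t=0$. Worse, your choice of $\tau$ (``largest multiple of $l$ with $\eta L\tau/(bl)\le D$'') makes the window $[\tau,t]$ have length that grows with $E$; it then contains $\approx E$ sensitive queries, and with balanced $a_k$ one gets $\sum a_k^2\asymp (\eta L/b)^2 E/l$, a \emph{divergent} bound.

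The correct choice is the reverse: set the window length in epochs to $E-\tau=\lceil Db/(\eta L)\rceil$; the hypothesis $E\ge Db/(\eta L)$ is precisely what makes this $\tau\ge 0$, and the number of sensitive queries in the window is then $\lceil Db/(\eta L)\rceil$ (this is the source of the third term, not ``$\tau/l$'' as you wrote). With $z_{\tau^*}=D$ and constant $a_k\equiv\tfrac{D+s(E-\tau)}{l(E-\tau)}$ (where $s=\eta L/b$), \autoref{lem:mgd-proj-opt} gives $\tfrac{1}{\eta^2}\sum a_k^2=\tfrac{(D/\eta+L(E-\tau)/b)^2}{l(E-\tau)}$; expanding and bounding the $\tfrac{D^2/\eta^2}{l(E-\tau)}$ piece by $\tfrac{DL}{\eta b l}$ (using $E-\tau\ge Db/(\eta L)$) produces exactly $\tfrac{3LD}{\eta b l}+\tfrac{L^2}{b^2 l}\lceil Db/(\eta L)\rceil$.
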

The convergent nature of these bounds ensures that they dominate \autoref{thm:mgd} when $\ncgd$ is run long enough. This threshold is roughly $E \approx c^{2l-2}\tfrac{1-c^2}{(1-c^l)^2}$ for \autoref{thm:mgd-sc} and $E \approx \tfrac{4Db}{\eta \ell L}$ for \autoref{thm:mgd-proj}.

\subsection{Noisy stochastic gradient descent}\label{sec:noisysgd}
Compared to $\ngd$, the privacy leakage in $\nsgd$ only occurs when the index $i^*$ is in the sampled batch. This phenomenon is known as privacy amplification by subsampling \cite{klnrs11}, which is formulated in $f$-DP as follows \cite[Definition 6]{drs22}.

\begin{definition}[Subsampling]\label{def:cp}
    For tradeoff function $f$ and $p \in [0, 1]$, define $f_p = pf + (1-p)\text{Id}$. The \emph{subsampling operator} $C_p$ (with respect to $f$) is defined as $C_p(f) = \min\{f_p, (f_p)^{-1}\}^{**}$ where $^{-1}$ denotes the (left-continuous) inverse and $^{*}$ denotes the convex conjugate. Equivalently, $C_p(f)$ is the pointwise largest tradeoff function $g$ such that $g \leq f_p$ and $g \leq (f_p)^{-1}$.
\end{definition}
For comparison, we first recall the standard $f$-DP bound based on composition~\cite[Theorem 9]{drs22}.

\begin{theorem}\label{thm:sgd}
Consider loss functions with gradient sensitivity $L$. Then $\nsgd$ is $f$-DP where
\[f = C_{b/n}(G(\frac{L}{b\sigma}))^{\otimes t}\,.
\]
\end{theorem}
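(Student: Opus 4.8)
The plan is to recognize $\nsgd$ as an adaptive composition of $t$ subsampled Gaussian mechanisms, each post-processed by a projection onto $\cK$, and to assemble the stated bound from the $f$-DP calculus of \autoref{sec:prelim}. Fix adjacent datasets $S,S'$ whose losses $\{f_i\},\{f_i'\}$ differ only at index $i^*$, and let $\{X_k\},\{X_k'\}$ be the corresponding runs of \eqref{eq:alg}. I would prove by induction on $k$ that $T(X_k,X_k') \geq C_{b/n}\big(G(\tfrac{L}{b\sigma})\big)^{\otimes k}$; the base case $k=0$ is immediate since $X_0 = X_0'$ is data-independent, so $T(X_0,X_0') = \text{Id}$.

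For the inductive step, I would analyze the one-step kernel $x \mapsto K(x) := \mathrm{Law}(X_{k+1}\mid X_k = x)$ under $S$ and its counterpart $K'$ under $S'$. On input $x$, this kernel samples a batch $B_k$ uniformly among size-$b$ subsets of $[n]$ and fresh noise $Z_{k+1}\sim\cN(0,\sigma^2 I_d)$, forms $x - \eta\big(\tfrac1b\sum_{i\in B_k}\nabla f_i(x) + Z_{k+1}\big)$, and projects onto $\cK$. Since the projection is post-processing (\autoref{lem:proc}), it suffices to bound $T$ for the pre-projection kernels. Conditioning on $B_k$, the pre-projection output is $\cN\big(x - \tfrac{\eta}{b}\sum_{i\in B_k}\nabla f_i(x),\,\eta^2\sigma^2 I_d\big)$ under $S$ and the analogous Gaussian under $S'$; these coincide when $i^*\notin B_k$, and otherwise their means differ by $\tfrac{\eta}{b}\norm{\nabla f_{i^*}(x) - \nabla f_{i^*}'(x)} \leq \tfrac{\eta L}{b}$ by the gradient-sensitivity hypothesis, so the batch-level Gaussian mechanism is $G(\tfrac{L}{b\sigma})$-DP. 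Because the random batch contains $i^*$ with probability exactly $b/n$, privacy amplification by subsampling \cite{drs22} (cf.\ \autoref{def:cp}) then yields $T(K(x),K'(x)) \geq C_{b/n}\big(G(\tfrac{L}{b\sigma})\big)$ uniformly in $x$.

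To chain the one-step bounds across $t$ iterations, I would invoke strong composition (\autoref{lem:strcomp}): the fresh, independent draws of $B_k, Z_{k+1}$ at step $k$ let me apply it with $K_1 = K$, $K_1' = K'$, $P = \mathrm{Law}(X_k)$, $Q = \mathrm{Law}(X_k')$, and the $x$-independent lower bound $g := C_{b/n}\big(G(\tfrac{L}{b\sigma})\big)$, giving $T\big((X_k,X_{k+1}),(X_k',X_{k+1}')\big) \geq T(X_k,X_k')\otimes g$. Marginalizing onto the last coordinate is post-processing, so \autoref{lem:proc} gives $T(X_{k+1},X_{k+1}') \geq T(X_k,X_k')\otimes g$; together with the inductive hypothesis this advances the induction, and taking $k=t$ proves the theorem.

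The main obstacle is reconciling adaptivity with subsampling amplification: $X_k$ depends on the dataset through every earlier round, so one cannot analyze a single subsampled Gaussian step in isolation, and must instead package each round as a Markov kernel whose \emph{amplified} tradeoff guarantee $C_{b/n}(G(\tfrac{L}{b\sigma}))$ holds \emph{uniformly in its input} --- exactly the form needed for \autoref{lem:strcomp} to carry the per-round bound through the $t$-fold composition. A secondary point to verify is that the subsampling amplification result cited from \cite{drs22} is stated for the replace-one adjacency (fixed $n$, one differing index) that defines $\nsgd$ here, rather than for add/remove adjacency.
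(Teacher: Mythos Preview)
Your argument is correct and follows the same high-level strategy as the paper: each $\nsgd$ step, viewed as a Markov kernel in the current iterate, is a subsampled Gaussian mechanism that is $C_{b/n}(G(L/(b\sigma)))$-DP uniformly in its input by \cite[Theorem~9]{drs22}, and strong composition (\autoref{lem:strcomp}) plus post-processing (\autoref{lem:proc}) chain the $t$ steps. The paper differs only in packaging: it first rewrites the update via an explicit decomposition $B_k \leftrightarrow (S_k, V_k)$ with $V_k \sim \mathrm{Ber}(b/n)$ indicating $\{i^* \in B_k\}$ and maps $\psi_S, \phi_S, \phi_S'$ (see~\eqref{eq:sgdascni},~\eqref{eq:phipsi}), and then constructs an auxiliary two-point dataset to recognize the resulting expression as an instance of the subsampled Gaussian mechanism before invoking \cite[Theorem~9]{drs22}. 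This scaffolding is not needed for \autoref{thm:sgd} itself---your direct invocation of the subsampling theorem is cleaner here---but the paper introduces it because the $(S_k, V_k)$ formulation is reused verbatim in the shifted-interpolation proofs of \autoref{thm:sgd-sc} and \autoref{thm:sgd-proj}, where one must couple the subsampling indicator $V_k$ across the original and auxiliary processes.
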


This (divergent) bound is tight for $t=1$ without further assumptions on the losses. Below we show convergent $f$-DP bounds for $\nsgd$ in the setting of strongly convex losses, and the setting of constrained convex losses.

\begin{theorem}\label{thm:sgd-sc} Consider $m$-strongly convex, $M$-smooth loss functions with gradient sensitivity $L$. Then for any $\eta \in (0, 2/M)$, $\nsgd$ is $f$-DP for
\[
    f =  G(\frac{2\sqrt{2}L}{b \sigma}\frac{c^{t - \tau+1} - c^t}{1-c})
    \otimes C_{b/n}(G(\frac{2\sqrt{2} L}{b \sigma})) \otimes C_{b/n}(G(\frac{2L}{b \sigma}))^{\otimes (t - \tau)}
\]
for any $\tau = 0, 1, \dots, t-1$.
\end{theorem}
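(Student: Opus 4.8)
The plan is to adapt the shifted-interpolation argument of \autoref{thm:shift-main} to the stochastic-batch setting, where the two new ingredients are (i) the update functions are now random (depending on which batch $B_k$ is drawn), so the ``sensitivity'' $s$ is only realized on the event that $i^* \in B_k$, and (ii) the last unrolled step must account for this via the subsampling operator $C_{b/n}$. First I would define the shifted interpolated process exactly as in~\eqref{eq:shift-simple}, but with the random update maps $\phi_{B_k}, \phi'_{B_k}$ in place of $\phi,\phi'$, initialized at $\widetilde{X}_\tau = X'_\tau$ and with $\lambda_k = c^{t-k}$ chosen so that the per-step sensitivities telescope geometrically (this is the strongly convex regime, where $c<1$). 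Because the same noise $Z_k$ and the same batch draws $B_k$ are used for $\{X_k\}$ and $\{\widetilde{X}_k\}$, I would show by induction — using $c$-Lipschitzness of every realized update map (\autoref{lem:contraction}) and the deterministic bound $\norm{\phi_{B_k}(x)-\phi'_{B_k}(x)}\le L/n$ whenever $i^*\in B_k$ and $=0$ otherwise — that $\norm{X_k - \widetilde{X}_k} \le z_k$ for a deterministic sequence $z_k$. Conditioning on the batch sequence, the shift accumulated from step $k$ is $\lambda_k(c z_{k-1} + s_k)$ where $s_k = L/n \cdot \1[i^*\in B_k]$, which (modulo the constant $2\sqrt2$, discussed below) yields after summing a geometric series the term $\frac{2\sqrt2 L}{b\sigma}\cdot\frac{c^{t-\tau+1}-c^t}{1-c}$ in the first Gaussian factor.

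The main structural step is the geometrically-aware composition: I would run the unrolling~\eqref{eq:tech:new} from $t$ down to $\tau$, but now distinguish the steps $k = \tau+2,\dots,t$ — where the interpolation weight is $\lambda_k<1$ and the incurred cost is a Gaussian factor whose parameter already absorbs the contraction — from the single ``boundary'' step $k=\tau+1$ at which $\widetilde{X}_{\tau+1}$ is formed directly from $X'_\tau=\widetilde X_\tau$, and from the terminal step. Because at each unrolled step the privacy leakage only materializes when $i^*\in B_k$ (an event of probability $b/n$), the correct analog of \autoref{lem:cni-shift} in this setting replaces the clean Gaussian factor $G(\cdot)$ by its subsampled version $C_{b/n}(G(\cdot))$; this is precisely the mechanism behind \autoref{thm:sgd}, and I would invoke \autoref{lem:strcomp} together with \autoref{def:cp} to chain these subsampled factors. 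The steps indexed $\tau+2,\dots,t$, where the $\lambda_k$-weighting has already collapsed the contribution geometrically, contribute the $(t-\tau)$-fold composition $C_{b/n}(G(\frac{2L}{b\sigma}))^{\otimes(t-\tau)}$, the boundary step $k=\tau+1$ contributes the single extra $C_{b/n}(G(\frac{2\sqrt2 L}{b\sigma}))$ factor, and the accumulated pre-$\tau$ shifts (which have been carried forward deterministically in $W_\infty$ rather than paid for iteratively) contribute the non-subsampled Gaussian $G(\frac{2\sqrt2 L}{b\sigma}\frac{c^{t-\tau+1}-c^t}{1-c})$, since by time $\tau$ these shifts are a fixed offset between the two laws and the subsampling randomness is exhausted. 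Collecting these three pieces gives the claimed $f$.

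The factor $2\sqrt2$ rather than $1$ deserves care and is where I expect the bookkeeping to be most delicate: it combines the usual factor $2$ from gradient sensitivity (a family of $L$-sensitive gradients gives pairwise sensitivity $L$, but the relevant quantity here after the batch-average is $L/b$ per realized differing term, times the number of differing terms) with an extra $\sqrt2$ arising because in the stochastic-batch coupling one cannot align the two processes' batch draws as cleanly as in the full-batch case — one effectively pays for the discrepancy on both the $\{X_k\}$ side and the $\{\widetilde X_k\}$ side, so a triangle-inequality split of the shift doubles it in squared terms. Concretely I would track, separately, $\norm{X_k-\widetilde X_k}$ and the incremental sensitivity at the boundary, bound the sum of their squares (which is what enters the Gaussian parameter via the GDP composition identity \autoref{lem:various}), and verify the constant. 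The remaining obstacle is purely the optimization-free verification that the stated closed form equals $\frac{1}{\sigma}$ times the square root of $\sum_{k=\tau+1}^t a_k^2$ with $a_k = \lambda_k(c z_{k-1}+s_k)$ under $\lambda_k = c^{t-k}$ and $z_k$ solving the induced recursion; this is a routine geometric-series computation of the type already carried out for \autoref{thm:gd-sc}, and I would relegate it, together with the proof that the $z_k$ recursion admits the claimed deterministic solution, to \autoref{app:bounds}.
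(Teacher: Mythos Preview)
Your high-level plan---adapt shifted interpolation, exploit contractivity, replace per-step Gaussian factors by subsampled ones---is the right intuition, but the concrete construction and the accounting for the constants diverge from the paper's argument in ways that matter.

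First, the paper does \emph{not} build a single interpolated process initialized at $\widetilde X_\tau = X'_\tau$ for this theorem. Instead (see \autoref{thm:sgd-sc-gen} and \autoref{fig:shift-int-sgd-sc}) it constructs \emph{two} interpolated processes $\widetilde X_k$ and $\widetilde X'_k$, both launched from $X_0 = X'_0$, each coupled (through the same $Z_k$ and indicator $V_k$) to its own original process; the unrolling bounds $T(\widetilde X_{t-1}, \widetilde X'_{t-1})$ rather than $T(\widetilde X_k, X'_k)$. The $(t-\tau)$-fold subsampled factor arises from the choice $a_k = \eta L/b$ for $\tau \le k \le t-1$ and $a_k = 0$ for $1 \le k < \tau$ in that general theorem, not from $\lambda_k = c^{t-k}$. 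Under this choice the distance $z_k = \lVert X_k - \widetilde X_k\rVert$ grows during the ``do-nothing'' phase $k < \tau$ and then contracts by $c$ per step afterward, so the role of $\tau$ is not where the interpolation begins but where it begins to \emph{spend}.

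Second---and this is the genuine missing idea---your account of the $\sqrt{2}$ is wrong. Both $\sqrt{2}$ factors come from \emph{splitting the noise at the final step}: one writes $Z_t = Z_t^{(1)} + Z_t^{(2)}$ with each piece $\sim \cN(0,\tfrac{\eta^2\sigma^2}{2}I_d)$, then $Z_t^{(1)}$ absorbs the deterministic offset $\psi_{S_{t-1}}(X_{t-1}) - \psi_{S_{t-1}}(\widetilde X_{t-1})$ (bounded by $cz_{t-1}$, yielding the unsubsampled $G(\cdot)$ term) while $Z_t^{(2)}$ absorbs the subsampled term $V_{t-1}(\phi_{S_{t-1}}-\psi_{S_{t-1}})(X_{t-1})$ (yielding the single $C_{b/n}(G(\tfrac{2\sqrt{2}L}{b\sigma}))$). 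Thus the Gaussian factor and the $\sqrt{2}$-inflated subsampled factor are \emph{both} artifacts of step $t$, not of a boundary step $k=\tau+1$ nor of ``pre-$\tau$ shifts whose subsampling randomness is exhausted''. The factor $2$ inside every $C_{b/n}(G(\cdot))$ comes from \autoref{lem:cpgdpinf}: two bounded variables $W,W'$ with $\lVert W\rVert,\lVert W'\rVert\le s$ have pairwise sensitivity $2s$. Finally, the per-step sensitivity when $i^*\in B_k$ is $\eta L/b$, not $L/n$---the batch average is over $b$ terms, of which exactly one differs.
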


\begin{theorem}\label{thm:sgd-proj} Consider convex, $M$-smooth loss functions with gradient sensitivity $L$ and constraint set $\cK$ of diameter $D$. Then for any $\eta \in [0, 2/M]$, $\nsgd$ is $f$-DP where
\[f = G(\frac{\sqrt{2}D}{\eta \sigma \sqrt{t - \tau}}) \otimes C_{b/n}(G(\frac{2\sqrt{2}L}{b \sigma}))^{\otimes (t- \tau)}
\]
for any $\tau = 0, 1, \dots, t-1$.
\end{theorem}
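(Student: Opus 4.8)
The plan is to mimic the shifted-interpolation argument that underlies \autoref{thm:shift-main}, but adapted to (i) stochastic batches, where the analog of \autoref{lem:cni-shift} must incorporate privacy amplification by subsampling via the operator $C_{b/n}$, and (ii) the constrained convex (rather than unconstrained strongly convex) regime, where $c = 1$ so there is no geometric decay and instead we must exploit the bounded diameter $D$ of $\cK$ to control the one remaining ``shift'' term. Concretely, write the two $\nsgd$ processes as $X_{k+1} = \Pi_\cK[\phi_{B_k}(X_k) + Z_{k+1}]$ and $X_{k+1}' = \Pi_\cK[\phi_{B_k}'(X_k') + Z_{k+1}']$ where $\phi_{B_k}(x) = x - \tfrac{\eta}{b}\sum_{i \in B_k}\nabla f_i(x)$, and note that $\phi_{B_k} \equiv \phi_{B_k}'$ unless $i^* \in B_k$, which happens with probability $p = b/n$; when it does, the sensitivity is $\norm{\phi_{B_k}(x) - \phi_{B_k}'(x)} = \tfrac{\eta}{b}\norm{\nabla f_{i^*}(x) - \nabla f_{i^*}'(x)} \le \eta L / b$. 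By \autoref{lem:contraction} and \autoref{lem:projection}, each update (projection included) is contractive, so $c = 1$.

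First I would construct the shifted interpolated process exactly as in \eqref{eq:shift-simple}, initialized at $\widetilde X_\tau = X_\tau'$, using the convex combination $\widetilde X_{k+1} = \Pi_\cK[\lambda_{k+1}\phi_{B_k}(X_k) + (1-\lambda_{k+1})\phi_{B_k}'(\widetilde X_k) + Z_{k+1}]$ with the \emph{same} batch sequence $\{B_k\}$ and the \emph{same} noise $\{Z_k\}$ as the $X$-process, and with $\lambda_t = 1$ so that $\widetilde X_t = X_t$. Because all maps are $1$-Lipschitz, the coupling of noise gives a deterministic bound $\norm{X_k - \widetilde X_k} \le z_k$, which I would propagate by the recursion $z_{k+1} \le \lambda_{k+1}(z_k \cdot 1 + s_{k+1})$ where $s_{k+1}$ is the (conditional) sensitivity at step $k$; since the iterates live in $\cK$, one also has the crude bound $\norm{X_k - \widetilde X_k} \le D$ at all times, which is the key geometric input replacing strong contractivity. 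Second, I would establish a subsampled, shifted analog of \autoref{lem:cni-shift}: for each unrolling step, conditioning on whether $i^* \in B_k$, the tradeoff function picks up a factor $C_{b/n}(G(\lambda_{k+1}(z_k + s)/\sigma))$ via \autoref{lem:strcomp} together with \autoref{def:cp}, where the post-processing through $\Pi_\cK$ is absorbed using \autoref{lem:proc}. Third, I would iterate this from $t$ down to $\tau$ as in \eqref{eq:tech:new}, collecting the product $\bigotimes_{k=\tau+1}^{t} C_{b/n}(G(a_k/\sigma))$ where $a_k = \lambda_k(z_{k-1} + \eta L/b)$, using $T(\widetilde X_\tau, X_\tau') = \mathrm{Id}$ at the bottom.

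The final step is to choose the shifts $\{\lambda_k\}$ and the time $\tau$ to match the stated bound. The structure of the claimed $f$ suggests the optimal choice sets $\lambda_k$ so that the first $t-\tau-1$ steps each contribute the ``small'' amplified Gaussian $C_{b/n}(G(2L/(b\sigma)))$ (coming from a sensitivity of $2L$ worth of gradient difference spread appropriately), while the contributions of the ``tail'' between initialization and the shift collapse into the single un-subsampled Gaussian term $G(\sqrt{2}D/(\eta\sigma\sqrt{t-\tau}))$ — here the $D$ enters precisely because $\norm{X_\tau - \widetilde X_\tau}$ and the accumulated shifts are bounded by the diameter rather than decaying. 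I would verify this by solving the elementary optimization $\min \sum_k a_k^2$ subject to the constraint that the $z_k$ recursion (with $c=1$) reach the prescribed value, which is a convex quadratic program whose solution makes the $a_k$ equal — yielding the $\sqrt{t-\tau}$ in the denominator. The main obstacle I anticipate is the subsampling step: unlike the clean Gaussian composition in \autoref{lem:cni-shift}, here the per-step tradeoff function is $C_{b/n}(G(\cdot))$, which is not itself Gaussian, so one cannot simply add variances in quadrature; instead one must keep the composition of $C_{b/n}(G(\cdot))$ terms symbolic (as the theorem statement does) and only combine the \emph{un}-subsampled Gaussian tail terms via the GDP quadrature identity — care is needed to make sure the shift parameters $\lambda_k$ are chosen so that exactly one block behaves like an un-subsampled Gaussian (the $\tau$-to-$\tau{+}1$ transition, where $\widetilde X$ is being initialized from $X'$ and the diameter bound is used) while the rest retain their subsampled form. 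Handling the boundary/rounding of $\tau$ and confirming the constant $\sqrt 2$ and the factor $2$ (versus $2\sqrt2$) in the Gaussian arguments will require the careful bookkeeping that is deferred to \autoref{app:bounds}.
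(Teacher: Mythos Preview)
Your plan has a genuine gap: the interpolated process you propose mixes the ``shift'' and the ``subsampled sensitivity'' into a single per-step term, and from that term you cannot extract the stated product $G(\cdot)\otimes C_{b/n}(G(\cdot))^{\otimes(t-\tau)}$. Concretely, with $\widetilde X_{k+1}=\Pi_\cK[\lambda_{k+1}\phi_{B_k}(X_k)+(1-\lambda_{k+1})\phi'_{B_k}(\widetilde X_k)+Z_{k+1}]$ the deviation $\lambda_{k+1}(\phi_{B_k}(X_k)-\phi'_{B_k}(\widetilde X_k))$ has an \emph{always-present} piece of size $\le \lambda_{k+1}z_k$ and a \emph{subsampled} piece of size $\le \lambda_{k+1}\eta L/b$ that appears only when $i^*\in B_k$. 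Conditioning on the batch to expose the Bernoulli structure reveals whether $i^*$ was sampled, so the strong-composition step does not give a clean $C_{b/n}(G(\cdot))$ factor; the best uniform bound is the worst case $G(\lambda_{k+1}(z_k+\eta L/b)/(\eta\sigma))$, which loses subsampling amplification altogether. Your conjecture that the un-subsampled Gaussian arises from ``exactly one block'' at $\tau\to\tau{+}1$ is also wrong: in the correct argument it arises from \emph{every} step and only becomes a single factor via GDP quadrature.

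The paper fixes this by three coupled ideas you are missing. First, it rewrites the batch draw as a pair $(S_k,V_k)$ with $V_k\sim\mathrm{Ber}(b/n)$ the indicator $\{i^*\in B_k\}$ and $S_k$ independent of $V_k$, so that the update splits as $\psi_{S_k}(X_k)+V_k(\phi_{S_k}-\psi_{S_k})(X_k)$ with $\psi$ the \emph{non}-sensitive part (see the proof of \autoref{thm:sgd}). Second, the shifted process interpolates \emph{only} the $\psi$-part,
\[
\widetilde X_{k+1}=\Pi_\cK\bigl(\psi_{S_k}(\widetilde X_k)+\lambda_{k+1}(\psi_{S_k}(X_k)-\psi_{S_k}(\widetilde X_k))+V_k(\phi_{S_k}-\psi_{S_k})(X_k)+Z_{k+1}\bigr),
\]
so that $\|X_{k+1}-\widetilde X_{k+1}\|\le(1-\lambda_{k+1})z_k$ with \emph{no} sensitivity term in the recursion; with $\lambda_{k+1}=1/(t-k)$ this gives $z_k=D(t-k)/(t-\tau)$ and $a_{k+1}\equiv D/(t-\tau)$. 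Third, at every step the noise is split $Z_{k+1}=Z^{(1)}_{k+1}+Z^{(2)}_{k+1}$ with variance $\eta^2\sigma^2/2$ each: $Z^{(1)}$ absorbs the (un-subsampled) shift $\lambda_{k+1}(\psi_{S_k}(X_k)-\psi_{S_k}(\widetilde X_k))$ via \autoref{lem:gdpinf}, contributing $G(\sqrt{2}\,a_{k+1}/(\eta\sigma))$, while $Z^{(2)}$ absorbs the subsampled term $V_k(\phi_{S_k}-\psi_{S_k})(X_k)$ via \autoref{lem:cpgdpinf}, contributing $C_{b/n}(G(2\sqrt{2}L/(b\sigma)))$. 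The $t-\tau$ un-subsampled Gaussians then compose by quadrature to $G(\sqrt{2}D/(\eta\sigma\sqrt{t-\tau}))$, and the noise split is exactly where both $\sqrt{2}$ factors come from. (Incidentally, your recursion $z_{k+1}\le\lambda_{k+1}(z_k+s_{k+1})$ has the wrong factor; it should be $(1-\lambda_{k+1})$, and in the paper's construction there is no $+s$ at all.)
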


Both theorems give convergent privacy by taking $t - \tau$ constant as $t \to \infty$.
In contrast,~\autoref{thm:sgd} is convergent in the regime $t = O(n^2/b^2)$ (by CLT), but yields a vacuous privacy as $t \to \infty$ for fixed $b/n$~\cite{drs22}. 
We remark that for finite but large $t$, one can set $t-\tau$ to be sufficiently large and apply CLT (\autoref{lem:clt}) to approximate the composition of $C_p(G(\cdot))$; see \autoref{lem:complim} and \autoref{subsec:cltopt}. We also remark that by choosing $t - \tau = \Theta(\tfrac{Dn}{\eta L})$, \autoref{thm:sgd-proj} recovers the asymptotically tight 
R\'enyi DP bound of \cite{at22}.

\subsection{Numerical example}\label{sec:example}

As a proof of concept, here we consider regularized logistic regression on MNIST \cite{lecun2010mnist}. We compare our results with the state-of-the-art R\'enyi DP bounds, and existing $f$-DP bounds (based on the composition theorem) which we denote as GDP Composition. For a fair comparison, we use the same algorithm $\ncgd$, with all parameters unchanged, and only focus on the privacy accounting; we focus on $\ncgd$ because it is close to standard private optimization implementations, e.g., Pytorch~\cite{pytorch} and TensorFlow~\cite{tensorflow}. Indeed, standard implementations often cycle through batches in a permuted order every epoch, somewhat similar to how $\ncgd$ cycles through batches in a fixed order.

\autoref{tab:lr-ep} demonstrates that for this problem, our privacy guarantees are tighter, enabling longer training for the same privacy budget---which helps both training and testing accuracy
(c.f., \autoref{tab:lr-acc}). 
For full details of the experiment, see \autoref{subsec:example}.
\begin{table}[H]
\centering
\caption{Privacy $\ep$ of $\ncgd$ on regularized logistic regression for $\delta = 10^{-5}$ in $(\ep, \delta)$-DP. Our results provide better privacy than both GDP Composition and RDP bounds in all cases.}
\label{tab:lr-ep}
\vskip 0.15in
\begin{tabular}{c|c|c|c}
\toprule
Epochs & \multicolumn{1}{c|}{GDP Composition} & \multicolumn{1}{c|}{RDP} & \multicolumn{1}{c}{Our Bounds} \\

 \midrule
50 & 30.51 & 5.82 & 4.34 \\
100 & 49.88 & 7.61 & 5.60 \\
200 & 83.83 & 9.88 & 7.58 \\
\bottomrule
\end{tabular}
\vskip -0.1in
\end{table}

\begin{table}[H]
\centering
\caption{Training and test accuracy (\%) of $\nmgd$ for regularized logistic regression, averaged over $10$ runs. Both the training and test accuracy improve as the number of epochs increases.}
\label{tab:lr-acc}
\vskip 0.15in
\begin{tabular}{c|c|c}
\toprule
Epochs & Training & Test \\
\midrule
50 & 89.36 $\pm$ 0.03 & 90.12 $\pm$ 0.04 \\
100 & 90.24 $\pm$ 0.03 & 90.94 $\pm$ 0.07 \\
200 & 90.85 $\pm$ 0.02 & 91.37 $\pm$ 0.08 \\
\bottomrule
\end{tabular}
\vskip -0.1in
\end{table}

\section{$f$-DP of the exponential mechanism}\label{sec:expmech}

Since we show convergent $f$-DP bounds for randomized algorithms in~\autoref{sec:bounds}, we can take the limit $t \to \infty$ to obtain $f$-DP bounds for their stationary distributions. We focus here on $\ngd$ because, up to a simple rescaling, it is equivalent to Langevin Monte Carlo ($\lmc$), one of the most well-studied sampling algorithms in the statistics literature; see, e.g.,~\cite{robert1999monte,liu2001monte,andrieu2003introduction}. Our results for (strongly) convex losses not only imply new results for (strongly) log-concave sampling for $\lmc$, but also imply $f$-DP bounds for the exponential mechanism~\cite{mt07}---a foundational concept in DP---since it is obtained from $\lmc$'s stationary distribution in the limit as the stepsize $\eta \to 0$.

 \subsection{Strongly log-concave targets}

Our optimal $f$-DP bounds for $\ngd$ immediately imply optimal\footnote{Although here we bound the optimal constants for simplicity.} $f$-DP bounds for $\lmc$.

\begin{proposition}\label{prop:lmc}
    Suppose that $F, F'$ are $m$-strongly convex and $M$-smooth, and that $F - F'$ is $L$-Lipschitz. Consider the $\lmc$ updates
    \begin{align*}
        X_{k+1} &= X_k - \eta \nabla F(X_k) + Z_{k+1} 
        \\
        X'_{k+1} &= X'_k - \eta \nabla F'(X'_k) + Z'_{k+1}
    \end{align*}
    where $X_0 = X'_0$ and $Z_{k+1}, Z'_{k+1} \sim \cN(0, 2\eta I_d)$. Then for any $\eta \in (0, 2/(M+m)]$,
    \[T(X_t, X'_t) \geq G\left(\sqrt{\frac{2-\eta m}{2}}\frac{L}{\sqrt{m}}\right)\,.
    \]
\end{proposition}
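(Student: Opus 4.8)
The plan is to realize \autoref{prop:lmc} as the limit $t \to \infty$ of the $\ngd$ bound in \autoref{thm:gd-sc}, after matching the $\lmc$ parametrization to the $\ngd$ template in~\eqref{eq:alg}. First I would observe that the $\lmc$ recursion is exactly $\ngd$ with full-batch gradients ($b = n = 1$ effectively, or rather with $F$ playing the role of the averaged objective), noise covariance $2\eta I_d$ instead of $\eta^2\sigma^2 I_d$, and no projection ($\cK = \R^d$). Concretely, $\eta^2 \sigma^2 = 2\eta$ forces the identification $\sigma^2 = 2/\eta$, i.e. $\sigma = \sqrt{2/\eta}$. The gradient-sensitivity hypothesis translates as follows: since $F - F'$ is $L$-Lipschitz, $\norm{\nabla F(x) - \nabla F'(x)} \le L$ for all $x$, so the per-step drift discrepancy is $s = \eta L$ (the factor $\eta$ because the update subtracts $\eta \nabla F$), and the ``$n\sigma$'' in \autoref{thm:gd-sc} becomes just $\sigma$ here since there is a single loss.

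Next I would invoke \autoref{lem:contraction}: since $F, F'$ are $m$-strongly convex and $M$-smooth and $\eta \in (0, 2/(M+m)] \subset (0, 2/M)$, both maps $x \mapsto x - \eta\nabla F(x)$ and $x \mapsto x - \eta \nabla F'(x)$ are $c$-Lipschitz with $c = \max\{|1-\eta m|, |1 - \eta M|\}$. On the interval $\eta \le 2/(M+m)$ one checks $|1 - \eta M| \ge |1 - \eta m|$, so $c = 1 - \eta m$ (using $\eta < 2/M \le 2/(M+m)\cdot\frac{M+m}{M}$... more simply $\eta M < 2$ gives $1 - \eta M > -1$ and $\eta \le 2/(M+m)$ gives $1-\eta M \ge -(1-\eta m)$, hence $c = 1-\eta m$). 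Now applying \autoref{thm:gd-sc} with these substitutions gives $\mu$-GDP with
\[
\mu = \sqrt{\frac{1-c^t}{1+c^t}\,\frac{1+c}{1-c}}\;\frac{L}{\sigma}\,,
\]
and since $T(X_t, X_t') \ge G(\mu)$ is precisely the claimed tradeoff-function inequality, it remains to simplify and take the limit.

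Then I would push $t \to \infty$: because $c = 1 - \eta m \in (0,1)$, we have $c^t \to 0$, so $\frac{1-c^t}{1+c^t} \to 1$ and $\mu \to \sqrt{\frac{1+c}{1-c}}\frac{L}{\sigma}$. Substituting $c = 1-\eta m$ gives $1 - c = \eta m$ and $1 + c = 2 - \eta m$, so $\frac{1+c}{1-c} = \frac{2-\eta m}{\eta m}$, and with $\sigma = \sqrt{2/\eta}$ we get $\frac{L}{\sigma} = L\sqrt{\eta/2}$, hence
\[
\mu \to \sqrt{\frac{2-\eta m}{\eta m}}\cdot L\sqrt{\frac{\eta}{2}} = \sqrt{\frac{2-\eta m}{2m}}\,L = \sqrt{\frac{2-\eta m}{2}}\,\frac{L}{\sqrt{m}}\,,
\]
which is exactly the stated bound. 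Since $G(\mu)$ is monotone decreasing in $\mu$ and the bound $T(X_t,X_t')\ge G(\mu_t)$ holds for every finite $t$ with $\mu_t \nearrow \mu_\infty$ as $t\to\infty$, and since tradeoff functions are closed under pointwise limits, the limiting inequality $T(X_t, X_t') \ge G(\mu_\infty)$ holds for all $t$ (indeed $\mu_t \le \mu_\infty$ already gives $G(\mu_t) \ge G(\mu_\infty)$ directly, so no limiting argument is even needed — one just bounds $\frac{1-c^t}{1+c^t} \le 1$).

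\textbf{Main obstacle.} The substantive content is entirely borrowed from \autoref{thm:gd-sc}, so the only real care needed is bookkeeping: correctly matching the noise scaling ($\sigma^2 = 2/\eta$ versus the $\eta^2\sigma^2$ convention in~\eqref{eq:alg}), correctly identifying the sensitivity $s = \eta L$ and the effective ``$n = 1$'', and verifying that on $\eta \in (0, 2/(M+m)]$ the contraction factor is exactly $c = 1-\eta m$ rather than $|1-\eta M|$ — this last algebraic check is where a sign error could creep in, and it is also exactly the interval on which \autoref{thm:gd-sc} is claimed to be optimal, which is presumably not a coincidence. No genuine difficulty is expected beyond this alignment of conventions.
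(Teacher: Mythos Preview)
Your approach is exactly the paper's: identify $\lmc$ as $\ngd$ with $n=1$, $\sigma=\sqrt{2/\eta}$, apply \autoref{thm:gd-sc}, and use $\tfrac{1-c^t}{1+c^t}\le 1$ to pass to the $t$-independent bound (so, as you note at the end, no actual limit is needed). One slip to fix: you wrote ``$|1-\eta M|\ge |1-\eta m|$'' when you meant $\le$---your parenthetical argument correctly establishes $|1-\eta M|\le 1-\eta m$ on $(0,2/(M+m)]$, which is precisely what yields $c=1-\eta m$.
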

\begin{proof}
    $\lmc$ is a special case of $\ngd$ with $n = 1$, $f_1 = F, f'_1 = F'$, and $\sigma = \sqrt{2 / \eta}$. Apply \autoref{thm:gd-sc}.
\end{proof}

Taking $t \to \infty$ gives $f$-DP guarantees for the stationary distributions $\pi(\eta)$ and $\pi'(\eta)$ of these $\lmc$ chains. We also obtain $f$-DP guarantees between the exponential mechanisms $\pi \propto e^{-F}$ and $\pi' \propto e^{-F'}$ for $F$ and $F'$.

\begin{corollary}\label{cor:expmechsc}
    In the setting of \autoref{prop:lmc}, 
    \[T(\pi(\eta), \pi'(\eta)) \geq G\left(\sqrt{\frac{2-\eta m}{2}}\frac{L}{\sqrt{m}}\right)
    \]
    and $$T(\pi, \pi') \geq G\left(\frac{L}{\sqrt{m}}\right)\,.$$
\end{corollary}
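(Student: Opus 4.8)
The plan is to derive \autoref{cor:expmechsc} from \autoref{prop:lmc} in two steps: first pass to the stationary distributions of the $\lmc$ chains as $t \to \infty$, and then pass to the exponential mechanisms as the stepsize $\eta \to 0$. Throughout, the key tool is that tradeoff functions behave well under limits of the underlying distributions; specifically, if $P_t \to P$ and $Q_t \to Q$ weakly, then $T(P, Q) \geq \limsup_t T(P_t, Q_t)$ pointwise (tradeoff functions are upper semicontinuous under weak convergence of the arguments, or equivalently the set of tradeoff functions dominating a fixed $f$ is closed in the topology of pointwise convergence). Since the right-hand side of \autoref{prop:lmc} does not depend on $t$, this immediately upgrades the $t$-uniform bound to a bound on the limiting laws.

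\textbf{Step 1 (stationarity).} For fixed $\eta \in (0, 2/(M+m)]$, the $\lmc$ update $x \mapsto x - \eta \nabla F(x) + \cN(0, 2\eta I_d)$ is a contraction in Wasserstein distance with rate $c = \max\{|1-\eta m|, |1-\eta M|\} < 1$ by \autoref{lem:contraction}, hence the chain $\{X_k\}$ has a unique stationary distribution $\pi(\eta)$ and $\mathrm{Law}(X_t) \to \pi(\eta)$ in $W_2$ (and thus weakly); likewise $\mathrm{Law}(X_t') \to \pi'(\eta)$. Applying \autoref{prop:lmc} for every $t$ and taking $t \to \infty$ using the semicontinuity property gives $T(\pi(\eta), \pi'(\eta)) \geq G\big(\sqrt{\tfrac{2-\eta m}{2}}\,\tfrac{L}{\sqrt m}\big)$. (One must note the bound in \autoref{prop:lmc} is stated for $X_0 = X_0'$, but since the limit is the stationary law regardless of initialization, this is no loss.)

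\textbf{Step 2 ($\eta \to 0$).} It is classical that the stationary distribution $\pi(\eta)$ of the (unadjusted) Langevin chain converges weakly to $\pi \propto e^{-F}$ as $\eta \to 0$ (this is precisely the statement that the bias of $\lmc$ vanishes with the stepsize; a clean way to see it is via the $W_2$ bias bounds for $\lmc$ under log-concavity, or by noting that $\pi(\eta)$ is the invariant law of the Euler discretization, which converges to that of the Langevin diffusion). The same holds for $\pi'(\eta) \to \pi'$. The bound $\sqrt{\tfrac{2-\eta m}{2}}\,\tfrac{L}{\sqrt m} \to \tfrac{L}{\sqrt m}$ as $\eta \to 0$, and $G(\mu)$ is continuous in $\mu$, so passing to the limit via semicontinuity once more yields $T(\pi, \pi') \geq G\big(\tfrac{L}{\sqrt m}\big)$.

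\textbf{Main obstacle.} The substance is not in the optimization or DP machinery but in justifying the two limit interchanges cleanly: (i) upper semicontinuity of $T(\cdot, \cdot)$ under weak convergence of both arguments, and (ii) the weak convergence $\pi(\eta) \to \pi$ as $\eta \to 0$. Point (i) can be handled by a standard argument (e.g., via the characterization of $T$ as an infimum over tests, or by invoking that $f$-DP is preserved under weak limits, which is implicit in \cite{drs22}); point (ii) is a known property of $\lmc$ under strong log-concavity but should be cited or sketched via the standard $W_2$ discretization-bias estimate. Everything else is immediate from \autoref{prop:lmc}. Given the paper's style of deferring such routine analytic justifications, I would state these two facts as a short lemma (or cite them) and keep the proof of the corollary to a few lines.
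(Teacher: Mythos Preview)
Your two-limit strategy ($t\to\infty$, then $\eta\to 0$, passing the uniform bound from \autoref{prop:lmc} through each limit) is exactly the paper's approach. The only technical difference is the mode of convergence you invoke: you argue via weak convergence (obtained from $W_2$-contraction of the LMC update) together with closedness of the superlevel sets $\{(P,Q):T(P,Q)\geq f\}$ under weak limits, whereas the paper uses TV convergence of LMC to its stationary distribution and of $\pi(\eta)$ to $\pi$ (cited from the sampling literature), combined with a dedicated lemma (\autoref{lem:tvlimit}) establishing that tradeoff-function lower bounds pass through TV limits. Your weak-convergence route is valid---the Lusin/approximation argument you sketch does go through on $\R^d$ since $G(\mu)$ is continuous on $[0,1]$---and has the mild advantage that Step~1 becomes entirely self-contained via \autoref{lem:contraction}. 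The paper's TV route has the advantage that the stability lemma is cleaner to state and prove (indeed the paper proves it in the appendix), and the needed TV convergence results for LMC are directly citable. One caution: the semicontinuity statement you want is not quite ``implicit in \cite{drs22}''; the paper found it necessary to prove the TV version explicitly (with some care at $\alpha=0$, cf.\ \autoref{rem:alpha-technical}), so you should expect to do the same for the weak-convergence version rather than cite it.
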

\begin{proof}
    It is well-known that under these assumptions, $\lmc$ converges to its stationary distribution in total variation (TV) distance as $t \to \infty$, and the stationary distribution converges to the exponential mechanism as $\eta \to 0$, see e.g.,~\cite{chewibook}. By~\autoref{lem:tvlimit}, tradeoff functions converge under convergence in TV. 
\end{proof}

Thus, we recover the recent result~\cite[Theorem 4]{gll22} which characterizes the $f$-DP of the exponential mechanism. The proof in~\cite{gll22} is entirely different, based on the Gaussian isoperimetry inequality \cite{ledoux} rather than connecting $\lmc$ to the exponential mechanism. Our results can be viewed as algorithmic generalizations of theirs in the sense that we also obtain tight $f$-DP bounds on the iterates of $\lmc$ and its stationary distribution.

\begin{remark}[Tightness]
    As noted in \cite{gll22}, the exponential mechanism bound in \autoref{cor:expmechsc} is tight by considering $F(x) = \frac{m}{2}\norm{x}^2, F'(x) = \frac{m}{2}\norm{x - \frac{L}{m}v}^2$ (where $v$ is a unit vector) which yields $\pi = \cN(0, \frac{1}{m}I_d), \pi' = \cN(\frac{L}{m}v, \frac{1}{m}I_d)$. With the same loss functions, it is straightforward to check that this construction also shows optimality for our results on the $f$-DP of $\lmc$ and its stationary distribution; in particular, for the latter we have $\pi(\eta) = \cN(0, \frac{2}{(2-\eta m)m}I_d)$ and $\pi'(\eta) = \cN(\frac{L}{m}v, \frac{2}{(2-\eta m)m}I_d)$ which yields $T(\pi(\eta), \pi'(\eta)) = G(\sqrt{\frac{2-\eta m}{2}}\frac{L}{\sqrt{m}})$.
\end{remark}

\subsection{Log-concave targets}

A similar story holds in the setting of convex losses, although this requires a constrained setting since otherwise stationary distributions may not exist. Hence we consider projected $\ngd$ (\autoref{thm:gd-proj}), which corresponds to projected $\lmc$. As above, this leads to $f$-DP bounds for the exponential mechanism due to known TV convergence results, for projected $\lmc$ to its stationary distribution as $t \to \infty$~\cite{at23}, and from that distribution to the exponential mechanism as $\eta \to 0$~\cite{bel18}.

\begin{corollary}\label{cor:expmechconvex}
    Let $F, F'$ be convex, $M$-smooth and $L$-Lipschitz functions and $\cK$ be a convex body with diameter $D$ containing a unit ball. Then for $\pi \propto e^{-F}\bm{1}_{\cK}$ and $\pi' \propto e^{-F'}\bm{1}_{\cK}$, 
    \[T(\pi, \pi') \geq G(2 \sqrt{LD})\,.
    \]
    Furthermore, for $\eta \in (0, 2/M]$, the respective stationary distributions $\pi(\eta), \pi'(\eta)$ of the projected $\lmc$ satisfy $T(\pi(\eta), \pi'(\eta)) \geq G(\sqrt{4LD + 2\eta L^2})$.
\end{corollary}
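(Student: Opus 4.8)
The plan is to mirror the strategy used for the strongly convex case (Corollary~\ref{cor:expmechsc}), namely: first establish an $f$-DP bound for the iterates of projected $\lmc$ via \autoref{thm:gd-proj}, then pass to the limit $t \to \infty$ to get a bound on the stationary distribution, and finally pass to the limit $\eta \to 0$ to get a bound on the exponential mechanism, using \autoref{lem:tvlimit} to transfer tradeoff-function bounds through TV convergence at each limit. First I would observe that projected $\lmc$ on $F$ (resp.\ $F'$) with stepsize $\eta$ and noise $Z_{k+1} \sim \cN(0, 2\eta I_d)$ is exactly a special case of projected $\ngd$ with $n=1$, $f_1 = F$ (resp.\ $f_1' = F'$), $\sigma = \sqrt{2/\eta}$, gradient sensitivity $L' = 2L$ (since $F, F'$ are $L$-Lipschitz, so $\nabla F - \nabla F'$ has norm at most $2L$ pointwise), and constraint set $\cK$ of diameter $D$. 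Applying \autoref{thm:gd-proj} with these substitutions: for $t \geq \tfrac{Dn}{\eta L'} = \tfrac{D}{2\eta L}$, the process is $\mu(\eta)$-GDP with
\[
\mu(\eta) = \frac{1}{\sigma}\sqrt{\frac{3 L' D}{\eta n} + \frac{(L')^2}{n^2}\left\lceil \frac{Dn}{\eta L'}\right\rceil} = \sqrt{\frac{\eta}{2}}\sqrt{\frac{6LD}{\eta} + 4L^2 \left\lceil \frac{D}{2\eta L}\right\rceil}\,.
\]
Since $\lceil x \rceil \leq x + 1$, we have $4L^2 \lceil \tfrac{D}{2\eta L}\rceil \leq \tfrac{2LD}{\eta} + 4L^2$, so $\mu(\eta)^2 \leq \tfrac{\eta}{2}(\tfrac{8LD}{\eta} + 4L^2) = 4LD + 2\eta L^2$, i.e.\ $T(X_t, X_t') \geq G(\sqrt{4LD + 2\eta L^2})$ for all sufficiently large $t$.

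Next I would take $t \to \infty$. Since $F$ is $M$-smooth and $\cK$ is a convex body, projected $\lmc$ converges in TV distance to a stationary distribution $\pi(\eta)$ (and likewise $\pi'(\eta)$ for $F'$), by the known ergodicity results cited (\cite{at23}). Because the GDP bound $G(\sqrt{4LD + 2\eta L^2})$ is uniform in $t$ once $t$ is large, \autoref{lem:tvlimit} yields $T(\pi(\eta), \pi'(\eta)) \geq G(\sqrt{4LD + 2\eta L^2})$, which is the second assertion. For the first assertion I would then take $\eta \to 0$: it is known (\cite{bel18}) that the stationary distribution $\pi(\eta)$ of projected $\lmc$ converges in TV to the exponential mechanism $\pi \propto e^{-F}\bm 1_{\cK}$ as $\eta \to 0$, and similarly $\pi'(\eta) \to \pi'$. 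Since $\sqrt{4LD + 2\eta L^2} \to \sqrt{4LD} = 2\sqrt{LD}$ and $\alpha \mapsto G(\mu)(\alpha)$ is continuous in $\mu$, another application of \autoref{lem:tvlimit} (together with continuity of $G(\cdot)$ in its parameter) gives $T(\pi, \pi') \geq G(2\sqrt{LD})$.

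The main obstacle is not the algebra but the two limiting arguments: I must ensure the TV convergence results being invoked actually apply under exactly the stated hypotheses (convex, $M$-smooth, $L$-Lipschitz $F$, and $\cK$ a convex body containing a unit ball — the unit-ball condition presumably enters the cited mixing/approximation bounds), and that \autoref{lem:tvlimit} is strong enough to push a \emph{uniform-in-$t$} lower bound on tradeoff functions through the limit. A subtle point worth checking is the interchange of the two limits / the requirement $t \geq \tfrac{D}{2\eta L}$ in \autoref{thm:gd-proj}: since we send $t \to \infty$ first with $\eta$ fixed, this constraint is eventually satisfied for every fixed $\eta > 0$, so there is no genuine obstruction, but the write-up should make the order of limits explicit. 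Everything else — the substitution into \autoref{thm:gd-proj} and the $\lceil\cdot\rceil$ bound — is routine.
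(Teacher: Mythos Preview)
Your proposal is correct and follows essentially the same approach the paper indicates (in the paragraph preceding the corollary): instantiate \autoref{thm:gd-proj} for projected $\lmc$ with $n=1$, $\sigma=\sqrt{2/\eta}$, gradient sensitivity $2L$, then pass $t\to\infty$ via the TV mixing result \cite{at23} and $\eta\to0$ via \cite{bel18}, using \autoref{lem:tvlimit} at each stage. Your algebra and the $\lceil x\rceil\le x+1$ bound reproduce the stated constants exactly, and your remarks about the order of limits and the role of the unit-ball hypothesis are apt.
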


Unlike the strongly convex case \cite{masn16, gll22}, we are unaware of any results in this setting beyond the standard analysis \cite{mt07} on the exponential mechanism. That yields $(2LD, 0)$-DP, and our result provides nontrivial improvement in privacy when $LD > 0.677$; see \autoref{subsec:expmechnum}.

\section{Discussion}\label{sec:disc}

The techniques and results of this paper suggest several directions for future work. 

\par One natural direction is whether convergent $f$-DP bounds can be shown in more general settings, e.g., (structured) non-convex landscapes, heteroscedastic or correlated noises \cite{cdpgst23}, adaptive first-order algorithms, or second-order algorithms \cite{ganesh2023faster}. 

\par A technical question is whether one can relax the $W_{\infty}$ bounds between our shifted interpolated process $\{\widetilde{X}_k\}$ and the target process $\{X_k\}$, and if this can enable tighter analyses of stochastic algorithms. While $W_{\infty}$ has traditionally been used for privacy amplification by iteration~\cite{fmtt18},~\cite{AltChe23} recently showed that some of this analysis extends to the Orlicz--Wasserstein distance, which is even necessary in some applications. 

\par Another natural direction is more computationally tractable $f$-DP bounds. Although the $f$-DP framework provides an information-theoretically lossless quantification of DP, it is often computationally burdensome, e.g., for $\nsgd$ bounds expressed as the composition of many tradeoff functions. Recent work has developed useful tools for approximation~\cite{zdls20, gopi2021numerical, zdw22}, and further developments would help practitioners who need to adhere to given privacy budgets.

\paragraph*{Acknowledgements.} We thank Sinho Chewi, Kunal Talwar, and Jiayuan Ye for insightful conversations about the literature and the anonymous reviewers for helpful comments. We also thank Jiayuan Ye for sharing helpful code for numerical experiments.

\addcontentsline{toc}{section}{References}

\newcommand{\etalchar}[1]{$^{#1}$}

\bibliographystyle{alpha}

\appendix

\section{R\'enyi DP and tradeoff functions}\label{app:tradeoff}

Here we provide helper lemmas and other relevant background about R\'enyi DP (\autoref{app:rdp}), tradeoff functions (\autoref{subsec:tradeoff}), and their convergence properties (\autoref{app:convergence}).

\subsection{R\'enyi DP}\label{app:rdp}

A popular notion of DP that is often analytically tractable is R\'enyi DP (RDP) \cite{mir17}.
\begin{definition}[RDP]
    The \emph{R\'enyi divergence} of order $\alpha > 1$ between probability distributions $P, Q$ is defined as
    \[D_\alpha(P \mmid Q) = \frac{1}{\alpha - 1} \log \int \left(\frac{dP}{dQ}(\omega)\right)^\alpha dQ(\omega)\,.
    \]
    A randomized algorithm $\mathcal{A}$ is \emph{$(\alpha, \ep)$-RDP} if for any adjacent datasets $S$ and $S'$,
    \[D_\alpha(\mathcal{A}(S) \mmid \mathcal{A}(S')) \leq \ep\,.
    \]
\end{definition}

\paragraph*{Numerical conversion.} Conversion from RDP to $(\ep, \delta)$-DP is inherently lossy and there are many proposed formulae for this. Since the RDP bounds mentioned in this text are of the form $(\alpha, \rho \alpha)$-RDP for all $\alpha > 1$, given $\rho$ and a fixed level of $\delta$ the corresponding converted value of $\ep = \ep(\alpha, \rho, \delta)$ can be found by optimizing over $\alpha$. Also, in addition to RDP, results on zero concentrated DP \cite{bs16} can be applied. Throughout, we calculate the minimum $\ep$ (aka the best bound) among the following formulae: \cite[Lemma 3.5]{bs16}, \cite[Proposition 3]{mir17}, \cite[Theorem 20]{bbg20}, and \cite[Lemma 1]{conversion}.

\subsection{Lemmas on tradeoff functions}\label{subsec:tradeoff}

Here we recall various useful facts about tradeoff functions. The first lemma records basic properties of tradeoff functions that we use repeatedly \cite[Proposition D.1]{drs22}.
\begin{lemma}[Basic properties]\label{lem:various} For tradeoff functions  $f, g_1, g_2$ and $\mu = (\mu_1, \dots, \mu_d) \in \R^d$,
\begin{itemize}
    \item[(a)] $g_1 \geq g_2 \Rightarrow f \otimes g_1 \geq f \otimes g_2$.
    \item[(b)] $f \otimes \text{Id} = \text{Id} \otimes f = f$.
    \item[(c)] $T(\cN(0, \sigma^2 I_d), \cN(\mu, \sigma^2 I_d)) = G(|\mu_1|/\sigma) \otimes \dots \otimes G(|\mu_d|/\sigma) = G(\norm{\mu}/\sigma)$.
\end{itemize}
\end{lemma}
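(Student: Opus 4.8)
I would prove the three items directly from the definitions of $G$ and of composition, together with post-processing (\autoref{lem:proc}) and strong composition (\autoref{lem:strcomp}). The running tool will be the following observation $(\star)$: for any measurable bijection $h$ with measurable inverse, $T(h_{\#}P, h_{\#}Q) = T(P,Q)$. Here ``$\geq$'' is \autoref{lem:proc} with $\text{Proc} = h$, and ``$\leq$'' is \autoref{lem:proc} with $\text{Proc} = h^{-1}$ applied to $h_{\#}P, h_{\#}Q$. I will invoke $(\star)$ for coordinate rescalings, sign flips, coordinate permutations, and rotations of $\R^d$.

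\textbf{Parts (a) and (b).} For (a), write $f = T(P,Q)$ and $g_i = T(P_i,Q_i)$, and apply \autoref{lem:strcomp} with $K_1,K_1'$ the constant maps outputting independent samples of $P_1,Q_1$ and $K_2,K_2'$ the constant maps outputting independent samples of $P_2,Q_2$; the hypothesis is exactly $g_1 \geq g_2$ and the conclusion reads $T(P\times P_1, Q\times Q_1) \geq T(P\times P_2, Q\times Q_2)$, i.e. $f\otimes g_1 \geq f\otimes g_2$ (using that $\otimes$ is well-defined on tradeoff functions). For (b), I first note $\text{Id} = T(P_0,P_0)$ for any $P_0$, since maximizing $\E_{P_0}\phi$ subject to $\E_{P_0}\phi \leq \alpha$ has value $\alpha$; hence $f\otimes\text{Id} = T(P\times P_0, Q\times P_0)$, and I would prove this equals $f$ by two inequalities: ``$\leq f$'' by \autoref{lem:proc} with $\text{Proc}$ the projection onto the first coordinate, and ``$\geq f$'' by replacing a test $\psi$ on the product space with $\phi(x) := \E_{x_0\sim P_0}\psi(x,x_0)$, which is feasible for the $(P,Q)$ problem at the same level and satisfies $1-\E_Q\phi = 1-\E_{Q\times P_0}\psi$ because the second marginal is $P_0$ on both sides. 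Then $\text{Id}\otimes f = f$ follows from $(\star)$ applied to the coordinate-swap bijection.

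\textbf{Part (c).} Since $\cN(0,\sigma^2 I_d) = \bigotimes_{j=1}^d \cN(0,\sigma^2)$ and $\cN(\mu,\sigma^2 I_d) = \bigotimes_{j=1}^d \cN(\mu_j,\sigma^2)$, iterating the definition of composition gives $T(\cN(0,\sigma^2 I_d),\cN(\mu,\sigma^2 I_d)) = \bigotimes_{j=1}^d T(\cN(0,\sigma^2),\cN(\mu_j,\sigma^2))$. For each $j$, rescaling by $1/\sigma$ and, if $\mu_j<0$, flipping sign are bijections, so by $(\star)$ the $j$-th factor equals $T(\cN(0,1),\cN(|\mu_j|/\sigma,1)) = G(|\mu_j|/\sigma)$; this is the first claimed equality. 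For the last equality, return to $T(\cN(0,\sigma^2 I_d),\cN(\mu,\sigma^2 I_d))$ and apply an orthogonal map $R$ with $R\mu = \norm{\mu}e_1$; this fixes $\cN(0,\sigma^2 I_d)$, so by $(\star)$ the quantity equals $T(\cN(0,\sigma^2 I_d),\cN(\norm{\mu}e_1,\sigma^2 I_d))$, which by the first equality is $G(\norm{\mu}/\sigma)\otimes G(0)\otimes\cdots\otimes G(0)$, and this collapses to $G(\norm{\mu}/\sigma)$ using part (b) and $G(0) = \text{Id}$ (since $G(0)(\alpha) = \Phi(\Phi^{-1}(1-\alpha)) = 1-\alpha$).

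\textbf{Expected main obstacle.} None of the steps is deep; the points needing care are that $\otimes$ is genuinely representation-independent (used freely in (a) and (c)); supplying \emph{both} directions in (b), since post-processing yields only $f\otimes\text{Id}\leq f$ and the reverse requires the conditional-expectation/averaging argument; and the invariances in $(\star)$, where it is essential to apply \autoref{lem:proc} in both directions via $h$ and $h^{-1}$ rather than just once.
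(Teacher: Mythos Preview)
Your proposal is correct. The paper itself does not prove \autoref{lem:various}; it simply records the statement with a citation to \cite[Proposition D.1]{drs22}. So there is no in-paper proof to compare against, but your argument is a clean, self-contained derivation from the paper's own toolkit (\autoref{lem:proc} and \autoref{lem:strcomp}). Part (a) via strong composition with constant kernels is exactly right; part (b) is handled carefully in both directions (the averaging test $\phi(x)=\E_{P_0}\psi(x,\cdot)$ is the correct move for ``$\geq$''); and part (c) via product factorization plus the rotation trick is standard and correct. The only point you flag but do not fully spell out---representation-independence of $\otimes$---is indeed needed and is itself proved in \cite{drs22}, so invoking it is fine here.
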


Next, we recall tight conversion formulae from GDP to other standard notions of DP, namely $(\ep, \delta)$-DP \cite[Theorem 8]{bw18} and RDP \cite[Corollary B.6]{drs22}. 

\begin{lemma}[GDP to $(\ep, \delta)$-DP]\label{lem:fdptoapproxdp} A $\mu$-GDP mechanism is $(\ep, \delta(\ep))$-DP for all $\ep > 0$ where
\[\delta(\ep) = \Phi\left(-\frac{\ep}{\mu} + \frac{\mu}{2}\right) - e^\ep \Phi\left(-\frac{\ep}{\mu} - \frac{\mu}{2}\right)\,.
\] 
\end{lemma}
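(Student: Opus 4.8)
The plan is to unwind the definitions: a $\mu$-GDP mechanism satisfies $T(\cA(S), \cA(S')) \geq G(\mu)$ for all adjacent $S, S'$, so it suffices to show that the tradeoff function $G(\mu)$ itself corresponds to $(\eps, \delta(\eps))$-DP with the stated $\delta(\eps)$, and then observe that the $(\eps,\delta)$-DP guarantee only weakens as the tradeoff function increases pointwise. Concretely, $(\eps,\delta)$-DP is equivalent to the condition that the tradeoff function $f$ lies above the piecewise-linear function $f_{\eps,\delta}(\alpha) = \max\{0, 1 - \delta - e^{\eps}\alpha, e^{-\eps}(1 - \delta - \alpha)\}$ (this is the standard ``primal-dual'' characterization from \cite{drs22}), so I need to find the smallest $\delta$ for which $G(\mu) \geq f_{\eps,\delta}$.

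The key step is a direct computation with the Gaussian tradeoff curve. Recall $G(\mu)(\alpha) = \Phi(\Phi^{-1}(1-\alpha) - \mu)$. The condition $G(\mu) \geq f_{\eps,\delta}$ amounts to requiring that the supporting line of $G(\mu)$ of slope $-e^{\eps}$ has intercept structure matching $1 - \delta$; equivalently, $\delta(\eps)$ is the minimal $\delta$ such that the line $\alpha \mapsto 1 - \delta - e^{\eps}\alpha$ stays below $G(\mu)$. Since $G(\mu)$ is convex and decreasing, I would parametrize the point of tangency: differentiating $G(\mu)(\alpha)$ in $\alpha$, using $\frac{d}{d\alpha}\Phi^{-1}(1-\alpha) = -1/\phi(\Phi^{-1}(1-\alpha))$ where $\phi$ is the standard normal density, the tangent slope at $\alpha$ equals $-\phi(\Phi^{-1}(1-\alpha)-\mu)/\phi(\Phi^{-1}(1-\alpha))$. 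Setting this equal to $-e^{\eps}$ and using the Gaussian density ratio identity $\phi(x-\mu)/\phi(x) = e^{\mu x - \mu^2/2}$, one solves $\mu \Phi^{-1}(1-\alpha) - \mu^2/2 = \eps$, i.e. $\Phi^{-1}(1-\alpha) = \eps/\mu + \mu/2$, so $1 - \alpha = \Phi(\eps/\mu + \mu/2)$. Substituting back, $\delta(\eps) = 1 - G(\mu)(\alpha) - e^{\eps}\alpha = 1 - \Phi(\eps/\mu + \mu/2 - \mu) - e^{\eps}(1 - \Phi(\eps/\mu + \mu/2))$, and simplifying $1 - \Phi(x) = \Phi(-x)$ gives exactly $\delta(\eps) = \Phi(-\eps/\mu + \mu/2) - e^{\eps}\Phi(-\eps/\mu - \mu/2)$.

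The only subtlety — and the step I would be most careful about — is justifying that this tangency computation genuinely yields the minimal feasible $\delta$, i.e. that checking the single supporting line of slope $-e^{\eps}$ (and, by the symmetry of $f_{\eps,\delta}$ under $(\alpha, \beta) \mapsto (\beta, \alpha)$, the companion line of slope $-e^{-\eps}$) suffices to certify $G(\mu) \geq f_{\eps,\delta}$ over the whole interval $[0,1]$. This follows from convexity of $G(\mu)$: a convex curve lies above a line iff it lies above it at the tangent point, and $f_{\eps,\delta}$ is the upper envelope of the two lines and the zero function, with $G(\mu) \geq 0$ automatic. Since the symmetry of the Gaussian tradeoff function ($G(\mu) = G(\mu)^{-1}$, as $T(\cN(0,1),\cN(\mu,1))$ is symmetric) makes the slope-$(-e^{-\eps})$ constraint give the same $\delta(\eps)$, the computation above is complete. (Alternatively, this is \cite[Theorem 8]{bw18} applied to the Gaussian mechanism, and one may simply cite it; I would include the short derivation for self-containedness.)
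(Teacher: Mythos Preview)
Your derivation is correct. The paper does not actually prove this lemma: it simply recalls the result by citing \cite[Theorem 8]{bw18}, so your self-contained tangency computation via the convexity and symmetry of $G(\mu)$ goes beyond what the paper provides (and indeed coincides with the argument you yourself note could alternatively be cited).
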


\begin{lemma}[GDP to RDP]\label{lem:fdptordp} A $\mu$-GDP mechanism is $(\alpha, \frac{1}{2}\mu^2 \alpha)$-RDP for any $\alpha > 1$. 
\end{lemma}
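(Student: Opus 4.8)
The plan is to reduce the statement to a comparison inequality between distributions and then a Gaussian computation. Fix adjacent datasets $S,S'$ and write $P=\cA(S)$, $Q=\cA(S')$, $P_0=\cN(0,1)$, $Q_0=\cN(\mu,1)$. By definition of $\mu$-GDP, $T(P,Q)\ge G(\mu)=T(P_0,Q_0)$, so it suffices to prove, for every $\alpha>1$,
\[
T(P,Q)\ge T(P_0,Q_0)\ \Longrightarrow\ D_\alpha(P\mmid Q)\le D_\alpha(P_0\mmid Q_0)=\tfrac12\mu^2\alpha .
\]
The plan thus has two pieces: (i) monotonicity of R\'enyi divergence under the tradeoff-function order; (ii) the Gaussian R\'enyi divergence identity $D_\alpha(\cN(0,1)\mmid\cN(\mu,1))=\tfrac12\mu^2\alpha$.

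For (i), I would go through the hockey-stick divergences $E_\gamma(P\mmid Q):=\sup_{A}\big(P(A)-\gamma Q(A)\big)$, $\gamma\ge 0$. The first step is the pointwise bound $E_\gamma(P\mmid Q)\le E_\gamma(P_0\mmid Q_0)$ for every $\gamma>0$. This is the standard $f$-DP calculus of \cite{drs22}: by the Neyman--Pearson description of the achievable Type I/Type II region one has $E_\gamma(P\mmid Q)=1-\inf_{\beta\in[0,1]}\big(\beta+\gamma\,T(P,Q)(\beta)\big)$, which depends only on the tradeoff function and, since $\gamma\ge0$, is antitone in it; hence $T(P,Q)\ge G(\mu)=T(P_0,Q_0)$ gives the claim (in particular it forces $P\ll Q$, since $E_\gamma(P_0\mmid Q_0)\to 0$ as $\gamma\to\infty$). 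The second step lifts this to R\'enyi divergence: using $e^{(\alpha-1)D_\alpha(P\mmid Q)}=\int(dP/dQ)^\alpha\,dQ$, the elementary identity $x^\alpha=\alpha(\alpha-1)\int_0^\infty (x-\gamma)_+\,\gamma^{\alpha-2}\,d\gamma$ (valid for $x\ge 0$ and $\alpha>1$), and Tonelli,
\begin{align*}
e^{(\alpha-1)D_\alpha(P\mmid Q)}
&=\alpha(\alpha-1)\int_0^\infty \gamma^{\alpha-2}\,E_\gamma(P\mmid Q)\,d\gamma\\
&\le \alpha(\alpha-1)\int_0^\infty \gamma^{\alpha-2}\,E_\gamma(P_0\mmid Q_0)\,d\gamma
= e^{(\alpha-1)D_\alpha(P_0\mmid Q_0)} ,
\end{align*}
which is the desired monotonicity (and shows $D_\alpha(P\mmid Q)<\infty$).

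For (ii), $D_\alpha(\cN(0,1)\mmid\cN(\mu,1))$ is the textbook Gaussian R\'enyi divergence: the likelihood ratio is $e^{\mu^2/2-\mu x}$, so completing the square in the Gaussian integral gives $\int(dP_0/dQ_0)^\alpha\,dQ_0=e^{\alpha(\alpha-1)\mu^2/2}$, i.e.\ $D_\alpha(P_0\mmid Q_0)=\tfrac12\mu^2\alpha$. Combining (i) and (ii) yields $D_\alpha(\cA(S)\mmid\cA(S'))\le\tfrac12\mu^2\alpha$ for all adjacent $S,S'$ and all $\alpha>1$, i.e.\ $\cA$ is $(\alpha,\tfrac12\mu^2\alpha)$-RDP.

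I expect the main obstacle to be step (i) --- converting a pointwise inequality between tradeoff functions into an inequality between R\'enyi divergences. The crux is its first sub-step, the monotonicity of the hockey-stick divergences in the tradeoff function (and making sure this is available for \emph{all} $\gamma>0$, not just $\gamma\ge1$); once $E_\gamma(P\mmid Q)\le E_\gamma(P_0\mmid Q_0)$ is in hand, the integral representation and the Gaussian computation are routine. Two alternative routes for (i): (a) invoke \autoref{lem:fdptoapproxdp} to obtain the exact $(\eps,\delta_\mu(\eps))$-DP privacy profile of a $\mu$-GDP mechanism, noting this profile is attained by the Gaussian mechanism, and feed it into a standard privacy-profile-to-RDP formula; or (b) observe that the comparison order on binary experiments (Blackwell) is governed exactly by the ROC curve, equivalently by the tradeoff function, so tradeoff-function domination transfers automatically to \emph{every} $f$-divergence, R\'enyi included.
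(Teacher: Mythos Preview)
Your proof is correct. The paper does not actually prove this lemma; it simply cites it as \cite[Corollary B.6]{drs22}, so there is no in-paper argument to compare against.

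For context, the cited source derives this via essentially your alternative route (b): in \cite{drs22} the key structural fact is that pointwise domination of tradeoff functions coincides with Blackwell sufficiency of the underlying binary experiments, which immediately transfers to every $f$-divergence and hence to every R\'enyi divergence; the Gaussian computation (your step (ii)) then pins down the constant $\tfrac12\mu^2\alpha$. Your main route---expressing the R\'enyi moment through the hockey-stick integral $x^\alpha=\alpha(\alpha-1)\int_0^\infty(x-\gamma)_+\gamma^{\alpha-2}\,d\gamma$ and using the Neyman--Pearson identity $E_\gamma(P\mmid Q)=1-\inf_\beta(\beta+\gamma\,T(P,Q)(\beta))$---is a more hands-on, self-contained version of the same monotonicity, avoiding the appeal to Blackwell's theorem. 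Both approaches are clean; yours has the advantage of being elementary and also makes the absolute-continuity step ($E_\gamma(P\mmid Q)\to 0$ forces $P\ll Q$) explicit.
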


An appealing property of tradeoff functions is that they admit a central limit theorem (CLT) that approximates multiple compositions to GDP. In particular, the subsampled GDP can be approximated as follows \cite[Corollary 4]{drs22}.

\begin{lemma}[CLT]\label{lem:clt} Let $\mu \geq 0$ and assume that $p\sqrt{t} \to p_0$ as $t \to \infty$. Then
\[C_p(G(\mu))^{\otimes t} \to G\left(\sqrt{2}p_0 \sqrt{e^{\mu^2}\Phi(1.5\mu) + 3\Phi(-0.5\mu) - 2}\right)\,.
\]
\end{lemma}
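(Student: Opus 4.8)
The plan is to derive this from the general $f$-DP central limit theorem of \cite{drs22}, which states that a triangular array of composed tradeoff functions $\{f_{ti}\}_{i=1}^{t}$ that are asymptotically perfectly private converges uniformly to a Gaussian tradeoff function $G(\mu_\infty)$, with $\mu_\infty$ prescribed by the limiting values of three functionals of the summands: the first, second, and centered-third absolute moments of the log-likelihood ratio (equivalently, of $\log|f'|$), which we denote $\mathrm{kl}(f)$, $\kappa_2(f)$, $\bar\kappa_3(f)$. Concretely, the theorem asks that $\max_{i}\mathrm{kl}(f_{ti})\to 0$ and $\sum_{i}\bar\kappa_3(f_{ti})\to 0$, and then concludes $\bigotimes_{i=1}^{t} f_{ti}\to G(2K/s)$ whenever $\sum_{i}\mathrm{kl}(f_{ti})\to K$ and $\sum_{i}\kappa_2(f_{ti})\to s^{2}$. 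I would apply this with all $t$ factors equal to $f_{ti}=C_{p}(G(\mu))$, regarding $p=p(t)\to 0$ as a sequence with $p\sqrt{t}\to p_0$, so that each entry of the array does shrink toward $\mathrm{Id}$.

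First I would make the summand explicit. Since the mixture likelihood ratio $x\mapsto 1-p+p\,e^{\mu x-\mu^{2}/2}$ is monotone, the Neyman--Pearson lemma gives $p\,G(\mu)+(1-p)\,\mathrm{Id}=T\big(\cN(0,1),\,(1-p)\cN(0,1)+p\,\cN(\mu,1)\big)$, so by \autoref{def:cp} the tradeoff function $C_{p}(G(\mu))$ is obtained from this subsampled-Gaussian comparison by taking the pointwise minimum with its inverse and then the biconjugate. Its log-likelihood ratio is, on the branch selected by whichever adjacency direction is active, $L=\log(1-p+p\,e^{\mu x-\mu^{2}/2})$, which is $O(p)$ on the relevant region and vanishes exactly at the threshold $x=\mu/2$.

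The computational heart is to evaluate $\mathrm{kl}$, $\kappa_2$, $\bar\kappa_3$ of $C_{p}(G(\mu))$ to leading order as $p\to 0$. Taylor-expanding $L=p(e^{\mu x-\mu^{2}/2}-1)-\tfrac12 p^{2}(e^{\mu x-\mu^{2}/2}-1)^{2}+\cdots$ and integrating against the mixture reduces all three functionals to Gaussian integrals over the half-lines $\{x\gtrless\mu/2\}$; the evaluations $\int_{\mu/2}^{\infty}e^{2\mu x-\mu^{2}}\,d\Phi(x)=e^{\mu^{2}}\Phi(\tfrac32\mu)$ (by completing the square), together with $\int_{-\infty}^{\mu/2}d\Phi(x)=\Phi(\tfrac12\mu)$ and $\int_{\mu/2}^{\infty}d\Phi(x)=\Phi(-\tfrac12\mu)$, are precisely what produce the $e^{\mu^{2}}\Phi(1.5\mu)$ and $\Phi(-0.5\mu)$ terms. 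The outcome is $\mathrm{kl}(C_{p}(G(\mu)))=p^{2}\big(e^{\mu^{2}}\Phi(1.5\mu)+3\Phi(-0.5\mu)-2\big)+o(p^{2})$, and since $L=O(p)$ one also gets $\kappa_2=2\,\mathrm{kl}+o(p^{2})$ and $\bar\kappa_3=O(p^{3})$. A point requiring care is that the minimum-and-biconjugate in \autoref{def:cp} only alters the tradeoff function on a region near the ``diagonal,'' where its deviation from $\mathrm{Id}$ --- hence the integrand in each functional --- is of strictly lower order, so that these leading-order moments are unaffected.

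Plugging in: $\max_{i}\mathrm{kl}(f_{ti})=O(p^{2})\to 0$; $\sum_{i}\bar\kappa_3(f_{ti})=t\cdot O(p^{3})=(p\sqrt t)^{2}\cdot O(p)\to 0$; $\sum_{i}\mathrm{kl}(f_{ti})\to K:=p_0^{2}\big(e^{\mu^{2}}\Phi(1.5\mu)+3\Phi(-0.5\mu)-2\big)$; and $\sum_{i}\kappa_2(f_{ti})\to 2K=:s^{2}$. Hence $C_{p}(G(\mu))^{\otimes t}\to G(2K/s)=G(\sqrt{2K})=G\big(\sqrt{2}\,p_0\sqrt{e^{\mu^{2}}\Phi(1.5\mu)+3\Phi(-0.5\mu)-2}\big)$, which is the claim. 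I expect the main obstacle to be the third step: carrying out the Gaussian moment computation to the correct order and rigorously controlling the effect of the subsampling operator's minimum and biconjugation.
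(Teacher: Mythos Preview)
The paper does not give its own proof of this lemma but simply cites it as \cite[Corollary 4]{drs22}. Your proposal essentially re-derives that corollary by applying the general $f$-DP central limit theorem of \cite{drs22} and computing the required moment functionals of $C_p(G(\mu))$ to leading order in $p$, which is exactly how \cite{drs22} obtain it; the outline and the key Gaussian integrals are correct.
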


\subsection{Convergence of tradeoff functions}\label{app:convergence}

Here we present results about the convergence of distributions as measured by tradeoff functions. The main results are \autoref{lem:tvtofdp} and \autoref{lem:tvlimit}, which state that this is equivalent to convergence in TV distance; we also present intermediate results which may be of independent interest. For notation, we use $P_n, P, Q_n, Q$ to denote probability distributions, and $\alpha, \alpha'$ to respectively denote elements in $[0, 1]$ and $(0, 1]$. Also, we use $a \vee b$ and $a \wedge b$ to respectively denote $\max\{a, b\}$ and $\min\{a, b\}$.

\begin{lemma}\label{lem:tvtofdp} The following are equivalent.
\begin{itemize}
    \item[(a)] $T(P_n, P) \to \text{Id}$.
    \item[(b)] $T(P, P_n) \to \text{Id}$.
    \item[(c)] $TV(P_n, P) \to 0$.
\end{itemize}
\end{lemma}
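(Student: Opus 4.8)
The statement to prove is \autoref{lem:tvtofdp}: the equivalence of $T(P_n, P) \to \text{Id}$, $T(P, P_n) \to \text{Id}$, and $TV(P_n, P) \to 0$.

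\textbf{Overall approach.} The plan is to establish the equivalence by relating the tradeoff function $T(P,Q)$ near the endpoint $\alpha = 1$ (equivalently, near $\alpha = 0$) to the total variation distance $TV(P,Q)$, and then to upgrade pointwise control at an endpoint to uniform convergence to $\text{Id}$ using the structural constraints on tradeoff functions from \autoref{prop:fdpchar} (decreasing, convex, and bounded above by $1-\alpha$). The symmetry between (a) and (b) should come from the symmetry of total variation, $TV(P,Q) = TV(Q,P)$, so it suffices to prove (a) $\Leftrightarrow$ (c) and note the argument is symmetric in the two distributions.

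\textbf{Key steps, in order.} First I would recall/derive the exact relationship between tradeoff functions and total variation: for any $P, Q$, one has $TV(P,Q) = \sup_{E} (P(E) - Q(E)) = 1 - \inf\{1 - \E_Q\phi : \E_P \phi \le 1, 0 \le \phi \le 1\}$ — more usefully, $T(P,Q)(0) = 1 - TV(P,Q)$ when we allow the test $\phi = \bone_E$ with $E$ the set where $dP/dQ > 1$ (taking $\alpha = 0$ forces $\E_P \phi = 0$, which is the wrong endpoint), so instead I would use the identity that the tradeoff function's behavior at $\alpha \to 1^-$ captures $TV$: specifically $1 - T(P,Q)(1^-) \ge$ something, or cleaner, use that for the symmetric quantity, $\inf_\alpha \{T(P,Q)(\alpha) + \alpha\}$ or the value $T(P,Q)(\alpha)$ at a corner relates to $TV$. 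The cleanest route: it is a standard fact (e.g. in \cite{drs22}) that $TV(P,Q) = 1 - 2 \int_0^1 \min(f(\alpha), \ldots)$... — rather than get bogged down, I would cite the known characterization $TV(P,Q) = \sup\{ \alpha - f(\alpha) : \alpha \in [0,1]\}$ evaluated appropriately, or more robustly prove: (i) if $T(P_n,P) \to \text{Id}$ pointwise then in particular $T(P_n,P)(\alpha) \to 1-\alpha$ for each fixed $\alpha$, and since $T(P_n,P)(\alpha) \ge 1 - \alpha - TV(P_n,P)$ always fails to directly give a bound, I instead use the reverse inequality $T(P_n,P)(\alpha) \le 1-\alpha$ together with convexity; (ii) conversely, the data-processing / coupling bound $T(P_n, P) \ge T(\mathrm{Ber}(TV), \mathrm{Ber}(0))$-type statement — namely if $TV(P_n,P) = \epsilon_n$ then $T(P_n,P)(\alpha) \ge \max(0, 1-\alpha-\epsilon_n)$ and also $\ge 1 - \alpha$ capped, giving $T(P_n,P) \ge (1-\alpha-\epsilon_n)_+$ which converges uniformly to $\text{Id}$... wait, that's not uniform near $\alpha=1$. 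So the right lower bound is $T(P_n,P)(\alpha) \ge (1-\alpha-\epsilon_n)_+$ AND by symmetry-of-roles considerations $\ge (\text{reflect})$; combined with $T \le 1-\alpha$ and convexity this pins it down. I would then argue the converse direction: if $T(P_n,P) \to \text{Id}$ then evaluating at, say, $\alpha = 1/2$ and using that the hypothesis test achieving near-$\text{Id}$ forces the two measures to be close — concretely, $TV(P_n,P) \le 1 - 2 T(P_n,P)(\tfrac12)$ or a similar inequality extracted from the definition of $T$ with the specific test $\phi = \bone\{dP_n/dP > 1\}$.

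\textbf{Main obstacle.} The main delicate point is handling the behavior \emph{uniformly} up to the endpoints $\alpha = 0$ and $\alpha = 1$, since tradeoff functions can have vertical/horizontal segments there (e.g. $T$ can equal $0$ on an interval $[1-\epsilon, 1]$), and pointwise convergence on the open interval does not a priori give uniform convergence including endpoints. The resolution is to exploit \autoref{prop:fdpchar}: each $T(P_n,P)$ is convex, decreasing, and sandwiched between $0$ and $1-\alpha$; a sequence of such functions converging pointwise to the (continuous, affine) function $\text{Id}(\alpha) = 1-\alpha$ on a dense set automatically converges uniformly (convex functions converging pointwise to a continuous limit on an interval converge uniformly on compact subsets, and the endpoint control comes from the $\le 1-\alpha$ bound plus monotonicity forcing $T(P_n,P)(1) \ge 0$ and $T(P_n,P)(0) \le 1$, with the gap squeezed by convexity from the interior values). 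I would spell out this squeeze argument carefully, as it is the one place where the proof is not purely a citation. The remaining pieces — the two-sided $TV$ bounds on $T$ and the symmetry argument for (a) $\Leftrightarrow$ (b) — are routine given the definitions.
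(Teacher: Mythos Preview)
Your proposal is essentially correct and lands on the same key ingredients as the paper, but it is over-engineered and the ``main obstacle'' you identify is not one. The paper's entire proof is the single sandwich
\[
1 - TV(P_n,P) \;\le\; \alpha + T(P_n,P)(\alpha) \;\le\; 1 \qquad \text{for all } \alpha\in[0,1],
\]
with equality on the left at some $\alpha = \alpha_n$ (namely $\alpha_n = P_n(\{dP > dP_n\})$, via the test $\phi = \bm{1}\{dP > dP_n\}$ you mention parenthetically). The direction (c)$\Rightarrow$(a) is then immediate and already uniform in $\alpha$---your bound $T(\alpha)\ge (1-\alpha-\eps_n)_+$ is exactly the left inequality, and together with $T(\alpha)\le 1-\alpha$ it gives $\sup_\alpha |T(\alpha)-(1-\alpha)|\le \eps_n$ with no endpoint issues. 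For (a)$\Rightarrow$(c) the paper runs a short subsequence argument on the $\alpha_n$ where equality holds; your alternative route via $TV \le 1 - 2\,T(\tfrac12)$ is also valid (it follows from convexity and $T(1)=0$), and is arguably cleaner since it needs only pointwise convergence at $\alpha=\tfrac12$.

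So: drop the exploration of endpoint behavior, the search for the ``right'' $TV$ formula, and the uniform-convergence-via-convexity machinery. State the sandwich inequality (one line from the definition of $TV$ as $\sup_\phi |\E_P\phi - \E_Q\phi|$), note that equality is attained, and both directions fall out in a few lines; (a)$\Leftrightarrow$(b) is then, as you say, symmetry of $TV$.
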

\begin{proof}
On one hand, if $\text{TV}(P, P_n) \to 0$, then $T(P, P_n) \to \text{Id}$ since
\[1 - \text{TV}(P, P_n) \leq \alpha + T(P, P_n)(\alpha) \leq 1\,.
\]
On the other hand, if $\text{TV}(P, P_n) \nrightarrow 0$ then by taking a subsequence $\{n'\}$ such that $\text{TV}(P, P_{n'}) \geq \ep > 0$ we know that the first equality holds for some $\alpha = \alpha_{n'}$ and thus
\[T(P, P_{n'})(\alpha_{n'}) \leq 1 - \ep - \alpha_{n'}\,.
\]
By taking a further subsequence $\{n''\}$ of $\{n'\}$ such that $\alpha_{n''} \to \alpha$ for some $\alpha$ (note that $\alpha_{n'} \leq 1 - \ep$ for all $n'$ and thus $\alpha \leq 1 - \ep$), there exists $N \in \mathbb{N}$ such that $n'' > N \Rightarrow \alpha_{n''} < \alpha + \ep/2$, from which we have
\[T(P, P_{n''})(\alpha + \frac{\ep}{2}) \leq 1 - \ep - \alpha_{n''}
\]
for all $n'' > N$. Thus
\[\liminf_{n''}T(P, P_{n''})(\alpha + \frac{\ep}{2}) \leq 1 - \frac{\ep}{2} - (\alpha + \frac{\ep}{2})\,,
\]
implying that $T(P, P_n)$ does not converge to $\text{Id}$.
\end{proof}

\begin{lemma}\label{lem:leftlim}
If $T(P_n, P) \to \text{Id}$ then for any probability distribution $Q$,
\[\lim_n T(P_n, Q)(\alpha') = T(P, Q)(\alpha')
\]
for every $\alpha' \in (0, 1]$. In particular, if $T(P, Q)(0) = 1$ then $\lim_n T(P_n, Q) = T(P, Q)$.
\end{lemma}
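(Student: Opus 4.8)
The plan is to establish the convergence $\lim_n T(P_n, Q)(\alpha') = T(P, Q)(\alpha')$ pointwise on $(0,1]$ by sandwiching $T(P_n, Q)$ between two perturbations of $T(P, Q)$, using the fact that $P_n$ and $P$ become indistinguishable in the limit. The key quantitative tool is the sandwich inequality already recorded in the proof of \autoref{lem:tvtofdp}, namely $1 - \mathrm{TV}(P, P_n) \le \alpha + T(P,P_n)(\alpha)\le 1$; combined with the equivalence $T(P_n, P)\to\mathrm{Id}\iff \mathrm{TV}(P_n,P)\to 0$ from that lemma, I may freely use $\eps_n := \mathrm{TV}(P_n, P) \to 0$.

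First I would prove the upper bound $\limsup_n T(P_n, Q)(\alpha') \le T(P, Q)(\alpha')$. Fix $\alpha' \in (0,1]$ and take a near-optimal test $\phi$ for the problem defining $T(P, Q)(\alpha')$, i.e. $\mathbb{E}_P \phi \le \alpha'$ and $1 - \mathbb{E}_Q\phi \le T(P,Q)(\alpha') + \gamma$ for arbitrary $\gamma > 0$. Since $|\mathbb{E}_{P_n}\phi - \mathbb{E}_P\phi| \le \mathrm{TV}(P_n, P) = \eps_n$, the test $\phi$ has Type I error at most $\alpha' + \eps_n$ under $P_n$; rescaling $\phi$ slightly (or mixing with the trivial test) to bring its Type I error back down to $\alpha'$ costs only $O(\eps_n/\alpha')$ in Type II error because $\alpha' > 0$ is bounded away from $0$ — this is exactly where the hypothesis $\alpha' \in (0,1]$ rather than $[0,1]$ is used. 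Passing $n \to \infty$ and then $\gamma \to 0$ gives the $\limsup$ bound. For the lower bound $\liminf_n T(P_n, Q)(\alpha') \ge T(P,Q)(\alpha')$, I would run the symmetric argument: any test witnessing $T(P_n, Q)(\alpha')$ has Type I error under $P$ at most $\alpha' + \eps_n$, hence (after the same $O(\eps_n/\alpha')$ adjustment) gives an admissible test for $T(P, Q)(\alpha' + o(1))$; then invoke continuity of the convex, decreasing function $T(P,Q)$ at the interior point $\alpha'$, or directly use monotonicity together with a limiting argument. Combining the two bounds yields pointwise convergence on $(0,1]$.

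For the final sentence, suppose additionally $T(P,Q)(0) = 1$. A tradeoff function is convex and decreasing on $[0,1]$ with values in $[0,1]$, so $T(P,Q)(0) = 1$ forces it to be continuous at $0$ (a convex function on $[0,1]$ is continuous on the open interval and upper semicontinuous at the endpoints; value $1$ is the maximum possible, so right-continuity at $0$ holds). Likewise each $T(P_n, Q)$ satisfies $T(P_n,Q)(0) \le 1 = \lim$, and these functions are uniformly bounded, convex, decreasing. Pointwise convergence on $(0,1]$ of a uniformly bounded sequence of monotone (convex) functions upgrades to uniform convergence on compact subsets of $(0,1)$, and then the endpoint $\alpha = 0$ is handled by the continuity of the limit there together with $T(P_n,Q)(0) \in [T(P_n,Q)(\alpha'), 1]$ for small $\alpha'$; the endpoint $\alpha = 1$ is automatic since $T(\cdot,\cdot)(1) = 0$ for all tradeoff functions between distributions with common support considerations, or follows from the $\alpha' = 1$ case already proved. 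Hence $\lim_n T(P_n, Q) = T(P, Q)$ (pointwise on all of $[0,1]$, indeed uniformly).

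The main obstacle I anticipate is making the ``$O(\eps_n/\alpha')$ test-correction'' step clean and rigorous — one must carefully convert an $\eps_n$-perturbation of the Type I error budget into a controlled perturbation of the achievable Type II error, and verify that this degradation vanishes precisely because $\alpha'$ is bounded away from $0$; this is also the conceptual reason the statement restricts to $(0,1]$ and needs the extra hypothesis $T(P,Q)(0)=1$ to reach the endpoint. A cleaner alternative worth trying is to bypass explicit test manipulation entirely and instead use a known Lipschitz-type estimate $|T(P_n,Q)(\alpha) - T(P,Q)(\alpha)| \le \mathrm{TV}(P_n, P)$ valid on $(0,1]$ (or an estimate of the form $T(P,Q)(\alpha) - \mathrm{TV}(P_n,P) \le T(P_n, Q)(\alpha) \le \cdots$), which would make both bounds immediate; establishing such an estimate is itself a short computation with the variational definition of $T$.
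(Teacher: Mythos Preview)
Your proof is correct and reaches the same conclusion, but by a different route than the paper. The paper invokes the ``triangle inequality'' for tradeoff functions from \cite[Lemma A.5]{drs22}, namely
\[
T(P_n, Q)(\alpha) \ge T(P, Q)\bigl(1 - T(P_n, P)(\alpha)\bigr)
\quad\text{and}\quad
T(P, Q)(\alpha') \ge T(P_n, Q)\bigl(1 - T(P, P_n)(\alpha')\bigr),
\]
and then lets $T(P_n,P)\to\text{Id}$ in each. You instead work directly from the variational definition: you take a near-optimal test for one side, transfer it to the other via $|\mathbb{E}_{P_n}\phi - \mathbb{E}_P\phi|\le\eps_n$, and rescale by $\alpha'/(\alpha'+\eps_n)$ to restore the Type~I budget. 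This is more hands-on and self-contained (no external lemma needed), while the paper's argument is shorter once that lemma is available; morally, your test-perturbation is exactly how one proves \cite[Lemma A.5]{drs22}, so the two approaches are close cousins rather than unrelated.

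One small correction for the final sentence: your claim that $T(P,Q)(0)=1$ forces right-continuity at $0$ is correct, but the justification (``value $1$ is the maximum possible, so right-continuity holds'') is not --- upper semicontinuity of a convex function at an endpoint only gives $\limsup_{\alpha\to 0^+} f(\alpha)\le f(0)$, not the reverse inequality. The clean fix is simply that tradeoff functions $T(P,Q)$ are \emph{always} continuous on $[0,1]$; this is part of the characterization in \cite[Proposition 2.2]{drs22} (the paper's \autoref{prop:fdpchar} omits the word ``continuous'' but it is part of the original statement). With that, your endpoint argument goes through unchanged.
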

\begin{proof}
From \cite[Lemma A.5]{drs22} we have
\begin{align*}
    T(P, Q)(\alpha') &\geq T(P_n, Q)(1 - T(P, P_n)(\alpha')) \\
    T(P_n, Q)(\alpha) &\geq T(P, Q)(1 - T(P_n, P)(\alpha))\,.
\end{align*}
By taking $\liminf_n$ in the second line, we have $\liminf_n T(P_n, Q)(\alpha) \geq T(P, Q)(\alpha)$.

On the other hand, for any $\alpha' \in (0, 1]$ and sufficiently small $\ep > 0$ we have $1-T(P, P_n)(\alpha') \leq (\alpha' + \ep) \wedge 1 $ for all sufficiently large $n$, from which in the first line we have
\[T(P, Q)(\alpha') \geq T(P_n, Q)((\alpha' + \ep) \wedge 1)\,.
\]
Taking $\limsup_n$ (it is straightforward to check that a limit supremum of tradeoff function is continuous on $(0, 1)$) and letting $\ep \to 0$, we have $T(P, Q)(\alpha') \geq \limsup_n T(P_n, Q)(\alpha')$.
\end{proof}

\begin{remark}[Necessity of the restriction on $\alpha'$]\label{rem:alpha-technical} The restriction $\alpha' \in (0, 1]$ is necessary. For example, if $P = \delta_0$, $Q = \tfrac{1}{2}\delta_0 + \tfrac{1}{2}\delta_1$, and $P_n = (1 - \tfrac{1}{n})\delta_0 + \tfrac{1}{n}\delta_1$ (here, $\delta_x$ denotes the Dirac measure at $x$ and $pP + (1-p)Q$ denotes the mixture of $(P, Q)$ with mixing rate $(p, 1-p)$), then $\text{TV}(P_n, P) \to 0$ implies $T(P_n, P) \to \text{Id}$ and $T(P, Q)(\alpha) = \tfrac{1}{2}(1-\alpha)$, yet
\[
    T(P_n, Q)(\alpha) = \begin{cases} 1 - \frac{1}{2}n\alpha & \alpha \leq \frac{1}{n} \\ \frac{1}{2}(\frac{1-\alpha}{1 - \frac{1}{n}}) & \alpha > \frac{1}{n}\end{cases} \Rightarrow \lim_n T(P_n, Q)(\alpha) = \begin{cases}
        1 & \alpha = 0 \\ \frac{1}{2}(1-\alpha) & \alpha > 0\,.
    \end{cases}
\]
However, if the limit is switched to the second argument, then this restriction on $\alpha'$ simplifies and is unnecessary, as proven in the following lemma.
\end{remark}

\begin{lemma}\label{lem:rightlim}
If $T(Q_n, Q) \to \text{Id}$ then for any probability distribution $P$,
\[\lim_n T(P, Q_n) = T(P, Q)\,.
\]
\end{lemma}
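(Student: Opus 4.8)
The plan is to reduce this to the already-proven \autoref{lem:leftlim} by a symmetry argument, using the fact that tradeoff functions behave well under swapping the two arguments. Recall the standard fact (see \cite{drs22}) that $T(P,Q)$ and $T(Q,P)$ are related by (left-continuous) inverses: $T(Q,P) = T(P,Q)^{-1}$, and that for a tradeoff function $f$, $f \to \text{Id}$ iff $f^{-1} \to \text{Id}$ since $\text{Id}^{-1} = \text{Id}$. So from the hypothesis $T(Q_n, Q) \to \text{Id}$ I immediately get $T(Q, Q_n) \to \text{Id}$ as well (this is exactly the equivalence (a)$\iff$(b) in \autoref{lem:tvtofdp}, or alternatively \autoref{lem:tvtofdp} lets me pass through $\mathrm{TV}(Q_n,Q)\to 0$).

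Next I would apply \autoref{lem:leftlim} with the roles of the distributions permuted: taking the ``$P_n$'' there to be $Q_n$, the ``$P$'' to be $Q$, and the ``$Q$'' there to be our fixed $P$. Since $T(Q_n, Q)\to\text{Id}$, \autoref{lem:leftlim} gives $\lim_n T(Q_n, P)(\alpha') = T(Q, P)(\alpha')$ for every $\alpha' \in (0,1]$. Now I invert: pointwise convergence of the tradeoff functions $T(Q_n,P) \to T(Q,P)$ on $(0,1]$, together with monotonicity and convexity of all these functions, should upgrade to convergence of the inverses $T(Q_n,P)^{-1} \to T(Q,P)^{-1}$, i.e. $T(P, Q_n) \to T(P,Q)$. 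The point of switching which argument the perturbation sits in is precisely that, as \autoref{rem:alpha-technical} explains, the pathology at $\alpha = 0$ disappears: $T(P, Q_n)(0) = 1$ always (a tradeoff function always takes value $1$ at $0$ unless the distributions are mutually singular, and in any case the inverse is left-continuous and the endpoint is automatically controlled), so the limit holds on the full interval $[0,1]$.

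The main obstacle I anticipate is making the ``convergence of tradeoff functions implies convergence of their inverses'' step rigorous and checking it really holds at the endpoints, rather than just on $(0,1)$. Concretely: pointwise convergence of convex decreasing functions $f_n \to f$ on the open interval is automatic from pointwise convergence plus convexity (convexity forces locally uniform convergence on the interior), but I need to argue that no mass escapes at $\alpha = 0$ or $\alpha = 1$ when I reflect across the diagonal. A clean way to handle this is to avoid inverses entirely: instead rerun the two-inequality scheme from the proof of \autoref{lem:leftlim} directly, but now using \cite[Lemma A.5]{drs22} with the fixed distribution $P$ in the \emph{first} slot. That is, from $T(Q_n,Q)\to\text{Id}$ (equivalently $T(Q,Q_n)\to\text{Id}$) and the bounds
\[
T(P, Q)(\alpha) \geq T(P, Q_n)\big(1 - T(Q, Q_n)(\alpha)\big), \qquad T(P, Q_n)(\alpha) \geq T(P, Q)\big(1 - T(Q_n, Q)(\alpha)\big),
\]
one gets $\liminf_n T(P,Q_n)(\alpha) \geq T(P,Q)(\alpha)$ from the second inequality, and $\limsup_n T(P,Q_n)(\alpha) \leq T(P,Q)(\alpha)$ from the first after the same $\ep$-perturbation trick, \emph{including at $\alpha = 0$} because now the argument $1 - T(Q,Q_n)(\alpha) \to 1 - 1 = 0$ pushes into a neighborhood of $0$ where $T(P,\cdot)$ is continuous from the relevant side and equals $1$ there. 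This keeps the proof self-contained, parallel to \autoref{lem:leftlim}, and makes transparent why the endpoint restriction drops.
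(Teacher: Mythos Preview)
Your direct approach in the second half---rerun the two-inequality scheme of \autoref{lem:leftlim} with the fixed distribution $P$ appearing on one side---is exactly what the paper does. But the two displayed bounds you write are not instances of \cite[Lemma~A.5]{drs22}: that lemma has the form $T(A,C)(\alpha)\geq T(B,C)\bigl(1-T(A,B)(\alpha)\bigr)$, i.e.\ the \emph{second} argument is common to both sides, not the first. Your versions (with $P$ fixed in the first slot) are false in general; e.g.\ for $P=\delta_0$, $Q=\delta_1$, $Q_n=\tfrac12\delta_0+\tfrac12\delta_1$ one has $T(P,Q)\equiv 0$ while $T(P,Q_n)\bigl(1-T(Q,Q_n)(\alpha)\bigr)=\tfrac{1-\alpha}{4}>0$. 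The correct instantiations (which the paper uses) are
\[
T(P,Q_n)(\alpha)\geq T(Q,Q_n)\bigl(1-T(P,Q)(\alpha)\bigr),\qquad
T(P,Q)(\alpha)\geq T(Q_n,Q)\bigl(1-T(P,Q_n)(\alpha)\bigr).
\]

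With the corrected inequalities, the mechanism that removes the restriction at $\alpha=0$ is also different from what you describe. In the second inequality the inner argument $1-T(P,Q_n)(\alpha)$ depends on $n$, so the ``$\ep$-perturbation at a fixed $\alpha$'' trick from \autoref{lem:leftlim} does not apply. The paper instead uses that the convergence $T(Q_n,Q)\to\text{Id}$ is \emph{uniform} on $[0,1]$ (\cite[Lemma~A.7]{drs22}), giving $T(Q_n,Q)(\beta)\geq 1-\beta-\ep$ for all $\beta$ simultaneously; substituting $\beta=1-T(P,Q_n)(\alpha)$ yields $T(P,Q)(\alpha)\geq T(P,Q_n)(\alpha)-\ep$ for every $\alpha\in[0,1]$, including $0$, and taking $\limsup_n$ then $\ep\to 0$ finishes.
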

\begin{proof}
Again, from \cite[Lemma A.5]{drs22} we have
\begin{align*}
    T(P, Q_n)(\alpha) &\geq T(Q, Q_n)(1 - T(P, Q)(\alpha)) \\
    T(P, Q)(\alpha) &\geq T(Q_n, Q)(1 - T(P, Q_n)(\alpha))\,.
\end{align*}
Taking $\liminf_n$ in the first line, we have $\liminf_n T(P, Q_n)(\alpha) \geq T(P, Q)(\alpha)$. On the other hand, we know that the limit $T(Q_n, Q) \to \text{Id}$ is uniform over $[0, 1]$---see, for example, \cite[Lemma A.7]{drs22}---and thus for any $\ep > 0$ we have $T(Q_n, Q)(\alpha) \geq 1 - \alpha - \ep$ for all $\alpha \in [0, 1]$ when $n$ is sufficiently large, from which we have
\[T(P, Q)(\alpha) \geq T(P, Q_n)(\alpha) - \ep\,.
\]
Taking $\limsup_n$ and letting $\ep \to 0$, we have $T(P, Q)(\alpha) \geq \limsup_n T(P, Q_n)(\alpha)$.
\end{proof}

\begin{lemma}\label{lem:tvlimit} If $\text{TV}(P_n, P) \to 0$ and $\text{TV}(Q_n, Q) \to 0$ then
\[\lim_n T(P_n, Q_n)(\alpha') = T(P, Q)(\alpha')
\]
for every $\alpha' \in (0, 1]$.\footnote{In \cite{ad22}, this result is stated without the restriction on $\alpha' \in (0, 1]$. However, this restriction is needed, as evidenced by the counterexample in~\autoref{rem:alpha-technical}.} In particular, if $T(P, Q)(0) = 1$ then $\lim_n T(P_n, Q_n) = T(P, Q)$.
\end{lemma}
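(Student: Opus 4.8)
The plan is to reduce the two-sided statement (perturbing both $P_n \to P$ and $Q_n \to Q$) to the two one-sided statements already proven, by a triangle-type composition of the two limit lemmas. First I would combine the hypotheses with \autoref{lem:tvtofdp}: from $\mathrm{TV}(P_n, P) \to 0$ we get $T(P_n, P) \to \mathrm{Id}$, and from $\mathrm{TV}(Q_n, Q) \to 0$ we get $T(Q_n, Q) \to \mathrm{Id}$. The idea is then to handle the two perturbations in sequence. Fix $\alpha' \in (0, 1]$. Applying \autoref{lem:rightlim} with the fixed first argument taken to be $P_n$ (for each $n$), we would like to say $T(P_n, Q_n)$ is close to $T(P_n, Q)$; applying \autoref{lem:leftlim} we would like to say $T(P_n, Q)$ is close to $T(P, Q)$ at the point $\alpha'$. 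The subtlety is that in the first step the "fixed" distribution $P_n$ is itself moving with $n$, so a direct invocation of \autoref{lem:rightlim} (which is stated for a single fixed $P$) is not quite enough; I would instead extract the quantitative content of its proof.

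Concretely, the key step is to recall from the proof of \autoref{lem:rightlim} (via \cite[Lemma A.5, A.7]{drs22}) that the convergence $T(Q_n, Q) \to \mathrm{Id}$ is \emph{uniform} on $[0,1]$, so for any $\ep > 0$ there is $N$ with $T(Q_n, Q)(\alpha) \geq 1 - \alpha - \ep$ for all $\alpha \in [0,1]$ and all $n \geq N$. Feeding this into the inequalities $T(P, Q_n)(\alpha) \geq T(Q, Q_n)(1 - T(P,Q)(\alpha))$ and $T(P,Q)(\alpha) \geq T(Q_n, Q)(1 - T(P, Q_n)(\alpha))$ from \cite[Lemma A.5]{drs22}, but now with $P$ replaced by the moving point $P_n$, gives a bound of the form $|T(P_n, Q_n)(\alpha) - T(P_n, Q)(\alpha)| \leq \ep$ uniformly in $\alpha$ and in $n \geq N$ --- crucially this bound does not depend on which $P_n$ we plug in, since it only used the uniform closeness of $T(Q_n, Q)$ to $\mathrm{Id}$. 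With this uniform bound in hand, it suffices to show $\lim_n T(P_n, Q)(\alpha') = T(P, Q)(\alpha')$, which is exactly \autoref{lem:leftlim} (applied with the fixed distribution $Q$, using $T(P_n, P) \to \mathrm{Id}$), valid for $\alpha' \in (0,1]$. Combining: for $n \geq N$, $|T(P_n, Q_n)(\alpha') - T(P,Q)(\alpha')| \leq \ep + |T(P_n, Q)(\alpha') - T(P,Q)(\alpha')|$, and letting $n \to \infty$ then $\ep \to 0$ yields the claim.

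Finally, for the "in particular" clause: if $T(P,Q)(0) = 1$, then since $T(P,Q)$ is a tradeoff function it is decreasing and convex with $T(P,Q)(\alpha) \le 1 - \alpha$ (\autoref{prop:fdpchar}), hence continuous at $0$ from the right and in fact continuous on all of $[0,1]$ (convex functions on an interval are continuous on the interior, and the endpoint value matches the right limit by the $T(P,Q)(0)=1$ hypothesis together with $T(P,Q)(0^+) \le 1$); combined with pointwise convergence on $(0,1]$ and the fact that limits of tradeoff functions are again tradeoff functions (decreasing, convex), the convergence upgrades to pointwise on all of $[0,1]$, and one can note $T(P_n, Q_n)(0) = 1$ trivially. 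The main obstacle is the first paragraph's issue: carefully justifying that the step perturbing $Q_n \to Q$ is uniform over the moving first argument $P_n$, which is why I would unpack the proof of \autoref{lem:rightlim} rather than cite it as a black box. The perturbation $P_n \to P$ must be done second (via \autoref{lem:leftlim}) precisely because that step is only valid for $\alpha' \in (0,1]$ --- this asymmetry, and the counterexample in \autoref{rem:alpha-technical}, is what forces the restriction in the statement.
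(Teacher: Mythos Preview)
Your main argument for $\alpha' \in (0,1]$ is correct and uses the same ingredients as the paper (the \cite[Lemma A.5]{drs22} inequalities together with the uniform convergence $T(Q_n,Q)\to\mathrm{Id}$). The organization differs slightly: you pass through the intermediate $T(P_n,Q)$ via a two-step triangle bound, whereas the paper passes through $T(P,Q_n)$ and bounds $\liminf$ and $\limsup$ directly; both routes are equivalent in content.

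There is one small error in your handling of the ``in particular'' clause. The claim ``$T(P_n,Q_n)(0)=1$ trivially'' is false in general: $T(P_n,Q_n)(0)$ equals the $Q_n$-mass of the support of $P_n$, which need not be $1$. The correct way to recover convergence at $\alpha=0$ is to observe that your Step~1 bound $|T(P_n,Q_n)(\alpha)-T(P_n,Q)(\alpha)|\le\ep$ holds at $\alpha=0$ as well, and the $\liminf$ half of \autoref{lem:leftlim} (i.e.\ $\liminf_n T(P_n,Q)(\alpha)\ge T(P,Q)(\alpha)$) also holds for all $\alpha\in[0,1]$ including $0$; combining these gives $\liminf_n T(P_n,Q_n)(0)\ge T(P,Q)(0)=1$, and since $T(P_n,Q_n)(0)\le 1$ always, the limit at $0$ follows. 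Your continuity discussion is not needed (and indeed $T(P,Q)(0)=1$ does not by itself force continuity at $0$, since convex functions on $[0,1]$ can jump down at the left endpoint).
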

\begin{proof}
From $T(P_n, Q_n)(\alpha) \geq T(P, Q_n)(1 - T(P_n, P)(\alpha))$ and $T(P, Q_n) \to T(P, Q)$ uniformly over $[0, 1]$ (by \autoref{lem:tvtofdp} and \autoref{lem:rightlim}), taking $\liminf_n$ we have $\liminf_n T(P_n, Q_n)(\alpha) \geq T(P, Q)(\alpha)$.

From $T(P, Q_n)(\alpha) \geq T(P_n, Q_n)(1 - T(P, P_n)(\alpha))$, for any $\alpha' \in (0, 1]$ and sufficiently small $\ep > 0$, for all sufficiently large $n$ we have $1 - T(P, P_n)(\alpha') \leq (\alpha' + \ep) \wedge 1$ and thus
\[T(P, Q_n)(\alpha') \geq T(P_n, Q_n)((\alpha' + \ep) \wedge 1)\,.
\]
Taking $\limsup_n$ and letting $\ep \to 0$, we have $T(P, Q)(\alpha') \geq \limsup_n T(P_n, Q_n)(\alpha')$.
\end{proof}

The final lemma shows how composition and limit of tradeoff functions can be combined. This is useful when, for example, we have a lower bound of the form $G(\mu) \otimes g_t$, and $g_t$ converges to $G(\nu)$ as $t \to \infty$ (e.g., by CLT), which can be approximated by the lemma as $G(\mu) \otimes g_t \approx G(\sqrt{\mu^2 +\nu^2})$.
\begin{lemma}\label{lem:complim} Let $f, g, g_n$ be tradeoff functions such that $g(\alpha) > 0$ for all $\alpha < 1$\footnote{This condition is technical and is not necessary; the same proof applies by defining $r(\delta)$ as the minimum over $\alpha \in [0, z(1-\delta)]$ where $z = \inf\{\alpha: g(\alpha) = 0\}$.} and $g_n \to g$. Then
\[\liminf_n (f \otimes g_n) \geq f \otimes g\,.
\]
\end{lemma}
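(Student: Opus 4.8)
The plan is to reduce the statement to a pointwise inequality at each $\alpha$ via the dual characterization of the composition $\otimes$, and then pass to the limit using the convergence $g_n \to g$ together with the uniform-convergence fact (\autoref{lem:tvtofdp} and its cousins, cf. the proof of \autoref{lem:rightlim}) that pointwise convergence of tradeoff functions to a tradeoff function is automatically uniform on $[0,1]$. Recall that for tradeoff functions $f = T(P,Q)$ and $h = T(P',Q')$, the composition $f \otimes h = T(P\times P', Q\times Q')$ has a variational description: $(f \otimes h)(\alpha)$ can be written as an infimum/supremum involving the ``profiles'' of $f$ and $h$. Concretely, I would use the representation that for any tradeoff function $f$ and any $\delta \in [0,1]$,
\[
  (f \otimes g_n)(\alpha) \;=\; \inf\Bigl\{\, \textstyle\int (f\otimes g_n) \,\Bigr\}
\]
— more precisely, I would use the fact (standard in the $f$-DP literature, e.g.\ \cite{drs22}) that $f \otimes h$ is determined by the family of lower bounds $\{(\ep,\delta)\text{-DP guarantees}\}$ implied jointly by $f$ and $h$, equivalently by writing $f \otimes h = \sup_{\ep} f_\ep \boxplus h_\ep$ in terms of privacy-loss profiles. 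In short, the first step is to express $(f\otimes g_n)(\alpha)$ as an optimization over a one-dimensional parameter (the "budget split") in which $g_n$ enters only through its value $g_n(\beta)$ at a single point $\beta = \beta(\alpha, \text{split})$.

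The second step is the limit exchange. Having written $(f \otimes g_n)(\alpha) = \inf_{\beta} \Psi\bigl(\beta, f, g_n(\beta)\bigr)$ for a jointly continuous, monotone $\Psi$, I would fix an arbitrary $\alpha$, pick along a subsequence a near-minimizer $\beta_n$ realizing $\liminf_n (f\otimes g_n)(\alpha)$, pass to a further subsequence with $\beta_n \to \beta_\infty$, and use: (i) $g_n \to g$ \emph{uniformly} (so $g_n(\beta_n) \to g(\beta_\infty)$, which is exactly where the hypothesis that the limit $g$ is itself a tradeoff function, hence that the convergence upgrades to uniform, is used), (ii) joint lower semicontinuity/continuity of $\Psi$ in its arguments, and (iii) the fact that $\beta_\infty$ is a feasible split for the optimization defining $(f\otimes g)(\alpha)$. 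This yields $\liminf_n (f\otimes g_n)(\alpha) = \lim_k \Psi(\beta_{n_k}, f, g_{n_k}(\beta_{n_k})) \ge \Psi(\beta_\infty, f, g(\beta_\infty)) \ge (f\otimes g)(\alpha)$. Since $\alpha$ was arbitrary, this is the claimed pointwise inequality $\liminf_n (f\otimes g_n) \ge f\otimes g$.

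The technical condition $g(\alpha) > 0$ for $\alpha < 1$ is there to ensure that the parameter $\beta$ in the optimization stays in a compact range where $\Psi$ is well-behaved: it rules out $g$ hitting $0$ prematurely (which would let the "budget split" degenerate, and the limiting near-minimizers $\beta_n$ could escape to a boundary point where $\Psi$ is discontinuous), exactly the same subtlety flagged in \autoref{rem:alpha-technical}. As the footnote indicates, the general case is handled by restricting the infimum to $\beta \in [0, z(1-\delta)]$ with $z = \inf\{\alpha : g(\alpha) = 0\}$, i.e.\ one simply carries out the same argument on the (compact) effective domain of $g$; the asymmetry between the two arguments of $T(\cdot,\cdot)$ noted in \autoref{rem:alpha-technical} does not bite here because $\otimes$ is symmetric and we only claim a one-sided ($\liminf$) bound.

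The main obstacle I anticipate is making the variational representation of $f \otimes g_n$ precise enough that the single-point dependence on $g_n$ is transparent and the exchange of $\liminf_n$ with the infimum over the split parameter is rigorous — in particular handling the fact that the near-minimizers $\beta_n$ a priori only converge along a subsequence, and verifying that the limiting split $\beta_\infty$ is genuinely admissible for the $g$-problem rather than a boundary artifact. Establishing the requisite joint continuity/monotonicity of $\Psi$ and invoking uniform convergence of $g_n \to g$ at the moving point $\beta_n$ are the load-bearing steps; everything else is bookkeeping.
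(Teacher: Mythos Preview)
Your proposal hinges on a variational representation $(f\otimes g_n)(\alpha) = \inf_\beta \Psi(\beta, f, g_n(\beta))$ in which $g_n$ enters only through its value at a single point. No such formula exists: the tensor product of tradeoff functions is the Neyman--Pearson curve for a product measure, and both its type-I and type-II errors at any likelihood-ratio threshold are integrals of the \emph{entire} error profile of $g_n$ against a measure determined by $f$ (equivalently, the privacy-loss random variable of the composition is the sum of the two, so its law is a convolution, not a single-point evaluation). The hedge you close with---that the main obstacle is ``making the variational representation precise enough''---is therefore not a technical loose end but the whole difficulty; the one-parameter representation you need is not available, and the subsequent compactness/subsequence argument has nothing to act on.

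The paper's proof avoids any explicit formula for $\otimes$ and uses only its monotonicity (\autoref{lem:various}(a)). One sandwiches $g$ from below by the genuine tradeoff function $g\otimes h_\delta$, where $h_\delta(\alpha)=\max(1-\delta-\alpha,0)$ and $\cdot\otimes h_\delta$ has the closed form $(1-\delta)f(\alpha/(1-\delta))$ on $[0,1-\delta]$. A direct computation gives $g-(g\otimes h_\delta)\ge r(\delta)>0$ on $[0,1-\delta]$ (this is exactly where $g(\alpha)>0$ for $\alpha<1$ is used), so by uniform convergence $g_n\ge g\otimes h_\delta$ for all large $n$, whence $f\otimes g_n\ge (f\otimes g)\otimes h_\delta$ by monotonicity and associativity; letting $\delta\downarrow 0$ finishes since $(f\otimes g)\otimes h_\delta \to f\otimes g$. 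If you want to salvage your approach, the natural repair is precisely this: replace ``$g_n$ is uniformly close to $g$'' by ``$g_n$ dominates some tradeoff function strictly below $g$'', which is what $g\otimes h_\delta$ supplies and which then lets monotonicity of $\otimes$ do all the work.
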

\begin{proof} Fix $0 < \delta < 1$, and let $h_\delta$ be the tradeoff function defined as
\[h_\delta(\alpha) = \begin{cases} 1 - \delta - \alpha & \alpha \leq 1 - \delta \\ 0 & \alpha > 1 - \delta\,. \end{cases}
\]
Then it is known---see \cite[Equation 12]{drs22}---that for any tradeoff function $f$,
\[f \otimes h_\delta = \begin{cases} (1-\delta)f(\frac{\alpha}{1-\delta}) & \alpha \leq 1 - \delta \\ 0 & \alpha > 1 - \delta\,. \end{cases}
\]
Now we approximate $g$ by $g \otimes h_\delta$. Defining $r(\delta) = \min_{0 \leq \alpha \leq 1 - \delta}|g(\alpha) - (g \otimes h_\delta) (\alpha)|$ (the minimum exists as the function is continuous and $[0, 1 - \delta]$ is compact), we have $r(\delta) > 0$ because for any $\alpha \in [0, 1 - \delta]$
\[g(\alpha) - (g \otimes h_\delta)(\alpha) = g(\alpha) - g(\frac{\alpha}{1 - \delta}) + \delta g(\frac{\alpha}{1 - \delta})
\geq \delta g(\frac{\alpha}{1 - \delta}) \geq 0\,,
\]
where the first inequality is from $g$ decreasing; if this value is 0 then we should have $\alpha = 1-\delta$ from the second inequality, but then $g(\alpha) - g(\frac{\alpha}{1-\delta}) = g(1 - \delta) - g(1) > 0$, a contradiction.

Since the limit $g_n \to g$ is uniform, for all sufficiently large $n$ we have $g_n \geq (g - r(\delta)) \vee 0 \geq g \otimes h_\delta$, implying
\[\liminf_n (f \otimes g_n) \geq f \otimes (g \otimes h_\delta) = h_\delta \otimes (f \otimes g)\,.
\]
Then from $\lim_{\delta \to 0} h_\delta \otimes (f \otimes g) = f \otimes g$, we obtain the result.
\end{proof}

\section{Disentangling the shift in shifted divergences}\label{sec:tech:disentangle}

As mentioned in \autoref{ssec:tech}, a key motivation behind the construction of our shifted interpolated process~\eqref{eq:shift-simple} is that it demystifies the popular privacy amplification by iteration analysis for R\'enyi DP~\cite{fmtt18}, which has been used in many contexts, and in particular was recently shown to give convergent R\'enyi DP bounds for $\ngd$ and variants~\cite{at22,abt24}. Here we explain this connection. 

\par Briefly, privacy amplification by iteration arguments for R\'enyi DP use as a Lyapunov function the \emph{shifted R\'enyi divergence} $D_{\alpha}^{(z)}(P \mmid Q) = \inf_{P' : W_{\infty}(P,P') \leq z} (P' \mmid Q)$, which combines the R\'enyi divergence $D_{\alpha}$ and $\infty$-Wasserstein distance $W_{\infty}$.~\cite{fmtt18} bounds the R\'enyi DP via an argument of the form
\begin{align}
    D_{\alpha}(X_t \mmid X_t') 
    &=
    D_{\alpha}^{(z_t)} (X_t \mmid X_t') \nonumber
    \\ &\leq D_{\alpha}^{(z_{t-1})}(X_{t-1} \mmid X_{t-1}') + 
    O(a_t^2) \nonumber
    \\ &\leq D_{\alpha}^{(z_{t-2})}(X_{t-2} \mmid X_{t-2}') + 
    O(a_t^2 + a_{t-1}^2)
    \\ & \dots \nonumber
    \\ & \leq  \underbrace{D_{\alpha}^{(z_0)} (X_0 \mmid X_0')}_{=0 \text{ since } X_0 = X_0'} + 
        O\big(\sum_{k=1}^t a_k^2\big)\,
      \label{eq:tech:rdp}
\end{align}
where $z_t = 0$ and $z_{k+1} = cz_k + s - a_{k+1}$. \cite{at22,abt24} obtained convergent R\'enyi DP bounds by essentially unrolling this argument only to an intermediate time $\tau$, and then arguing that the shifted R\'enyi divergence $D_{\alpha}^{(z_{\tau})}(X_{\tau}, X_{\tau}') = 0$ if the shift $z_{\tau}$ is made sufficiently large.

\par Several open questions remained: (1) Can this argument be performed without using shifted divergences, which is an admittedly ad-hoc combination of R\'enyi divergences and Wasserstein distances? (2) Can this argument be extended beyond divergence-based relaxations of DP, namely to $f$-DP? Our paper answers both questions. 

For (1), our argument makes \emph{explicit} the surrogates implicit in the shifted divergences 
\[
D_{\alpha}^{(z_k)}(X_k \mmid X_k') = \inf_{\widetilde{X}_k \; : \; W_{\infty}(\widetilde{X}_k, X_k) \leq z_k} D_{\alpha}(\widetilde{X}_k \mmid X_k')
\]
in each intermediate iteration of the argument. Indeed, it can be shown that our shifted interpolating process $\{\widetilde{X}_k\}$, defined in~\eqref{eq:shift-simple}, gives such a random variable that achieves the value required by this shifted divergence argument. This enables re-writing the argument~\eqref{eq:tech:rdp} without any notion of \emph{shifted} divergences, in terms of the auxiliary process $\{\widetilde{X}_k\}$, as we did for $f$-DP in~\autoref{ssec:tech:new}. This completely disentangles the R\'enyi divergence and Wasserstein distance in the shifted divergence argument. 

For (2), the disentangling we achieve in (1) appears essential. The na\"ive approach of directly extending the shifted divergence argument to ``shifted tradeoffs'' $T^{(z)}(P, Q) = \sup_{P' : W_{\infty}(P,P') \leq z} (P', Q)$ runs into several subtle technical issues. For example, the argument appears to require the existence of an optimal shift $P'$. For the shifted R\'enyi argument, it suffices to find a nearly-optimal shift $D_{\alpha}^{(z)}(P \mmid Q) = \inf_{P' : W_{\infty}(P,P')\leq z} D_{\alpha}(P' \mmid Q)$, and moreover have the shift be nearly-optimal for a given R\'enyi parameter $\alpha$ but perhaps not uniformly so over all $\alpha$. Due to the more involved calculus of tradeoff functions, these issues become subtle but important problems, and have led others to state the problem of privacy amplification by iteration in $f$-DP as open, e.g.,~\cite{wsyss23}. Although the general problem of finding an optimal shift for general tradeoff functions remains open, the answer to (1)---our shifted interpolated process---explicitly constructs an optimal shift for the tradeoff functions specifically needed to analyze two contractive noisy iterations.

\section{Deferred details for \autoref{sec:bounds}}\label{app:bounds}

In this section we provide details for the proofs in~\autoref{sec:bounds}. See~\autoref{sec:tech} for a high-level overview of the analysis approach. We formalize the technique of shifted interpolated processes in a general context in~\autoref{app:shifted}, then prove the results of~\autoref{sec:noisygd},~\autoref{sec:noisycgd},~\autoref{sec:noisysgd} in~\autoref{app:noisygd},~\autoref{app:noisycgd},~\autoref{app:noisysgd}, respectively.

\subsection{Shifted interpolation for contractive noisy iterations}\label{app:shifted}

\par We begin by providing definitions that unify the presentation of the different settings. The first definition abstracts the fundamental reason underlying why noisy gradient descent and all its variants enjoy the phenomenon of privacy amplification by iteration for convex optimization---and is why the results stated in this section are for contractive noisy iterations (CNI). This is based on the observation that 
the variants of noisy gradient descent update by alternately applying contraction maps and noise convolutions~\cite[Definition 19]{fmtt18}. 

\begin{definition}[$\cni$] 
The CNI corresponding to a sequence of contractive functions $\{\phi_k\}_{k \in [t]}$, a sequence of noise distributions $\{\xi_k\}_{k \in [t]}$, and a closed and convex set $\cK$, is the stochastic process
\begin{equation}\label{eq:cni}
X_{k+1} = \Pi_{\cK}(\phi_{k+1}(X_k) + Z_{k+1})
\end{equation}
where $Z_{k+1} \sim \xi_{k+1}$ is independent of $(X_0, \dots, X_k)$. 
\end{definition}

Although $\cni(X_0, \{\phi_k\}_{k \in [t]}, \{\xi_k\}_{k \in [t]}, \cK)$ usually refers to the distribution of the final iterate $X_t$, we occasionally abuse notation by using this to refer to the entire sequence of iterates $\{X_k\}$.

The second definition abstracts the idea of shifted interpolated processes at the level of generality of CNI. See \autoref{ssec:tech:new} for an informal overview.

\begin{definition}[Shifted interpolated process]\label{def:shifted}
    Consider processes $\{X_k\}$ and $\{X_k'\}$ corresponding respectively to $\cni(X_0, \{\phi_k\}_{k \in [t]}, \{\xi_k\}_{k \in [t]}, \cK)$ and $\cni(X_0', \{\phi_k'\}_{k \in [t]}, \{\xi_k\}_{k \in [t]}, \cK)$. The \emph{shifted interpolated process} between these two CNI is the auxiliary process $\{\widetilde{X}_k\}$ satisfying $\widetilde{X}_\tau = X'_\tau$ and 
\begin{align}\label{eq:shift}
    \widetilde{X}_{k+1} = \Pi_{\cK}\left(\lambda_{k+1}\phi_{k+1}(X_k) + (1 - \lambda_{k+1})\phi'_{k+1}(\widetilde{X}_k) + Z_{k+1}\right)
\end{align}
for all $k = \tau,\dots,t-1$. Here, the noise $Z_{k} \sim \xi_k$ is coupled between the processes $\{X_k\}$ and $\{\widetilde{X}_k\}$. The parameters $\tau \in \{0, \dots, t\}$, and $\lambda_{k} \in [0,1]$ can be chosen arbitrarily, with the one restriction that $\lambda_t = 1$ so that $\widetilde{X}_t = X_t$.
\end{definition}

The upshot of shifted interpolation is the following meta-theorem. See \autoref{ssec:tech:new} for a high-level overview of this result, its proof, and its uses. Here, we state this meta-theorem in the more general framework of CNI. 

\begin{theorem}[Meta-theorem for shifted interpolation]\label{thm:shift} Let $X_t$ and $X'_t$ respectively be the output of $\cni(X_0, \{\phi_k\}_{k \in [t]}, \{\cN(0, \sigma^2 I_d)\}_{k \in [t]}, \cK)$ and $\cni(X_0, \{\phi'_k\}_{k \in [t]}, \{\cN(0, \sigma^2 I_d)\}_{k \in [t]}, \cK)$ such that each $\phi_k, \phi'_k$ is $c$-Lipschitz and $\norm{\phi_k(x) - \phi'_k(x)} \leq s_k$ for all $x$ and $k \in [t]$. Then for any intermediate time $\tau$ and shift parameters $\lambda_{\tau+1}, \dots, \lambda_{t} \in [0,1]$ with $\lambda_{t} = 1$, 
\[T(X_t, X'_t) \geq G\left(\frac{1}{\sigma}\sqrt{\sum_{k=\tau+1}^{t} a_k^2} \right)
\]
where $a_{k+1} = \lambda_{k+1}(cz_k + s_{k+1})$, $z_{k+1} = (1-\lambda_{k+1})(cz_k + s_{k+1})$, and $\norm{X_{\tau} -  X_{\tau}'} \leq z_\tau$.
\end{theorem}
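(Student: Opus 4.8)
The plan is to realize the shifted interpolated process of \autoref{def:shifted} with the given $\tau$ and $\lambda_{\tau+1},\dots,\lambda_t$, to establish a deterministic (worst-case) distance bound $\norm{X_k - \widetilde{X}_k} \le z_k$ along the \emph{coupled} trajectories, and then to run the telescoping argument sketched in~\eqref{eq:tech:new}, paying for each unrolled step via \autoref{lem:cni-shift} and collapsing the Gaussian penalties at the end.

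First I would fix the coupling: let $\{\widetilde{X}_k\}_{k=\tau}^{t}$ be the process of~\eqref{eq:shift}, which reuses the \emph{same} noise increments $Z_k$ as $\{X_k\}$ and starts at $\widetilde{X}_\tau = X'_\tau$. The key preliminary step is to show by induction on $k \ge \tau$ that $\norm{X_k - \widetilde{X}_k} \le z_k$ almost surely, where $z_{k+1} = (1-\lambda_{k+1})(c z_k + s_{k+1})$. The base case is the hypothesis $\norm{X_\tau - X'_\tau}\le z_\tau$. For the inductive step, since $\Pi_{\cK}$ is contractive (\autoref{lem:projection}) and the shared noise $Z_{k+1}$ cancels, one has $\norm{X_{k+1}-\widetilde{X}_{k+1}} \le (1-\lambda_{k+1})\,\|\phi_{k+1}(X_k)-\phi'_{k+1}(\widetilde{X}_k)\|$; inserting $\phi'_{k+1}(X_k)$ and using $\norm{\phi_{k+1}-\phi'_{k+1}}\le s_{k+1}$ with the $c$-Lipschitzness of $\phi'_{k+1}$ then gives
\[
\norm{X_{k+1}-\widetilde{X}_{k+1}} \le (1-\lambda_{k+1})\bigl(s_{k+1} + c\,\norm{X_k - \widetilde{X}_k}\bigr) \le (1-\lambda_{k+1})(s_{k+1}+cz_k) = z_{k+1}.
\]
Note $z_t = 0$ since $\lambda_t = 1$, matching $\widetilde{X}_t = X_t$.

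Then I would unroll from $t$ down to $\tau$. At each step the projection $\Pi_{\cK}$ is applied identically to both $\widetilde{X}_{k+1}$ and $X'_{k+1}$, so post-processing (\autoref{lem:proc}) strips it, after which \autoref{lem:cni-shift}---applied with $\phi=\phi_{k+1}$, $\phi'=\phi'_{k+1}$, $\lambda=\lambda_{k+1}$, the random variables $X_k,\widetilde{X}_k,X'_k$, distance bound $z_k$, and sensitivity $s_{k+1}$---gives $T(\widetilde{X}_{k+1},X'_{k+1}) \ge T(\widetilde{X}_k,X'_k)\otimes G(a_{k+1}/\sigma)$ with $a_{k+1}=\lambda_{k+1}(c z_k + s_{k+1})$. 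Chaining these inequalities via monotonicity (and associativity/commutativity) of composition (\autoref{lem:various}(a)), collapsing the Gaussian factors through $G(\mu_1)\otimes\cdots\otimes G(\mu_m)=G(\|\mu\|)$ (\autoref{lem:various}(c)), and finally using $T(\widetilde{X}_\tau,X'_\tau)=T(X'_\tau,X'_\tau)=\text{Id}$ together with $\text{Id}\otimes f=f$ (\autoref{lem:various}(b)) yields
\[
T(X_t,X'_t)=T(\widetilde{X}_t,X'_t) \ \ge\ G\Bigl(\tfrac1\sigma\sqrt{\textstyle\sum_{k=\tau+1}^{t}a_k^2}\Bigr),
\]
which is the claim.

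The hard part will be the inductive distance bound—specifically, arranging the coupling so the Gaussian increments cancel \emph{exactly} (this is precisely why \autoref{def:shifted} shares $\{Z_k\}$ between $\{X_k\}$ and $\{\widetilde{X}_k\}$, making their $W_\infty$-distance a deterministic quantity), and checking that \autoref{lem:cni-shift} applies verbatim at each step even though $\widetilde{X}_{k+1}$ depends jointly on both $X_k$ and $\widetilde{X}_k$. The latter is exactly the generality that \autoref{lem:cni-shift} is designed to provide: its statement allows $\widetilde{X}$ to be any random variable within $W_\infty$-distance $z$ of $X$, with no constraint on their joint law. Everything else is routine manipulation of tradeoff functions.
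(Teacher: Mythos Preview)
Your proposal is correct and follows essentially the same route as the paper's proof: construct the shifted interpolated process~\eqref{eq:shift}, verify inductively that $\norm{X_k-\widetilde X_k}\le z_k$ (using contractivity of $\Pi_{\cK}$, cancellation of the shared noise, and the triangle inequality with sensitivity and Lipschitzness), then unroll via post-processing plus \autoref{lem:cni-shift} and collapse the Gaussian penalties with \autoref{lem:various}. Your remarks about why the coupling and the generality of \autoref{lem:cni-shift} are needed are also on point.
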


To prove~\autoref{thm:shift}, we first prove two helper lemmas. The first lemma characterizes the worst-case tradeoff function between a Gaussian and its convolution with a bounded random variable. 
The lemma is tight, with equality achieved when the random variable is a constant.
\begin{lemma}\label{lem:gdpinf} For $s \geq 0$, let $R(s, \sigma) = \inf\{T(W + Z, Z): \norm{W} \leq s, Z \sim \cN(0, \sigma^2 I_d), W, Z\text{ are independent}\}$, where the infimum is taken pointwise.\footnote{The infimum of tradeoff functions is in general not a tradeoff function; however, we prove a lower bound that is in fact a tradeoff function. That is, we show that $T(W+Z, Z) \geq G(\frac{s}{\sigma})$ for all $W, Z$ satisfying the conditions in the definition of $R(s, \sigma)$. An analogous discussion also applies to \autoref{lem:cpgdpinf}.
} Then
\[R(s, \sigma) = G(\frac{s}{\sigma})\,.\]
\end{lemma}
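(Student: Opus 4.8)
The plan is to show that for any independent pair $(W,Z)$ with $\|W\|\le s$ and $Z\sim\cN(0,\sigma^2 I_d)$, one has $T(W+Z,Z)\ge G(\tfrac{s}{\sigma})$ pointwise, and that this bound is achieved when $W\equiv s e_1$ is the constant vector of norm $s$ (for which $T(W+Z,Z)=T(\cN(se_1,\sigma^2 I_d),\cN(0,\sigma^2 I_d))=G(\tfrac s\sigma)$ by~\autoref{lem:various}(c)). The lower bound is the substantive direction. First I would condition on $W$: writing $Q=\mathrm{Law}(Z)=\cN(0,\sigma^2 I_d)$ and $P=\mathrm{Law}(W+Z)=\E_W[\cN(W,\sigma^2 I_d)]$, the distribution $P$ is a mixture of the translates $\cN(w,\sigma^2 I_d)$ over $w$ in the ball of radius $s$. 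Each translate satisfies $T(\cN(w,\sigma^2 I_d),Q)=G(\tfrac{\|w\|}{\sigma})\ge G(\tfrac s\sigma)$, since $\|w\|\le s$ and $G(\mu)$ is decreasing in $\mu$ (a larger mean shift is less private). So the claim reduces to: a mixture of distributions each of which is $G(\tfrac s\sigma)$-indistinguishable from $Q$ is itself $G(\tfrac s\sigma)$-indistinguishable from $Q$.

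The cleanest way to package this is via the post-processing/joint-convexity structure of tradeoff functions. Consider the pair of distributions $(\,\mathrm{Law}(W)\times \cN(\cdot,\sigma^2 I_d)\,,\ \mathrm{Law}(W)\times Q\,)$ on the product space, i.e.\ first draw $w\sim\mathrm{Law}(W)$, then in the first coordinate output $w$ and in the second output either a sample from $\cN(w,\sigma^2 I_d)$ or from $Q$. By~\autoref{lem:strcomp} (strong composition), since for every fixed $w$ we have $T(\cN(w,\sigma^2 I_d),Q)\ge G(\tfrac s\sigma)$ uniformly in $w$, the joint pair satisfies $T\ge \mathrm{Id}\otimes G(\tfrac s\sigma)=G(\tfrac s\sigma)$, using~\autoref{lem:various}(b) and the fact that the ``first coordinate'' marginals $\mathrm{Law}(W)$ on both sides are identical so contribute $\mathrm{Id}$. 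Now apply~\autoref{lem:proc} (post-processing) with the map that discards the first coordinate: this sends the joint pair to $(P,Q)$, so $T(P,Q)\ge G(\tfrac s\sigma)$, which is exactly $T(W+Z,Z)\ge G(\tfrac s\sigma)$. Since this holds for all admissible $(W,Z)$, taking the pointwise infimum gives $R(s,\sigma)\ge G(\tfrac s\sigma)$; combined with the constant-$W$ case which attains equality, $R(s,\sigma)=G(\tfrac s\sigma)$.

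The main obstacle is making the ``mixture is no easier to distinguish'' step rigorous at the level of tradeoff functions rather than, say, $(\eps,\delta)$-DP where joint convexity is transparent. Tradeoff functions do not mix as simply as divergences, so one must be careful that the argument goes through the composition lemma in the form stated (uniform-in-$y$ domination, then post-processing) rather than an ad hoc convexity claim. A secondary technical point is the measurability/regularity needed to treat the infimum: as the footnote in the statement flags, the pointwise infimum of tradeoff functions need not be a tradeoff function, so the right thing is to prove the pointwise lower bound $T(W+Z,Z)\ge G(\tfrac s\sigma)$ directly for each fixed $(W,Z)$ — which the post-processing argument above does — and only then take the infimum, rather than trying to manipulate $R(s,\sigma)$ as an object in its own right. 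I would also double-check the $w=0$ edge case and the degenerate $s=0$ case, both of which are immediate ($G(0)=\mathrm{Id}$).
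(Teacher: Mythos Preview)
Your proposal is correct and essentially identical to the paper's proof: both introduce the joint distribution with $W$ as the first coordinate, invoke strong composition (\autoref{lem:strcomp}) using the uniform conditional bound $T(\cN(w,\sigma^2 I_d),\cN(0,\sigma^2 I_d))=G(\|w\|/\sigma)\ge G(s/\sigma)$, and then post-process back to the marginals. The only cosmetic difference is that the paper writes the joint as $((W,Z),(W,-W+Z))$ and post-processes via $(w,z)\mapsto w+z$, whereas you write it as $((W,W+Z),(W,Z))$ and post-process by discarding the first coordinate; these are the same argument up to the change of variables $x=w+z$.
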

\begin{proof}
    For any random variable $W$ with $\lVert W \rVert \leq s$, the post-processing inequality (\autoref{lem:proc}) implies
    \[T(W+Z, Z) \geq T((W, Z), (W, -W + Z))\,.
    \]
    Letting $K_1(y) = Z$ and 
$K'_1(y) = -y + Z$, 
    we have $T(K_1(y), K'_1(y)) = G(\frac{\norm{y}}{\sigma}) \geq G(\frac{s}{\sigma})$ for any fixed $y$ with $\norm{y} \leq s$ and thus by strong composition (\autoref{lem:strcomp}),
    \begin{align*}
        T((W, Z), (W, -W + Z)) &\geq T(W, W) \otimes G(\frac{s}{\sigma}) = G(\frac{s}{\sigma})\,.
    \end{align*}
    The bound is tight since equality holds with $W = sv$ for any fixed unit vector $v$.
\end{proof}

The second lemma,~\autoref{lem:cni-shift}, is the ``one-step'' version of the desired result~\autoref{thm:shift}. It uses the first lemma in its proof.

\begin{proof}[Proof of \autoref{lem:cni-shift}] For shorthand, let $Z,Z' \sim \cN(0,\sig^2 I_d)$ be independent. Then
\begin{align*}
    T(\lambda \phi(X) + (1-\lambda) \phi'(\widetilde{X}) + Z, \phi'(X') + Z') 
    & \geq T((\widetilde{X}, \lambda(\phi(X) - \phi'(\widetilde{X})) + Z), (X', Z')) \\
    & \geq T(\widetilde{X}, X') \otimes R(\lambda(cz+s), \sigma) \\
    & = T(\widetilde{X}, X') \otimes G(\frac{\lambda(cz + s)}{\sigma})\,.
\end{align*}
Above, the first step is by the post-processing inequality (\autoref{lem:proc}) for the post-processing function $(x, y) \mapsto \phi'(x) + y$. The second step is by strong composition (\autoref{lem:strcomp}), which we can apply since $\lambda(\norm{\phi(X) - \phi'(\widetilde{X})}) \leq \lambda(\norm{\phi(X) - \phi(\widetilde{X})} + \norm{\phi(\widetilde{X}) - \phi'(\widetilde{X})}) \leq \lambda(cz + s)$. The final step is by \autoref{lem:gdpinf}.
\end{proof}

\begin{proof}[Proof of~\autoref{thm:shift}]
    Let $\{\widetilde{X}_k\}$ be as in \eqref{eq:shift}. By induction, $\norm{X_k - \widetilde{X}_k} \leq z_k$ for all $k = \tau, \dots, t$ from
    \begin{align*}
        \norm{X_{k+1} - \widetilde{X}_{k+1}} &\leq (1-\lambda_{k+1}) (\lVert \phi_{k+1}(X_k) - \phi'_{k+1}(X_k) \rVert + \lVert \phi'_{k+1}(X_k) - \phi'_{k+1}(\widetilde{X}_k) \rVert) \\
        &\leq (1-\lambda_{k+1})(s_{k+1} + cz_k)\,,
    \end{align*}
    where the first line holds from \autoref{lem:projection}. Letting $Z_{k+1}, Z'_{k+1} \sim \cN(0, \sigma^2 I_d)$ be independent noises,
    \begin{align*}
        T(\widetilde{X}_{k+1}, X'_{k+1}) &\geq T(\lambda_{k+1}\phi_{k+1}(X_k)  + (1 - \lambda_{k+1}) \phi'_{k+1}(\widetilde{X}_k) + Z_{k+1}, \phi'_{k+1}(X'_k) + Z'_{k+1}) \\
        &\geq T(\widetilde{X}_k, X'_k) \otimes G(\frac{a_{k+1}}{\sigma})\,,
    \end{align*}
    where the first inequality is by the post-processing inequality (\autoref{lem:proc}) with respect to $\Pi_{\cK}$, and the second inequality is by \autoref{lem:cni-shift}. Repeating this for $k = t-1, \dots, \tau$, and using the fact that the shifted interpolated process satisfy $\widetilde{X}_t = X_t$ (from $\lambda_t = 1$) and $\widetilde{X}_\tau = X'_\tau$, we conclude the desired bound
    \[T(X_t, X'_t) = T(\widetilde{X}_t, X'_t) \geq T(\widetilde{X}_\tau, X'_\tau) \otimes G\left(\frac{1}{\sigma}\sqrt{\sum_{k=\tau + 1}^t a_k^2}\right) = G\left(\frac{1}{\sigma}\sqrt{\sum_{k=\tau + 1}^t a_k^2}\right)\,.\]
\end{proof}

\subsection{Deferred proofs for \autoref{sec:noisygd}}\label{app:noisygd}

\subsubsection{Proof of~\autoref{thm:gd-sc}}
First, we consider the following setting where the contractive factor is strictly less than 1, which corresponds to the strongly convex setting for $\ngd$.

\begin{theorem}\label{thm:cni-sc} In the setting of \autoref{thm:shift}, additionally assume that  $0 < c < 1$ and $s_k \equiv s$. Then
\[T(X_t, X'_t) \geq G\left(\sqrt{\frac{1-c^t}{1+c^t}\frac{1+c}{1-c}}\frac{s}{\sigma}\right)
\]
with equality holding if $X_0 = X'_0 = 0, \phi_k(x) = cx, \phi'_k(x) = cx + sv$ for any unit vector $v$ and $\cK = \R^d$.
\end{theorem}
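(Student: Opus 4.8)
The plan is to apply the meta-theorem~\autoref{thm:shift} with the simplest possible choice of parameters, then perform a one-variable optimization over the resulting bound. Since $s_k \equiv s$ is constant, I would look for constant shift parameters $\lambda_k \equiv \lambda$ for $k = \tau+1, \dots, t-1$ and $\lambda_t = 1$. With a constant $\lambda$, the recursion $z_{k+1} = (1-\lambda)(cz_k + s)$ is a contraction (since $c < 1$ and $1-\lambda \le 1$), so starting from $z_\tau = \norm{X_\tau - X_\tau'}$ it converges geometrically; in fact taking $\tau = 0$ makes $z_0 = 0$ (since $X_0 = X_0'$), which kills one endpoint term entirely. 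So my first move is: set $\tau = 0$, $z_0 = 0$, and use constant $\lambda$ for steps $1, \dots, t-1$ with $\lambda_t = 1$.

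Next I would solve the recursion in closed form: with $z_0 = 0$ and $z_{k+1} = (1-\lambda)(cz_k + s)$, one gets $z_k = \frac{(1-\lambda)s}{1-c(1-\lambda)}\bigl(1 - (c(1-\lambda))^k\bigr)$; writing $\beta := c(1-\lambda)$ this is $z_k = \frac{(1-\lambda)s}{1-\beta}(1-\beta^k)$. Then $a_k = \lambda(cz_{k-1}+s)$ for $k < t$ and $a_t = cz_{t-1} + s$. Plugging in and simplifying, $cz_{k-1} + s = \frac{s(1 - \beta^k \cdot \frac{?}{?})}{\cdots}$ — more cleanly, note $cz_{k-1}+s = \frac{s(1-\lambda c\beta^{k-1} \cdots)}{\cdots}$; rather than guess, I expect the cleanest route is to observe that $z_k/s$ and $a_k/s$ satisfy simple linear recursions and to compute $\sum_{k=1}^t a_k^2$ as a function of $\lambda$ (a geometric-type sum). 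The claimed answer $\mu = \sqrt{\frac{1-c^t}{1+c^t}\frac{1+c}{1-c}}\,\frac{s}{\sigma}$ tells me that the optimal $\lambda$ should make all the $a_k$ for $k=1,\dots,t-1$ equal to a common value and $a_t$ possibly distinct, OR (more likely, given the clean $\frac{1-c^t}{1+c^t}$ factor) the optimal $\lambda$ is chosen so that $\sum a_k^2$ telescopes. I would set $\frac{d}{d\lambda}\sum_{k=1}^t a_k^2 = 0$, solve for $\lambda$, substitute back, and check the algebra reproduces the stated $\mu$.

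For the tightness claim, I would take the explicit one-dimensional instance $X_0 = X_0' = 0$, $\phi_k(x) = cx$, $\phi_k'(x) = cx + sv$. Here every iterate is an explicit Gaussian: $X_t \sim \cN(0, \sigma^2\sum_{j=0}^{t-1}c^{2j}I_d)$ and $X_t' \sim \cN\bigl(s v \sum_{j=0}^{t-1} c^j,\ \sigma^2\sum_{j=0}^{t-1}c^{2j}I_d\bigr)$ — same isotropic covariance, mean separation $\norm{\E X_t - \E X_t'} = s\frac{1-c^t}{1-c}$ and variance $\sigma^2 \frac{1-c^{2t}}{1-c^2}$ per coordinate. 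Thus $T(X_t, X_t') = G(\mu^*)$ with $\mu^* = \frac{s(1-c^t)/(1-c)}{\sigma\sqrt{(1-c^{2t})/(1-c^2)}} = \frac{s}{\sigma}\sqrt{\frac{(1-c^t)^2}{(1-c)^2}\cdot\frac{1-c^2}{1-c^{2t}}} = \frac{s}{\sigma}\sqrt{\frac{1-c^t}{1+c^t}\cdot\frac{1+c}{1-c}}$, matching the lower bound exactly. This confirms both that the bound is correct and that no choice of $\lambda,\tau$ could do better, since the $f$-DP of this instance is exactly $G(\mu^*)$.

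The main obstacle I anticipate is purely the optimization algebra in the second paragraph: carrying out $\min_\lambda \sum_{k=1}^t a_k^2$ cleanly enough to recognize the closed form $\frac{1-c^t}{1+c^t}\frac{1+c}{1-c}$. A slick shortcut — which I'd try first — is to avoid optimizing at all: \emph{guess} the $\lambda$ that makes the shifted interpolated process coincide with the Gaussian extremal example's optimal coupling (equivalently, the $\lambda$ for which $z_k$ equals the optimal shift in the shifted-divergence argument, namely the one balancing $a_k$ across iterations). Concretely I expect the right choice satisfies a relation like $1-\lambda = \frac{c(1-c^{t-1}\cdots)}{\cdots}$, time-varying if need be — so the truly clean proof may use time-varying $\lambda_k$ chosen so that $a_k \propto c^{t-k}$ or $a_k \propto c^{-k}$, making $\sum a_k^2$ an exact geometric series. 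Since the statement is achievable with equality, I know such a choice exists; identifying it and verifying the sum is the one genuinely computational step, and everything else (the recursion, the induction on $\norm{X_k - \widetilde X_k} \le z_k$, the Gaussian computation for tightness) is routine given~\autoref{thm:shift}.
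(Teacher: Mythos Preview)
Your tightness calculation is correct and matches the paper exactly. The trouble is on the lower-bound side.

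The constant-$\lambda$ plan (your ``first move'') will \emph{not} reproduce the stated $\mu$ once $t \geq 3$. The optimizing shifts are genuinely time-varying: the paper's solution has $a_k = \tfrac{c^{t-k}(1+c)}{1+c^t}\,s$ and correspondingly $\lambda_k = \tfrac{c^{t-k}(1-c^2)}{1-c^{t-k+2}-c^k+c^t}$, which depend on $k$. Restricting to a single $\lambda$ on steps $1,\dots,t-1$ minimizes $\sum a_k^2$ only over a strict subfamily, and the resulting minimum is strictly larger than $\tfrac{1-c^t}{1+c^t}\tfrac{1+c}{1-c}\,s^2$ (you can already see this at $t=3$). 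So the calculus-in-one-variable route cannot hit the target.

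Your fallback intuition --- take $a_k \propto c^{t-k}$ --- is exactly right, and the paper makes it systematic with a one-line trick you are missing. Add the two recursions to get $a_{k+1}+z_{k+1}=cz_k+s$; unrolling with $z_0=z_t=0$ yields the single linear constraint
\[
\sum_{k=1}^{t} c^{\,t-k} a_k \;=\; \frac{1-c^t}{1-c}\,s.
\]
Cauchy--Schwarz then gives immediately
\[
\sum_{k=1}^t a_k^2 \;\ge\; \frac{\bigl(\sum_{k} c^{\,t-k}a_k\bigr)^2}{\sum_{k} c^{\,2(t-k)}} \;=\; \frac{1-c^t}{1+c^t}\cdot\frac{1+c}{1-c}\,s^2,
\]
with equality exactly when $a_k \propto c^{t-k}$. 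It remains only to check that this choice is feasible, i.e.\ that the induced $z_k = \tfrac{(1-c^k)(1-c^{t-k})}{(1+c^t)(1-c)}\,s \ge 0$ and $\lambda_k \in [0,1]$, which is a short verification. This replaces your proposed optimization entirely and explains why the closed form is so clean; it is also how you \emph{find} the right proportionality constant rather than guessing it.
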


\begin{proof} In \autoref{thm:shift}, we can take $\tau = 0$ and $z_\tau = 0$. Then the values of $\{\lambda_k\}, \{z_k\}, \{a_k\}$ obtained from the elementary optimization problem (\autoref{lem:gd-sc-opt}) yield the desired result. Finally, for the equality case, by direct calculation we have $X_t \sim \cN(0, \frac{1-c^{2t}}{1-c^2}\sigma^2I_d)$ and $X'_t \sim \cN(\frac{1-c^t}{1-c}sv, \frac{1-c^{2t}}{1-c^2}\sigma^2I_d)$, giving \[
T(X_t, X'_t) = G(\sqrt{\frac{1-c^t}{1+c^t}\frac{1+c}{1-c}}\frac{s}{\sigma})\,.\]
\end{proof}

\begin{lemma}\label{lem:gd-sc-opt} Given $s > 0$ and $0 < c < 1$, the optimal value of
\begin{align*}
    \text{minimize } &\sum_{k=1}^t a_k^2 \\
    \text{subject to } &z_{k+1} = (1-\lambda_{k+1})(cz_k + s), a_{k+1} = \lambda_{k+1}(cz_k + s)\\
    &z_k, a_k \geq 0, z_0 = z_t = 0 \\
    &\lambda_k \in [0, 1]
\end{align*}
is $\frac{1-c^t}{1+c^t}\frac{1+c}{1-c}s^2$.
\end{lemma}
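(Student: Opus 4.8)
The plan is to reduce the optimization to a one-dimensional recursion in the shift sequence $\{z_k\}$ and then solve the resulting convex program in closed form. First, I would eliminate the variables $\lambda_k$ and $a_k$. From the constraints, for each $k$ we have $cz_{k-1} + s = z_k + a_k$, so $a_k = cz_{k-1} + s - z_k$; and the requirement $\lambda_k \in [0,1]$ together with $z_k, a_k \ge 0$ is equivalent to $0 \le z_k \le c z_{k-1} + s$. Hence the problem becomes: minimize $\sum_{k=1}^t (cz_{k-1} + s - z_k)^2$ over $z_1, \dots, z_{t-1} \ge 0$ (with $z_0 = z_t = 0$), subject to $z_k \le cz_{k-1}+s$. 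I would first argue the upper-bound constraints are inactive at the optimum (the objective pushes each $a_k$ toward a common positive value, so $a_k > 0$, i.e. $z_k < cz_{k-1}+s$, automatically), reducing to an unconstrained-in-the-interior convex quadratic minimization over $z_1,\dots,z_{t-1}$.

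Next I would write the first-order optimality (KKent/stationarity) conditions. Differentiating $\sum_k a_k^2$ with respect to an interior $z_k$ (for $1 \le k \le t-1$), and using $a_k = cz_{k-1}+s-z_k$ and $a_{k+1} = cz_k + s - z_{k+1}$, gives $-2a_k + 2c\,a_{k+1} = 0$, i.e. $a_{k+1} = a_k / c$ — equivalently $a_k = c^{k-1} a_1$ is \emph{decreasing}... wait, $a_{k+1} = a_k/c$ with $c<1$ means $a_k = a_1 c^{-(k-1)}$, so the $a_k$ grow geometrically with ratio $1/c$. (This matches the intuition flagged in the paper: for strongly convex losses the optimal shifts grow geometrically toward the final step.) So I set $a_k = \rho^{k-1} a_1$ with $\rho = 1/c$, leaving a single free parameter $a_1$, which is then pinned down by the boundary conditions.

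To extract $a_1$, I would sum the telescoping-type identity: from $z_k = cz_{k-1} + s - a_k$ and $z_0 = z_t = 0$, multiplying the $k$-th equation by $c^{t-k}$ and summing over $k=1,\dots,t$ collapses the $z$ terms (standard linear-recursion solve) and yields $0 = z_t = s\sum_{k=1}^t c^{t-k} - \sum_{k=1}^t c^{t-k} a_k$. Substituting $a_k = a_1 c^{-(k-1)}$ and evaluating the two geometric sums $\sum_{k=1}^t c^{t-k} = \frac{1-c^t}{1-c}$ and $\sum_{k=1}^t c^{t-k}\cdot c^{-(k-1)} = c^{t-1}\sum_{k=1}^t c^{-2(k-1)}\cdot$(bookkeeping) gives a closed form for $a_1$. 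Then the objective is $\sum_{k=1}^t a_k^2 = a_1^2 \sum_{k=1}^t c^{-2(k-1)} = a_1^2 \frac{c^{-2t}-1}{c^{-2}-1}$, and plugging in $a_1$ should simplify — after cancelling common geometric factors — to exactly $\frac{1-c^t}{1+c^t}\cdot\frac{1+c}{1-c}\,s^2$. I would double-check the simplification by verifying the $t=1$ case ($a_1 = s$, objective $s^2$, and $\frac{1-c}{1+c}\cdot\frac{1+c}{1-c}s^2 = s^2$) and ideally the $t=2$ case as a sanity check, and also confirm all the resulting $z_k$ are nonnegative (so the dropped constraints were indeed inactive).

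The main obstacle I anticipate is purely the algebraic bookkeeping in the last step: correctly solving the inhomogeneous linear recursion $z_k = cz_{k-1} + s - a_k$ with the imposed boundary conditions, and then carrying out the geometric-sum simplifications without sign or exponent errors so that everything collapses to the stated clean form. There is also a small but genuine point to nail down rigorously — verifying that the inequality constraints $z_k \le cz_{k-1}+s$ (equivalently $a_k \ge 0$) and $z_k \ge 0$ are not binding at the optimum, so that the stationary point of the unconstrained quadratic is the true minimizer; since the objective is strictly convex in $(z_1,\dots,z_{t-1})$, it suffices to exhibit the stationary point and check feasibility, which the geometric formulas make straightforward.
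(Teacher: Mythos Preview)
Your plan is correct and will work. The paper's proof follows the same reduction---eliminating $\lambda_k$ via $z_{k+1}+a_{k+1}=cz_k+s$ and then solving the recursion with $z_0=z_t=0$ to obtain the single linear constraint $\sum_{k=1}^t c^{t-k}a_k=\tfrac{1-c^t}{1-c}s$---but then finishes in one stroke with Cauchy--Schwarz: $\sum a_k^2\ge\big(\sum c^{t-k}a_k\big)^2\big/\sum c^{2(t-k)}$, which immediately gives the claimed value and the equality case $a_k\propto c^{t-k}$. Your route instead differentiates the quadratic in the $z_k$ variables to get the stationarity relation $a_{k+1}=a_k/c$, then substitutes back to solve for the scale. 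The two arguments are dual to each other (Cauchy--Schwarz equality is precisely the Lagrangian/KKT condition for minimizing $\|a\|_2^2$ subject to a linear constraint), so they yield the same optimizer $a_k=\tfrac{c^{t-k}(1+c)}{1+c^t}s$. The practical difference is that the Cauchy--Schwarz version gives the lower bound unconditionally and only requires checking feasibility for the equality case, whereas your version needs the separate (easy, but still required) verification that the stationary point satisfies $z_k\ge 0$ and $a_k\ge 0$---which you correctly flag as the residual step.
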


\begin{proof} Since $z_{k+1} = (1-\lambda_{k+1})(cz_k+s)$ and $ a_{k+1} = \lambda_{k+1}(cz_k+s)$, 
 we have $z_{k+1} + a_{k+1} = s + cz_k$ for $k = 0, \dots, t-1$, from which we obtain
\[z_t = c^t z_0 + (1 + c + \dots + c^{t - 1})s - (a_t + ca_{t-1} + \dots + c^{t - 1}a_{1})\,.
\]
From $z_0 = z_t = 0$, we have
\[a_t + ca_{t-1} + \dots + c^{t - 1}a_{1} = \frac{1-c^t}{1-c}s\,.
\]
By the Cauchy-Schwarz inequality,
\[\sum_{k=1}^t a_k^2 \geq \frac{(a_t + ca_{t-1} + \dots + c^{t - 1}a_{1})^2}{\sum_{k=1}^t c^{2(t-k)}} = \frac{1-c^t}{1+c^t}\frac{1+c}{1-c}s^2
\]
where equality holds if the corresponding equality criterion of the Cauchy-Schwarz inequality is satisfied. The explicit formulae are $z_k = \tfrac{(1-c^k)(1-c^{t-k})}{(1+c^t)(1-c)}s$, $a_k = \tfrac{c^{t-k}(1+c)}{1+c^t}s$, and $
    \lambda_k = \tfrac{c^{t-k}(1-c^2)}{1-c^{t-k+2}-c^k+c^t}$.
\end{proof}

\begin{proof}[Proof of \autoref{thm:gd-sc}.] 
This follows as a direct corollary of \autoref{thm:cni-sc} with $\phi_k(x) \equiv \phi(x) = x - \frac{\eta}{n}\sum_{i=1}^n \nabla f_i(x)$ and $\phi'_k(x') \equiv \phi'(x') = x' - \frac{\eta}{n}\sum_{i=1}^n \nabla f'_i(x')$. For this application, consider parameters $c = \max\{|1 - \eta m|, |1 - \eta M|\} < 1$ (by \autoref{lem:contraction}), $s \gets \eta L / n$, and $\sigma \gets \eta \sigma$ (rescaling to simplify notation). The equality case is a straightforward calculation in the setting that $\cK = \R^d$, $X_0 = 0$, $\nabla f_i(x) = mx$ for all $i \in [n]$, and $\nabla f'_i(x)$ defined as $mx$ for $i \neq i^*$, and otherwise $mx - Lv$ for some unit vector $v$.
\end{proof}

\subsubsection{Proof of~\autoref{thm:gd-proj}}

We consider here the setting of optimization over a bounded constraint set.

\begin{theorem}\label{thm:cni-proj} In the setting of \autoref{thm:shift}, additionally assume that 
$\cK$ has a finite diameter $D$ and $s_k \equiv s$. Then for any integer $0 \leq \tau < t$,
\[T(X_t, X'_t) \geq G\left(\frac{1}{\sigma}(s\sqrt{t - \tau} + \frac{D}{\sqrt{t - \tau}})\right)\,.
\]
In particular, if $t \geq D/s$ then
\[T(X_t, X'_t) \geq G\left(\frac{1}{\sigma}\sqrt{3sD + s^2\left\lceil \frac{D}{s} \right\rceil}\right)\,.
\]
\end{theorem}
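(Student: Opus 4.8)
The plan is to apply the shifted interpolation meta-theorem (\autoref{thm:shift}) with a well-chosen intermediate time $\tau$ and a simple choice of shift parameters. Concretely, since $\phi,\phi'$ are $c$-Lipschitz with $c \le 1$ and $\norm{\phi_k - \phi'_k} \le s$ for all $k$, \autoref{thm:shift} gives $T(X_t, X_t') \ge G(\tfrac{1}{\sigma}\sqrt{\sum_{k=\tau+1}^t a_k^2})$ where $a_{k+1} = \lambda_{k+1}(cz_k + s)$ and $z_{k+1} = (1-\lambda_{k+1})(cz_k + s)$, subject to $\norm{X_\tau - X_\tau'} \le z_\tau$. The key observation is that in the constrained setting we automatically have $\norm{X_\tau - X_\tau'} \le D$ for \emph{any} $\tau$, since both iterates live in $\cK$; this is exactly the role of the projection, and it lets us start the unrolling at a positive time $\tau$ with the free ``budget'' $z_\tau = D$.

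First I would bound the sum $\sum_{k=\tau+1}^t a_k^2$ for a convenient choice of shifts. Using $c \le 1$, a clean choice is to make all the $a_k$ equal: since $\sum_{k=\tau+1}^t a_k + (\text{leftover }z_t) $ must dissipate the initial shift $z_\tau = D$ plus the accumulated sensitivities, one has (as in the telescoping identity $z_{k+1} + a_{k+1} = s + cz_k \le s + z_k$ in the proof of \autoref{lem:gd-sc-opt}) that $z_t \le z_\tau + (t-\tau)s - \sum_{k=\tau+1}^t a_k$, and forcing $z_t \ge 0$ requires $\sum_{k=\tau+1}^t a_k \le D + (t-\tau)s$; conversely this is achievable with equality, taking each $a_k = \tfrac{D}{t-\tau} + s$. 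Then $\sum_{k=\tau+1}^t a_k^2 = (t-\tau)\big(\tfrac{D}{t-\tau}+s\big)^2 = (t-\tau)s^2 + 2sD + \tfrac{D^2}{t-\tau} \le \big(s\sqrt{t-\tau} + \tfrac{D}{\sqrt{t-\tau}}\big)^2$, which (after feasibility checks that $\lambda_k\in[0,1]$ and $z_k\ge 0$, which hold since the $z_k$ decrease from $D$ toward $0$) yields the first displayed bound $T(X_t,X_t') \ge G\big(\tfrac{1}{\sigma}(s\sqrt{t-\tau} + \tfrac{D}{\sqrt{t-\tau}})\big)$.

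Next I would optimize the free parameter $\tau$. The quantity $m(t-\tau) := s\sqrt{t-\tau} + D/\sqrt{t-\tau}$ depends only on $r := t-\tau \in \{1,\dots,t\}$. Over the reals it is minimized at $r = D/s$, giving value $2\sqrt{sD}$, but $r$ must be a positive integer no larger than $t$; the hypothesis $t \ge D/s$ guarantees $r = \lceil D/s\rceil$ is an admissible choice. Plugging $r = \lceil D/s \rceil$ and crudely bounding: $s^2 r \le s^2 \lceil D/s\rceil$, $D^2/r \le D^2/(D/s) = sD$, and $2sD$ as is, so $m(r)^2 = s^2 r + 2sD + D^2/r \le s^2\lceil D/s\rceil + 3sD$, which gives the second displayed bound. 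Finally, to deduce \autoref{thm:gd-proj} itself I would instantiate this with the $\ngd$ parameters: after the usual rescaling $\sigma \gets \eta\sigma$, the per-step sensitivity is $s = \eta L/n$, and $c \le 1$ by \autoref{lem:contraction} for $\eta \in [0,2/M]$; substituting $s = \eta L/n$ into $\tfrac{1}{\sigma}\sqrt{3sD + s^2\lceil D/s\rceil}$ and simplifying yields $\mu = \tfrac{1}{\sigma}\sqrt{\tfrac{3LD}{\eta n} + \tfrac{L^2}{n^2}\lceil \tfrac{Dn}{\eta L}\rceil}$, with the iteration requirement $t \ge D/s = Dn/(\eta L)$.

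The main obstacle, I expect, is not any single calculation but rather verifying that the chosen shift schedule is genuinely feasible for \autoref{thm:shift} — i.e.\ that with $a_k \equiv D/(t-\tau) + s$ the induced sequence $z_k$ stays in $[0,\infty)$ and the implied $\lambda_k = a_k/(cz_{k-1}+s)$ stays in $[0,1]$ for every $k$ — and more fundamentally, justifying the step $\norm{X_\tau - X_\tau'}\le D$, which silently uses that CNI with a constraint set $\cK$ of diameter $D$ confines both trajectories to $\cK$ after the first projection (so $\tau \ge 1$, or $\tau=0$ with $X_0=X_0'$ giving $z_0 = 0 \le D$ trivially). Once that is in place, the rest is the elementary one-variable optimization over $\tau$ together with the two coarse inequalities used to turn $D^2/r$ and $s^2 r$ into the stated closed form.
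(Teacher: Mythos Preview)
Your proposal is correct and follows essentially the same route as the paper: invoke \autoref{thm:shift} with $z_\tau = D$ (valid since $X_\tau, X_\tau' \in \cK$ for $\tau\ge 1$, and $X_0=X_0'$ for $\tau=0$), take constant shifts $a_k = s + D/(t-\tau)$, then optimize over $t-\tau$ via $t-\tau = \lceil D/s\rceil$. The feasibility concern you flag is handled in the paper exactly by replacing $c$ with $1$ in the recursion (legitimate since $c$-Lipschitz with $c\le 1$ implies $1$-Lipschitz), after which $z_k = \tfrac{D}{t-\tau}(t-k)\ge 0$ and $\lambda_k = \tfrac{s+D/(t-\tau)}{s+D(t-k+1)/(t-\tau)}\in[0,1]$ are immediate; with the actual $c$ your chosen $a_k$ could force $\lambda_k>1$, so this substitution is the clean fix.
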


\begin{proof}
    From $\norm{X_\tau - \widetilde{X}_\tau} \leq D$ for all $\tau$, in \autoref{thm:shift} we can take $z_\tau = D$ and $c = 1$. The values of $\{\lambda_k\}, \{z_k\}, \{a_k\}$ are obtained by analyzing the following elementary optimization problem (\autoref{lem:gd-proj-opt}).
\end{proof}

\begin{lemma}\label{lem:gd-proj-opt}
    Given $s > 0$ and $D > 0$, the optimal value of 
    \begin{align*}
    \text{minimize } &\sum_{k=\tau + 1}^t a_k^2 \\
    \text{subject to } &z_{k+1} = (1-\lambda_{k+1})(z_k + s), a_{k+1} = \lambda_{k+1}(z_k + s)\\
    &z_k, a_k \geq 0, z_\tau = D, z_t = 0 \\
    &\lambda_k \in [0, 1]
\end{align*}
is $\left(s + \frac{D}{t-\tau}\right)^2(t-\tau)$. As a function of $t - \tau \in (0, \infty)$, this value is minimized when $t - \tau = D/s$.
\end{lemma}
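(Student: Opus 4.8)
The plan is to mimic the argument used for \autoref{lem:gd-sc-opt}, specialized to the case $c = 1$, in which the geometric weights disappear and the computation collapses to a clean telescoping identity combined with Cauchy--Schwarz.

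First I would add the two defining relations: since
\[
z_{k+1} + a_{k+1} = (1-\lambda_{k+1})(z_k+s) + \lambda_{k+1}(z_k+s) = z_k + s
\]
for every $k = \tau, \dots, t-1$, iterating (with no $c$-factor to carry along, unlike in \autoref{lem:gd-sc-opt}) gives $z_t = z_\tau + (t-\tau)s - \sum_{k=\tau+1}^t a_k$. Plugging in the boundary conditions $z_\tau = D$ and $z_t = 0$ yields the single linear constraint
\[
\sum_{k=\tau+1}^t a_k = D + (t-\tau)s
\]
on the variables $a_{\tau+1}, \dots, a_t$. Note that the constraints $z_k, a_k \geq 0$ are automatic once $z_\tau = D \geq 0$, $s > 0$, and $\lambda_k \in [0,1]$, so only this linear constraint together with $\lambda_k \in [0,1]$ is genuinely binding.

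Next I would apply Cauchy--Schwarz:
\[
\sum_{k=\tau+1}^t a_k^2 \;\geq\; \frac{1}{t-\tau}\Bigl(\sum_{k=\tau+1}^t a_k\Bigr)^2 = \frac{\bigl(D + (t-\tau)s\bigr)^2}{t-\tau} = \Bigl(s + \frac{D}{t-\tau}\Bigr)^2 (t-\tau)\,,
\]
which is the claimed value. This lower bound uses only the linear constraint, hence is valid over the whole feasible set. Equality in Cauchy--Schwarz forces all the $a_k$ to be equal, i.e.\ $a_k \equiv s + \frac{D}{t-\tau}$; the remaining step is to verify this choice is feasible, which I expect to be the only place requiring any care. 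Back-substituting gives $z_{\tau+j} = \frac{t-\tau-j}{t-\tau}\,D$, so $z_k \in [0,D]$ with $z_t = 0$ as required, and then $\lambda_{k+1} = a_{k+1}/(z_k+s) \in [0,1]$ because $z_k + s > 0$ and $z_{k+1} = z_k + s - a_{k+1} \geq 0$. Hence the lower bound is attained and is the optimum.

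Finally, for the last sentence I would view the optimal value as a function of the continuous parameter $N = t-\tau \in (0,\infty)$, namely $g(N) = \bigl(s + \tfrac{D}{N}\bigr)^2 N = s^2 N + 2sD + D^2/N$. Then $g'(N) = s^2 - D^2/N^2$ vanishes only at $N = D/s$, where $g'' > 0$, so $N = D/s$ is the unique minimizer (with minimum value $4sD$, which matches what \autoref{thm:cni-proj} and \autoref{thm:gd-proj} obtain after rounding $N$ up to an integer). The hard part will be nothing more than this feasibility check; the rest is a direct computation.
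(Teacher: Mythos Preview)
Your proposal is correct and follows essentially the same approach as the paper: add the two recursions to obtain the linear constraint $\sum_{k=\tau+1}^t a_k = D + (t-\tau)s$, apply Cauchy--Schwarz, verify feasibility of the equal-$a_k$ solution via the explicit formulae $z_k = R(t-k)$, and handle the final minimization by convexity of $N \mapsto s^2 N + 2sD + D^2/N$. The only cosmetic difference is that the paper writes out $\lambda_k = \tfrac{s+R}{s+R(t-k+1)}$ explicitly rather than arguing feasibility indirectly from $z_{k+1} \geq 0$.
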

\begin{proof}
    By adding the equations $z_{k+1} = (1-\lambda_{k+1})(z_k + s)$ and $a_{k+1} = \lambda_{k+1}(z_k + s)$ for $k = \tau, \dots, t-1$, we obtain (with $z_\tau = D$ and $z_t = 0$)
    \[a_t + a_{t-1} + \dots + a_{\tau + 1} = D + (t - \tau)s\,.
    \]
    By the Cauchy-Schwarz inequality, the minimum value of $\sum_{k=\tau + 1}^t a_k^2$ is $(s+R)^2(t-\tau)$ and is obtained when
    $z_k = R(t-k)$, $
        a_k \equiv s + R$, and $
        \lambda_k = \tfrac{s + R}{s + R(t-k+1)}$, where we use the shorthand $R := \tfrac{D}{t-\tau}$.
    The last part is straightforward from the strict convexity of the one-dimensional function $z \mapsto z\left(s + \frac{D}{z}\right)^2 = s^2 z + \frac{D^2}{z} + 2sD, z > 0$.
\end{proof}

\begin{proof}[Proof of \autoref{thm:gd-proj}] This follows by considering $s \gets \frac{\eta L}{n}$ and $\sigma \gets \eta \sigma$ as in the proof of \autoref{thm:gd-sc}.
\end{proof}

\subsection{Deferred proofs for \autoref{sec:noisycgd}}\label{app:noisycgd}
As done in the case of $\ngd$, we first characterize $\nmgd$ as a particular instance of $\cni$ and proceed to the proofs of the theorems. The following proposition holds straight from the definition; recall that $l = n / b$ is the number of batches.
\begin{proposition}\label{prop:mgd} 
 For $t = lE$ and $k = 0, 1, \dots, t-1$, let $B_1, \dots, B_l$ be a fixed partition of $[n]$ with size $b$, and define $\phi_{k+1}(x) = x - \frac{\eta}{b}\sum_{i \in B_r} \nabla f_i(x)$ and $\phi'_{k+1}(x') = x' - \frac{\eta}{b}\sum_{i \in B_r} \nabla f'_i(x')$ where $r = k +1 - l\lfloor \frac{k}{l}\rfloor$. Then the $f$-DP of $\nmgd$ is equal to that between $X_t = \cni(X_0, \{\phi_k\}_{k \in [t]}, \{\cN(0, \eta^2 \sigma^2 I_d)\}_{k \in [t]}, \cK)$ and $X'_t = \cni(X_0, \{\phi'_k\}_{k \in [t]}, \{\cN(0, \eta^2 \sigma^2 I_d)\}_{k \in [t]}, \cK)$.
\end{proposition}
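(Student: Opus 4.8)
The plan is to verify, purely by unrolling definitions, that for the fixed (generic) adjacent pair encoded by $\{f_i\},\{f'_i\}$ the $\nmgd$ iterates run on $S$ (resp.\ $S'$) have exactly the law of the CNI process built from $\{\phi_k\}$ (resp.\ $\{\phi'_k\}$) and Gaussian noise $\cN(0,\eta^2\sigma^2 I_d)$. Once this is established, $T(\nmgd(S),\nmgd(S')) = T(X_t, X'_t)$, and the assertion about $f$-DP is immediate, since $f$-DP is precisely the requirement that $T(\cA(S),\cA(S'))\geq f$ for all adjacent $S,S'$. I do not expect any real obstacle beyond careful bookkeeping.

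First I would specialize the general update~\eqref{eq:alg} to the cyclic-batch case: at step $k+1$ (for $k=0,\dots,t-1$) the batch is $B_r$, where $r\in\{1,\dots,l\}$ is the residue of $k+1$ modulo $l$, and one checks directly that the formula $r = k+1 - l\lfloor k/l\rfloor$ implements exactly this cycling (e.g.\ $k=0\mapsto r=1$, $k=l-1\mapsto r=l$, $k=l\mapsto r=1$, and more generally it resets at the start of each of the $E$ epochs). Hence the update reads
\[
X_{k+1} = \Pi_{\cK}\!\left[X_k - \tfrac{\eta}{b}\textstyle\sum_{i\in B_r}\nabla f_i(X_k) - \eta Z_{k+1}\right] = \Pi_{\cK}\!\left[\phi_{k+1}(X_k) + W_{k+1}\right],
\]
where $Z_{k+1}\sim\cN(0,\sigma^2 I_d)$ is independent of $(X_0,\dots,X_k)$ and $W_{k+1} := -\eta Z_{k+1}$.

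Next I would observe that $W_{k+1}\sim\cN(0,\eta^2\sigma^2 I_d)$, that the $W_{k+1}$ are mutually independent, and that $W_{k+1}$ is independent of $(X_0,\dots,X_k)$; therefore $\{W_{k+1}\}_{k\in[t]}$ is a legitimate noise sequence in the sense of~\eqref{eq:cni} with noise distributions $\xi_{k+1}=\cN(0,\eta^2\sigma^2 I_d)$. Comparing with the CNI recursion~\eqref{eq:cni} then shows $\{X_k\}$ has the same law as $\cni(X_0,\{\phi_k\}_{k\in[t]},\{\cN(0,\eta^2\sigma^2 I_d)\}_{k\in[t]},\cK)$, and symmetrically $\{X'_k\}$ has the law of the analogous process built from $\{\phi'_k\}$. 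I would also record that each $\phi_{k+1}$ (resp.\ $\phi'_{k+1}$) is an average over $i\in B_r$ of the single-loss gradient-descent maps $x\mapsto x-\eta\nabla f_i(x)$, each of which is contractive by \autoref{lem:contraction} under the ambient convexity, $M$-smoothness, and step-size assumptions, so these processes genuinely qualify as CNI. Combining with the previous paragraph gives $T(\nmgd(S),\nmgd(S'))=T(X_t,X'_t)$, which is the proposition. The only step requiring any care is tracking the noise rescaling by $-\eta$ and confirming that the mutual independence and adaptedness structure demanded by~\eqref{eq:cni} indeed holds for $\{W_{k+1}\}$.
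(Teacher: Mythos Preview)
Your proposal is correct and follows exactly the approach the paper intends: the paper's own ``proof'' is the single sentence that the proposition ``holds straight from the definition,'' and your write-up simply fills in the definitional bookkeeping (cyclic batch index, noise rescaling by $-\eta$, and matching to the CNI recursion~\eqref{eq:cni}). There is nothing to add.
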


\begin{proof}[Proof of \autoref{thm:mgd-sc}.] Let $j^* \in [l]$ be the index such that $i^* \in B_{j^*}$ and consider the setting in \autoref{prop:mgd}. We will establish a lower bound on $T(X_{t^*}, X'_{t^*})$ where $t^* = t + j^* - l - 1$; the lower bound on $T(X_t, X'_t)$ is then given by
\[T(X_t, X'_t) \geq T(X_{t + j^* - l}, X'_{t + j^* - l}) \geq T(X_{t + j^* - l - 1}, X'_{t + j^* - l - 1}) \otimes G(\frac{L}{b \sigma})
\]
where the first inequality holds from the post-processing inequality with $\phi_k \equiv \phi'_k$ for all $k = t+j^* - l + 1, \dots, t$, and the second inequality holds from $\norm{\phi_{t + j^* - l}(x) - \phi'_{t + j^* - l}(x)} \leq \frac{\eta L}{b}$ for all $x$ with \autoref{lem:cni}.

In general, $\phi_{k+1} = \phi'_{k+1}$ when $r = r(k) = k+1 - l \lfloor \frac{k}{l} \rfloor$ is not equal to $j^*$; otherwise $\norm{\phi_{k+1}(x) - \phi'_{k+1}(x)} \leq \frac{\eta L}{b}$ for all $x$. Thus, in \autoref{thm:shift} we can take $\tau = 0, z_\tau = 0$ and $s_{k+1} = s\bm{1}_{\{r = j^*\}}$ where $s = \frac{\eta L}{b}$. Using $\{\lambda_k\}, \{z_k\}, \{a_k\}$ obtained from the following result (\autoref{lem:mgd-sc-opt}),
\[T(X_t, X'_t) \geq G\left(\frac{1}{\sigma}\sqrt{\left(\frac{L}{b}\right)^2 + \frac{1}{\eta^2}\sum_{k=1}^{t^*} a_k^2}\right)\,.
\]
\end{proof}

\begin{lemma}\label{lem:mgd-sc-opt} Given $s > 0$ and $0 < c < 1$, let $t^* = t + j^* - l - 1$ and consider a system
\begin{align*}
    z_{k+1} &= (1-\lambda_{k+1})(cz_k + s \bm{1}_{\{r = j^*\}}), a_{k+1} = \lambda_{k+1}(cz_k + s \bm{1}_{\{r = j^*\}}) \\
    z_k, a_k &\geq 0, z_0 = z_{t^*} = 0 \\
    \lambda_k &\in [0, 1]\,.
\end{align*}
Then $\{a_k\}_{1 \leq k \leq t^*}, \{z_k\}_{0 \leq k \leq t^*}, \{\lambda_k\}_{1 \leq k \leq t^*}$ defined as
\begin{align*}
    a_k &= \begin{cases} \frac{c^{t - k + j^* - 2}}{1-c^l}\frac{1-c^2}{1+c^{t-l}}s & k \geq j^* \\ 0 & k < j^* \end{cases} \\
    z_{k+1} &= cz_k + s\bm{1}_{\{r = j^*\}} - a_{k+1}, z_0 = 0 \\
    \lambda_k &= \begin{cases} \frac{a_k}{z_k + a_k} & z_k + a_k > 0 \\ 0 & z_k + a_k = 0 \end{cases}
\end{align*}
is a solution, where $r = r(k) = k+1 - l \lfloor \frac{k}{l} \rfloor$.
\end{lemma}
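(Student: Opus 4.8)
\textbf{Proof plan for \autoref{lem:mgd-sc-opt}.} The plan is to verify directly that the displayed $\{a_k\},\{z_k\},\{\lambda_k\}$ satisfy every constraint of the stated system; unlike \autoref{lem:gd-sc-opt} there is no optimization to carry out, since the candidate is handed to us. First I would note that adding the two defining equations gives $z_{k+1}+a_{k+1}=cz_k+s\bm{1}_{\{r=j^*\}}$, which is precisely how $z_{k+1}$ is defined in the lemma, and that $\lambda_{k+1}:=a_{k+1}/(z_{k+1}+a_{k+1})$ then recovers the correct split of $cz_k+s\bm{1}_{\{r=j^*\}}$ into $z_{k+1}$ and $a_{k+1}$. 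In the degenerate case $z_{k+1}+a_{k+1}=0$ we necessarily have $cz_k+s\bm{1}_{\{r=j^*\}}=0$, hence $z_k=0$, $\bm{1}_{\{r=j^*\}}=0$, and $z_{k+1}=a_{k+1}=0$, so the convention $\lambda_{k+1}=0$ is also consistent. Thus the equational constraints hold by construction, and it remains only to check (i) $a_k\ge 0$, (ii) $z_k\ge 0$, and (iii) $z_0=z_{t^*}=0$; note (i) and (ii) together yield $0\le a_k\le z_k+a_k$, so $\lambda_k\in[0,1]$ follows automatically.

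Claim (i) is immediate: since $0<c<1$ and $s>0$, the prefactor $\tfrac{1-c^2}{(1-c^l)(1+c^{t-l})}$ is positive, so $a_k>0$ for $k\ge j^*$ and $a_k=0$ otherwise. For (iii), $z_0=0$ is the definition, and for $z_{t^*}=0$ I would set $w_k:=c^{-k}z_k$, so that the recursion $z_{k+1}=cz_k+s_{k+1}-a_{k+1}$ — writing $s_j:=s\bm{1}_{\{j\equiv j^*\!\!\bmod l\}}$ — telescopes to $w_k=\sum_{j=1}^{k}c^{-j}(s_j-a_j)$. Using the identity $t-k+j^*-2=(t^*-k)+(l-1)$, the given $a_j$ equals $c^{l-1}\tfrac{1-c^2}{(1-c^l)(1+c^{t-l})}\,s\,c^{t^*-j}$ for $j\ge j^*$, so both $\sum_{j=1}^{t^*}c^{-j}s_j$ (a geometric sum over the $E-1$ injection indices $j^*,j^*+l,\dots,j^*+l(E-2)$) and $\sum_{j=1}^{t^*}c^{-j}a_j$ (a geometric sum over $j=j^*,\dots,t^*$) have closed forms; plugging in $t=lE$ — equivalently $t-l=l(E-1)$ and $t^*-j^*+1=l(E-1)$ — shows the two sums coincide, so $w_{t^*}=0$, i.e.\ $z_{t^*}=0$. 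This is the exact analogue of the Cauchy--Schwarz equality condition underlying \autoref{lem:gd-sc-opt}, now with the sparse sensitivity pattern $\{s_j\}$.

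The remaining point (ii), $z_k\ge 0$ for all $k$, is the one step that requires care — not conceptually, but in bookkeeping, and I expect it to be the main obstacle. By the telescoped form it is equivalent to $\sum_{j\le k}c^{-j}s_j\ge\sum_{j\le k}c^{-j}a_j$ for every $k$, an inequality that becomes equality at $k=t^*$ by (iii). For $k<j^*$ both sides vanish. For $k\ge j^*$, writing $k=j^*+ql+p$ with $0\le p<l$, both sides are explicit geometric partial sums, and the claim is that the discounted sensitivity mass injected through step $k$ always weakly dominates the discounted $a$-mass withdrawn; the tightest instances are the ``troughs'' of $z$ just before an injection (indices $k\equiv j^*-1\!\!\pmod l$), and the final such trough $k=t^*$ gives equality. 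Equivalently, one may solve the piecewise-geometric recursion for $z_k$ in closed form, generalizing the manifestly nonnegative formula $z_k=\tfrac{(1-c^k)(1-c^{t-k})}{(1+c^t)(1-c)}s$ from the $\ngd$ case, and read off nonnegativity directly; either way this is elementary but demands bookkeeping over the $E-1$ blocks.

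Finally, for use in \autoref{thm:mgd-sc}, I would record that $\sum_{k=1}^{t^*}a_k^2=C^2\sum_{k=j^*}^{t^*}c^{2(t^*-k)}=C^2\tfrac{1-c^{2l(E-1)}}{1-c^2}$ with $C=c^{l-1}\tfrac{1-c^2}{(1-c^l)(1+c^{t-l})}s$, which after substituting $t-l=l(E-1)$ and factoring $1-c^{2l(E-1)}=(1-c^{l(E-1)})(1+c^{l(E-1)})$ simplifies to $c^{2l-2}\tfrac{1-c^2}{(1-c^l)^2}\tfrac{1-c^{l(E-1)}}{1+c^{l(E-1)}}\,s^2$, which (with $s=\eta L/b$ and $\sigma\gets\eta\sigma$) is exactly the squared GDP parameter claimed in \autoref{thm:mgd-sc}.
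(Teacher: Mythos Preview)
Your proposal is correct and follows essentially the same approach as the paper. The paper likewise observes that the equational constraints and $a_k\ge 0$, $\lambda_k\in[0,1]$ are immediate, and then reduces the remaining checks $z_k\ge 0$ and $z_{t^*}=0$ to computing $z_k$ in closed form and evaluating at the troughs $r'=l-1$ (your indices $k\equiv j^*-1\pmod l$); the only cosmetic difference is that you isolate $z_{t^*}=0$ first via the telescoped variable $w_k=c^{-k}z_k$, whereas the paper obtains it as the $q=E-1$ case of the trough computation.
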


\begin{proof} From the stated formulae and $z_{k+1} + a_{k+1} = cz_k + s \bm{1}_{\{r = j^*\}}$, every condition except $z_k \geq 0$ and $z_{t^*} = 0$ are straightforward to check.

If $k < j^*$ then $z_k \equiv 0$. For $k \geq j^*$, let $q$ be the integer such that $l(q-1) + j^* \leq k < lq + j^*$ and $r' = k - (l(q-1) + j^*)$. Then
\begin{align*}
    z_k &= c^{r'}(1 + c^l + \dots + c^{l(q-1)})s - (a_k + ca_{k-1} + \dots + c^{k-j^*}a_{j^*}) \\
    &= \left(c^{r'}(1 + c^l + \dots + c^{l(q-1)}) - c^{t-k+j^* - 2} \frac{1-c^2}{(1-c^l)(1+c^{t-l})}(1+c^2 + \dots + c^{2(k-j^*)})\right)s\,.
\end{align*}
For any fixed $q$, this is a decreasing function in $r'$ and thus it suffices to consider $r' = l-1$. Then
\begin{gather*}
    c^{r'}(1 + c^l + \dots + c^{l(q-1)}) - c^{t-k+j^* - 2} \frac{1-c^2}{(1-c^l)(1+c^{t-l})}(1+c^2 + \dots + c^{2(k-j^*)}) \\
    = c^{l-1}\frac{1-c^{lq}}{1-c^l} - c^{l(E-q) - 1}\frac{1-c^{2lq}}{(1-c^l)(1+c^{t-l})} = c^{l-1}\frac{1-c^{lq}}{1-c^l}\frac{1 - c^{l(E-q-1)}}{1+c^{E(l-1)}}\,,
\end{gather*}
which is nonnegative for $q \leq E-1$. Also, for $q = E-1$ this is equal to 0, implying $z_{l(E-1)+j^* - 1} = z_{t^*} = 0$ and thus $\lambda_{t^*} = 1$.
\end{proof}

\begin{proof}[Proof of \autoref{thm:mgd-proj}.] As in the proof of \autoref{thm:mgd-sc}, we establish a lower bound on $T(X_{t^*}, X'_{t^*})$ for $t^* = t + j^* - l - 1$. For any $\tau$, letting $\tau^* = j^* + l(\tau - 1)$ we have $\norm{X_{\tau^*} - \widetilde{X}_{\tau^*}} \leq D$. Thus in \autoref{thm:shift} we can take $z_{\tau^*} = D, s_{k+1} = s\bm{1}_{\{r = j^*\}} (s = \frac{\eta L}{b})$ and $c = 1$. The sequences $\{\lambda_k\}, \{z_k\}, \{a_k\}$ can be chosen as in the following result (\autoref{lem:mgd-proj-opt}), which yields a bound of
\[T(X_t, X'_t) \geq G\left(\frac{1}{\sigma}\sqrt{\left(\frac{L}{b}\right)^2 + \frac{1}{\eta^2}\sum_{k=\tau^* + 1}^{t^*} a_k^2}\right) \geq G\left(\frac{1}{\sigma}\sqrt{\left(\frac{L}{b}\right)^2 + \frac{(D/\eta + L(E - \tau)/b)^2}{l(E - \tau)}} \right)
\]
by \autoref{prop:mgd}. Optimizing over the choice of $E - \tau$ can be done similarly as in \autoref{thm:cni-proj}; in particular, one can take $E - \tau = \lceil \frac{Db}{\eta L} \rceil$ when $E \geq \frac{Db}{\eta L}$.
\end{proof}

\begin{lemma}\label{lem:mgd-proj-opt} Given $s > 0$ and $D > 0$, let $t^* = t + j^* - l - 1, \tau^* = j^* + l(\tau - 1)$ and consider a system
\begin{align*}
    z_{k+1} &= (1-\lambda_{k+1})(z_k + s \bm{1}_{\{r = j^*\}}), a_{k+1} = \lambda_{k+1}(z_k + s \bm{1}_{\{r = j^*\}}) \\
    z_k, a_k &\geq 0, z_{\tau^*} = D, z_{t^*} = 0 \\
    \lambda_k &\in [0, 1]\,.
\end{align*}
Then $\{a_k\}_{\tau^* + 1 \leq k \leq t^*}, \{z_k\}_{\tau^* \leq k \leq t^*}, \{\lambda_k\}_{\tau^* + 1 \leq k \leq t^*}$ defined as
\begin{align*}
    a_k &\equiv \frac{D + s(E-\tau)}{l(E-\tau)} \\
    z_{k+1} &= z_k + s\bm{1}_{\{r = j^*\}} - a_{k+1}, z_{\tau^*} = D \\
    \lambda_k &= \begin{cases} \frac{a_k}{z_k + a_k} & z_k + a_k > 0 \\ 0 & z_k + a_k = 0 \end{cases}
\end{align*}
is a solution, where $r = r(k) = k+1 - l \lfloor \frac{k}{l} \rfloor$.
\end{lemma}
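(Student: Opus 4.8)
The plan is to verify directly that the explicitly-defined sequences $\{a_k\}$, $\{z_k\}$, $\{\lambda_k\}$ are a feasible point of the stated system; the lemma asks only for feasibility, not optimality. Write $A := \tfrac{D + s(E-\tau)}{l(E-\tau)}$ for the common value of the $a_k$'s, which is positive, so $a_k \geq 0$ is immediate. Next, the definition of $z_{k+1}$ rearranges to $z_{k+1} + a_{k+1} = z_k + s\bm{1}_{\{r=j^*\}}$, so taking $\lambda_{k+1} = a_{k+1}/(z_{k+1}+a_{k+1})$ (and $\lambda_{k+1}=0$ when the denominator vanishes) forces $a_{k+1} = \lambda_{k+1}(z_k + s\bm{1}_{\{r=j^*\}})$ and $z_{k+1} = (1-\lambda_{k+1})(z_k + s\bm{1}_{\{r=j^*\}})$, i.e.\ the two defining recurrences hold automatically; and $\lambda_{k+1} \in [0,1]$ as soon as $z_{k+1} \geq 0$, since then $0 \leq a_{k+1} \leq z_{k+1}+a_{k+1}$. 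Thus the whole statement collapses to two claims: $z_k \geq 0$ for all $\tau^* \leq k \leq t^*$, and the terminal identity $z_{t^*} = 0$.

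For both, I would pass to the closed form of $z_k$ obtained by unrolling the recurrence (equivalently, telescoping $z_{k+1}+a_{k+1} = z_k + s\bm{1}_{\{r(k)=j^*\}}$):
\[
z_k = D + s\,\#\{\tau^* \leq j < k : r(j) = j^*\} - A\,(k-\tau^*)\,.
\]
The count in the middle term rises by exactly one in each block of $l$ consecutive indices --- precisely at the iterate that queries the sensitive batch $B_{j^*}$ --- while the third term decreases at the constant rate $A$. Consequently $z_k$, restricted to a single epoch, is affine of slope $-A$ apart from one upward jump of size $s$ at the sensitive step, so once that jump has occurred $z$ is decreasing within the epoch and its minimum over the window $[\tau^*,t^*]$ is attained among the epoch-boundary indices $k = \tau^* + ql$. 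It therefore suffices to evaluate $z$ at these boundaries, check nonnegativity there, and confirm the value at $k = t^*$ is $0$; the constant $A$ is calibrated exactly so that $\sum_{k=\tau^*+1}^{t^*} a_k = D + s\,\#\{\tau^* \leq j < t^* : r(j)=j^*\}$, which by the telescoping identity is equivalent to $z_{t^*}=0$. This is the same skeleton as the proof of \autoref{lem:mgd-sc-opt}, specialized to contraction factor $c=1$: the geometric sums appearing there degenerate into arithmetic ones, which shortens the computation (compare also \autoref{lem:gd-proj-opt}, the non-cyclic analog, and the optimization over $E-\tau$ performed as in \autoref{thm:cni-proj}).

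The step I expect to be the main obstacle is the combinatorial bookkeeping underlying the paragraph above: determining exactly which iterates in $[\tau^*, t^*-1]$ touch the sensitive batch $B_{j^*}$, as a function of $j^*,\tau,E,l$, and in particular handling the two partial boundary epochs --- the one starting just after $\tau^*$, where one uses $\tau^* = j^* + l(\tau-1) \equiv j^* \pmod l$, and the one ending at $t^*$, where one uses $t^* \equiv j^* - 1 \pmod l$ --- so as to know whether the sensitive step in those epochs has already occurred. Once this is pinned down, evaluating $z$ at the epoch boundaries and checking $z \geq 0$ there and $z_{t^*}=0$ is a routine arithmetic calculation, and every remaining constraint (the two recurrences, $a_k \geq 0$, $\lambda_k \in [0,1]$) is immediate from the construction; no idea beyond those already used in the proofs of \autoref{lem:gd-proj-opt} and \autoref{lem:mgd-sc-opt} is needed, since this lemma is their common specialization.
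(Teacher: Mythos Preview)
Your proposal is correct and follows essentially the same route as the paper. The paper's proof also reduces everything to checking $z_k \geq 0$ and $z_{t^*}=0$ (citing the proof of \autoref{lem:mgd-sc-opt} for why the remaining constraints are automatic), writes $z_k$ in closed form by parametrizing $k = l(q-1)+j^*+r'$ with epoch index $q$ and within-epoch offset $r'$, observes that for fixed $q$ the expression is minimized at $r'=l-1$, and then evaluates there to obtain both nonnegativity and the terminal condition; your telescoped counting formula and ``minimum at epoch boundaries'' argument are the same computation in slightly different notation.
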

\begin{proof}
As in the proof of \autoref{lem:mgd-sc-opt}, it suffices to check that $z_k \geq 0$ and $z_{t^*} = 0$. Let $q \geq \tau$ be the integer such that $l(q-1) + j^* \leq k < lq + j^*$ and $r' = k - (l(q-1) + j^*)$. Then
\begin{align*}
    z_k &= D + (q - \tau + 1)s - (l(q-\tau) + r' + 1) \frac{D + s(E-\tau)}{l(E-\tau)} \\
    &\geq D + (q - \tau + 1)s - (q-\tau + 1) \frac{D + s(E-\tau)}{(E-\tau)} \\
    &= D(1 - \frac{q-\tau+1}{E - \tau}) \geq 0\,,
\end{align*}
where the inequality is from that the first line is minimized when $r' = l-1$ for any fixed $q$. Also, $z_k = 0$ and $\lambda_k = 1$ when $r' = l-1$ and $q = E-1$, i.e., $k = t + j^* - l - 1 = t^*$.
\end{proof}

\subsection{Deferred proofs for \autoref{sec:noisysgd}}\label{app:noisysgd}

\subsubsection{\nsgd \ as stochastic version of CNI}
We first revisit the composition bound (\autoref{thm:sgd}). The key point in this proof relevant to our new results is the following formulation, which can be considered as a stochastic version of CNI \eqref{eq:cni} with each map $x \mapsto \psi_s(x), \phi_s(x), \phi'_s(x)$ being contractive. 

\begin{equation}\label{eq:sgdascni}
\begin{aligned}
    X_{k+1} &= \Pi_{\cK}(\psi_{S_k}(X_k) + V_k(\phi_{S_k} - \psi_{S_k})(X_k) + Z_{k+1}) \\
    X'_{k+1} &= \Pi_{\cK}(\psi_{S'_k}(X'_k) + V'_k(\phi'_{S'_k} - \psi_{S'_k})(X'_k) + Z'_{k+1})
\end{aligned}
\end{equation}

\begin{proof}[Proof of \autoref{thm:sgd}.] Let $X_k$ and $X_{k+1}$ respectively be the $k$-th and $(k+1)$-th iterate of $\nsgd$ with losses $\{f_i\}_{i \in [n]}$, and similarly define $X'_k$ and $X'_{k+1}$ for $\nsgd$ with losses $\{f'_i\}_{i \in [n]}$. It suffices to show
\[T(X_{k+1}, X'_{k+1}) \geq T(X_k, X'_k) \otimes C_{b/n}(G(\frac{L}{b\sigma}))\,.
\]
For the corresponding random batch $B_k$, we sample a random pair of set and element $S_k = (R_k, C_k)$ as described below; see~\cite[\S3.2]{abt24} for a discussion of this construction. This $S_k$ will be here and after used as a representation for the random batch $B_k$.
    \begin{enumerate}
        \item Sample a set $A_1$ of size $b$ in $[n] \setminus \{i^*\}$ uniformly at random.
        \item Sample an element $A_2$ from $A_1$ uniformly at random. This element will serve as a candidate to be (potentially) replaced by $i^*$.
        \item Let $R_k = A_1 \setminus \{A_2\}, C_k = A_2$.
    \end{enumerate}
    Finally, let $V_k \sim \text{Ber}(p)$ be a Bernoulli random variable with success probability $p = b/n$, which serves as an indicator denoting whether $i^* \in B_k$ (i.e., $V_k = 1$) or not (i.e., $V_k = 0$). Then
    \[B_k = \begin{cases} R_k \cup \{C_k\} & V_k = 0 \\ R_k \cup \{i^*\} & V_k = 1 \end{cases}
    \]
    is a valid sampling procedure for $B_k$ (i.e., the marginal distribution of $B_k$ is uniform over size $b$ subsets of $[n]$). These can be defined similarly for $X'_{k+1}$ as $B'_k, V'_k$ and $S'_k$.
    
    The reason for formulating this alternative sampling scheme is to separate the subsampling part---which only depends on whether the index $i^*$ is included in the batch---from the rest of the information on the batch. In particular, in \eqref{eq:sgdascni}, $S_k$ and $V_k$ are independent and $V_k$ is still distributed as $\text{Ber}(p)$ after conditioning on $S_k$. 

    Now for a pair of set and element $S = (R, C)$, define
    \begin{equation}\label{eq:phipsi}
    \begin{aligned}
        \phi_S(x) &= x - \frac{\eta}{b}(\nabla f_{i^*} + \sum_{i \in R} \nabla f_i)(x) \\
        \phi'_S(x) &= x - \frac{\eta}{b}(\nabla f'_{i^*} + \sum_{i \in R} \nabla f_i)(x) \\
        \psi_S(x) &= x - \frac{\eta}{b}\sum_{i \in R \cup \{C\}} \nabla f_i(x)\,.
    \end{aligned}
    \end{equation}

    Then the updates for $X_{k+1}$ and $X'_{k+1}$ can be respectively written as \eqref{eq:sgdascni}, where $Z_{k+1}, Z'_{k+1} \sim \cN(0, \eta^2 \sigma^2 I_d)$ are independent of anything else. Now the tradeoff function between $X_{k+1}$ and $X'_{k+1}$ satisfies
    \[T(X_{k+1}, X'_{k+1}) \geq T((X_k, S_k, V_k(\phi_{S_k} - \psi_{S_k})(X_k) + Z_{k+1}), (X'_k, S'_k, V'_k(\phi'_{S'_k} - \psi_{S'_k})(X'_k) + Z'_{k+1}))
    \]
    by the post-processing inequality with respect to $(x, s, y) \mapsto \Pi_{\cK}(\psi_s(x) + y)$. For any fixed realization $(x, s) = (x, (r, c))$ of the first two arguments, we find a lower bound on
    \begin{equation}\label{eq:cpg}
        T(V_k(\phi_s - \psi_s)(x) + Z_{k+1}, V'_k(\phi'_s - \psi_s)(x) + Z'_{k+1})\,.
    \end{equation}
    In fact, this is tradeoff function of the subsampled Gaussian mechanism as presented in \cite[Theorem 9]{drs22}. To see this, we construct a new private setting as follows:
    \begin{itemize}
        \item Datasets: $S = \{y_1, y_2, \dots, y_n\}, S' = \{y'_1, y_2, \dots, y_n\}$ where $y'_1, y_1, \dots, y_n$ are distinct alphabets. Note that the ``datasets'' here are considered only for this part of the proof and are irrelevant with the original datasets in the private optimization setting.
        \item Mechanisms:
        \begin{itemize}
            \item[(a)] $\texttt{Sample}_b$: From a set of size $n$, sample a set of size $b$ uniformly at random.
            \item[(b)] $\texttt{M}$: Given a set $R$ of size $b$, output $\theta(R) + \cN(0, \eta^2 \sigma^2 I_d)$ where
            \[\theta(R) = \begin{cases} (\phi_s - \psi_s)(x) & y_1 \in R, y'_1 \notin R \\ (\phi'_s - \psi_s)(x) & y_1 \notin R, y'_1 \in R \\ 0 & \text{else}\,. \end{cases}
            \]
        \end{itemize}
    \end{itemize}
   
    Then a lower bound $f$ on \eqref{eq:cpg} is equivalent to $\texttt{M} \circ \texttt{Sample}_b$ being $f$-DP (when considered as being applied to $S$ and $S'$). Note that from
    \begin{align*}
        \phi_s - \psi_s &= \frac{\eta}{b}(\nabla f_{c} - \nabla f_{i^*}) \\
        \phi'_s - \psi_s &= \frac{\eta}{b}(\nabla f_{c} - \nabla f'_{i^*}) \\
        \phi_s - \phi'_s &= \frac{\eta}{b}(\nabla f'_{i^*} - \nabla f_{i^*})\,,
    \end{align*}
    $\theta$ has ($l_2$-)sensitivity $\frac{\eta L}{b}$ and thus $\texttt{M}$ is $G(\frac{L}{b\sigma})$-DP by \cite[Theorem 1]{drs22}. Then by \cite[Theorem 9]{drs22}, $\texttt{M} \circ \texttt{Sample}_b$ is $C_{b/n}(G(\frac{L}{b\sigma}))$-DP. Thus,
    \begin{align*}
        &T((X_k, S_k, V_k(\phi_{S_k} - \psi_{S_k})(X_k) + Z_{k+1}), (X'_k, S'_k, V'_k(\phi'_{S'_k} - \psi_{S'_k})(X'_k) + Z'_{k+1})) \\
        &\geq T((X_k, S_k), (X'_k, S'_k)) \otimes C_{b/n}(G(\frac{L}{b\sigma})) \\
        &=T(X_k, X'_k) \otimes C_{b/n}(G(\frac{L}{b\sigma}))
    \end{align*}
    where the equality is from that $S_k (S'_k)$ is independent of $X_k (X'_k)$, and that $S_k$ and $S'_k$ have the same distribution.
    \end{proof}

    \paragraph*{One step optimality.}

    Now we show the optimality of \autoref{thm:sgd} for $t = 1$, i.e.,
    \[T(X_1, X'_1) \geq C_{b/n}(G(\frac{L}{b\sigma}))\,.
    \]
    Let $X_0 = X'_0 = 0, \cK = \R^d$ and for $\lambda \in [0, 1]$, consider the gradients
    \begin{align*}
        \nabla f_i = \nabla f'_i &= 0 \\
        \nabla f_{i^*} &= (1-\lambda) Lu \\
        \nabla f'_{i^*} &= -\lambda Lu
    \end{align*}
    where $u$ is a unit vector. Then 
    \begin{align*}
        T(X_1, X'_1) &= T(-(1-\lambda)\frac{\eta L}{b}uV_0 + \cN(0, \eta^2 \sigma^2 I_d), \lambda\frac{\eta L}{b}uV'_0 + \cN(0, \eta^2 \sigma^2I_d)) \\
        &= T(-(1-\lambda)\frac{L}{b\sigma}V_0 + \cN(0, 1), \lambda \frac{L}{b\sigma}V'_0 + \cN(0, 1))
    \end{align*}
    where $V_0, V'_0 \sim \text{Ber}(p)$. Denoting the corresponding tradeoff function as $f^{(\lambda)}$, a valid lower bound for $T(X_1, X'_1)$ is (pointwise) at most $\inf_{\lambda \in [0, 1]} f^{(\lambda)}$ and thus it suffices to show that $\inf_{\lambda \in [0, 1]} f^{(\lambda)} = C_{b/n}(G(\frac{L}{b\sigma}))$. Now the rest of the proof is a combination of following facts.
    \begin{itemize}
        \item[(a)] $f^{(1)}(\alpha) \geq C_{b/n}(G(\frac{L}{b\sigma}))(\alpha)$ with equality holding for all $\alpha \in [0, \Phi(-\frac{L}{2b\sigma})]$.
        \item[(b)] $f^{(0)}(\alpha) \geq C_{b/n}(G(\frac{L}{b\sigma}))(\alpha)$ with equality holding for all $\alpha \in [p\Phi(-\frac{L}{2b\sigma}) + (1-p)\Phi(\frac{L}{2b\sigma}), 1]$.
        \item[(c)] For $\lambda \in (0, 1)$, $f^{(\lambda)}(\alpha) \geq C_{b/n}(G(\frac{L}{b\sigma}))(\alpha)$ with equality holding at $\alpha = p\Phi(-\frac{L}{2b\sigma}) + (1-p)\Phi((\frac{1}{2} - \lambda)\frac{L}{b\sigma})$ (note that as $\lambda$ varies, this covers the range of $\alpha$ at which $C_{b/n}(G(\frac{L}{b\sigma}))$ is linear with slope $-1$ and interpolates the boundaries in (a) and (b)).
    \end{itemize}

    \begin{figure}[H]
        \centering
        \includegraphics[width=0.35\textwidth]{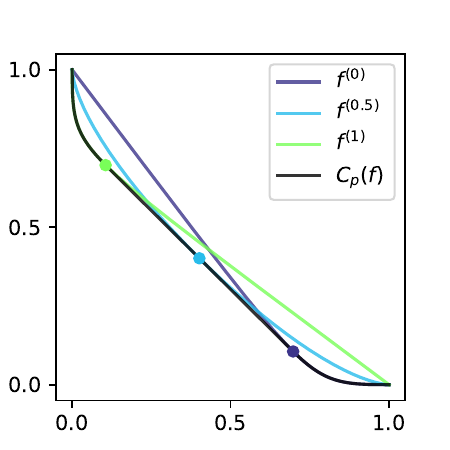}
        \caption{Illustration of $C_p(G(\frac{L}{b\sigma}))$ and $f^{(\lambda)}$, for $\lambda \in \{0, 0.5, 1\}$ with $p = 0.25, L/(b\sigma) = 2.5$.}
        \label{fig:optproof}
    \end{figure}
    
    The first two facts are straightforward from \autoref{def:cp}, with $G(\mu)_p = T(\cN(0, 1), p\cN(\mu, 1) + (1-p)\cN(0, 1))$ and $(f^{(0)})^{-1} = f^{(1)}$.\footnote{In fact, since tradeoff functions are convex, (a) and (b) are enough to conclude that $C_p(G(\mu))$ is the best tradeoff function bound; (c) provides an additional explanation on the linear part of $C_p(G(\mu))$. See \autoref{fig:optproof}.} For (c), note that as a mixture of one-dimensional Gaussians the likelihood ratio between the two distributions is monotone and thus for any $z \in \R$, with $\alpha = 1 - (1-p)\Phi(z) - p\Phi(z + (1-\lambda)\frac{L}{b\sigma})$ we have
    \[f^{(\lambda)}(\alpha) = (1-p)\Phi(z) + p\Phi(z - \lambda \frac{L}{b\sigma})\,.
    \]
    Thus from (here $\varphi$ denotes the probability density function of $\cN(0, 1)$)
    \begin{align*}
        \frac{d\alpha}{dz} &= -(1-p)\varphi(z) - p\varphi(z + (1-\lambda)\frac{L}{b\sigma}) \\
        \frac{df^{(\lambda)}(\alpha)}{dz} &= (1-p)\varphi(z) + p\varphi(z - \lambda\frac{L}{b\sigma})\,,
    \end{align*}
    at $z = (\lambda - \frac{1}{2})\frac{L}{b \sigma}$ we have $\alpha = p\Phi(-\frac{L}{2b \sigma}) + (1-p)\Phi((\frac{1}{2}-\lambda)\frac{L}{b\sigma})$ where $\alpha + f^{(\lambda)}(\alpha) = (1+p)\Phi(-\frac{L}{2 b \sigma}) + (1-p)\Phi(\frac{L}{2 b \sigma})$ and $\frac{df^{(\lambda)}}{d\alpha}(\alpha) = \frac{df_\lambda(\alpha)}{dz} / \frac{d\alpha}{dz} = -1$. This implies that $f^{(\lambda)}$ is tangent to $C_{b/n}(G(\frac{L}{b\sigma}))$ at the point, and (c) follows by \autoref{prop:fdpchar}.

\subsubsection{Proofs of new results}

As in the case of $\ngd$ (\autoref{lem:gdpinf}), we start by establishing a lower bound for tradeoff function between convolutions of Gaussian random variables with bounded random variables---now including the subsampling.
\begin{lemma}\label{lem:cpgdpinf} For $s \geq 0$ and $p = b/n$, let
    \begin{align*} R(s, \sigma, p) = \inf \{T(VW + Z, VW' + Z): &\ V \sim \text{Ber}(p), \norm{W}, \norm{W'} \leq s, Z \sim \cN(0, \sigma^2 I_d)\}\, 
    \end{align*}
    where the infimum is taken pointwise and is over independent $V, W, W', Z$. Then $R(s, \sigma, p) \geq C_p(G(\frac{2s}{\sigma}))$.\footnote{We conjecture that a strictly better lower bound holds, which corresponds to the case when $W$ and $W'$ are constant vectors aligned in the opposite direction, i.e., $R(s, \sigma, p) = T(p\cN(-\frac{s}{\sigma}, 1) + (1-p)\cN(0, 1), p\cN(\frac{s}{\sigma}, 1) + (1-p)\cN(0, 1))$.}
\end{lemma}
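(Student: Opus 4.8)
The plan is to reduce to the privacy of the subsampled Gaussian mechanism, reusing the construction in the proof of \autoref{thm:sgd}. Fix distributions for $W$ and $W'$ with $\norm{W}, \norm{W'} \le s$ almost surely, and let $V \sim \text{Ber}(p)$ and $Z \sim \cN(0, \sigma^2 I_d)$, all mutually independent. I would first \emph{condition} on $(W,W') = (w,w')$, which reduces matters to the deterministic-shift claim
\[
T(Vw + Z, \, Vw' + Z) \ \geq\ C_p\big(G(\tfrac{2s}{\sigma})\big) \qquad \text{for all fixed } w, w' \text{ with } \norm{w}, \norm{w'} \leq s,
\]
where we used $\norm{w - w'} \leq \norm{w} + \norm{w'} \leq 2s$. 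For fixed $w, w'$ the pair $(Vw + Z, Vw' + Z)$ is exactly an instance of the subsampled Gaussian mechanism: take $n$ data points with a single special index $i^*$, a query map that contributes $w$ on one dataset and $w'$ on the adjacent one (and contributes $0$ for every other index), subsampling rate $p = b/n$, and noise $\cN(0, \sigma^2 I_d)$. Then, exactly as in the proof of \autoref{thm:sgd}, the inner Gaussian mechanism $\texttt{M}$ has $\ell_2$-sensitivity at most $\norm{w - w'} \leq 2s$ and is therefore $G(\tfrac{2s}{\sigma})$-DP by \autoref{lem:various}(c), so \cite[Theorem 9]{drs22} gives that $\texttt{M} \circ \texttt{Sample}_b$ is $C_p(G(\tfrac{2s}{\sigma}))$-DP. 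This establishes the displayed inequality.

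It remains to remove the conditioning. I would treat $\Lambda := (W, W')$ as a common ``seed'', with law $\nu$ identical on both sides, and augment the outputs: let $\widetilde{P} = \mathrm{Law}(\Lambda, VW + Z)$ and $\widetilde{Q} = \mathrm{Law}(\Lambda, VW' + Z)$. Conditioned on $\Lambda = (w,w')$, these augmented outputs have laws $\mathrm{Law}(Vw + Z)$ and $\mathrm{Law}(Vw' + Z)$, whose tradeoff function is at least $C_p(G(\tfrac{2s}{\sigma}))$ by the previous paragraph; since this lower bound is a fixed tradeoff function independent of $(w,w')$, \autoref{lem:strcomp} (with common first coordinate $P = Q = \nu$) yields
\[
T(\widetilde{P}, \widetilde{Q}) \ \geq\ T(\nu, \nu) \otimes C_p\big(G(\tfrac{2s}{\sigma})\big) \ =\ C_p\big(G(\tfrac{2s}{\sigma})\big),
\]
using $T(\nu,\nu) = \text{Id}$ and \autoref{lem:various}(b). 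Finally $VW + Z$ (resp.\ $VW' + Z$) is the image of $\widetilde{P}$ (resp.\ $\widetilde{Q}$) under projection onto the last coordinate, so the post-processing inequality (\autoref{lem:proc}) gives $T(VW + Z, VW' + Z) \geq T(\widetilde{P}, \widetilde{Q}) \geq C_p(G(\tfrac{2s}{\sigma}))$. Taking the infimum over all admissible $W, W'$ yields $R(s, \sigma, p) \geq C_p(G(\tfrac{2s}{\sigma}))$.

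The step I expect to require the most care is the de-conditioning: a mixture of pairs of distributions can be either more or less distinguishable than its ingredients, so it is essential that the mixing variable $\Lambda = (W,W')$ have the \emph{same} law on both sides --- this is what makes the strong-composition/post-processing argument valid, and it is exactly why the bound carries the factor $2$ coming from $\norm{w-w'} \leq 2s$ rather than anything smaller. Conceptually this mirrors the proof of \autoref{thm:sgd}, where the analogous conditioning was on the data $(x,s)$ determining the (there deterministic) shifts; the only new feature is that here the shifts are random and must be integrated back in. As noted in the footnote to the lemma, we do not pursue the conjecturally sharper bound attained when $W$ and $W'$ are antipodal constant vectors.
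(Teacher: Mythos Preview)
Your proof is correct and takes essentially the same approach as the paper: both reduce to the subsampled Gaussian mechanism with sensitivity $2s$ via the same auxiliary-dataset construction and \cite[Theorem 9]{drs22}. The only organizational difference is that you condition on $(W,W')$ first and de-condition at the end via \autoref{lem:strcomp} and \autoref{lem:proc}, whereas the paper folds $(W,W')$ into the internal randomness of the mechanism $\texttt{M}$ and invokes the Gaussian-mechanism bound directly (which implicitly relies on the same conditioning argument); your version is slightly more explicit on this point.
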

\begin{proof} The proof is fairly similar to the subsampling part in the proof of \autoref{thm:sgd}. Let $V, W, W', Z$ be as in the definition of $R(s, \sigma, p)$, and consider the following private setting:
\begin{itemize}
    \item Datasets: $S = \{y_1, y_2, \dots, y_n\}, S' = \{y'_1, y_2, \dots, y_n\}$ where $y'_1, y_1, \dots, y_n$ are distinct alphabets.
    \item Mechanisms:
    \begin{itemize}
            \item[(a)] $\texttt{Sample}_b$: From a set of size $n$, sample a set of size $b$ uniformly at random.
            \item[(b)] $\texttt{M}$: Given a set $R$ of size $b$, output $\theta(R) + Z$ where
            \[\theta(R) = \begin{cases} W & y_1 \in R, y'_1 \notin R \\ W' & y_1 \notin R, y'_1 \in R \\ 0 & \text{else}\,. \end{cases}
            \]
    \end{itemize}
\end{itemize}
From $\norm{W}, \norm{W'}, \norm{W - W'} \leq 2s$, $\theta$ has sensitivity $2s$ and thus $\texttt{M}$ is a $G(\frac{2s}{\sigma})$-DP mechanism by \cite[Theorem 1]{drs22}. By \cite[Theorem 9]{drs22}, $\texttt{M} \circ \texttt{Sample}_b$ is $C_p(G(\frac{2s}{\sigma}))$-DP, which is equivalent to $T(VW + Z, VW'+Z) \geq C_p(G(\frac{2s}{\sigma}))$.
\end{proof}

\begin{figure}[H]
    \centering
    \includegraphics[width = 0.7\textwidth]{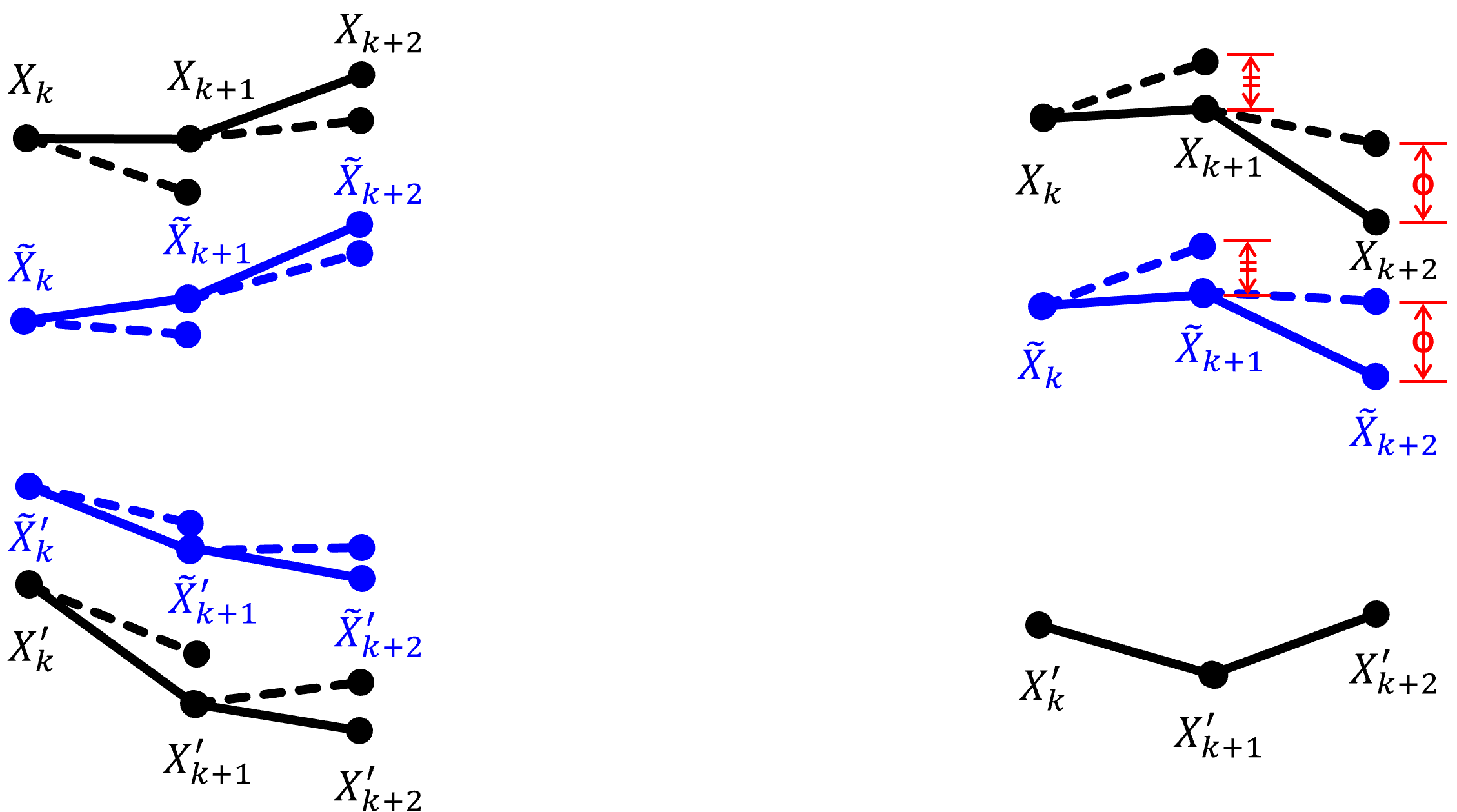}
    \caption{Illustration of shifted interpolated processes in the proofs of \autoref{thm:sgd-sc} (left) and \autoref{thm:sgd-proj} (right). The solid lines denote the updates based on the realized values of $\{V_k\}$, and the dashed lines denote the alternative updates based on their unrealized values; each interpolated process uses the same (coupled) values of $\{V_k\}$ as expressed in the figure. In \autoref{thm:sgd-sc}, we build two processes, each of which tracks its corresponding original process. In \autoref{thm:sgd-proj}, only one process is built and it inherits the identical deviation based on the realizations of $\{V_k\}$.}
    \label{fig:shift-int-sgd-sc}
\end{figure}

Now we proceed to the proofs of the new results. The key point here is that we build shifted interpolated processes by not only coupling the noise $Z_{k+1}$ but also the subsampling indicator $V_k$; see \autoref{fig:shift-int-sgd-sc}. For the strongly convex and smooth setting, we state and prove a general theorem that allows one to choose the sequences of shift and sensitivity.

\begin{theorem}\label{thm:sgd-sc-gen} Consider $m$-strongly convex, $M$-smooth loss functions with gradient sensitivity $L$. Then for any $\eta \in (0, 2/M)$, $\nsgd$ is $f$-DP where
\[f = G(\frac{2\sqrt{2}cz_{t-1}}{\eta \sigma}) \otimes \bigotimes_{k=0}^{t-1}C_{b/n}(G(\frac{2a_k}{\eta \sigma})) 
\]
for any sequence $\{z_k\}_{0 \leq k \leq t-1}, \{a_k\}_{0 \leq k \leq t-1}$ such that $z_0 = 0, a_0 = \frac{\sqrt{2}\eta L}{b}, a_k \leq \frac{\eta L}{b}$ for all $k \geq 1$ and $z_{t-1} = \frac{1-c^{t-1}}{1-c}\frac{\eta L}{b} - \sum_{k=1}^{t-1} c^{t-k-1}a_k$ and  where $c = \max\{|1 - \eta m|, |1 - \eta M|\}$.
\end{theorem}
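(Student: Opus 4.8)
The plan is to replay the shifted-interpolation argument of \autoref{thm:shift} in its strongly-contractive form (cf.\ \autoref{thm:cni-sc}) on the \emph{stochastic} CNI representation \eqref{eq:sgdascni} of $\nsgd$ that was set up in the proof of \autoref{thm:sgd}. First I would represent the random batch at step $k$ by the pair $S_k=(R_k,C_k)$ together with an independent indicator $V_k\sim\text{Ber}(p)$, $p=b/n$, writing both iterates in the form $X_{k+1}=\Pi_{\cK}(\psi_{S_k}(X_k)+V_k(\phi_{S_k}-\psi_{S_k})(X_k)+Z_{k+1})$ and likewise for $X'_k$ with $\phi'_{S_k}$ replacing $\phi_{S_k}$, where $\psi_S,\phi_S,\phi'_S$ are the maps from \eqref{eq:phipsi}. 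By \autoref{lem:contraction} these maps are $c$-Lipschitz (the losses are $m$-strongly convex and $M$-smooth), and by the gradient-sensitivity hypothesis $\norm{(\phi_S-\psi_S)(x)},\norm{(\phi'_S-\psi_S)(x)},\norm{(\phi_S-\phi'_S)(x)}\le \eta L/b$ uniformly. Since the batches are data-independent, all of $S_k,V_k,Z_{k+1}$ can be coupled across the original and auxiliary processes.

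Second, following \autoref{fig:shift-int-sgd-sc}, in the strongly convex setting I construct \emph{two} shifted interpolated processes $\{\widetilde X_k\}$ and $\{\widetilde X'_k\}$ — one shadowing $X$, one shadowing $X'$ — started from $\widetilde X_0=\widetilde X'_0=X_0$, updated by the convex-combination rule of \eqref{eq:shift} using the \emph{same} coupled $V_k$ and a common shift $\lambda_k\in[0,1]$ per step, with $\lambda_t=1$ so that $\widetilde X_t=X_t$ and $\widetilde X'_t=X'_t$. An induction mirroring the proof of \autoref{thm:shift} gives $\norm{X_k-\widetilde X_k}\le z_k$ and $\norm{X'_k-\widetilde X'_k}\le z_k$, where (bounding $V_k\le 1$ to make the estimate deterministic) $z_k+a_k=cz_{k-1}+\eta L/b$ and $a_k=\lambda_k(cz_{k-1}+\eta L/b)$ for $k=1,\dots,t-1$ with $z_0=0$; unrolling this recursion yields exactly the closed form $z_{t-1}=\tfrac{1-c^{t-1}}{1-c}\tfrac{\eta L}{b}-\sum_{k=1}^{t-1}c^{t-k-1}a_k$ asserted in the statement.

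Third comes the per-step bound, a subsampled analogue of \autoref{lem:cni-shift}: at a generic step strip $\Pi_{\cK}$ and the shared contraction $\psi_{S_k}$ via post-processing (\autoref{lem:proc}), condition on $S_k$ via strong composition (\autoref{lem:strcomp}), and apply \autoref{lem:cpgdpinf} to the residual term of the form $V_kW+Z$ versus $V_kW'+Z$ with $\norm{W},\norm{W'}$ controlled by $a_k$, which contributes the factor $C_{b/n}(G(\tfrac{2a_k}{\eta\sigma}))$; this gives $T(\widetilde X_{k},\widetilde X'_{k})\ge T(\widetilde X_{k-1},\widetilde X'_{k-1})\otimes C_{b/n}(G(\tfrac{2a_k}{\eta\sigma}))$ for $k=1,\dots,t-1$. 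The step enforced by $\lambda_t=1$, where the auxiliary iterate is tied back to the original one rather than to its own predecessor, is handled separately: there the $W_{\infty}$-slack $z_{t-1}$ is ``released'' and, together with the two-sided subsampled deviation at that step, produces the two extra factors $C_{b/n}(G(\tfrac{2a_0}{\eta\sigma}))$ — which can absorb a deviation as large as $a_0=\sqrt2\,\eta L/b$ because the comparison there is between the two auxiliary processes (so deviations from both sides enter) — and $G(\tfrac{2\sqrt2\,c z_{t-1}}{\eta\sigma})$ via \autoref{lem:gdpinf}. Telescoping steps $t-1$ down to $1$ against the base case $T(\widetilde X_0,\widetilde X'_0)=\text{Id}$ and collecting all factors with \autoref{lem:various} and \autoref{def:cp} assembles the claimed $f$-DP bound; the companion \autoref{thm:sgd-sc} then follows by the same Cauchy--Schwarz optimization over $\{a_k\}$ used in \autoref{lem:gd-sc-opt}.

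The step I expect to be the main obstacle is the subsampling bookkeeping: verifying that after conditioning on $S_k$ the indicator $V_k$ remains $\text{Ber}(p)$ and independent of the past (so that \autoref{lem:cpgdpinf} legitimately applies at every step, exactly as in the proof of \autoref{thm:sgd}); pinning down precisely which bounded vectors play the roles of $W,W'$ at the special step so that the constants $2$, $\sqrt2$, and $2\sqrt2$ come out correctly; and coordinating the two auxiliary processes, which must share the coupled $V_k$ and a single shift sequence $\lambda_k$ while each simultaneously respecting its own $W_{\infty}$ bound against the correct original process. Everything else — the deterministic recursion for $z_k$, the telescoping, and the final elementary optimization — is routine and parallels \autoref{app:noisygd}.
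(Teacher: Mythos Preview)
Your outline tracks the paper's approach closely up through the per-step bound for $k=1,\dots,t-1$, but it has a real gap at the final step, and the gap is precisely the mechanism that produces the constants $\sqrt{2}$ in $a_0$ and $2\sqrt{2}$ in the GDP factor.

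First, your claim that ``$\lambda_t=1$ so that $\widetilde X_t=X_t$'' is false for the auxiliary process the paper (and you) use. With the update $\widetilde X_{k+1}=\Pi_{\cK}\bigl(\psi_{S_k}(\widetilde X_k)+\lambda_{k+1}V_k(\phi_{S_k}(X_k)-\psi_{S_k}(\widetilde X_k))+Z_{k+1}\bigr)$, when $V_{t-1}=0$ the update is $\Pi_{\cK}(\psi_{S_{t-1}}(\widetilde X_{t-1})+Z_t)$ regardless of $\lambda_t$, which is not $X_t$ unless already $\widetilde X_{t-1}=X_{t-1}$. The paper states this explicitly: ``there is no choice of $\lambda_t$ that yields $\widetilde X_t=X_t$.'' So the last step cannot be closed by a choice of shift; it must be handled by a different device.

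That device is \emph{noise splitting}, which you do not mention. At step $t$ one writes $Z_t=Z_t^{(1)}+Z_t^{(2)}$ with $Z_t^{(1)},Z_t^{(2)}\sim\cN(0,\tfrac{\eta^2\sigma^2}{2}I_d)$ independent, and decomposes the pre-projection update as
\[
\psi_{S_{t-1}}(\widetilde X_{t-1})+\underbrace{\bigl[\psi_{S_{t-1}}(X_{t-1})-\psi_{S_{t-1}}(\widetilde X_{t-1})+Z_t^{(1)}\bigr]}_{\text{deterministic slack }\le\, c z_{t-1}}+\underbrace{\bigl[V_{t-1}(\phi_{S_{t-1}}-\psi_{S_{t-1}})(X_{t-1})+Z_t^{(2)}\bigr]}_{\text{subsampled, }\le\, \eta L/b}\,.
\]
Each bracket gets its own half-variance Gaussian, which is what allows \emph{two} independent composition factors from \emph{one} step: applying \autoref{lem:cpgdpinf} with $p=1$ to the first bracket gives $G\bigl(\tfrac{2cz_{t-1}}{\eta\sigma/\sqrt 2}\bigr)=G\bigl(\tfrac{2\sqrt2\,cz_{t-1}}{\eta\sigma}\bigr)$, and with $p=b/n$ to the second gives $C_{b/n}\bigl(G(\tfrac{2\cdot\eta L/b}{\eta\sigma/\sqrt2})\bigr)=C_{b/n}\bigl(G(\tfrac{2a_0}{\eta\sigma})\bigr)$ with $a_0=\sqrt2\,\eta L/b$. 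So the $\sqrt2$ does \emph{not} come from ``deviations from both sides'' as you write; the two-sided comparison is what produces the factor $2$ inside $G(\cdot)$ (via \autoref{lem:cpgdpinf}, not \autoref{lem:gdpinf}), while the $\sqrt2$ comes from halving the noise variance. Without the split you cannot separate the always-present contractive slack from the subsampled term, because only the latter carries the indicator $V_{t-1}$, and a single application of \autoref{lem:cpgdpinf} to their sum would forfeit the subsampling amplification on the $z_{t-1}$ part.
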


\begin{proof}
    As in \eqref{eq:sgdascni} and \eqref{eq:phipsi}, the iterates of $\nsgd$ with respect to $\{f_i\}_{i \in [n]}$ and $\{f'_i\}_{i \in [n]}$ are
    \begin{align*}
        X_{k+1} &= \Pi_{\cK}(\psi_{S_k}(X_k) + V_k(\phi_{S_k} - \psi_{S_k})(X_k) + Z_{k+1}) \\
        X'_{k+1} &= \Pi_{\cK}(\psi_{S'_k}(X'_k) + V'_k(\phi'_{S'_k} - \psi_{S'_k})(X'_k) + Z'_{k+1})\,,
    \end{align*}
    where $Z_{k+1}, Z'_{k+1} \sim \cN(0, \eta^2 \sigma^2 I_d)$. Now consider shifted interpolated processes defined as
    \begin{align*}
        \widetilde{X}_{k+1} &= \Pi_{\cK}(\psi_{S_k}(\widetilde{X}_k) + \lambda_{k+1}V_k(\phi_{S_k}(X_k) - \psi_{S_k}(\widetilde{X}_k)) + Z_{k+1}) \\
        \widetilde{X}'_{k+1} &= \Pi_{\cK}(\psi_{S'_k}(\widetilde{X}'_k) + \lambda_{k+1}V'_k(\phi'_{S'_k}(X'_k) - \psi_{S'_k}(\widetilde{X}'_k)) + Z'_{k+1})\,,
    \end{align*}
    with $\widetilde{X}_0 = \widetilde{X}'_0 = X_0$ and 
    $\lambda_k = \tfrac{a_k}{z_k + a_k} \cdot \bm{1}_{\{z_k + a_k > 0\}}$
    for $\{z_k\}_{0 \leq k \leq t-1}$ and $\{a_k\}_{0 \leq k \leq t-1}$ such that $z_0 = 0, a_0 = \frac{\sqrt{2} \eta L}{b}$ and $z_{k+1} = cz_k + \frac{\eta L}{b} - a_{k+1}$ for all $k \geq 0$. Then inductively $\norm{\widetilde{X}_k - X_k} \leq z_k$ for all $k$ from
    \[\norm{\widetilde{X}_{k+1} - X_{k+1}} \leq \begin{cases} \norm{\psi_{S_k}(X_k) - \psi_{S_k}(\widetilde{X}_k)} \leq cz_k & V_k = 0 \\ \norm{(1-\lambda_{k+1})(\phi_{S_k}(X_k) - \psi_{S_k}(\widetilde{X}_k))} \leq cz_k + \frac{\eta L}{b} - a_{k+1} & V_k = 1 \end{cases}
    \]
    and $\norm{\lambda_{k+1}(\phi_{S_k}(X_k) - \psi_{S_k}(\widetilde{X}_k))} \leq a_{k+1}$; similar results hold for $\{X'_k\}$. Thus as in the proof of \autoref{thm:sgd} (see also \autoref{thm:shift}), with \autoref{lem:cpgdpinf}
    \[T(\widetilde{X}_{t-1}, \widetilde{X}'_{t-1}) \geq \bigotimes_{k=1}^{t-1}C_{b/n}(G(\frac{2a_k}{\eta \sigma}))\,.
    \]
    To relate this with $T(X_t, X'_t)$, note that there is no choice of $\lambda_t$ that yields $\widetilde{X}_t = X_t$. 
    Instead, we can proceed as follows: write down the corresponding update (before taking the projection) as
      \begin{align*}
        &\;\psi_{S_{t-1}}(X_{t-1}) + V_{t-1}(\phi_{S_{t-1}} - \psi_{S_{t-1}})(X_{t-1}) + Z_t
        \\ =&\; \psi_{S_{t-1}}(\widetilde{X}_{t-1}) + \psi_{S_{t-1}}(X_{t-1}) - \psi_{S_{t-1}}(\widetilde{X}_{t-1}) + Z^{(1)}_t + V_{t-1}(\phi_{S_{t-1}} - \psi_{S_{t-1}})(X_{t-1}) + Z^{(2)}_t
    \end{align*}
    where $Z^{(1)}_t, Z^{(2)}_t \sim \cN(0, \frac{\eta^2 \sigma^2}{2}I_d)$\footnote{In general, we can split the noise into $Z_t = Z^{(1)}_t + Z^{(2)}_t$ where $Z^{(1)}_t \sim \cN(0, \frac{\eta^2 \sigma^2}{\alpha^2}I_d)$ and $Z^{(2)}_t \sim \cN(0, \frac{\eta^2 \sigma^2}{\beta^2}I_d)$ are independent and $1/\alpha^2 + 1/\beta^2 = 1$. Then the part $G(\frac{2\sqrt{2}cz_{t-1}}{\eta \sigma}) \otimes C_{b/n}(G(\frac{2\sqrt{2} L}{b \sigma}))$ in the last line of the proof is replaced with $G(\frac{2\alpha cz_{t-1}}{\eta \sigma}) \otimes C_{b/n}(G(\frac{2\beta L}{b \sigma}))$.} are independent, $\psi_{S_{t-1}}(X_{t-1}) - \psi_{S_{t-1}}(\widetilde{X}_{t-1})$ is bounded by $cz_{t-1}$ and $(\phi_{S_{t-1}} - \psi_{S_{t-1}})(X_{t-1})$ is bounded by $\frac{\eta L}{b}$. Then
    \begin{align*}
        &T(X_t, X'_t) \\
        &\geq T((\widetilde{X}_{t-1}, S_{t-1}, \psi_{S_{t-1}}(X_{t-1}) - \psi_{S_{t-1}}(\widetilde{X}_{t-1}) + Z^{(1)}_t), (\widetilde{X}'_{t-1}, S'_{t-1}, \psi_{S'_{t-1}}(X'_{t-1}) - \psi_{S'_{t-1}}(\widetilde{X}'_{t-1}) + Z^{(1)'}_t)) \\
        &\quad\otimes R(\frac{\eta L}{b}, \frac{\eta \sigma}{\sqrt{2}}, b/n) \\
        &\geq T((\widetilde{X}_{t-1}, S_{t-1}), (\widetilde{X}'_{t-1}, S'_{t-1})) \otimes R(cz_{t-1}, \frac{\eta \sigma}{\sqrt{2}}, 1) \otimes R(\frac{\eta L}{b}, \frac{\eta \sigma}{\sqrt{2}}, b/n) \\
        &\geq T(\widetilde{X}_{t-1}, \widetilde{X}'_{t-1}) \otimes G(\frac{2\sqrt{2}cz_{t-1}}{\eta \sigma}) \otimes C_{b/n}(G(\frac{2\sqrt{2} L}{b \sigma}))\,.
    \end{align*}
\end{proof}

In this formulation, optimizing over the sequences $\{z_k\}$ and $\{a_k\}$ is intractable because of the analytically complicated nature of the subsampled operator and composition of tradeoff functions. Heuristically, when $b/n$ is small, each individual $C_{b/n}(G(\cdot))$ is very close to $\text{Id}$ and the most substantial factor is the GDP part. In this sense, sequences that make $z_{t-1}$ small can be considered as a reasonable choice.

\begin{proof}[Proof of \autoref{thm:sgd-sc}] Consider $a_{t-1} = \dots = a_\tau = \frac{\eta L}{b}$ and $a_k = 0$ for all $1 \leq k < \tau$ in \autoref{thm:sgd-sc-gen}.    
\end{proof}

\begin{proof}[Proof of \autoref{thm:sgd-proj}]For the iterates \eqref{eq:sgdascni} and \eqref{eq:phipsi}, consider the shifted interpolated process 
\[\widetilde{X}_{k+1} = \Pi_{\cK}(\psi_{S_k}(\widetilde{X}_k) + \lambda_{k+1}(\psi_{S_k}(X_k) - \psi_{S_k}(\widetilde{X}_k)) + V_k(\phi_{S_k} - \psi_{S_k})(X_k) + Z_{k+1})
\]
where $\lambda_{k+1} = \frac{1}{t-k}$ and $\widetilde{X}_\tau = X'_\tau$. Then for any $k \geq \tau$, $\norm{\widetilde{X}_k - X_k} \leq z_k$ and $\norm{\lambda_{k+1}(\psi_{S_k}(X_k) - \psi_{S_k}(\widetilde{X}_k))} \leq a_{k+1}$ where
\begin{align*}
    z_k &= \frac{D}{t-\tau}(t - k) \\
    a_{k+1} &\equiv \frac{D}{t-\tau}\,.
\end{align*}
The first inequality is inductively from $\norm{\widetilde{X}_\tau - X_\tau} = \norm{X'_\tau - X_\tau} \leq D$ and
\[\norm{\widetilde{X}_{k+1} - X_{k+1}} \leq (1-\lambda_{k+1})\norm{\widetilde{X}_k - X_k} \leq z_{k+1}\,.
\]
The second inequality is from $\norm{\lambda_{k+1}(\psi_{S_k}(X_k) - \psi_{S_k}(\widetilde{X}_k))} \leq \lambda_{k+1}z_k = a_{k+1}$. Also, note that $\widetilde{X}_t = X_t$. As in the proof of \autoref{thm:sgd-sc}, we can write down as
\begin{align*}
    &\;\psi_{S_k}(\widetilde{X}_k) + \lambda_{k+1}(\psi_{S_k}(X_k) - \psi_{S_k}(\widetilde{X}_k)) + V_k(\phi_{S_k}(X_k) - \psi_{S_k}(X_k)) + Z_{k+1}
    \\ =&\;\psi_{S_k}(\widetilde{X}_k) + \lambda_{k+1}(\psi_{S_k}(X_k) - \psi_{S_k}(\widetilde{X}_k)) + Z^{(1)}_{k+1} + V_k(\phi_{S_k} - \psi_{S_k})(X_k) + Z^{(2)}_{k+1}
\end{align*}
where $Z^{(1)}_{k+1}, Z^{(2)}_{k+1} \sim \cN(0, \frac{\eta^2 \sigma^2}{2}I_d)$\footnote{As before, setting $Z^{(1)}_{k+1} \sim \cN(0, \frac{\eta^2 \sigma^2}{\alpha^2}I_d)$ and $Z^{(2)}_{k+1} \sim \cN(0, \frac{\eta^2 \sigma^2}{\beta^2}I_d)$ with $1/\alpha^2 + 1/\beta^2 = 1$ replaces $G(\frac{\sqrt{2}D}{(t-\tau)\eta \sigma}) \otimes C_{b/n}(G(\frac{2\sqrt{2}L}{b \sigma}))$ with $G(\frac{\alpha D}{(t-\tau)\eta \sigma}) \otimes C_{b/n}(G(\frac{2 \beta L}{b \sigma}))$.} are independent and similarly
\begin{align*}
    &\;\psi_{S'_k}(X'_k) + V'_k(\phi'_{S'_k} - \psi_{S'_k})(X'_k) + Z'_{k+1} = \psi_{S'_k}(X'_k) + Z^{(1)'}_{k+1} + V'_k(\phi'_{S'_k} - \psi_{S'_k})(X'_k) + Z^{(2)'}_{k+1}\,.
\end{align*}
Thus
\begin{align*}
    T(\widetilde{X}_{k+1}, X'_{k+1}) &\geq T((\widetilde{X}_k, S_k, \lambda_{k+1}(\psi_{S_k}(X_k) - \psi_{S_k}(\widetilde{X}_k)) + Z^{(1)}_{k+1}), (X'_k, S'_k, Z^{(1)'}_{k+1})) \otimes R(\frac{\eta L}{b}, \frac{\eta \sigma}{\sqrt{2}}, b/n) \\
    &\geq T((\widetilde{X}_k, S_k), (X'_k, S'_k)) \otimes R(a_{k+1}, \frac{\eta \sigma}{\sqrt{2}}) \otimes R(\frac{\eta L}{b}, \frac{\eta \sigma}{\sqrt{2}}, b/n) \\
    &\geq T(\widetilde{X}_k, X'_k) \otimes G(\frac{\sqrt{2}D}{(t-\tau)\eta \sigma}) \otimes C_{b/n}(G(\frac{2\sqrt{2}L}{b \sigma}))\,.
\end{align*}
Repeating this for $k = t-1, \dots, \tau$ yields the result.
\end{proof}

\subsubsection{Choice of $t- \tau$ based on approximation}\label{subsec:cltopt}
Since \autoref{thm:sgd-sc} and \autoref{thm:sgd-proj} hold for every $t - \tau$, we can calculate the corresponding $f$-DP bound for each $t - \tau$ and then take the pointwise maximum as a valid privacy guarantee. However, this may be computationally burdensome if $t$ is large. One way to bypass this calculation is to approximate the composition of subsampled Gaussian mechanisms via CLT (\autoref{lem:clt}), where the resulting $f$-DP bound becomes a GDP bound and thus optimization over $t - \tau$ is analytically tractable. 

\begin{proposition}\label{prop:sgd-sc-approx}
    In the setting of \autoref{thm:sgd-sc}, by choosing (modulo floor or ceiling)
    \[t - \tau = -\frac{\log \frac{b^2 \sigma (1-c)}{2\sqrt{2} nL \sqrt{\log(1/c)}}\sqrt{e^{4L^2/(b \sigma)^2}\Phi(\frac{3L}{b\sigma}) + 3 \Phi(-\frac{L}{b\sigma} )- 2}}{\log(1/c)} - 1
    \]
    $\nsgd$ is approximately $\mu$-GDP, where
    \begin{equation}\label{eq:sgd-sc-approx}
        \mu = \sqrt{8\left(\frac{L}{b\sigma} \frac{c^{t-\tau + 1}}{1-c}\right)^2 + \frac{2b^2}{n^2}(t - \tau)(e^{4L^2/(b \sigma)^2}\Phi(\frac{3L}{b\sigma}) + 3 \Phi(-\frac{L}{b\sigma} )- 2)}\,.
    \end{equation}
\end{proposition}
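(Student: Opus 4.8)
The plan is to start from the exact $f$-DP guarantee of \autoref{thm:sgd-sc}, send $t\to\infty$ so that the $c^t$ contribution vanishes, then replace the composition of subsampled Gaussian tradeoff functions by a single Gaussian via the CLT (\autoref{lem:clt}), and finally optimize the resulting GDP parameter over the remaining free integer $w := t-\tau$.

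Concretely, I would first apply \autoref{thm:sgd-sc} with $\tau = t-w$ for a parameter $w \in \{1,2,\dots\}$. As $t\to\infty$ with $w$ held fixed, $c^t\to 0$, so the leading GDP factor tends to $G(\mu_1)$ with $\mu_1 = \frac{2\sqrt 2 L}{b\sigma}\cdot\frac{c^{w+1}}{1-c}$, and the middle factor $C_{b/n}(G(\frac{2\sqrt 2 L}{b\sigma}))$ is within $O(b/n)$ of $\mathrm{Id}$ and hence of lower order; it is dropped in the approximation. For the remaining $w$-fold composition, \autoref{lem:clt} with sampling rate $p = b/n$ and base parameter $\mu_0 = \frac{2L}{b\sigma}$ (so $1.5\mu_0 = \frac{3L}{b\sigma}$, $0.5\mu_0 = \frac{L}{b\sigma}$, $\mu_0^2 = \frac{4L^2}{b^2\sigma^2}$) gives the approximation $C_{b/n}(G(\frac{2L}{b\sigma}))^{\otimes w}\approx G(\mu_{\mathrm{CLT}})$ with $\mu_{\mathrm{CLT}}^2 = 2\tfrac{b^2}{n^2}w\big(e^{4L^2/(b\sigma)^2}\Phi(\tfrac{3L}{b\sigma}) + 3\Phi(-\tfrac{L}{b\sigma}) - 2\big)$; this (together with \autoref{lem:complim} to handle composition with the limiting tradeoff function in the regime $w\to\infty$) is the meaning of ``approximately'' in the statement. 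Combining the two Gaussian factors via $G(\mu_1)\otimes G(\mu_{\mathrm{CLT}}) = G(\sqrt{\mu_1^2+\mu_{\mathrm{CLT}}^2})$ (\autoref{lem:various}(c)) shows that $\nsgd$ is approximately $\mu$-GDP with $\mu^2 = \mu_1^2 + \mu_{\mathrm{CLT}}^2$, which is exactly \eqref{eq:sgd-sc-approx} as a function of $w = t-\tau$ (one checks $\mu_1^2 = 8(\frac{L}{b\sigma}\frac{c^{w+1}}{1-c})^2$).

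It then remains to minimize \eqref{eq:sgd-sc-approx} over $w$. Treating $w$ as a continuous variable, the first term $8(\tfrac{L}{b\sigma})^2\tfrac{c^{2(w+1)}}{(1-c)^2}$ is strictly convex and exponentially decaying in $w$, while the second term is linear and increasing in $w$ (its coefficient is positive since the CLT variance term is nonnegative, and strictly so for $L>0$); hence $\mu^2$ is strictly convex in $w$ and its unique stationary point is the global minimum. Setting the derivative to zero and using $\log c = -\log(1/c)$ yields $c^{2(w+1)} = \frac{b^4\sigma^2(1-c)^2}{8n^2L^2\log(1/c)}\big(e^{4L^2/(b\sigma)^2}\Phi(\tfrac{3L}{b\sigma}) + 3\Phi(-\tfrac{L}{b\sigma}) - 2\big)$; taking logarithms, dividing by $2\log c$, and rewriting $\tfrac12\log(\cdot) = \log\sqrt{\cdot}$ produces precisely the displayed value of $t-\tau$, up to rounding to a positive integer (hence the ``modulo floor or ceiling'').

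The main obstacle is conceptual rather than computational: the CLT step is an asymptotic statement, not an inequality, so one must be careful about the sense in which ``approximately $\mu$-GDP'' holds — this is why I would invoke \autoref{lem:complim} to convert the composition-with-a-limit into a genuine one-sided tradeoff-function bound once $w$ is large, and note that the optimal $w$ scales like $\log(n/b)/\log(1/c)$, so $p\sqrt{w} = (b/n)\sqrt{w}\to 0$ as $b/n\to 0$, i.e., the approximation is accurate exactly in the practically relevant small-sampling-rate regime. Everything after the approximation — combining GDP parameters and the one-dimensional convex optimization — is elementary.
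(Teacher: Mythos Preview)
Your proposal is correct and takes essentially the same approach as the paper: apply the CLT of \autoref{lem:clt} to the subsampled composition, collapse the result with the leading GDP factor via \autoref{lem:various}(c), and then optimize the resulting convex function of $t-\tau$ by the first-order condition. The only cosmetic difference is that the paper handles the $c^t$ term by the one-line inequality $c^{t-\tau+1}-c^t\le c^{t-\tau+1}$ (valid for every finite $t$) rather than sending $t\to\infty$, and it absorbs the single extra $C_{b/n}(G(\tfrac{2\sqrt 2 L}{b\sigma}))$ factor into the CLT approximation rather than arguing separately that it is $O(b/n)$ from $\mathrm{Id}$.
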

\begin{proof}
By \autoref{lem:clt}, 
\[C_{b/n}(G(\frac{2\sqrt{2} L}{b \sigma})) \otimes C_{b/n}(G(\frac{2L}{b \sigma}))^{\otimes (t - \tau)} \approx G\left(\sqrt{2}\frac{b}{n}\sqrt{(t - \tau)(e^{4L^2/(b \sigma)^2}\Phi(\frac{3L}{b\sigma}) + 3 \Phi(-\frac{L}{b\sigma} )- 2)}\right)\,.
\]
Also, by bounding 
\[\frac{2\sqrt{2}L}{b \sigma}\frac{c^{t - \tau+1} - c^t}{1-c} \leq \frac{2\sqrt{2}L}{b \sigma}\frac{c^{t - \tau+1}}{1-c}
\]
we obtain an approximate lower bound $G(\mu)$ of the form \eqref{eq:sgd-sc-approx}. As a function of $t - \tau \in (0, \infty)$ it is convex, and the first-order optimality condition provides the stated formula for $t - \tau$.
\end{proof}
\begin{proposition}\label{prop:sgd-proj-approx}
    In the setting of \autoref{thm:sgd-proj}, by choosing (modulo floor or ceiling)
    \[t - \tau = \frac{Dn}{b\eta \sigma \sqrt{e^{8L^2/(b\sigma)^2}\Phi(\frac{3\sqrt{2}L}{b\sigma}) + 3\Phi(-\frac{\sqrt{2}L}{b\sigma}) - 2}}    
    \]
    $\nsgd$ is approximately $\mu$-GDP, where
    \begin{equation}\label{eq:sgd-proj-approx}
        \mu = \sqrt{\frac{2D^2}{\eta^2 \sigma^2 (t - \tau)} + 2\frac{b^2}{n^2}(t-\tau)(e^{8L^2/(b\sigma)^2}\Phi(\frac{3\sqrt{2}L}{b\sigma}) + 3\Phi(-\frac{\sqrt{2}L}{b\sigma}) - 2)}\,.
    \end{equation}
\end{proposition}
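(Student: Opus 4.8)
The plan is to mirror the proof of Proposition~\ref{prop:sgd-sc-approx}: use the central limit theorem for tradeoff functions (Lemma~\ref{lem:clt}) to collapse the subsampled composition appearing in Theorem~\ref{thm:sgd-proj} into a single Gaussian tradeoff function, combine it with the leading Gaussian factor, and then optimize the resulting Gaussian parameter over the free variable $t-\tau$ by elementary one-dimensional calculus.

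Concretely, I would start from Theorem~\ref{thm:sgd-proj}, which gives that $\nsgd$ is $f$-DP with $f = G(\tfrac{\sqrt{2} D}{\eta\sigma\sqrt{t-\tau}}) \otimes C_{b/n}(G(\tfrac{2\sqrt{2} L}{b\sigma}))^{\otimes(t-\tau)}$. Applying Lemma~\ref{lem:clt} with $\mu = \tfrac{2\sqrt{2} L}{b\sigma}$ (so that $\mu^2 = 8L^2/(b\sigma)^2$, $1.5\mu = 3\sqrt{2}L/(b\sigma)$, and $0.5\mu = \sqrt{2}L/(b\sigma)$) and $p = b/n$, with $p\sqrt{t-\tau}$ playing the role of $p_0$, yields
\[
C_{b/n}(G(\tfrac{2\sqrt{2} L}{b\sigma}))^{\otimes(t-\tau)} \approx G\!\left(\sqrt{2}\,\tfrac{b}{n}\sqrt{(t-\tau)\left(e^{8L^2/(b\sigma)^2}\Phi(\tfrac{3\sqrt{2} L}{b\sigma}) + 3\Phi(-\tfrac{\sqrt{2} L}{b\sigma}) - 2\right)}\right).
\]
Composing this with $G(\tfrac{\sqrt{2} D}{\eta\sigma\sqrt{t-\tau}})$ via the identity $G(\mu_1)\otimes G(\mu_2) = G(\sqrt{\mu_1^2+\mu_2^2})$ (Lemma~\ref{lem:various}(c)) and summing the two variances produces exactly the expression~\eqref{eq:sgd-proj-approx} for $\mu = \mu(t-\tau)$. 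The step of composing a fixed tradeoff function with an approximating Gaussian factor is justified in the limiting sense by Lemma~\ref{lem:complim}, exactly as in the proof of Proposition~\ref{prop:sgd-sc-approx}.

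It then remains to minimize $\mu(t-\tau)^2 = A(t-\tau)^{-1} + B(t-\tau)$ over $t-\tau \in (0,\infty)$, where $A = 2D^2/(\eta^2\sigma^2)$ and $B = 2(b^2/n^2)\left(e^{8L^2/(b\sigma)^2}\Phi(\tfrac{3\sqrt{2} L}{b\sigma}) + 3\Phi(-\tfrac{\sqrt{2} L}{b\sigma}) - 2\right)$. This function is strictly convex on $(0,\infty)$, so its unique minimizer is given by the first-order condition $B = A/(t-\tau)^2$, i.e., $t-\tau = \sqrt{A/B}$, which simplifies to the stated formula; one then rounds to an integer (floor or ceiling), perturbing $\mu$ only negligibly, as in Proposition~\ref{prop:sgd-sc-approx}.

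The step I expect to be the main (conceptual rather than computational) obstacle is the first one: Lemma~\ref{lem:clt} is an asymptotic statement as the number of compositions tends to infinity with $p\sqrt{t-\tau}$ held fixed, so for finite $t-\tau$ the collapsed Gaussian is an approximation whose error is not quantified by the lemma. This is precisely why the guarantee is stated as ``approximately $\mu$-GDP''; a fully rigorous finite-$t$ version would require a Berry--Esseen-type refinement of the CLT for tradeoff functions (or, alternatively, one simply works with the exact but less analytically tractable bound of Theorem~\ref{thm:sgd-proj}). This caveat is inherent to the approximate-GDP framing and does not affect the one-dimensional optimization, which is the substance of the proposition.
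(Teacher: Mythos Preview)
Your proposal is correct and follows essentially the same approach as the paper's own proof, which simply says to apply the CLT approximation (Lemma~\ref{lem:clt}) to $C_{b/n}(G(\frac{2\sqrt{2}L}{b\sigma}))^{\otimes(t-\tau)}$, obtain the convex expression~\eqref{eq:sgd-proj-approx} in $t-\tau$, and use the first-order optimality condition. Your write-up is more detailed than the paper's one-line proof but matches it in method and content.
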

\begin{proof}
As in the proof of \autoref{prop:sgd-sc-approx}, the CLT approximation of $C_{b/n}(G(\frac{2\sqrt{2}L}{b \sigma}))^{\otimes (t- \tau)}$ provides a lower bound $G(\mu)$ of the form \eqref{eq:sgd-proj-approx}, which is a convex function in $t - \tau$; the first-order optimality condition yields the stated result.
\end{proof}

\subsection{Lower bounds}
Here we elaborate on lower bounds for the privacy loss (i.e., upper bounds on the $f$-DP guarantee) that complement our results in \autoref{sec:bounds}. Note that an exactly matching bound for $\ngd$ in the strongly convex setting was obtained in \autoref{thm:gd-sc}, and an asymptotically matching bound for $\nsgd$ in the constrained convex setting was obtained in \cite{at22, abt24}. Below, we present results for the other related settings using similar techniques. For the strongly convex setting, these lower bounds are built based on convex quadratics which yield iterates with explicit Gaussians; and for the constrained convex setting, these are obtained by comparing symmetric and biased (projected) Gaussians. We refer the readers to \cite{at22, abt24} for further discussion about these constructions. 

\begin{theorem}\label{thm:proj-lb}
    Consider the setting of \autoref{thm:gd-proj} or \autoref{thm:mgd-proj}. There exist universal constants $0 < c_0 < 1/5, c_1 > 0$ such that if $\sigma^2 \leq c_0\frac{LD}{\eta n}$ and $\mu = c_1\frac{1}{\sigma}\sqrt{\frac{LD}{\eta n}}$, then
    \begin{itemize}
        \item[(a)] $\ngd$ is not $\mu$-GDP for all $t \geq \frac{Dn}{\eta L} \geq \frac{1}{2}$.
        \item[(b)] $\nmgd$ is not $\mu$-GDP for all $E \geq \frac{Db}{\eta L} \geq \frac{1}{2}$.
    \end{itemize} 
\end{theorem}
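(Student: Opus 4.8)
The plan is to construct a one-dimensional hard instance and quantify its distinguishability to contradict $\mu$-GDP for the stated $\mu$. Concretely: take $d=1$, $\cK = [-D/2,D/2]$ (diameter $D$), initialization $X_0 = X_0' = 0$, and losses $f_i \equiv f_i' \equiv 0$ for $i \neq i^*$, $f_{i^*}(x) = \tfrac L2 x$, $f_{i^*}'(x) = -\tfrac L2 x$. These are convex and linear (in particular $M$-smooth for every $M$, so $\eta \le 2/M$ is no restriction), and the family has gradient sensitivity exactly $\sup_x|\nabla f_{i^*}(x) - \nabla f_{i^*}'(x)| = L$. With these losses the $\ngd$ iterates $\{X_k\}$ and $\{X_k'\}$ are reflected random walks on $[-D/2,D/2]$ with opposite constant drifts $\mp\tfrac{\eta L}{2n}$ per step and per-step noise $\cN(0,\eta^2\sigma^2)$; for $\nmgd$ the only change is that the drift $\mp\tfrac{\eta L}{2b}$ is applied only at the single step of each epoch that processes the batch $B_{j^*} \ni i^*$, the other $l-1$ steps being driftless reflected moves. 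I would then use the test with rejection region $\{x>0\}$, whose Type I and Type II errors are $\alpha := \Prob(X_t>0)$ and $\beta := \Prob(X_t' \le 0)$. Since $T(X_t,X_t')(\alpha) \le \beta$ and $G(\mu)(\alpha) = \Phi(\Phi^{-1}(1-\alpha)-\mu)$ is decreasing in $\alpha$, it suffices to produce $p_* \ge \max\{\alpha,\beta\}$ with $p_* < \Phi(\Phi^{-1}(1-p_*)-\mu)$, equivalently $\mu < -2\Phi^{-1}(p_*)$.

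The core estimate is that $\alpha$ and $\beta$ are super-exponentially small in $\mu^2$, \emph{uniformly over all $t \ge Dn/(\eta L)$}. By the symmetry of the construction it is enough to bound $\beta$. Writing $W_k' := \tfrac D2 - X_k'$ turns $X'$ into a walk reflected on $[0,D]$ with downward drift started from $\tfrac D2$, so $\beta = \Prob(W_t' \ge \tfrac D2)$; dropping the far reflecting wall only increases the walk, so $\beta \le \Prob(V_t' \ge \tfrac D2)$ for the walk $V'$ reflected at $0$ only. The Lindley/Reich identity $V_t' = (V_0' + S_t) \vee \max_{0\le j\le t}(S_t - S_j)$, with $S_k$ the drift-plus-noise partial sums, gives
\[
\beta \;\le\; \Prob(S_t \ge 0) \;+\; \Prob\bigl(\sup_{m\ge 0} \tilde S_m \ge \tfrac D2\bigr),
\]
where $\tilde S$ is the forward walk (same law as $\max_{j}(S_t - S_j)$ by time reversal of an i.i.d. sequence). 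Once $t \ge Dn/(\eta L)$ the accumulated drift is at least $\tfrac D2$, so the first term is the Gaussian tail $\Phi\bigl(-\tfrac1{2\sigma}\sqrt{LD/(\eta n)}\bigr)$, whose argument is exactly $\tfrac{\mu}{2c_1}$; the second term is handled by Ville's inequality on the nonnegative martingale $e^{\theta\tilde S_m}$ with $\theta = \tfrac{L}{n\eta\sigma^2}$ the positive root of the per-step log-MGF equation, giving $e^{-\theta D/2} = e^{-LD/(2\eta n\sigma^2)} = e^{-\mu^2/(2c_1^2)}$. Hence $p_* \le \Phi(-\tfrac{\mu}{2c_1}) + e^{-\mu^2/(2c_1^2)} \le 2e^{-\mu^2/(8c_1^2)}$. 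Since $\sigma^2 \le c_0 LD/(\eta n)$ forces $\mu \ge c_1/\sqrt{c_0}$, choosing the universal constants $c_0$ (small, in particular $<1/5$) and $c_1$ small enough and invoking the standard two-sided Gaussian tail bounds yields $-2\Phi^{-1}(p_*) \gtrsim \tfrac{\mu}{2c_1} > \mu$, the desired strict inequality; this proves part (a).

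For part (b) the same scheme applies with $Dn/(\eta L)$ replaced by $Db/(\eta L)$: over $E$ epochs the sensitive drift accumulates to $E\cdot\tfrac{\eta L}{2b} \ge \tfrac D2$ precisely when $E \ge Db/(\eta L)$, and since $t = lE$ and $bl = n$, the variance bookkeeping reproduces the same exponent $\tfrac{\mu}{2c_1}$ in the Gaussian-tail term. The one genuinely new ingredient---and the step I expect to be the main obstacle---is controlling the $l-1$ driftless steps per epoch, which let the partial sums $S$ wander before the drift can act: one must show that the ``short'' suffixes (the at most $l$ tail segments containing no drift step) contribute only $\lesssim \Phi\bigl(-\tfrac{D}{2\sqrt l\,\eta\sigma}\bigr) \le e^{-\Omega(\mu^2/c_1^2)}$, which holds once $c_0$ is small because over $l$ steps the Gaussian spread $\sqrt l\,\eta\sigma$ is a small multiple of the per-epoch drift (here one may use Lévy's maximal inequality), while the longer suffixes are handled by the exponential martingale with $\theta = L/(\eta n\sigma^2)$, which by $bl = n$ is a supermartingale at epoch boundaries. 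Beyond this bookkeeping, the remaining work is the elementary constant-chasing verifying $\mu < -2\Phi^{-1}(p_*)$ in the worst case $\mu = c_1/\sqrt{c_0}$, which reduces to $\ln(1/c_0) \lesssim 1/c_0$ and holds for $c_0$ small.
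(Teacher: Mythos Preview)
Your argument for part (a) is correct and takes a genuinely different route from the paper. The paper uses an \emph{asymmetric} instance ($\nabla f_i \equiv 0$ for all $i$, $\nabla f'_{i^*} \equiv -L$), so that one process is symmetric about $0$ (giving $\mathbb{P}(X_t \le 0) = \tfrac12$ exactly) while the other has full drift $\eta L/n$; it then restricts to a window of roughly $0.8\,Dn/(\eta L)$ final steps, compares to a process restarted from the worst point $-D/2$, and controls the noise by Doob's maximal inequality. Your symmetric instance (drifts $\mp\eta L/(2n)$) gives $\alpha=\beta$ by reflection, after which Lindley's identity and Ville's inequality on $e^{\theta \tilde S_m}$ bound a single quantity, with uniformity in $t$ coming for free from Ville rather than from the restart trick. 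Both routes produce $p_* \le e^{-\Theta(\mu^2/c_1^2)}$ and reduce to the same elementary constant-chasing.

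For part (b) your sketch has a real gap. With $\theta = L/(n\eta\sigma^2)$ the process $e^{\theta \tilde S_m}$ is a (super)martingale \emph{only} along the epoch subsequence $m \in l\mathbb{Z}$: at each driftless step its conditional expectation picks up a factor $e^{\theta^2\eta^2\sigma^2/2} > 1$. Hence Ville controls $\max_k \tilde S_{lk}$ but not $\max_m \tilde S_m$. Your ``short suffix'' term covers only the first $\le l$ steps of the reversed walk; the intra-epoch excursions $\tilde S_m - \tilde S_{l\lfloor m/l\rfloor}$ at \emph{every subsequent} epoch are uncontrolled, and a union bound over them costs a factor of $E$, destroying the required uniformity in $E$. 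A clean repair: use the compensated martingale $e^{\theta \tilde S_m - g(m)}$ where $g(m)$ is the running log-MGF of the increments, so that $0 \le g(m) \le g^* := (l-1)\theta^2\eta^2\sigma^2/2$ whenever $\theta \le L/(n\eta\sigma^2)$; Ville then gives $\mathbb{P}(\max_m \tilde S_m \ge D/2) \le \exp(-\theta D/2 + g^*)$, and optimizing $\theta$ over $(0, L/(n\eta\sigma^2)]$ yields an exponent $\ge \mu^2/(16c_1^2)$ using only $Db/(\eta L) \ge \tfrac12$. With this patch the rest of your scheme goes through. The paper sidesteps the whole issue via its window trick: by looking at only $\lceil 0.8\,Db/(\eta L)\rceil$ epochs, the number of steps in play is bounded independently of $E$.
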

\begin{proof}
    \begin{itemize}
        \item[(a)] For $\ngd$, let $\mu_0 = \tfrac{\mu}{c_1} = \tfrac{1}{\sigma}\sqrt{\tfrac{LD}{\eta n}}$. Consider $d = 1$ and loss functions such that $\nabla f_i(x) = 0$ for all $i \in [n]$, $\nabla f'_i(x) = 0$ for all $i \neq i^*$ and $\nabla f'_{i^*}(x) = -L$.\footnote{For general $d > 1$, a similar argument (with slightly different constants) can be made by considering $\nabla f'_{i^*}(x) = -Le_1$ and $\cK = [-\Theta(D), \Theta(D)] \times [-\Theta(D/\sqrt{d-1}), \Theta(D/\sqrt{d-1})]^{d-1}$ (constant factors chosen such that $\cK$ has diameter $D$).} Also, let $X_0 = 0$ and $\mathcal{K} = [-\tfrac{D}{2}, \tfrac{D}{2}]$. Note that by \autoref{lem:fdptoapproxdp}, a $\mu$-GDP algorithm is $(\mu^2, \Phi(-\tfrac{\mu}{2}))$-DP. We will show that for $E = [-\tfrac{D}{2}, 0]$,
    \begin{align*}
        \mathbb{P}(X_t \in E) &= \frac{1}{2} \\
        \mathbb{P}(X'_t \in E) &< \exp(-\mu^2)(\frac{1}{2}-\Phi(-\frac{\mu}{2}))
    \end{align*}
    which implies that $\ngd$ is not $(\mu^2, \Phi(-\tfrac{\mu}{2}))$-DP and thus $\ngd$ is not $\mu$-GDP. First, recall that
    \begin{align*}
        X_{k+1} &= \Pi_{\cK}(X_k + Z_{k+1}) \\
        X'_{k+1} &= \Pi_{\cK}(X'_k + \frac{\eta L}{n} + Z'_{k+1})
    \end{align*}
    where $Z_{k+1}, Z'_{k+1} \sim \cN(0, \eta^2 \sigma^2)$. Since the distribution of $X_k$ is symmetric for all $k$, $\mathbb{P}(X_t \in E) = \tfrac{1}{2}$. On the other hand, for $t_0 = t - \lceil 0.8\tfrac{Dn}{\eta L} \rceil + 1$ consider a process $\{X''_k\}_{t_0 \leq k \leq t}$ such that $X''_{t_0} = -\tfrac{D}{2}$ and
    \[X''_{k+1} = \min\{X''_k + \frac{\eta L}{n} + Z'_{k+1}, \frac{D}{2}\}\,.
    \]
    Then inductively, $\mathbb{P}(X'_k \leq z) \leq \mathbb{P}(X''_k \leq z)$ for all $z$. Letting $E_0 = \{\max_{t_0 \leq k \leq t} \sum_{j=t_0}^k Z_j \leq 0.1D\}$, by Doob's submartingale inequality we have
    \[\mathbb{P}(E_0^c) \leq \exp(-\frac{(0.1D)^2}{2 \times \lceil 0.8 \frac{Dn}{\eta L} \rceil \times (\eta \sigma)^2}) \leq \exp(-\frac{0.01 LD}{5.6 n \eta \sigma^2}) = \exp(-\frac{0.01}{5.6}\mu_0^2)\,.
    \]
    Also, conditioning on $E_0$, $X''_t = -\tfrac{D}{2} + \tfrac{\eta L}{n} \times \lceil 0.8 \tfrac{Dn}{\eta L} \rceil + \sum_{j=t_0}^{t} Z_j \geq 0.3D + \sum_{j = t_0}^t Z_j$. Thus
    \begin{align*}
        \mathbb{P}(X'_t \notin E) &\geq \mathbb{P}(X''_{t} > 0) \\
        &\geq \mathbb{P}(\{X''_{t} > 0\} \cap E_0) \\
        &\geq \mathbb{P}(\{0.3D + \sum_{j=t_0}^t Z_j > 0\} \cap E_0) \\
        &\geq \mathbb{P}(0.3D + \sum_{j=t_0}^t Z_j > 0) - \exp(-\frac{0.01}{5.6}\mu_0^2) \\
        &\geq \Phi(\frac{0.3}{\sqrt{2.8}}\mu_0) - \exp(-\frac{0.01}{5.6}\mu_0^2) \\
        &\geq 1 - \exp(-\frac{0.9}{5.6}\mu_0^2) - \exp(-\frac{0.01}{5.6}\mu_0^2)
    \end{align*}
    where the penultimate inequality is from that $\sum_{j=t_0}^t Z_j$ is a mean zero Gaussian with variance $\lceil 0.8 \tfrac{Dn}{\eta L} \rceil \eta^2 \sigma^2 \leq \tfrac{2.8Dn\eta \sigma^2}{L} = \tfrac{2.8D^2}{\mu_0^2}$, and the last inequality is from $\Phi(x) \geq 1 - \exp(-\tfrac{1}{2}x^2)$ for all $x \geq \tfrac{1}{\sqrt{2\pi}}$ (with $0.3\mu_0 / \sqrt{2.8} \geq 0.3 /\sqrt{2.8 c_0} \geq 1/\sqrt{2\pi}$). By taking sufficiently small $c_1 < \sqrt{\tfrac{0.01}{5.6}}$ and $c_0 < c_1^2$ such that 
    \[\exp(-\frac{0.9}{5.6}\mu_0^2) + \exp(-\frac{0.01}{5.6}\mu_0^2) \leq \exp(-c_1^2 \mu_0^2) (\frac{1}{2} - \Phi(-\frac{1}{2}))
    \] 
    for all $\mu_0^2 \geq 1/c_0$, we have
    \begin{align*}
    \exp(-\mu^2)(\frac{1}{2}-\Phi(-\frac{\mu}{2})) &= \exp(-c_1^2 \mu_0^2)(\frac{1}{2}-\Phi(-\frac{c_1\mu_0}{2})) \\
    &\geq \exp(-c_1^2 \mu_0^2)(\frac{1}{2}-\Phi(-\frac{c_1}{2\sqrt{c_0}})) \\
    &> \exp(-c_1^2 \mu_0^2) (\frac{1}{2} - \Phi(-\frac{1}{2})) \\
    &\geq \exp(-\frac{0.9}{5.6}\mu_0^2) + \exp(-\frac{0.01}{5.6}\mu_0^2) \\
    &\geq \mathbb{P}(X'_t \in E)
    \end{align*}
    as desired.
        \item[(b)]     The proof for $\nmgd$ is similar to that for $\ngd$ (recall that $t = lE$ and $n = lb$); consider the same loss functions, initialization, constraint set with $i^* \in B_l$. Then
    \begin{align*}
        X_{k+1} &= \Pi_{\cK}(X_k + Z_{k+1}) \\
        X'_{k+1} &= \Pi_{\cK}(X'_k + \frac{\eta L}{b}\bm{1}_{\{r(k) = l\}} + Z'_{k+1})
    \end{align*}
    where $r(k) = k+1 - l \lfloor \tfrac{k}{l} \rfloor$. For $t_0 = l(E - \lceil 0.8\tfrac{Db}{\eta L} \rceil) + 1$, consider a process $\{X''_k\}_{t_0 \leq k \leq t}$ such that $X''_{t_0} = -\tfrac{D}{2}$ and
    \[X''_{k+1} = \min\{X''_k + \frac{\eta L}{b}\bm{1}_{\{r(k) = l\}} + Z'_{k+1}, \frac{D}{2}\}\,.
    \]
    Then with the same events $E$ and $E_0$, $\mathbb{P}(X_t \in E) = 1/2$ and 
    \begin{align*}
        \mathbb{P}(X'_k \leq z) &\leq \mathbb{P}(X''_k \leq z) \text{ for all } z \\
        \mathbb{P}(E_0^c) &\leq \exp(-\frac{(0.1D)^2}{2 \times l\lceil 0.8 \frac{Db}{\eta L} \rceil \times (\eta \sigma)^2}) \leq \exp(-\frac{0.01}{5.6}\mu_0^2)
    \end{align*}
    and conditioning on $E_0$, $X''_t = -\tfrac{D}{2} + \tfrac{\eta L}{b} \times \lceil 0.8\tfrac{Db}{\eta L} \rceil \geq 0.3D + \sum_{j=t_0}^t Z_j$; the rest are identical.
    \end{itemize}
\end{proof}

\begin{theorem}
    In the setting of \autoref{thm:mgd-sc}, any valid $f$-DP lower bound for $\nmgd$ satisfies
    \[G(\mu) \geq f
    \]
    where $\mu = \frac{L}{b\sigma}\sqrt{\frac{1-c^{lE}}{1 + c^{lE}}\frac{1-c^2}{(1-c^l)^2}}$.
\end{theorem}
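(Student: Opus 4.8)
The plan is to exhibit a single pair of adjacent loss families --- convex quadratics perturbed in one coordinate --- for which every iterate of $\nmgd$ is an explicit Gaussian, compute $T(X_t, X'_t)$ in closed form, recognize it as $G(\mu)$, and then invoke the trivial fact that any $f$ for which $\nmgd$ is $f$-DP must satisfy $f \le T(X_t, X'_t)$ on this particular instance; hence $G(\mu) \ge f$. This is the strategy behind the equality cases of \autoref{thm:cni-sc} and \autoref{thm:gd-sc}.

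Concretely, I would take $\cK = \R^d$, $X_0 = X'_0 = 0$, fix a unit vector $v$ and a constant $a \in [m,M]$ with $|1-\eta a| = c$ (namely $a = m$ when $\eta \le 2/(M+m)$, where $c = 1-\eta m > 0$), and set $f_i(x) = \tfrac a2\norm{x}^2$ for all $i$, while in the adjacent family only $f_{i^*}$ is replaced by $f'_{i^*}(x) = \tfrac a2\norm{x}^2 - L\langle v,x\rangle$, with $i^*$ placed in the last batch $B_l$ of the cyclic schedule. This family is $m$-strongly convex, $M$-smooth and has gradient sensitivity $L$, and writing $\beta := 1-\eta a$ (so $\beta^2 = c^2$), by \autoref{prop:mgd} the two runs of $\nmgd$ become the linear recursions $X_{k+1} = \beta X_k + Z_{k+1}$ and $X'_{k+1} = \beta X'_k + Z_{k+1}$, except that for the primed process a fixed drift $\tfrac{\eta L}{b}v$ is added at the one step of each epoch that uses $B_l$, with $Z_{k+1} \sim \cN(0,\eta^2\sigma^2 I_d)$.

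Unrolling, $X_t = \sum_{k=1}^{t}\beta^{t-k}Z_k$, so $X_t \sim \cN(0,\Sigma)$ and $X'_t \sim \cN(m_0 v,\Sigma)$ with common isotropic covariance $\Sigma = \eta^2\sigma^2\tfrac{1-c^{2lE}}{1-c^2}I_d$ (using $\beta^2 = c^2$ and $t = lE$) and drift magnitude $m_0 = \tfrac{\eta L}{b}\sum_{e=1}^{E}\beta^{l(E-e)} = \tfrac{\eta L}{b}\cdot\tfrac{1-\beta^{lE}}{1-\beta^l}$, since the $\tfrac{\eta L}{b}v$ injected at the $B_l$-step of epoch $e$ is damped by $\beta^{l(E-e)}$ before reaching step $t$. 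By \autoref{lem:various}(c), $T(X_t, X'_t) = G\big(|m_0|/\sqrt{\Sigma_{11}}\big)$; substituting $m_0 = \tfrac{\eta L}{b}\tfrac{1-c^{lE}}{1-c^l}$ (valid when $\beta = c$) and $\Sigma_{11} = \eta^2\sigma^2\tfrac{1-c^{2lE}}{1-c^2}$, and simplifying with $1-c^{2lE} = (1-c^{lE})(1+c^{lE})$, collapses this to exactly $G(\mu)$ with $\mu = \tfrac{L}{b\sigma}\sqrt{\tfrac{1-c^{lE}}{1+c^{lE}}\tfrac{1-c^2}{(1-c^l)^2}}$, which proves the theorem.

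The only part with real content is the bookkeeping for $m_0$: tracking that the lone perturbation in $f'_{i^*}$ is felt exactly once per epoch (at the $B_l$-step) and then geometrically damped over the remaining steps of the fixed schedule, which is what produces the factor $\tfrac{1-c^{lE}}{1-c^l}$ and, after dividing by the standard deviation, the claimed $\mu$. A secondary subtlety is the sign of the contraction factor: for $\eta \le 2/(M+m)$ one realizes $c$ with the positive factor $\beta = 1-\eta m$ and the computation above is exact, whereas for $\eta > 2/(M+m)$ one is forced to $\beta = 1-\eta M < 0$, in which case only $\beta^2 = c^2$ enters $\Sigma$ while $|m_0| = \tfrac{\eta L}{b}\tfrac{|1-\beta^{lE}|}{|1-\beta^l|}$ must be re-examined (it still coincides with the displayed value e.g.\ when $l$ is even), so the cleanly matching regime is $\eta \le 2/(M+m)$, consistent with the optimality claims elsewhere in the paper. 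Checking that the constructed losses lie in the stated class, and the one-line monotonicity/post-processing argument that no valid lower bound can exceed a specific instance's tradeoff function, are routine.
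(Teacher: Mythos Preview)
Your proposal is correct and follows essentially the same approach as the paper: construct the quadratic instance with $\nabla f_i(x)=mx$ and $\nabla f'_{i^*}(x)=mx-Lv$, place $i^*\in B_l$, compute the two Gaussians explicitly, and read off $\mu$. You go further than the paper by spelling out the geometric-sum bookkeeping for the drift and by flagging the sign issue when $\eta>2/(M+m)$; the paper's proof is a two-line ``direct calculation'' that implicitly sits in the regime $\beta=1-\eta m=c>0$.
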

\begin{proof}
    Consider the loss functions in the proof of \autoref{thm:gd-sc}, with $X_0 = 0, \mathcal{K} = \R^d$ and $i^* \in B_l$. By direct calculation $X_{lE} = \cN(0, \frac{1-c^{2lE}}{1-c^2} \eta^2 \sigma^2 I_d)$ and $X'_{lE} = \cN(\frac{\eta L}{b} \frac{1-c^{lE}}{1-c^l}v, \frac{1-c^{2lE}}{1-c^2} \eta^2 \sigma^2 I_d)$, implying $T(X_{lE}, X'_{lE}) = G(\mu)$ with $\mu$ as stated.
\end{proof}

\section{Numerical details and results}\label{app:num}
In this section, we provide numerical details of the figures and experiments in the main text and additional numerical results for different algorithms.  Code reproducing these numerics can be found here: \url{https://github.com/jinhobok/shifted_interpolation_dp}.

\subsection{Details for Figure~\ref{fig:fdp-opt}}

In \autoref{fig:fdp-opt}, we consider 1-strongly convex and 10-smooth loss functions with learning rate $\eta = 0.05$, effective sensitivity $L / (n\sigma) = 0.1$, and $t \in \{10, 20, 40, 80, 160\}$, with $t = 160$ in the left figure and $\delta = 10^{-5}$ in the right figure. Our $f$-DP bound is from \autoref{thm:gd-sc}, our RDP bound is from \autoref{thm:gd-sc} and \autoref{lem:fdptordp}, the prior RDP bound is from \cite[Theorem D.6]{ys22}, and the composition bound is from \autoref{thm:gd}. For conversion from GDP and RDP to $(\ep, \delta)$-DP, see \autoref{app:rdp} and \autoref{subsec:tradeoff}. We emphasize that different choices of parameters lead to qualitatively similar plots; see~\autoref{subsec:addnum} for further numerical comparisons in other settings.

\subsection{Details for \S\ref{sec:example}}\label{subsec:example}
Here we provide further numerical details for the experiment in~\autoref{sec:example}.
The purpose of this simple numerical example is to corroborate our theoretical findings by comparing them with existing privacy bounds. As such, we simply compare algorithms with the same hyperparameters, and do not attempt to optimize these choices for individual algorithms. 

In~\autoref{sec:example},~\autoref{tab:lr-ep} shows that our results provide improved privacy bounds. That table considers the privacy leakage of $\nmgd$ in $(\eps,\delta)$-DP with regularization parameter $\lambda = 0.002$. \autoref{tab:lr-mu-long} and \autoref{tab:lr-ep-long} provide more details on this numerical comparison by also considering another algorithm ($\nsgd$), another notion of privacy leakage (GDP), and another parameter ($\lambda = 0.004$). Details on these tables: for the GDP Composition privacy bound on $\nsgd$, we present the approximate value of the GDP parameter provided by CLT since this is computationally tractable; for $(\ep, \delta)$-DP we compute the corresponding $\ep$ to an error of $10^{-3}$ using the numerical procedure in \autoref{app:convsubgdp}; and we convert the currently known best RDP bounds provided by \cite[Theorem 3.3]{ys22} to $(\ep, \delta)$-DP using the numerical procedure in \autoref{app:rdp}.

\begin{table}[H]
\centering
\caption{
   More detailed version of~\autoref{tab:lr-ep}, for GDP. Lists the GDP parameters of private algorithms for the regularized logistic regression problem.  Note that GDP Composition yields the same privacy bound regardless of the regularization parameter. Our results provide improved privacy.
}
\label{tab:lr-mu-long}
\vskip 0.15in
\begin{tabular}{c|cc|cc}
\toprule
Epochs & \multicolumn{2}{c|}{GDP Composition} & \multicolumn{2}{c}{Our Bounds} \\
 \midrule
Algorithms & $\nmgd$ & $\nsgd$ & \multicolumn{2}{c}{$\nmgd$}\\
$\lambda$ & \multicolumn{2}{c|}{$\{0.002, 0.004\}$} & 0.002 & 0.004 \\
\midrule
50 & 4.71 & 1.03 & 0.99 & 0.99 \\
100 & 6.67 & 1.45 & 1.24 & 1.22 \\
200 & 9.43 & 2.05 & 1.59 & 1.51 \\
\bottomrule
\end{tabular}
\vskip -0.1in
\end{table}

\begin{table}[H]
\centering
\caption{More detailed version of~\autoref{tab:lr-ep}, for $(\eps,\del)$-DP. Lists $\ep$ of private algorithms on the regularized logistic regression problem for $\delta = 10^{-5}$. Note that GDP Composition yields the same privacy bound regardless of $\lambda$. Our results provide improved privacy over both GDP Composition and RDP.}
\label{tab:lr-ep-long}
\vskip 0.15in
\begin{tabular}{c|cc|cc|cc}
\toprule
Epochs & \multicolumn{2}{c|}{GDP Composition} & \multicolumn{2}{c|}{RDP} & \multicolumn{2}{c}{Our Bounds} \\
 \midrule
Algorithms & \multicolumn{1}{c}{$\nmgd$} & \multicolumn{1}{c|}{$\nsgd$} & \multicolumn{2}{c|}{$\nmgd$} & \multicolumn{2}{c}{$\nmgd$} \\
$\lambda$ & \multicolumn{2}{c|}{$\{0.002, 0.004\}$} & 0.002 & 0.004 & 0.002 & 0.004 \\
\midrule
50 & 30.51 & 4.44 & 5.82 & 5.61 & 4.34 & 4.32 \\
100 & 49.88 & 6.65 & 7.61 & 7.00 & 5.60 & 5.51 \\
200 & 83.83 & 10.11 & 9.88 & 8.38 & 7.58 & 7.09 \\
\bottomrule
\end{tabular}
\vskip -0.1in
\end{table}

These tables show that compared to our results (\autoref{thm:mgd-sc}), the standard GDP Composition bound for $\nmgd$ (\autoref{thm:mgd}) provides essentially no privacy. This is because that standard bound incurs a large privacy loss in each epoch (at the step in which the adjacent datasets use different gradients), and this privacy leakage accumulates indefinitely---whereas our analysis captures the contractivity of the algorithm's updates, which effectively ensures that previous gradient queries leak less privacy the longer ago they were performed. 
See \autoref{sec:tech} for a further discussion of this. Combined with the lossless conversion enabled by our $f$-DP analysis, our results also provide better privacy than the state-of-the-art RDP bounds.

\autoref{tab:lr-train} and \autoref{tab:lr-test} (reporting (mean) $\pm$ (standard deviation) of accuracies over 10 runs) show that (1) $\ncgd$ and $\nsgd$ have comparable training and test accuracy for this problem, and (2) both algorithms improve when run longer, thus necessitating better privacy guarantees in order to achieve a target error (for either training or test) given a fixed privacy budget. Note that while $\nsgd$ enjoys better privacy bounds than $\ncgd$ using the standard GDP Composition argument, our new privacy guarantees for $\nmgd$ improve over GDP Composition bounds for both algorithms (c.f., \autoref{tab:lr-mu-long} and \autoref{tab:lr-ep-long}). In particular, observe that while running algorithms longer leads to better accuracy, the privacy leak in $\nsgd$ from GDP Composition grows faster relative to our results (e.g., compare the values of $\ep$ when $E = 50$ and $E = 200$). This highlights the convergent dynamics of our privacy bounds and exemplifies how this enables algorithms to be run longer while preserving privacy.

\begin{table}[H]
\centering
\caption{More detailed version of~\autoref{tab:lr-acc}. Lists \emph{training} accuracy (\%) of $\nmgd$ and $\nsgd$ for regularized logistic regression. Note that both algorithms perform similarly and improve when run longer.}
\label{tab:lr-train}
\vskip 0.15in
\begin{tabular}{c|cc|cc}
\toprule
Epochs & \multicolumn{2}{c|}{$\nmgd$} & \multicolumn{2}{c}{$\nsgd$} \\ \midrule
$\lambda$ & 0.002 & 0.004 & 0.002 & 0.004 \\
\midrule
50 & 89.36 $\pm$ 0.03 & 89.23 $\pm$ 0.02 & 89.36 $\pm$ 0.04 & 89.22 $\pm$ 0.04  \\
100 & 90.24 $\pm$ 0.03 & 90.00 $\pm$ 0.03 & 90.25 $\pm$ 0.02 & 89.99 $\pm$ 0.03  \\
200 & 90.85 $\pm$ 0.02 & 90.39 $\pm$ 0.04 & 90.84 $\pm$ 0.03 & 90.37 $\pm$ 0.02  \\
\bottomrule
\end{tabular}
\vskip -0.1in
\end{table}

\begin{table}[H]
\centering
\caption{More detailed version of~\autoref{tab:lr-acc}. Lists \emph{test} accuracy (\%) of $\nmgd$ and $\nsgd$ for regularized logistic regression. Again, note that both algorithms perform similarly and improve when run longer.}
\label{tab:lr-test}
\vskip 0.15in
\begin{tabular}{c|cc|cc}
\toprule
Epochs & \multicolumn{2}{c|}{$\nmgd$} & \multicolumn{2}{c}{$\nsgd$} \\ \midrule
$\lambda$ & 0.002 & 0.004 & 0.002 & 0.004 \\
\midrule
50 & 90.12 $\pm$ 0.04 & 90.03 $\pm$ 0.07 & 90.12 $\pm$ 0.08 & 90.00 $\pm$ 0.06  \\
100 & 90.94 $\pm$ 0.07 & 90.70 $\pm$ 0.05 & 90.97 $\pm$ 0.04 & 90.75 $\pm$ 0.03  \\
200 & 91.37 $\pm$ 0.08 & 91.02 $\pm$ 0.07 & 91.40 $\pm$ 0.07 & 91.01 $\pm$ 0.04  \\
\bottomrule
\end{tabular}
\vskip -0.1in
\end{table}

For the experiment, we closely follow the setting considered in \cite{ys22}---for proofs and details on theoretical guarantees with respect to the setting, see \cite[Section 5]{ys22}. The MNIST dataset has $n = 60000$ training data points and 10000 test data points; for both $\nmgd$ and $\nsgd$, we set the parameters as $C = 8$, $\eta = 0.05$, $b = 1500$, $\sigma = 1/100$, $L = 10$, $E \in \{50, 100, 200\}$ and $\lambda \in \{0.002, 0.004\}$. 
First, we clip the feature so that it has norm $C$. For the loss function $l(\theta, (x, y))$ of the (unregularized) logistic regression, we calculate the gradient for each data point $(x, y)$ as
\[\nabla f(\theta, (x, y)) = \frac{\nabla l(\theta, (x, y))}{\norm{\nabla l(\theta, (x, y))}} \cdot \min\{\norm{\nabla l(\theta, (x, y))}, \frac{L}{2}\} + \lambda \theta\,.
\]
In other words, we first clip the gradient by $L/2$ so that the gradient sensitivity is $L$, and add a gradient $\lambda \theta$ of the regularization term $(\lambda/2)\norm{\theta}^2$ (which does not affect the gradient sensitivity).

\subsection{Additional numerics}\label{subsec:addnum}

Here we provide additional numerical results to illustrate our privacy bounds in \autoref{sec:bounds}, by comparing our $f$-DP bounds with the counterparts derived by the standard GDP Composition analysis. We cover the settings and algorithms covered in the main text over a broad range of parameters, emphasizing the convergent dynamics of our privacy bounds. The different settings lead to qualitatively similar comparisons. Recall that the relevant parameters of the algorithms are the learning rate $\eta$, noise rate $\sigma$, number of data points $n$, batch size $b$, gradient sensitivity $L$, and diameter $D$ of the constraint set $\cK$; see \autoref{ssec:prelim:algs}.

\subsubsection{$\ngd$}\autoref{fig:ngd-fdp-approxdp-sc} shows our results for $f$-DP (left) and its conversion into $(\ep, \delta)$-DP (right) for $\ngd$ in the strongly convex setting (\autoref{thm:gd-sc}), where our bound is exact. In contrast, observe that while the bound from GDP Composition is nearly tight for a small number of iterations $t$, the guarantee becomes vacuous as $t$ increases. This is also evident from the $(\ep, \delta)$-DP plot, where the discrepancy between the two bounds increases in $t$.

In \autoref{tab:ngd-fdp-sc}, since we obtain GDP bounds, we provide the GDP parameter $\mu$ as a function of the number of iterations $t$ and the contractivity  $c = \max\{|1 - \eta m|, |1 - \eta M|\}$. All values in \autoref{tab:ngd-fdp-sc} scale linearly in the effective sensitivity $L/(n\sigma)$; for simplicity we set it to $0.1$. Note that the GDP Composition bound is independent of $c$ because it is not ``geometrically aware'' in the sense described in~\autoref{sec:tech}. Our bound is optimal and always improves over GDP composition---substantially so as $t$ increases.

\begin{figure}[H]
    \centering
    \includegraphics[width=0.9\textwidth]{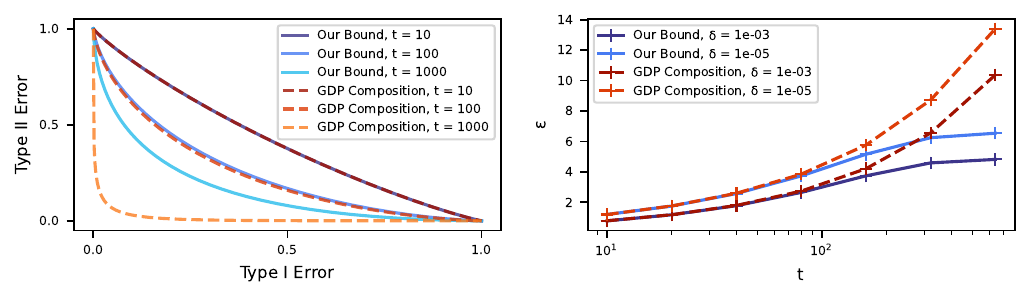}
    \caption{Comparison of our exact privacy characterization (\autoref{thm:gd-sc}) with the standard GDP Composition bound (\autoref{thm:gd}) for $\ngd$, for $c = 0.99$. Shown for $f$-DP (left) and $(\ep, \delta)$-DP (right).}
    \label{fig:ngd-fdp-approxdp-sc}
\end{figure}

\begin{table}[H]
\centering
\caption{GDP parameter $\mu$ from our exact privacy characterization (\autoref{thm:gd-sc}), for varying $t$ and $c$.}

\label{tab:ngd-fdp-sc}
\vskip 0.15in
\begin{tabular}{c|c|ccccc}
\toprule
Steps & \multicolumn{1}{c|}{GDP Composition} & \multicolumn{5}{c}{Our Bounds} \\
\midrule
$c$ & $\{0.92, 0.96, 0.98, 0.99, 0.995\}$ & 0.92 & 0.96 & 0.98 & 0.99 & 0.995 \\
\midrule
10 & 0.316  & 0.308 & 0.314 & 0.316 & 0.316 & 0.316\\
100 & 1.000  & 0.490 & 0.688 & 0.871 & 0.961 & 0.990\\
1000 & 3.162 & 0.490 & 0.700 & 0.995 & 1.411 & 1.984 \\
\bottomrule
\end{tabular}
\vskip -0.1in
\end{table}

\autoref{fig:ngd-fdp-approxdp-proj} and \autoref{tab:ngd-fdp-proj} turn to the setting of constrained convex optimization in \autoref{thm:gd-proj}. In the $(\ep, \delta)$-DP figure, we plot the minimum $\ep$ between \autoref{thm:gd-proj} and GDP Composition. A distinctive feature from both plots is that our privacy bound stays constant after a number of iterations, compared to GDP Composition. In particular, there exists a threshold $t^* = t^*(L/n, \eta)$ such that the algorithm can run beyond the threshold (and even indefinitely) with a provable guarantee of $\mu^*$-GDP. To highlight this fact, we provide the pairs of $(t^*, \mu^*)$ in the table over multiple combinations of parameters. We set the diameter of the constraint set $\cK$ to be $D = 1$ and noise parameter to be $\sigma = 8$; note that as in the previous case, the GDP parameters in this setting scale linearly with respect to $1/\sigma$. Other parameter choices lead to qualitatively similar comparisons. 

\begin{figure}[H]
    \centering
    \includegraphics[width=0.9\textwidth]{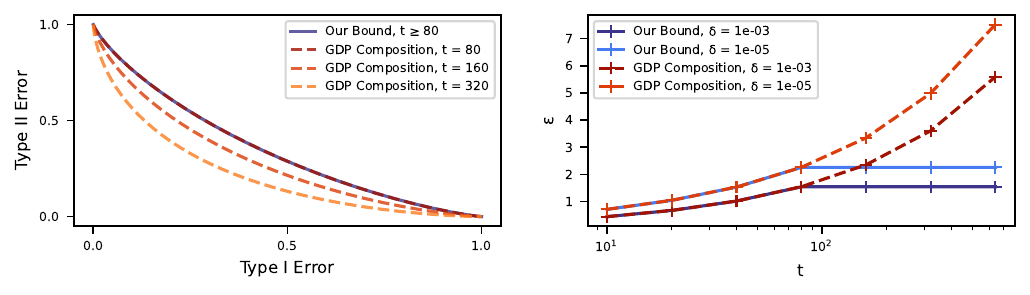}
    \caption{Comparison of our bound (\autoref{thm:gd-proj}) with the standard GDP Composition bound (\autoref{thm:gd}) for $\ngd$, for $L/n = 0.5$ and $\eta = 0.1$. Shown for $f$-DP (left) and $(\ep, \delta)$-DP (right).}
    \label{fig:ngd-fdp-approxdp-proj}
\end{figure}

\begin{table}[H]
\centering
\caption{Threshold number of iterations $t^*$ at which point our GDP bound $\mu^*$ from~\autoref{thm:gd-proj} no longer increases. Shown for varying parameters $L/n$ and $\eta$.}
\label{tab:ngd-fdp-proj}
\vskip 0.15in
\begin{tabular}{c|ccc}
\toprule
$L/n \setminus \eta$ & 0.2 & 0.1 & 0.05 \\
\midrule
0.25 & (80, 0.280) & (160, 0.395) & (320, 0.559) \\
0.5 & (40, 0.395) & (80, 0.559) & (160, 0.791) \\
1 & (20, 0.559) & (40, 0.791) & (80, 1.118) \\
\bottomrule
\end{tabular}
\vskip -0.1in
\end{table}

\subsubsection{$\nmgd$} \autoref{fig:ncgd-fdp-approxdp-sc} and \autoref{tab:ncgd-fdp-sc} show the analog of \autoref{fig:ngd-fdp-approxdp-sc} and \autoref{tab:ngd-fdp-sc}, now for $\nmgd$ rather than $\ngd$. Recall that $l$ denotes the number of batches and $c = \max\{|1 - \eta m|, |1 - \eta M|\}$ is the contraction factor. All GDP parameters scale linearly in the effective sensitivity $L/(b\sigma)$; we set it to $0.2$ for concreteness. The improvement of our bounds over the standard GDP Composition bound is pronounced: our bounds yield strong privacy in both $f$-DP (left) and $(\ep, \delta)$-DP (right), whereas the GDP Composition bound becomes effectively non-private as the number of epochs $E$ increases. This is also evident from \autoref{tab:ncgd-fdp-sc}, where our bound produces better privacy (even with $E=500$ epochs) than GDP Composition (even with $E = 5$). 

\begin{figure}[H]
    \centering
    \includegraphics[width=0.9\textwidth]{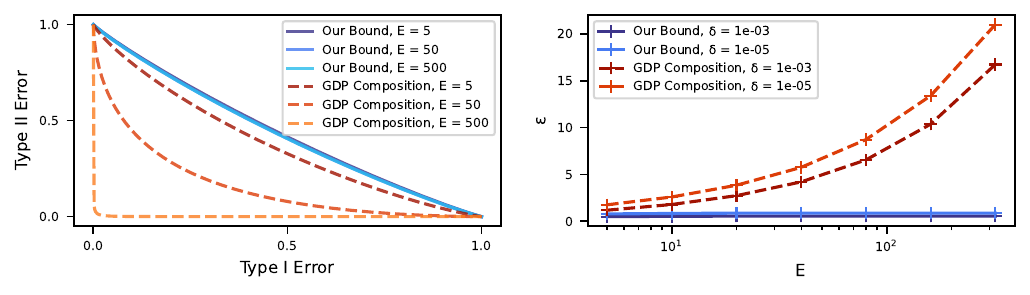}
    \caption{(Left) $f$-DP, (Right) $(\ep, \delta)$-DP comparison of our bound (\autoref{thm:mgd-sc}) with the standard GDP Composition bound (\autoref{thm:mgd}) for $\nmgd$, for $c = 0.99$ and $l = 20$.}
    \label{fig:ncgd-fdp-approxdp-sc}
\end{figure}

\begin{table}[H]
\centering
\caption{GDP parameter $\mu$, for varying number of epochs $E$ and contractivity $c$.}
\label{tab:ncgd-fdp-sc}
\vskip 0.15in
\begin{tabular}{c|c|ccc|ccc|ccc}
\toprule
Epochs & GDP Composition & \multicolumn{9}{c}{Our Bounds} \\
 \midrule
$l$ & \multicolumn{1}{c|}{$\{10, 20, 40\}$} & \multicolumn{3}{c|}{10} & \multicolumn{3}{c|}{20} & \multicolumn{3}{c}{40} \\
$c$ & \multicolumn{1}{c|}{$\{0.98, 0.99, 0.995\}$} & 0.98 & 0.99 & 0.995 & 0.98 & 0.99 & 0.995 & 0.98 & 0.99 & 0.995\\
\midrule
5 & 0.447  & 0.229 & 0.233 & 0.235 & 0.211 & 0.215 & 0.217 & 0.202 & 0.205 & 0.208\\
50 & 1.414 & 0.270 & 0.334 & 0.410 & 0.216 & 0.237 & 0.275 & 0.203 & 0.208 & 0.219 \\
500 & 4.472 & 0.270 & 0.336 & 0.439 & 0.216 & 0.237 & 0.276 & 0.203 & 0.208 & 0.219 \\
\bottomrule
\end{tabular}
\vskip -0.1in
\end{table}

Next we turn to the setting of constrained convex losses from~\autoref{thm:mgd-proj}. Again, our bounds converge in the number of epochs quickly and uniformly improve over the bounds from GDP Composition after only a few number of epochs---for the $(\ep, \delta)$-DP plot, we show the minimum $\ep$ between \autoref{thm:mgd-proj} and GDP Composition. In particular, from the table one can observe that there are even a few cases in which our bounds are better than those from GDP Composition after less than 10 epochs. We chose $D = 1$ and $\sigma = 3$; the GDP parameters scale linear in $1/\sigma$.

\begin{figure}[H]
    \centering
    \includegraphics[width=0.9\textwidth]{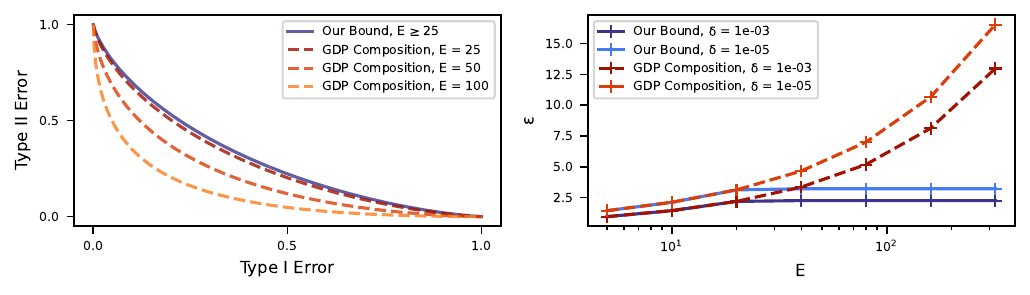}
    \caption{(Left) $f$-DP, (Right) $(\ep, \delta)$-DP comparison of our bound (\autoref{thm:mgd-proj}) with the existing bound from GDP Composition (\autoref{thm:mgd}) for $\nmgd$ under constrained set, with $\eta = 0.02, L/b = 0.5$ and $l = 20$.}
    \label{fig:ncgd-fdp-approxdp-proj}
\end{figure}

\begin{table}[H]
\centering
\caption{$(E^*, \mu^*)$ over different values of $(L/b, \eta)$, with $l = 10$.}
\label{tab:ngd-fdp-proj-10}
\vskip 0.15in
\begin{tabular}{c|ccc}
\toprule
$L/b \setminus \eta$ & 0.04 & 0.02 & 0.01 \\
\midrule
0.25 & (31, 0.534) & (62, 0.750) & (123, 1.057) \\
0.5 & (17, 0.764) & (33, 1.067) & (65, 1.500) \\
1 & (10, 1.106) & (20, 1.528) & (40, 2.134) \\
\bottomrule
\end{tabular}
\vskip -0.1in
\end{table}

\begin{table}[H]
\centering
\caption{$(E^*, \mu^*)$ over different values of $(L/b, \eta)$, with $l = 20$.}
\label{tab:ngd-fdp-proj-20}
\vskip 0.15in
\begin{tabular}{c|ccc}
\toprule
$L/b \setminus \eta$ & 0.04 & 0.02 & 0.01 \\
\midrule
0.25 & (16, 0.382) & (31, 0.534) & (62, 0.750) \\
0.5 & (9, 0.553) & (17, 0.764) & (33, 1.067) \\
1 & (5, 0.816) & (10, 1.106) & (20, 1.528) \\
\bottomrule
\end{tabular}
\vskip -0.1in
\end{table}

\begin{table}[H]
\centering
\caption{$(E^*, \mu^*)$ over different values of $(L/b, \eta)$, with $l = 40$.}
\label{tab:ngd-fdp-proj-40}
\vskip 0.15in
\begin{tabular}{c|ccc}
\toprule
$L/b \setminus \eta$ & 0.04 & 0.02 & 0.01 \\
\midrule
0.25 & (8, 0.276) & (16, 0.382) & (31, 0.534) \\
0.5 & (5, 0.408) & (9, 0.553) & (17, 0.764) \\
1 & (3, 0.624) & (5, 0.816) & (10, 1.106) \\
\bottomrule
\end{tabular}
\vskip -0.1in
\end{table}

\subsubsection{$\nsgd$} For brevity, here we consider just the setting of constrained convex losses; similar plots can be obtained for the strongly convex setting.~\autoref{fig:sgd-fdp-proj} compares our new privacy bound (\autoref{thm:sgd-proj}) with the standard GDP Composition bound (\autoref{thm:sgd}), by illustrating the $f$-DP tradeoff curves of both bounds for a broad range of parameters. For most parameter choices, our bounds provide reasonable privacy that is valid even beyond the number of iterations in the plots. On the other hand, the divergence of the GDP Composition bound  clearly degrades the privacy as the number of iterations increases.

\begin{figure}[H]
    \centering
    \includegraphics[width=0.9\textwidth]{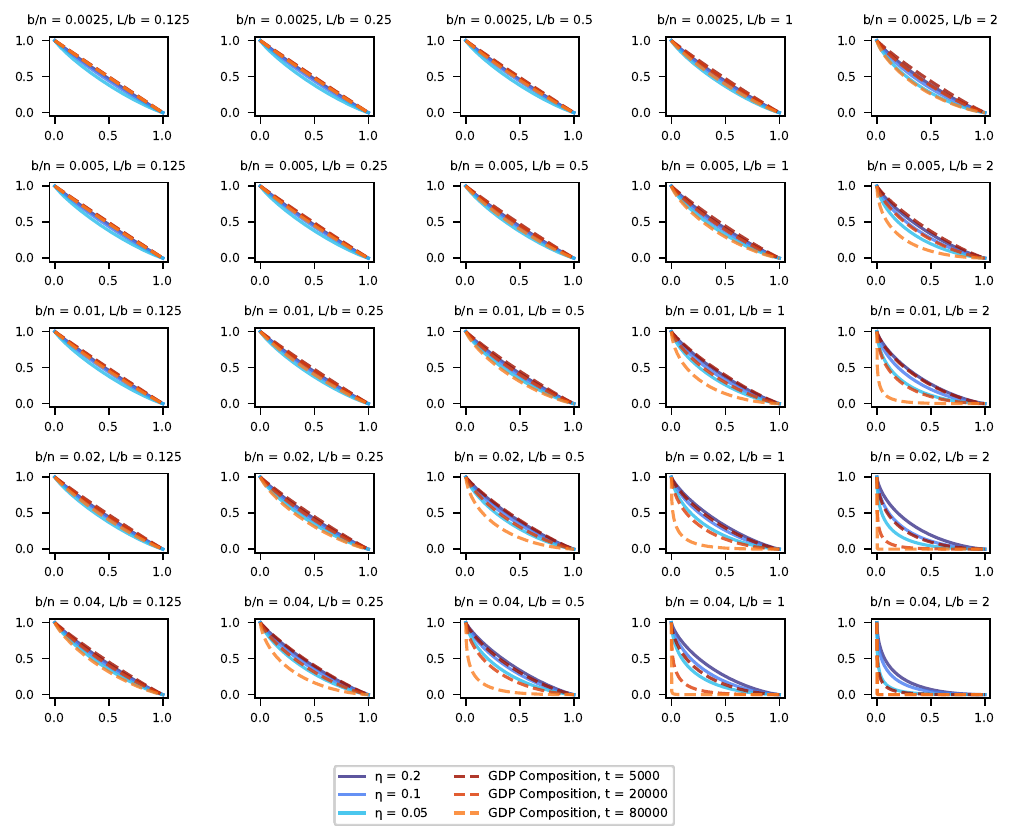}
    \caption{Comparison of our new privacy bound (\autoref{thm:sgd-proj}) with the standard GDP Composition bound (\autoref{thm:sgd}) for $\nsgd$ in the setting of constrained convex losses. Each subplot illustrates the $f$-DP tradeoff curve of these bounds, for a given relative batch size $b/n$ and effective sensitivity $L/b$.}
    \label{fig:sgd-fdp-proj}
\end{figure}

To make this figure, we approximated the compositions of $C_{b/n}(G(\cdot))$ in \autoref{thm:sgd-proj} by CLT, by choosing the best privacy bound among $t - \tau \in \{100, 200, \dots, 4900\}$ after applying CLT; this is a valid approximation by \autoref{lem:complim}. We also note that for improved numerical results, instead of the respective factors of $(\sqrt{2}, \sqrt{2})$ inside $G(\cdot)$ and $C_{b/n}(G(\cdot))$ of the statement of \autoref{thm:sgd-proj} we used $(\sqrt{10}, \sqrt{10}/3)$ (see the proof of \autoref{thm:sgd-proj} for details). For parameters unspecified in the plots we used $D = 1$ and $\sigma = 3$. 

\subsection{Comparison of privacy bounds for the exponential mechanism}\label{subsec:expmechnum}

Let $f_\ep$ be the tradeoff function corresponding to $(\ep, 0)$-DP. From \cite[Proposition 3]{drs22}, it is straightforward to check that $f_\ep \geq G(\mu)$ iff
\[e^\ep \leq \frac{1 - \Phi(-\frac{\mu}{2})}{\Phi(-\frac{\mu}{2})}\,.\]
Plugging in $\ep = 2LD$ and $\mu = 2\sqrt{LD}$ from \autoref{cor:expmechconvex}, one can check (using standard nonlinear equation solvers) that this holds iff $LD \leq c^*$ where $c^* \approx 0.676.$

\subsection{Numerical composition of subsampled GDP}\label{app:convsubgdp}

Here we mention details on the numerical procedure for calculating an $(\ep, \delta)$-DP bound from an $f$-DP bound of the form $f = C_p(G(\mu))^{\otimes t}$; this is used in \autoref{sec:example}. This formula appears in multiple settings of $\nsgd$, with each $C_p(G(\mu))$ representing the $f$-DP of subsampled Gaussian mechanism. This conversion process is important for both notions: First, while the composition can be approximated by CLT (\autoref{lem:clt}), in practice the approximation is not enough to guarantee whether the algorithm achieves a given privacy budget, typically expressed in $(\ep, \delta)$-DP. On the other hand, if one can obtain an accurate collection of different $(\ep, \delta)$-DP bounds, it can be converted into an $f$-DP bound of comparable accuracy due to the duality between the two notions \cite[Proposition 5 \& 6]{drs22}.

We implement the framework of \textit{privacy loss random variables} (PRV)---which is an equivalent notion of $f$-DP---and the corresponding analytical procedure provided in \cite{gopi2021numerical}. Given a fixed value of $\delta$ and (possibly different) compositions of private mechanisms, the algorithm presented in the paper allows one to numerically calculate $\ep$ with user-specified margin of error. We refer the readers to \cite{gopi2021numerical} for the background and overview of the PRV framework and only present relevant results for the problem of our interest.\footnote{We also note that while a corresponding result for the one-sided version of $G(\mu)_p = T(\cN(0, 1), p\cN(\mu, 1) + (1-p)\cN(0, 1))$ was already presented in \cite{gopi2021numerical} and is often interchangeably used, it is quantitatively different from $C_p(G(\mu))$, even in the limiting regime of CLT. Compare, for example, \autoref{lem:clt} and \cite{bdls20}.}

The privacy curve and PRV are characterized as follows \cite[Definition 2.1 \& 3.1]{gopi2021numerical}.

\begin{definition}[Privacy curve and PRV] Let $f = T(X, Y)$ be a tradeoff function. Then the \emph{privacy curve} $\delta: \R \to [0, 1]$ with respect to $(X, Y)$ is defined as $\delta(X||Y)(\ep) = \sup_{E}\{ \mathbb{P}(Y \in E) - e^{\ep} \mathbb{P}(X \in E)\}$ where the supermum is over all events. Conversely, given a privacy curve $\delta: \R \to [0, 1]$, $(X, Y)$ are \emph{privacy loss random variables} if the following holds.
\begin{itemize}
    \item $X, Y$ are supported on the extended real line $\bar{\R}$.
    \item $\delta(X||Y) \equiv \delta$.
    \item Let $X(t), Y(t)$ respectively be the probability density functions of $X, Y$. Then $Y(t) = e^tX(t)$ and $Y(-\infty) = X(\infty) = 0$. 
\end{itemize}
\end{definition}

The probability density functions of PRVs can be calculated from the privacy curve \cite[Theorem 3.3]{gopi2021numerical}.
\begin{lemma}[Conversion]\label{lem:pctoprv} Given a privacy curve $\delta: \R \to [0, 1]$, the probability density functions of its PRVs $(X, Y)$ are given as $Y(t) = \delta''(t) - \delta'(t)$ and $X(t) = e^t(\delta''(t) - \delta'(t))$.
\end{lemma}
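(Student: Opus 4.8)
The plan is to compute the privacy curve of a pair of privacy loss random variables $(X,Y)$ in closed form and then invert the relation by differentiating twice in $\ep$. First I would fix $\ep \in \R$ and evaluate the supremum in $\delta(X\|Y)(\ep) = \sup_E\{\Prob(Y \in E) - e^\ep\Prob(X \in E)\}$. Writing the objective as $\int_E (Y(t) - e^\ep X(t))\,dt$ — with the two atoms at $\pm\infty$ handled separately, noting that $Y$ carries no mass at $-\infty$ and $X$ carries no mass at $+\infty$, so an optimal $E$ contains $+\infty$ and omits $-\infty$ — and substituting the defining relation $Y(t) = e^t X(t)$, the integrand equals $(e^t - e^\ep)X(t)$, which is nonnegative exactly for $t \ge \ep$. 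Hence the optimal rejection region is the threshold set $E^\star = (\ep, +\infty]$ (this is Neyman--Pearson, using that the likelihood ratio $Y(t)/X(t) = e^t$ is monotone), and
\[
\delta(\ep) = \Prob(Y > \ep) - e^\ep\Prob(X > \ep) = \int_\ep^\infty \bigl(1 - e^{\ep - t}\bigr)\,Y(t)\,dt + \Prob(Y = +\infty).
\]

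Next I would differentiate this identity in $\ep$. At the lower endpoint the boundary term carries the factor $1 - e^{\ep - t}\big|_{t=\ep} = 0$, so it contributes nothing, giving $\delta'(\ep) = -\int_\ep^\infty e^{\ep - t} Y(t)\,dt = -e^\ep\Prob(X > \ep) \le 0$, consistent with $\delta$ being nonincreasing. Differentiating once more, the boundary term at $t = \ep$ now survives and equals $+Y(\ep)$, while the remaining integral reproduces $\delta'(\ep)$; thus $\delta''(\ep) = \delta'(\ep) + Y(\ep)$. Rearranging gives $Y(t) = \delta''(t) - \delta'(t)$, and then $X(t) = e^{-t}Y(t) = e^{-t}(\delta''(t) - \delta'(t))$ by the PRV relation $Y(t) = e^t X(t)$. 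To close the argument I would also verify the consistency conditions: nonnegativity of the densities amounts to $\delta'' \ge \delta'$, which is immediate from $\delta'(\ep) = -e^\ep\Prob(X > \ep)$ together with the formula just derived; and the residual mass $\Prob(Y = +\infty) = \lim_{\ep\to\infty}\delta(\ep)$ accounts for the difference between $1$ and $\int_\R Y(t)\,dt$, so that $(X,Y)$ defined this way are genuine PRVs for $\delta$.

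I expect the main obstacle to be analytic regularity rather than algebra: differentiating under the integral sign and asserting that $\delta$ is twice differentiable requires the smoothness/absolute-continuity hypotheses on the privacy curve that are built into the PRV framework of \cite{gopi2021numerical}; if one does not wish to assume differentiability a priori, the cleanest route is to keep the identity $\delta(\ep) = \E_{Y}[(1 - e^{\ep - t})_+] + \Prob(Y=+\infty)$ and read $Y(t) = \delta''(t) - \delta'(t)$ in the distributional sense. The only other delicate point is careful bookkeeping of the atoms at $\pm\infty$ when the PRVs live on the extended real line — in particular observing that the atom of $Y$ at $+\infty$ is constant in $\ep$ and therefore disappears upon differentiation, so it does not affect the formula for the density on $\R$.
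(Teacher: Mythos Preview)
The paper does not actually prove this lemma; it is quoted verbatim from \cite[Theorem~3.3]{gopi2021numerical}, so there is no ``paper's own proof'' to compare against. Your argument is the standard one and is correct: identify the Neyman--Pearson optimal event $E^\star=(\ep,\infty]$ using the monotone likelihood ratio $Y(t)/X(t)=e^t$, obtain $\delta(\ep)=\int_\ep^\infty(1-e^{\ep-t})Y(t)\,dt+\Prob(Y=+\infty)$, and differentiate twice.

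One point worth flagging: your derivation gives $X(t)=e^{-t}\bigl(\delta''(t)-\delta'(t)\bigr)$, whereas the lemma as stated in the paper has $X(t)=e^{t}\bigl(\delta''(t)-\delta'(t)\bigr)$. Your version is the one consistent with the PRV relation $Y(t)=e^tX(t)$ given two lines above in the paper's Definition, so the $e^t$ in the paper's statement of the lemma appears to be a typo for $e^{-t}$. Your proof is correct; the discrepancy is in the transcription of the cited result, not in your work.
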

Also, symmetric tradeoff functions have the simple form of PRVs $(X, Y)$ with $X = -Y$ \cite[Proposition D.9]{gopi2021numerical}.
\begin{lemma}[Symmetry]\label{lem:prvsymm} If $(X, Y)$ are PRVs for a privacy curve $\delta(P||Q)$, the PRVs for $\delta(Q||P)$ are $(-Y, -X)$. In particular, if the privacy curve is symmetric (i.e., $\delta(P||Q) = \delta(Q||P)$; equivalently, the corresponding tradeoff function is symmetric) then $X = -Y$.
\end{lemma}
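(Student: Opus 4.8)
The plan is to verify directly that the pair $(-Y,-X)$ meets the three defining conditions of privacy loss random variables for the privacy curve $\delta(Q\|P)$: support on $\bar{\R}$, the density relation, and $\delta(-Y\|-X)\equiv\delta(Q\|P)$. The first is immediate. For the density relation, write $X(\cdot),Y(\cdot)$ for the densities and reflect through $t\mapsto-t$: the relation $Y(t)=e^{t}X(t)$ becomes the statement that the density of $-X$ at $t$, which is $X(-t)$, equals $e^{t}Y(-t)$, i.e.\ $e^{t}$ times the density of $-Y$ at $t$; likewise the boundary conditions $Y(-\infty)=X(\infty)=0$ translate into exactly the conditions required of the pair $(-Y,-X)$. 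The only bookkeeping here is on the extended line: $X$ may place an atom at $-\infty$ and $Y$ one at $+\infty$, which under negation become atoms of $-X$ at $+\infty$ and of $-Y$ at $-\infty$, and the hypothesis $Y(-\infty)=X(\infty)=0$ guarantees no atom appears where one is forbidden.

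The heart of the argument is matching the curves, $\delta(-Y\|-X)\equiv\delta(Q\|P)$, which I would split into two elementary steps. First, since $E\mapsto-E$ is a bijection on Borel subsets of $\bar{\R}$, substituting $-E$ for $E$ in the supremum defining $\delta(-Y\|-X)$ gives $\delta(-Y\|-X)\equiv\delta(Y\|X)$. Second, I will use the ``reverse curve'' identity
\[
\delta(B\|A)(\eps)=1-e^{\eps}+e^{\eps}\,\delta(A\|B)(-\eps),
\]
valid for any pair of distributions $(A,B)$: for a fixed event $E$ one has $\Prob_{A}(E)-e^{\eps}\Prob_{B}(E)=(1-e^{\eps})+e^{\eps}\big(\Prob_{B}(E^{c})-e^{-\eps}\Prob_{A}(E^{c})\big)$, and taking the supremum over $E$, equivalently over $E^{c}$, yields the identity. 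Applying it to $(A,B)=(X,Y)$ and to $(A,B)=(P,Q)$, and invoking the hypothesis that $(X,Y)$ are PRVs for $\delta(P\|Q)$ — which by definition means $\delta(X\|Y)\equiv\delta(P\|Q)$ — gives $\delta(Y\|X)\equiv\delta(Q\|P)$, hence $\delta(-Y\|-X)\equiv\delta(Q\|P)$. Combined with the density check, this proves $(-Y,-X)$ are PRVs for $\delta(Q\|P)$.

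For the ``in particular'' claim, suppose $\delta(P\|Q)\equiv\delta(Q\|P)$; then $(X,Y)$ and $(-Y,-X)$ are both PRVs for this one curve, and I would conclude $X\stackrel{d}{=}-Y$ by uniqueness of PRVs: the absolutely continuous parts of the densities are pinned down by the curve via \autoref{lem:pctoprv} (through $\delta',\delta''$), and the atoms at $\pm\infty$ are pinned down as $\lim_{\eps\to\infty}\delta(\eps)$ and its reverse-curve value. I expect the main obstacle to be making this uniqueness statement precise together with the careful handling of mass at $\pm\infty$ throughout; the curve-matching identities themselves are one-line complementation arguments, and the density relation is a change of variables.
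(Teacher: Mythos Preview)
The paper does not give its own proof of this lemma; it is cited without proof from \cite[Proposition D.9]{gopi2021numerical}. So there is no in-paper argument to compare against.

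Your direct verification is correct. The density check is exactly the change of variables you describe: $Y(t)=e^{t}X(t)$ with $t\mapsto -t$ yields $X(-t)=e^{t}Y(-t)$, which is the required relation for the pair $(-Y,-X)$, and the atom constraints swap appropriately. The curve matching is also clean: the bijection $E\mapsto -E$ gives $\delta(-Y\mmid -X)=\delta(Y\mmid X)$, and your complementation identity $\delta(B\mmid A)(\eps)=1-e^{\eps}+e^{\eps}\delta(A\mmid B)(-\eps)$ is exactly right and immediately transfers $\delta(X\mmid Y)\equiv\delta(P\mmid Q)$ to $\delta(Y\mmid X)\equiv\delta(Q\mmid P)$.

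For the symmetric case, your plan to invoke uniqueness via \autoref{lem:pctoprv} is the natural one. The only point deserving a line of care, which you already flag, is the mass at $\pm\infty$: \autoref{lem:pctoprv} pins down the absolutely continuous part, and the atom $Y(\{+\infty\})$ (respectively $X(\{-\infty\})$) is recovered as $\lim_{\eps\to\infty}\delta(\eps)$ since the event $\{+\infty\}$ contributes $Y(\{+\infty\})-e^{\eps}\cdot 0$ to the supremum while every other event's contribution vanishes in that limit by dominated convergence. With that observation the uniqueness is complete and $X\stackrel{d}{=}-Y$ follows.
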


By the following result, we can numerically calculate the $(\ep, \delta)$-DP converted from $f$-DP for $f = C_p(G(\mu))^{\otimes t}$.

\begin{proposition}\label{prop:subgdpcdf} Let $(X, Y)$ be such that the CDF of $Y$ is given as
\[F_Y(t) = \begin{cases} p\Phi(\frac{\ep^{+}}{\mu} - \frac{\mu}{2}) + (1-p)\Phi(\frac{\ep^{+}}{\mu} + \frac{\mu}{2}) & t > 0 \\ \Phi(-\frac{\ep^{-}}{\mu} - \frac{\mu}{2}) & t \leq 0
\end{cases}
\]
where $\ep^{+} = \log((p-1+e^t)/p), \ep^{-} = \log((p-1 + e^{-t})/p)$ and $X = -Y$. Then $(X, Y)$ are PRVs for the tradeoff function $C_p(G(\mu))$.
\end{proposition}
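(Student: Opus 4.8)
The plan is to verify directly that the pair $(X,Y)$ in the statement satisfies the three defining conditions of privacy loss random variables for the privacy curve $\delta_\star$ of the tradeoff function $C_p(G(\mu))$: (i) $X,Y$ are supported on $\bar{\R}$; (ii) $\delta(X\|Y)\equiv\delta_\star$; and (iii) the density identity $Y(t)=e^{t}X(t)$ together with $Y(-\infty)=X(\infty)=0$. Condition (i) is immediate. For (iii), I would differentiate the given $F_Y$ separately on $\{t>0\}$ and $\{t<0\}$ and use the elementary Gaussian identity $\varphi\big(\tfrac{\ep^+}{\mu}-\tfrac{\mu}{2}\big)=e^{\ep^+}\varphi\big(\tfrac{\ep^+}{\mu}+\tfrac{\mu}{2}\big)$ together with the algebraic identity $p\,e^{\ep^+}+(1-p)=e^{t}$ (immediate from the definition of $\ep^+$); these show that the density of $Y$ at $t$ equals $e^{t}$ times its density at $-t$, which is exactly $Y(t)=e^{t}X(t)$ since $X=-Y$, and the tails vanish because $\ep^{\pm}\to\infty$ as $|t|\to\infty$. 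The only subtlety at this stage is the jump of $F_Y$ at $t=0$: this is an atom of $Y$ (and of $X=-Y$) of mass $(1-p)\big(2\Phi(\tfrac{\mu}{2})-1\big)$, consistent with $e^{0}=1$.

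The crux is condition (ii), which I would approach by first deriving a closed form for $\delta_\star$. By the $f$-DP / $(\ep,\delta)$-DP duality (\cite[Propositions 5 \& 6]{drs22}), $\delta_\star(\ep)=1+C_p(G(\mu))^{*}(-e^{\ep})=\sup_{\alpha\in[0,1]}\big(1-e^{\ep}\alpha-C_p(G(\mu))(\alpha)\big)$. I would then use the piecewise description of $C_p(G(\mu))$ (\autoref{def:cp}, together with the one-step-optimality computation in \autoref{app:noisysgd}): it coincides with $G(\mu)_p=T\big(\cN(0,1),\,p\cN(\mu,1)+(1-p)\cN(0,1)\big)$ on $[0,\Phi(-\tfrac{\mu}{2})]$, is affine of slope $-1$ on $[\Phi(-\tfrac{\mu}{2}),\,p\Phi(-\tfrac{\mu}{2})+(1-p)\Phi(\tfrac{\mu}{2})]$, and coincides with $(G(\mu)_p)^{-1}$ afterward. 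A short computation gives that the slope of $G(\mu)_p$ at $\alpha$ is $-\big(p\,e^{\mu c-\mu^2/2}+(1-p)\big)$ with $c=\Phi^{-1}(1-\alpha)$, which sweeps out $(-\infty,-1]$ as $\alpha$ runs over $[0,\Phi(-\tfrac{\mu}{2})]$. Hence for $\ep\ge 0$ the maximizer in the supremum lies on the $G(\mu)_p$-branch and solves $p\,e^{\mu c-\mu^2/2}+(1-p)=e^{\ep}$, i.e. it is the Neyman--Pearson threshold $c=\tfrac{\ep^+}{\mu}+\tfrac{\mu}{2}$; plugging back in yields
\[
\delta_\star(\ep)=1-p\,\Phi\!\Big(\tfrac{\ep^+}{\mu}-\tfrac{\mu}{2}\Big)-(1-p)\,\Phi\!\Big(\tfrac{\ep^+}{\mu}+\tfrac{\mu}{2}\Big)-e^{\ep}\,\Phi\!\Big(-\tfrac{\ep^+}{\mu}-\tfrac{\mu}{2}\Big),\qquad\ep\ge 0,
\]
and the symmetric formula with $\ep^-$ on the $(G(\mu)_p)^{-1}$-branch for $\ep\le 0$.

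To finish (ii) I would match $\delta(X\|Y)$ with $\delta_\star$. Since $C_p(G(\mu))$ is symmetric, \autoref{lem:prvsymm} forces $X=-Y$ (as in the statement), so the optimal event in $\delta(X\|Y)(\ep)=\sup_{E}\{\Prob(Y\in E)-e^{\ep}\Prob(X\in E)\}$ is $\{t>\ep\}$, giving $\delta(X\|Y)(\ep)=\big(1-F_Y(\ep)\big)-e^{\ep}\Prob(Y<-\ep)$. For $\ep\ge 0$, reading $F_Y$ off its two branches and using $\ep^-(-\ep)=\ep^+(\ep)$ gives $1-F_Y(\ep)=1-p\Phi(\tfrac{\ep^+}{\mu}-\tfrac{\mu}{2})-(1-p)\Phi(\tfrac{\ep^+}{\mu}+\tfrac{\mu}{2})$ and $\Prob(Y<-\ep)=\Phi(-\tfrac{\ep^+}{\mu}-\tfrac{\mu}{2})$, so $\delta(X\|Y)(\ep)=\delta_\star(\ep)$; the case $\ep<0$ is identical with $\ep^+\leftrightarrow\ep^-$ interchanged. (Equivalently, one can invoke \autoref{lem:pctoprv}: the PRVs of $\delta_\star$ are the unique pair with densities $\delta_\star''(t)-\delta_\star'(t)$ and $e^{t}\big(\delta_\star''(t)-\delta_\star'(t)\big)$, and integrating $\delta_\star''-\delta_\star'$ --- whose singular part at $\ep=0$ is a Dirac mass coming from the kink of $\delta_\star$ --- recovers the stated $F_Y$.) The hard part is exactly this reduction in (ii): seeing that the privacy curve of the \emph{symmetrized} subsampling operator $C_p(G(\mu))$ decomposes branch-by-branch into that of $G(\mu)_p$ and $(G(\mu)_p)^{-1}$, and keeping the bookkeeping straight across the kink of $\delta_\star$ at $\ep=0$ and the atom of $Y$ at $0$, including the conventions for the Neyman--Pearson optimal event and for densities with atoms.
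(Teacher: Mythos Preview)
Your proposal is correct and covers all the necessary ingredients, but the organization differs from the paper's argument. The paper works in the forward (constructive) direction: it first writes the privacy curve $\delta$ of $C_p(G(\mu))$ in terms of the privacy curve $\delta_0$ of the unsymmetrized $G(\mu)_p$ via
\[
\delta(t)=\begin{cases}\delta_0(t)&t>0\\ 1-e^{t}\big(1-\delta_0(-t)\big)&t\le 0,\end{cases}
\]
then invokes \autoref{lem:pctoprv} to obtain the PRV density $\delta''-\delta'$, antidifferentiates piecewise to get $F_Y=\delta_0'-\delta_0+C$ on $\{t>0\}$ and $F_Y=-e^{t}\delta_0'(-t)-1+C$ on $\{t\le 0\}$, and finally reads off the explicit formula by matching with the known expression for $\delta_0$ (e.g., \cite[Proposition C.4]{gopi2021numerical}); $X=-Y$ follows from \autoref{lem:prvsymm}.

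You instead work in the reverse (verification) direction: starting from the given $F_Y$, you check the density identity $Y(t)=e^{t}Y(-t)$ by hand, derive $\delta_\star$ explicitly from the piecewise description of $C_p(G(\mu))$ via convex duality, and then match $\delta(X\|Y)$ to $\delta_\star$ using the Neyman--Pearson event $\{t>\ep\}$. This is more computational but more self-contained, since it does not lean on \autoref{lem:pctoprv} or on the cited formula for $\delta_0$. You do note the paper's route at the end as an equivalent alternative; that parenthetical is exactly how the paper actually proceeds. Either way, the substantive observation---that the privacy curve of the symmetrized operator splits into the $G(\mu)_p$ branch for $\ep\ge 0$ and its reflection for $\ep\le 0$, with the kink at $\ep=0$ producing the atom of $Y$ at $0$---is the same in both arguments.
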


\begin{proof}
    Let $\delta, \delta_0$ respectively be the privacy curves of $C_p(G(\mu))$ and $G(\mu)_p$. Then it is straightforward to check that
    \[\delta(t) = \begin{cases}\delta_0(t) & t > 0 \\ 1 - e^t(1-\delta_0(-t)) & t \leq 0 \end{cases}
    \]
    from the definition of $C_p(G(\mu))$ (as a symmetrized version of $G(\mu)_p$; see \autoref{def:cp}) and the duality between $(\ep, \delta)$-DP and $f$-DP. By taking antiderivative from \autoref{lem:pctoprv}, the CDF of $Y$ is given as
    \[F_Y(t) = \begin{cases} \delta_0'(t) - \delta_0(t) + C & t > 0 \\ -e^t \delta_0'(-t) - 1 + C & t \leq 0\end{cases}
    \]
    for some constant $C$. By either obtaining $\delta_0(t)$ directly from \cite[Lemma 2]{drs22} or comparing the $t > 0$ part of $F_Y(t)$ with \cite[Proposition C.4]{gopi2021numerical}, one can derive the formula of $F_Y(t)$ as stated with $C = 1$. Also, $X = -Y$ from \autoref{lem:prvsymm}.
\end{proof}

\end{document}